\newcommand\cut[1]{}
\newcommand{\squishlist}{
   \begin{list}{$\bullet$}
    { \setlength{\itemsep}{0pt}      \setlength{\parsep}{3pt}
      \setlength{\topsep}{3pt}       \setlength{\partopsep}{0pt}
      \setlength{\leftmargin}{1.5em} \setlength{\labelwidth}{1em}
      \setlength{\labelsep}{0.5em} } }
\newcommand{\squishlisttwo}{
   \begin{list}{$\bullet$}
    { \setlength{\itemsep}{0pt}    \setlength{\parsep}{0pt}
      \setlength{\topsep}{0pt}     \setlength{\partopsep}{0pt}
      \setlength{\leftmargin}{2em} \setlength{\labelwidth}{1.5em}
      \setlength{\labelsep}{0.5em} } }
\newcommand{\squishend}{
    \end{list}  }
\newcommand{\KL}{\mbox{KL}}
\newcommand{\myvec}[1]{\mathbf{#1}}
\newcommand{\myvecsym}[1]{\boldsymbol{#1}}
\newcommand{\vzero}{\myvecsym{0}}
\newcommand{\vone}{\myvecsym{1}}
\newcommand{\vmu}{\myvecsym{\mu}}
\newcommand{\vc}{\myvec{c}}
\newcommand{\ve}{\myvec{e}}
\newcommand{\vh}{\myvec{h}}
\newcommand{\vr}{\myvec{r}}
\newcommand{\vs}{\myvec{s}}
\newcommand{\vx}{\myvec{x}}
\newcommand{\vxh}{\hat{\vx}}
\newcommand{\vz}{\myvec{z}}
\newcommand{\be}{\begin{equation}}
\newcommand{\ee}{\end{equation}}
\newcommand{\bea}{\begin{eqnarray}}
\newcommand{\eea}{\end{eqnarray}}
\newcommand{\beaa}{\begin{eqnarray*}}
\newcommand{\eeaa}{\end{eqnarray*}}
\DeclareMathAlphabet{\mathpzc}{OT1}{pzc}{m}{n}
\newtheorem{theorem}{Theorem}[section]
\newtheorem{proposition}[theorem]{Proposition}
\title{Unsupervised Learning of 3D Structure from Images}
\author{
	Danilo Jimenez Rezende*\\
	\texttt{danilor@google.com} \\
	\And
	S. M. Ali Eslami*\\
	\texttt{aeslami@google.com} \\
	\And
	Shakir Mohamed*\\
	\texttt{shakir@google.com} \\
	\And
	Peter Battaglia*\\
	\texttt{peterbattaglia@google.com} \\
	\And
	Max Jaderberg*\\
	\texttt{jaderberg@google.com} \\
	\And
	Nicolas Heess*\\
	\texttt{heess@google.com} \\
	* Google DeepMind
}
\begin{document}

\maketitle

\begin{abstract}
	A key goal of computer vision is to recover the underlying 3D structure that gives rise to 2D observations of the world. If endowed with 3D understanding, agents can abstract away from the complexity of the rendering process to form stable, disentangled representations of scene elements. In this paper we learn strong deep generative models of 3D structures, and recover these structures from 2D images via probabilistic inference. We demonstrate high-quality samples and report log-likelihoods on several datasets, including ShapeNet \cite{chang2015shapenet}, and establish the first benchmarks in the literature. We also show how these models and their inference networks can be trained jointly, end-to-end, and directly from 2D images without any use of ground-truth 3D labels. This demonstrates for the first time the feasibility of learning to infer 3D representations of the world in a purely unsupervised manner.
\end{abstract}

\section{Introduction}

We live in a three-dimensional world, yet our observations of it are typically in the form of two-dimensional projections that we capture with our eyes or with cameras. A key goal of computer vision is that of recovering the underlying 3D structure that gives rise to these 2D observations.

The 2D projection of a scene is a complex function of the attributes and positions of the camera, lights and objects that make up the scene. If endowed with 3D understanding, agents can abstract away from this complexity to form stable, disentangled representations, e.g.,\ recognizing that a chair is a chair whether seen from above or from the side, under different lighting conditions, or under partial occlusion. Moreover, such representations would allow agents to determine downstream properties of these elements more easily and with less training, e.g.,\ enabling intuitive physical reasoning about the stability of the chair, planning a path to approach it, or figuring out how best to pick it up or sit on it. Models of 3D representations also have applications in scene completion, denoising, compression and generative virtual reality.

There have been many attempts at performing this kind of reasoning, dating back to the earliest years of the field. Despite this, progress has been slow for several reasons: First, the task is inherently ill-posed. Objects always appear under self-occlusion, and there are an infinite number of 3D structures that could give rise to a particular 2D observation. The natural way to address this problem is by learning statistical models that recognize which 3D structures are likely and which are not. Second, even when endowed with such a statistical model, inference is intractable. This includes the sub-tasks of mapping image pixels to 3D representations, detecting and establishing correspondences between different images of the same structures, and that of handling the multi-modality of the representations in this 3D space. Third, it is unclear how 3D structures are best represented, e.g.,\ via dense volumes of voxels, via a collection of vertices, edges and faces that define a polyhedral mesh, or some other kind of representation. Finally, ground-truth 3D data is difficult and expensive to collect and therefore datasets have so far been relatively limited in size and scope.

  \begin{wrapfigure}{r}{0.35\linewidth}
    \centering
    \includegraphics[height=1.6cm]{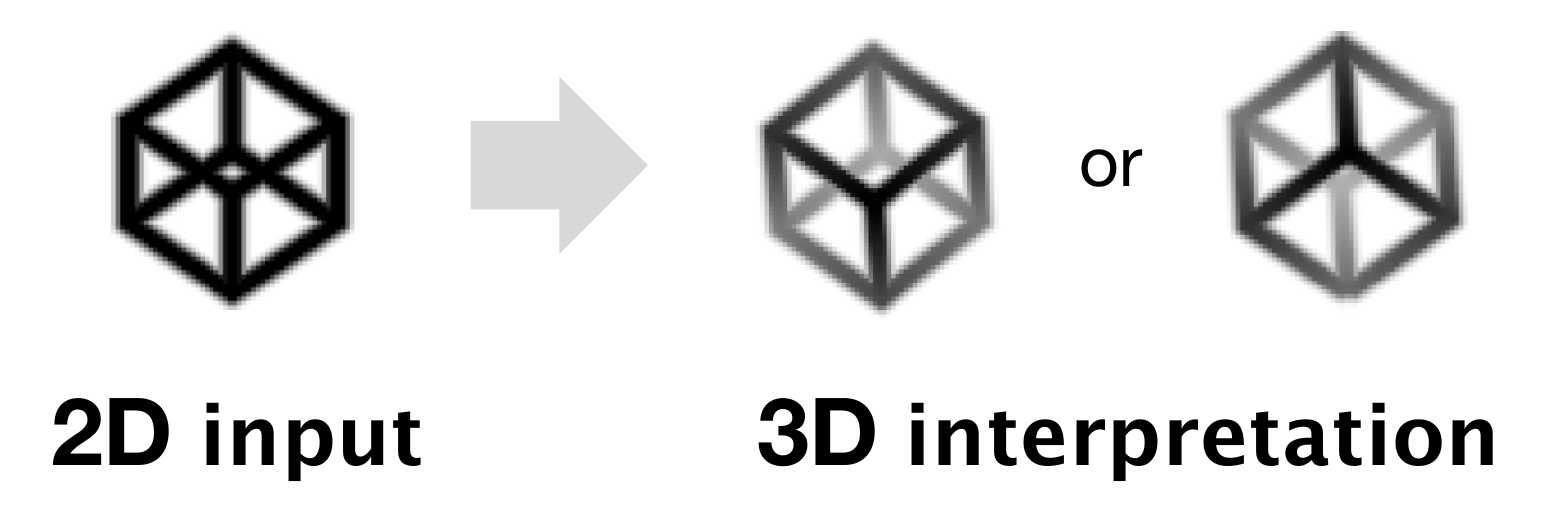}
    \vspace{-0.1cm}
    \caption{\textbf{Motivation:} The 3D representation of a 2D image is ambiguous and multi-modal. We achieve such reasoning by learning a generative model of 3D structures, and recover this structure from 2D images via probabilistic inference.\vspace{-0.2cm}\label{fig:motivation}}
  \end{wrapfigure}

In this paper we introduce a family of generative models of 3D structures and recover these structures from 2D images via probabilistic inference.
Learning models of 3D structures directly from pixels has been a long-standing research problem and a number of approaches with different levels of underlying assumptions and feature engineering have been proposed. Traditional approaches to vision as inverse graphics \cite{mansinghka2013approximate, kulkarni2014inverse, loper2014opendr} and analysis-by-synthesis \cite{pero2012bayesian, wingate2011nonstandard, kulkarni2014deep, wu2015galileo} rely on heavily engineered visual features with which inference of object properties such as shape and pose is substantially simplified. More recent work \cite{kulkarni2014deep, dosovitskiy2015learning, choy20163d, 2016arXiv160503557Z} addresses some of these limitations by learning parts of the encoding-decoding pipeline depicted in figure \ref{fig:diagrams} in separate stages. 
Concurrent to our work \cite{Jiajun2016} also develops a generative model of volumetric data based on adversarial methods.
Learning the 3D transformation properties of images by imposing a group-theoretical structure in the latent-space is a promising new direction \cite{cohen2014transformation}. Our approach differs, in that we add more structure to the model’s rendering component, while keeping an abstract latent code.
We discuss other related work in \ref{ap:relatedwork}. 
Unlike existing approaches, our approach is one of the first to learn 3D representations of complex objects in an unsupervised, end-to-end manner, directly from 2D images.

Our contributions are as follows. (a) We design a strong generative model of 3D structures, defined over the space of volumes and meshes, combining ideas from state-of-the-art generative models of images \cite{gregor2015draw}. (b) We show that our models produce high-quality samples, can effectively capture uncertainty and are amenable to probabilistic inference, allowing for applications in 3D generation and simulation. We report log-likelihoods on a dataset of shape primitives, a 3D version of MNIST, and on ShapeNet \cite{chang2015shapenet}, which to the best of our knowledge, constitutes the first quantitative benchmark for 3D density modeling. (c) We show how complex inference tasks, e.g.,\ that of inferring plausible 3D structures given a 2D image, can be achieved using conditional training of the models. We demonstrate that such models recover 3D representations in one forward pass of a neural network and they accurately capture the multi-modality of the posterior. (d) We explore both volumetric and mesh-based representations of 3D structure. The latter is achieved by flexible inclusion of off-the-shelf renders such as OpenGL \cite{shreiner1999opengl}. This allows us to build in further knowledge of the rendering process, e.g.,\ how light bounces of surfaces and interacts with its material's attributes. (e) We show how the aforementioned models and inference networks can be trained end-to-end directly from 2D images without any use of ground-truth 3D labels. This demonstrates for the first time the feasibility of learning to infer 3D representations of the world in a purely unsupervised manner.

\vspace{-2mm}
\section{Conditional Generative Models}

In this section we develop our framework for learning models of 3D structure from volumetric data or directly from images. We consider conditional latent variable models, structured as in figure \ref{fig:diagrams} (left). Given an observed volume or image $\mathbf{x}$ and a context $\mathbf{c}$, we wish to infer a corresponding 3D representation $\mathbf{h}$ (which can be a volume or a mesh). This is achieved by modelling the latent manifold of object shapes and poses via the low-dimensional codes $\mathbf{z}$. The context is any quantity that is always observed at both train- and test-time, and it conditions all computations of inference and generation (see figure \ref{fig:diagrams}, middle). In our experiments, context is either 1) nothing, 2) an object class label, or 3) one or more views of the scene from different cameras.

Our models employ a generative process which consists of first generating a 3D representation $\vh$ (figure \ref{fig:diagrams}, middle) and then projecting to the domain of the observed data (figure \ref{fig:diagrams}, right). For instance, the model will first generate a volume or mesh representation of a scene or object and then render it down using a convolutional network or an OpenGL renderer to form a 2D image.

Generative models with latent variables describe probability densities $p(\vx)$ over datapoints $\vx$ implicitly through a marginalization of the set of latent variables $\vz$, $p(\vx) = \int p_\theta(\vx | \vz) p(\vz) d\vz$.
Flexible models can be built by using multiple layers of latent variables, where each layer specifies a conditional distribution parameterized by a deep neural network. Examples of such models include \cite{rezende2014stochastic,kingma2014stochastic,salakhutdinov2008quantitative}. The marginal likelihood $p(\vx)$ is intractable and we must resort to approximations. We opt for variational approximations \citep{jordan1999introduction}, in which we bound the marginal likelihood $p(\vx)$ by $\mathcal{F} = \mathbb{E}_{q(\vz|\vx)}[\log p_\theta(\vx | \vz)] - \KL[q_\phi(\vz | \vx) \| p(\vz)]$, where the true posterior distribution is approximated by a parametric family of posteriors $q_\phi(\vz | \vx)$ with parameters $\phi$. Learning involves joint optimization of the variational parameters $\phi$ and model parameters $\theta$. In this framework, we can think of the generative model as a decoder of the latent variables, and the inference network as an encoder of the observed data into the latent representation. Gradients of $\mathcal{F}$ are estimated using path-wise derivative estimators (`reparameterization trick') \citep{rezende2014stochastic, kingma2014stochastic}. 
\begin{figure*}[t!]
  \centering
  \includegraphics[width=\linewidth]{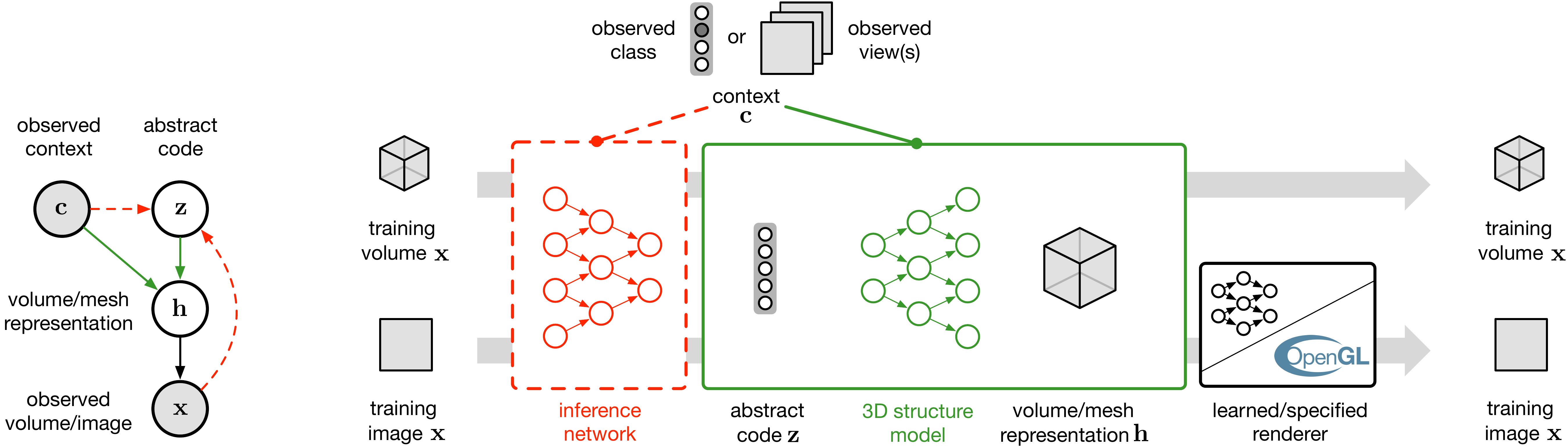}
  \caption{\textbf{Proposed framework:} \textit{Left:} Given an observed volume or image $\mathbf{x}$ and contextual information $\mathbf{c}$, we wish to infer a corresponding 3D representation $\mathbf{h}$ (which can be a volume or a mesh). This is achieved by modeling the latent manifold of object shapes via the low-dimensional codes $\mathbf{z}$. In experiments we will consider unconditional models (i.e.,\ no context), as well as models where the context $\mathbf{c}$ is class or one or more 2D views of the scene. \textit{Right:} We train a context-conditional inference network (red) and object model (green). When ground-truth volumes are available, they can be trained directly. When only ground-truth \textit{images} are available, a renderer is required to measure the distance between an inferred 3D representation and the ground-truth image. \label{fig:diagrams}}
  \vspace{-0.3cm}
\end{figure*}

\subsection{Architectures \label{sec:Volumetric-DRAW}}

We build on recent work on \textit{sequential} generative models \cite{gregor2015draw,rezende2016one,gregor2016compression} by extending them to operate on different 3D representations. This family of models generates the observed data over the course of $T$ computational steps. 
More precisely, these models operate by sequentially transforming independently generated Gaussian latent variables into refinements of a hidden representation $\vh$, which we refer to as the `canvas'. The final configuration of the canvas, $\vh_T$, is then transformed into the target data $\vx$ (e.g. an image) through a final smooth transformation. In our framework, we refer to the hidden representation $\vh_T$ as the `3D representation' since it will have a special form that is amenable to 3D transformations. This generative process is described by the following equations:\\
\vspace{-7pt}
\begin{minipage}{0.48\textwidth}
	\begin{eqnarray}
	\textrm{\small Latents} \!\!\!\!& \vz_t  \!\!\!\!\!\!&\sim  \mathcal{N}(\cdot | \vzero, \vone) \label{eq:mod_latent} \\
	\textrm{\small Encoding} \!\!\!\!& \ve_t \!\!\!\!\!\!&=f_\mathrm{read}(\vc, \vs_{t-1}; \theta_r) \label{eq:mod_context_encoding} \\
	\textrm{\small Hidden state} \!\!\!\!& \vs_t  \!\!\!\!\!\!&=  f_\mathrm{state}(\vs_{t-1}, \vz_t, \ve_t; \theta_s)  \label{eq:mod_hid}
	\end{eqnarray}
\end{minipage}
\begin{minipage}{0.52\textwidth}
	\begin{eqnarray}
	\textrm{\small 3D representation} \!\!\!\!& \vh_t \!\!\!\!\!\!&=  f_\mathrm{write}(  \vs_t, \vh_{t-1};\theta_w )\label{eq:mod_canvas} \\
	\textrm{\small 2D projection} \!\!\!\!& \vxh  \!\!\!\!\!\!&=  \text{Proj}(\vh_T, \vs_T; \theta_p)  \label{eq:mod_rendered_volume} \\
	\textrm{\small Observation} \!\!\!\!& \vx  \!\!\!\!\!\!&\sim  p(\vx | \vxh ). \label{eq:mod_obs}
	\end{eqnarray}
\end{minipage}
\vspace{5pt}

Each step generates an independent set of $K$-dimensional variables $\vz_t$ (equation \ref{eq:mod_latent}). We use a fully connected long short-term memory network (LSTM, \cite{hochreiter1997long}) as the transition function $ f_\mathrm{state}(\vs_{t-1}, \vz_t, \vc; \theta_s)$. The context encoder $f_\mathrm{read}(\vc, \vs_{t-1}; \theta_r)$ is task dependent; we provide further details in section \ref{sec:experiments}. 

When using a volumetric latent 3D representation, the representation update function $f_\mathrm{write}(  \vs_t, \vh_{t-1};\theta_w )$ in equation \ref{eq:mod_canvas} is parameterized by a volumetric spatial transformer (VST, \cite{jaderberg2015spatial}). More precisely, we set $f_\mathrm{write}(  \vs_t, \vh_{t-1};\theta_w ) = \text{VST}( g_1(\vs_t) , g_2(\vs_t))$ where $g_1$ and $g_2$ are MLPs that take the state $\vs_t$ and map it to appropriate sizes. More details about the VST are provided in the appendix \ref{appendix:VST}. When using a mesh 3D representation $f_\mathrm{write}$ is a fully-connected MLP.

The function $\text{Proj}(\vh_T, \vs_T)$ is a projection operator from the model's latent 3D representation $\vh_T$ to the training data's domain (which in our experiments is either a volume or an image) and plays the role of a `renderer'. The conditional density $p(\vx | \vxh )$ is either a diagonal Gaussian (for real-valued data) or a product of Bernoulli distributions (for binary data). We denote the set of all parameters of this generative model as $\theta = \{ \theta_r, \theta_w, \theta_s, \theta_p \}$. Details of the inference model and the variational bound is provided in the appendix \ref{appendix:inference}.

Here we discuss the projection operators in detail. These drop-in modules relate a latent 3D representation with the training data. The choice of representation (volume or mesh) and the type of available training data (3D or 2D) determine which operator is used.
\begin{figure*}
    \centering
    \includegraphics[width=\linewidth]{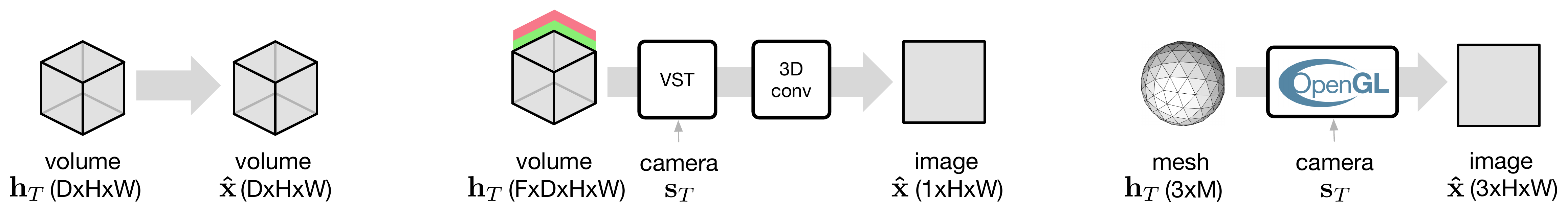}
    \caption{\textbf{Projection operators:} These drop-in modules relate a latent 3D representation with the training data. The choice of representation and the type of available training data determine which operator should be used. \textit{Left:} Volume-to-volume projection (no parameters). \textit{Middle:} Volume-to-image neural projection (learnable parameters). \textit{Right:} Mesh-to-image OpenGL projection (no learnable parameters).\label{fig:projections}}
\end{figure*}

\textbf{3D $\rightarrow$ 3D projection (identity):} In cases where training data is already in the form of volumes (e.g., in medical imagery, volumetrically rendered objects, or videos), we can directly define the likelihood density $p(\vx | \vxh )$, and the projection operator is simply the identity $\vxh = \vh_T$ function (see figure \ref{fig:projections} left). 

\textbf{3D $\rightarrow$ 2D neural projection (learned):} In most practical applications we only have access to images captured by a camera. Moreover, the camera pose may be unknown or partially known. For these cases, we construct and learn a map from an $F$-dimensional volume $\vh_T$ to the observed 2D images by combining the VST with 3D and 2D convolutions. When multiple views from different positions are simultaneously observed, the projection operator is simply cloned as many times as there are target views. The parameters of the projection operator are trained jointly with the rest of the model. This operator is depicted in figure \ref{fig:projections} (middle). For details see appendix \ref{appendix:learnedProjectionOperator}.

\textbf{3D $\rightarrow$ 2D OpenGL projection (fixed):} When working with a mesh representation, the projection operator in equation \ref{eq:mod_canvas} is a complex map from the mesh description $\vh$ provided by the generative model to the rendered images $\vxh$. In our experiments we use an off-the-shelf OpenGL renderer and treat it as a black-box with no parameters. This operator is depicted in figure \ref{fig:projections} (right). 

A challenge in working with black-box renderers is that of back-propagating errors from the image to the mesh. This requires either a differentiable renderer \cite{loper2014opendr}, or resort to gradient estimation techniques such as finite-differences \cite{eslami2016attend} or Monte Carlo estimators \cite{mnih2016variational,burda2015importance}. We opt for a scheme based on REINFORCE \cite{williams1992simple}, details of which are provided in appendix \ref{ap:VIMCO}.

\vspace{-2mm}
\section{Experiments \label{sec:experiments}}
We demonstrate the ability of our model to learn and exploit 3D scene representations in five challenging tasks. These tasks establish it as a powerful, robust and scalable model that is able to provide high quality generations of 3D scenes, can robustly be used as a tool for 3D scene completion, can be adapted to provide class-specific or view-specific generations that allow variations in scenes to be explored, can synthesize multiple 2D scenes to form a coherent understanding of a scene, and can operate with complex visual systems such as graphics renderers. We explore four data sets:
\begin{description}[leftmargin=*,topsep=0pt,itemsep=1ex,partopsep=1ex,parsep=1ex]
\item[Necker cubes] The Necker cube is a classical psychological test of the human ability for 3D and spatial reasoning. This is the simplest dataset we use and consists of $40\times40\times40$ volumes with a $10\times10\times10$ wire-frame cube drawn at a random orientation at the center of the volume \cite{sundareswara2008perceptual}. 
\item[Primitives] The volumetric primitives are of size $30\times30\times30$. Each volume contains a simple solid geometric primitive (e.g.,\ cube, sphere, pyramid, cylinder, capsule or ellipsoid) that undergoes random translations ($[0,20]$ pixels) and rotations ($[-\pi,\pi]$ radians).
\item[MNIST3D] We extended the MNIST dataset \cite{mnistlecun} to create a $30\times30\times30$ volumetric dataset by extruding the MNIST images. The resulting dataset has the same number of images as MNIST. The data is then augmented with random translations ($[0,20]$ pixels) and rotations ($[-\pi,\pi]$ radians) that are procedurally applied during training.
\item[ShapeNet] The ShapeNet dataset \cite{chang2015shapenet} is a large dataset of 3D meshes of objects. We experiment with a 40-class subset of the dataset, commonly referred to as ShapeNet40. We render each mesh as a binary $30\times30\times30$ volume.
\end{description}

For all experiments we used LSTMs with 300 hidden neurons and 10 latent variables per generation step. The context encoder $f_c(\vc, \vs_{t-1})$ was varied for each task. For image inputs we used convolutions and standard spatial transformers, and for volumes we used volumetric convolutions and VSTs. For the class-conditional experiments, the context $\vc$ is a one-hot encoding of the class. As meshes are much lower-dimensional than volumes, we set the number of steps to be $T=1$ when working with this representation. We used the Adam optimizer \cite{kingma2014adam} for all experiments.

\newlength{\volumeHeight}
\setlength{\volumeHeight}{1.0cm}

\begin{figure*}
	\centering	
	\begin{tikzpicture}
	\matrix [matrix of nodes, column sep=1mm, row sep=1mm, every node/.style={inner sep=0, outer sep=0, anchor=center}]
	{
		\includegraphics[height=\volumeHeight]{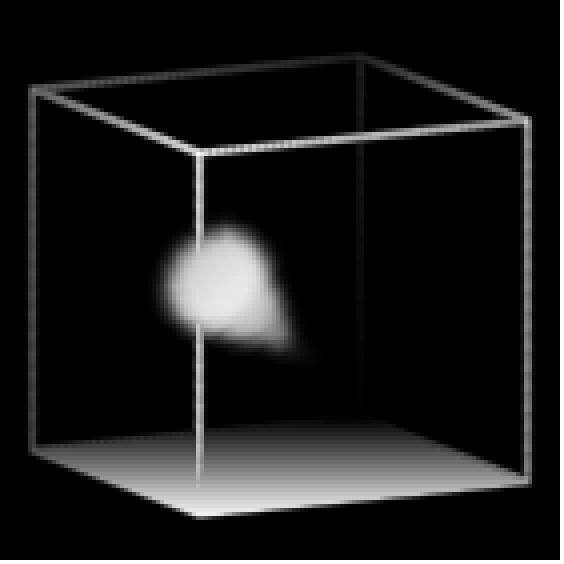} &
		\includegraphics[height=\volumeHeight]{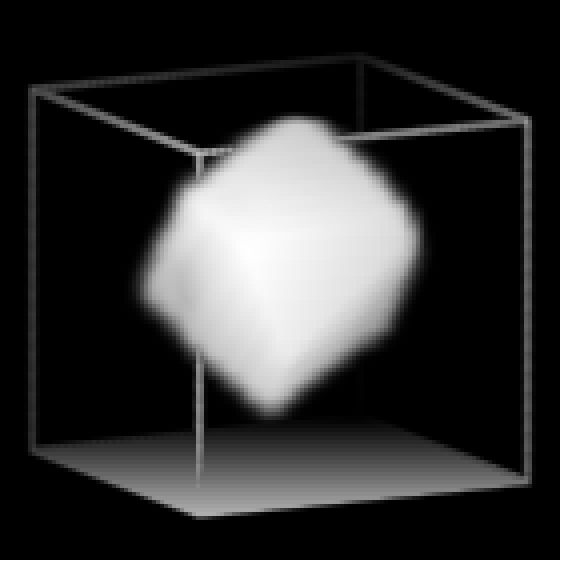} &
		\includegraphics[height=\volumeHeight]{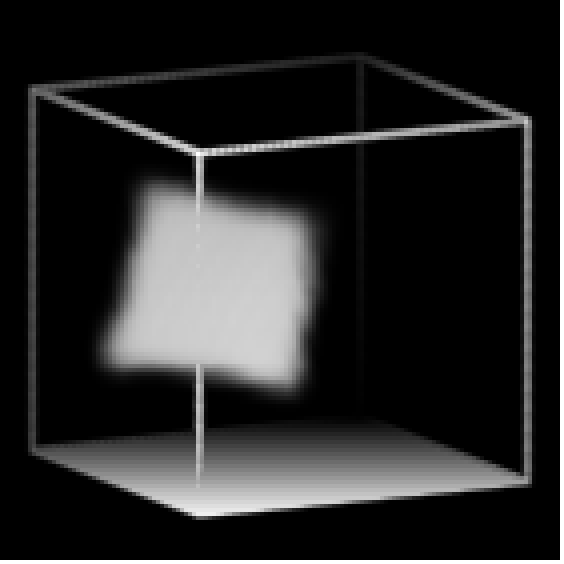} \\ 
		\includegraphics[height=\volumeHeight]{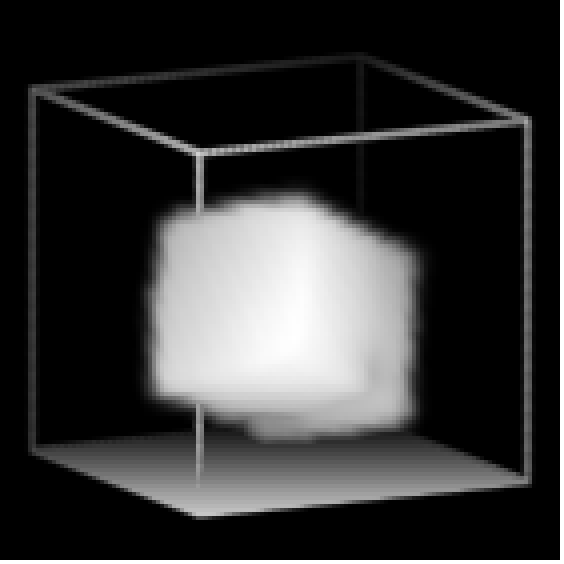} &
		\includegraphics[height=\volumeHeight]{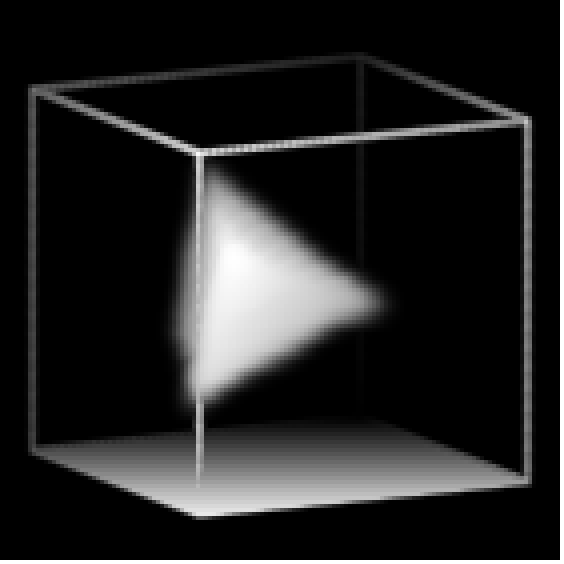} &
		\includegraphics[height=\volumeHeight]{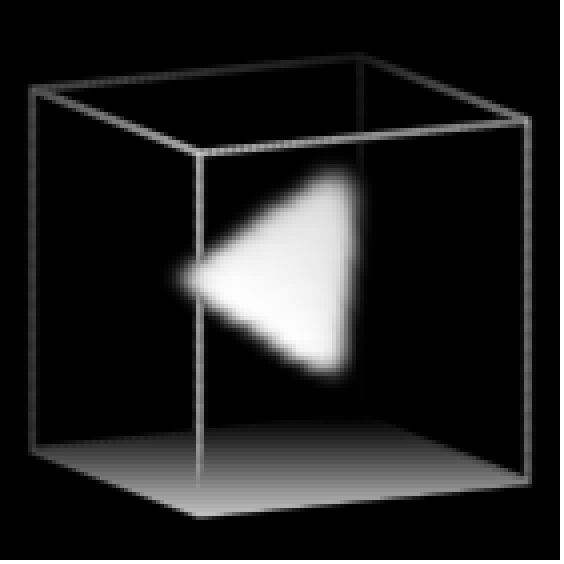}\\ 
		\includegraphics[height=\volumeHeight]{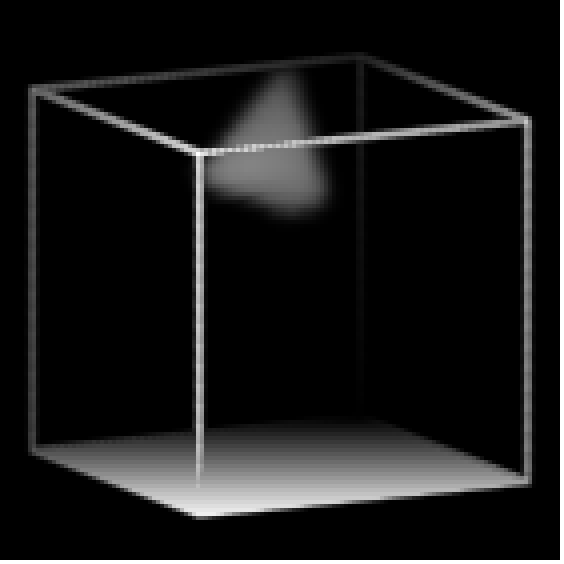} &
		\includegraphics[height=\volumeHeight]{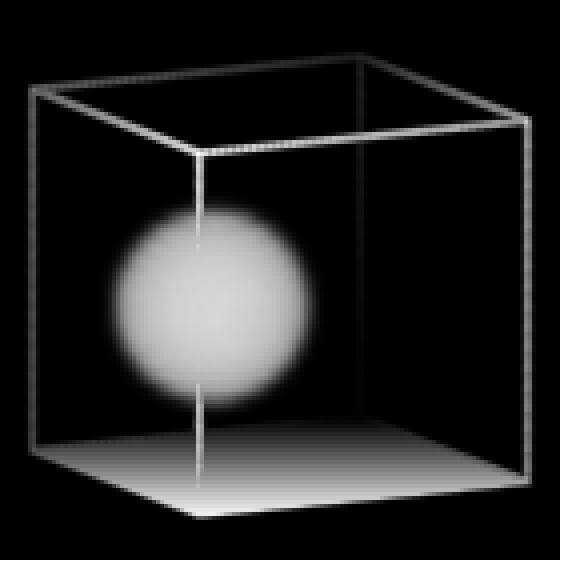} &
		\includegraphics[height=\volumeHeight]{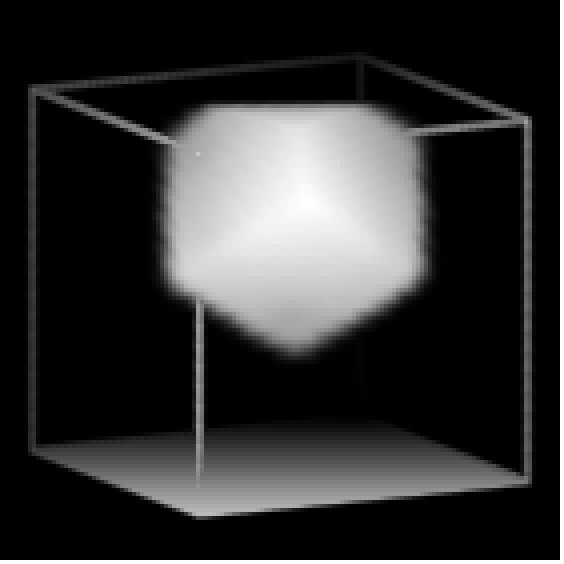} \\ 
	};
	\end{tikzpicture}
	\begin{tikzpicture}
	\matrix [matrix of nodes, column sep=1mm, row sep=1mm, every node/.style={inner sep=0, outer sep=0, anchor=center}]
	{
		\includegraphics[height=\volumeHeight]{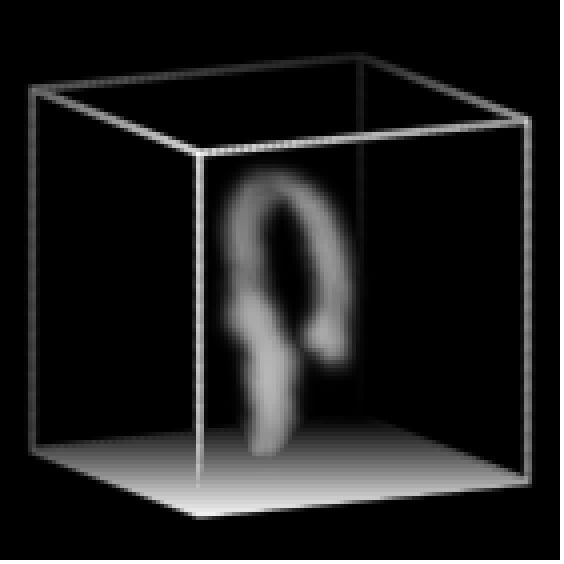} &
		\includegraphics[height=\volumeHeight]{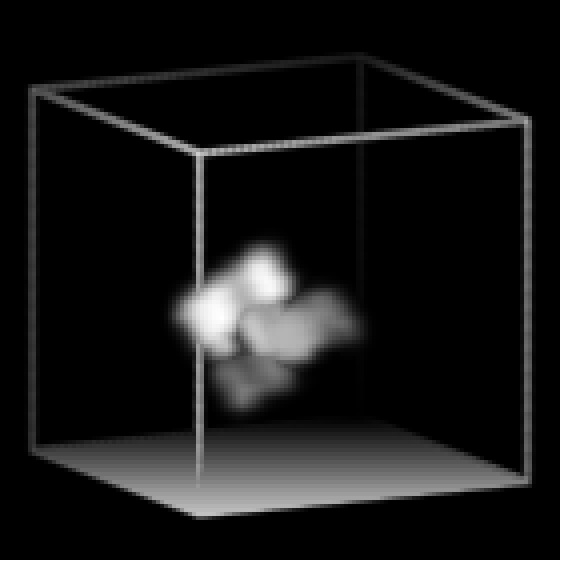} &
		\includegraphics[height=\volumeHeight]{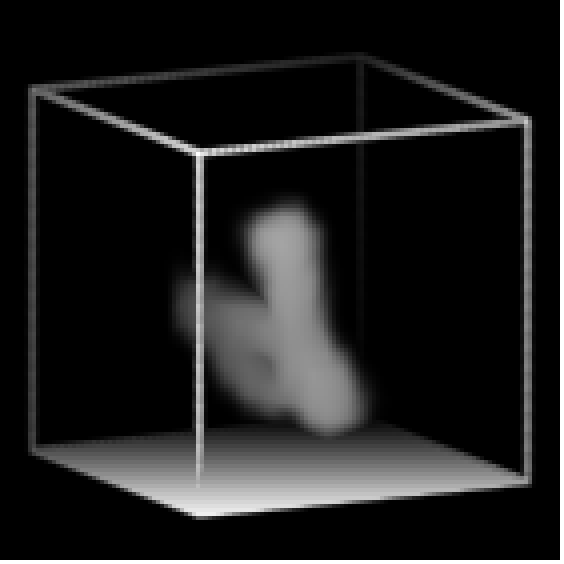} \\ 
		\includegraphics[height=\volumeHeight]{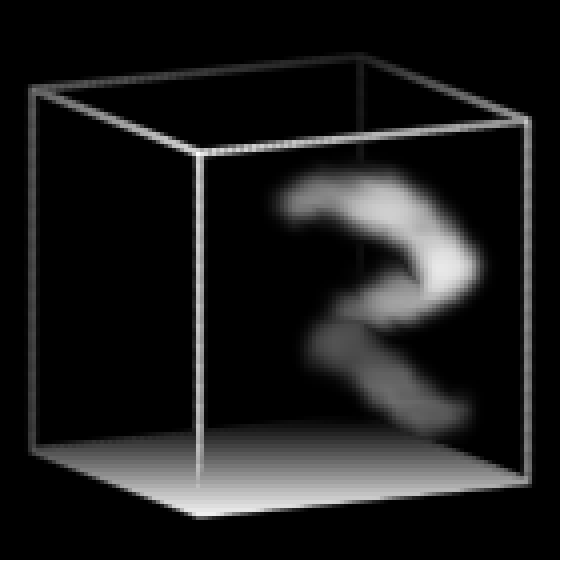} &
		\includegraphics[height=\volumeHeight]{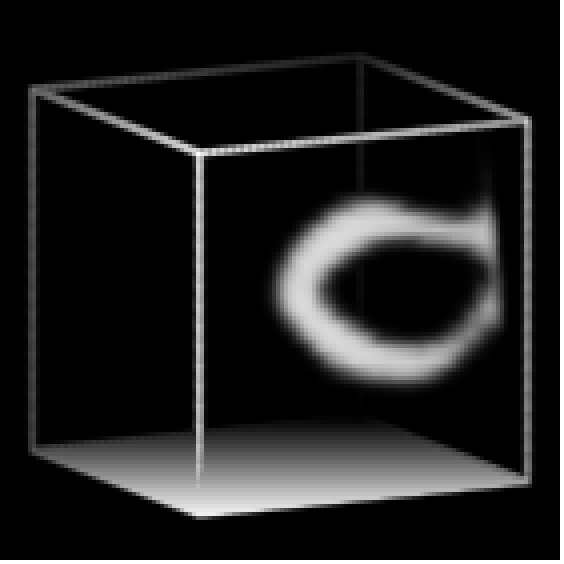} &
		\includegraphics[height=\volumeHeight]{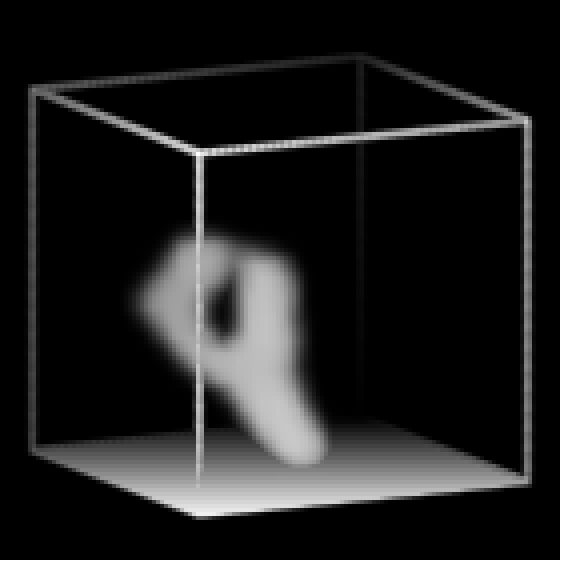} \\ 
		\includegraphics[height=\volumeHeight]{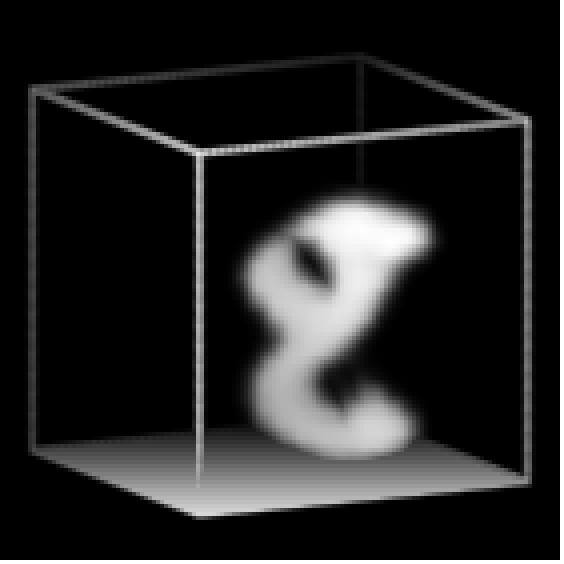} &
		\includegraphics[height=\volumeHeight]{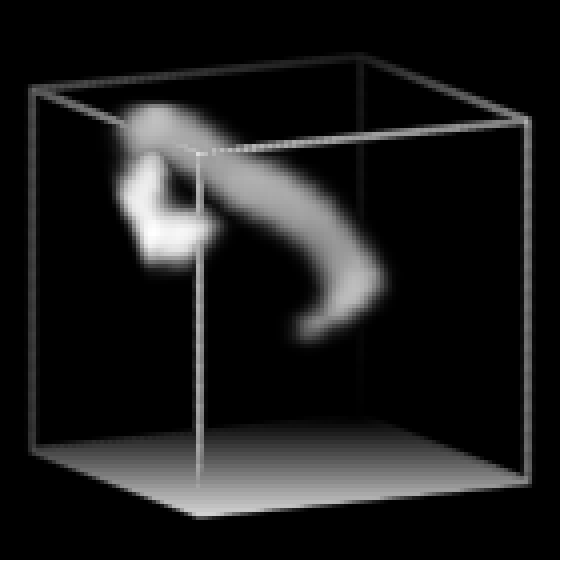} &
		\includegraphics[height=\volumeHeight]{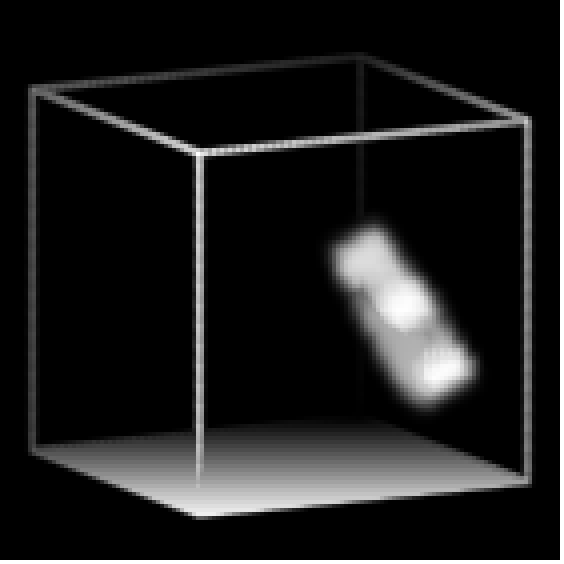}\\ 
	};
	\end{tikzpicture}
	\begin{tikzpicture}
	\matrix [matrix of nodes, column sep=1mm, row sep=1mm, every node/.style={inner sep=0, outer sep=0, anchor=center}]
	{
		\includegraphics[height=\volumeHeight]{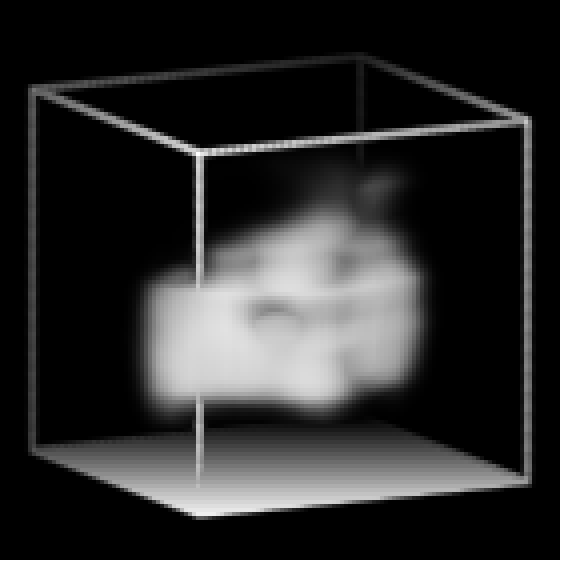} &
		\includegraphics[height=\volumeHeight]{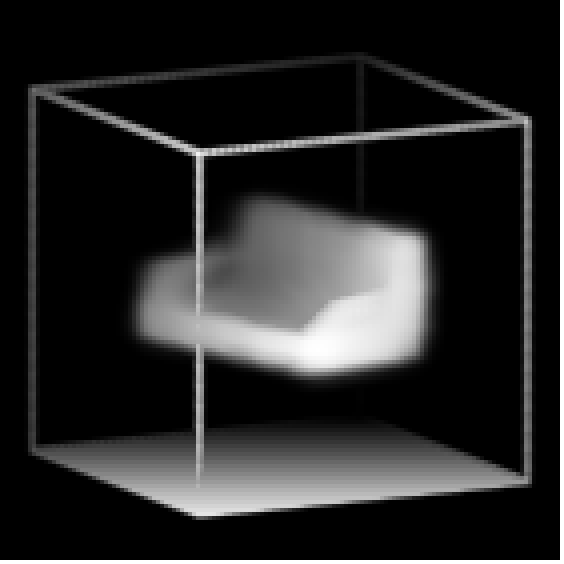} &
		\includegraphics[height=\volumeHeight]{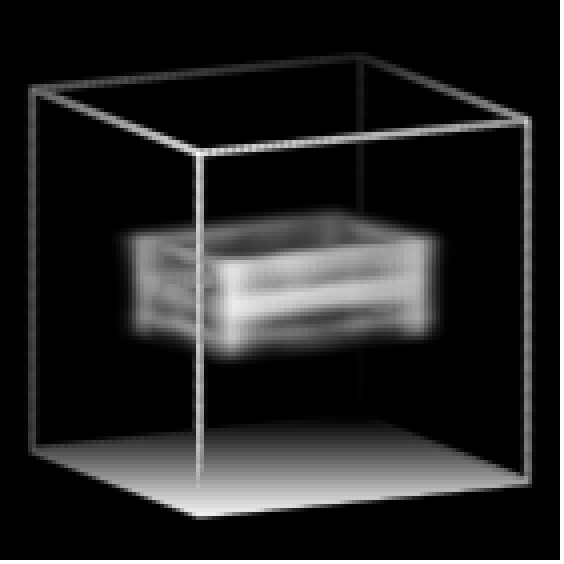} \\ 
		\includegraphics[height=\volumeHeight]{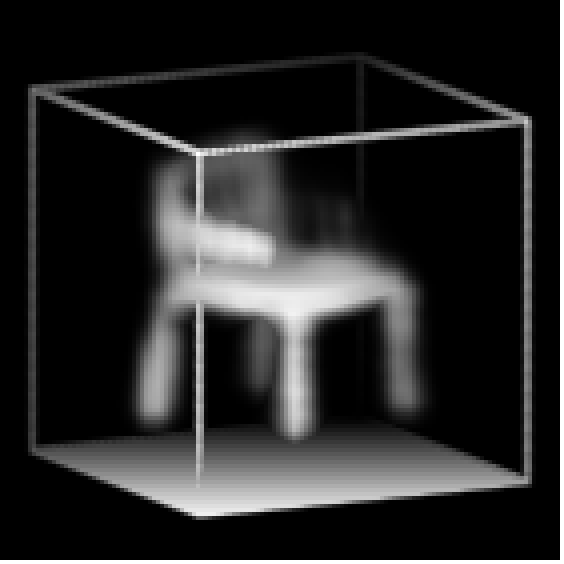} &
		\includegraphics[height=\volumeHeight]{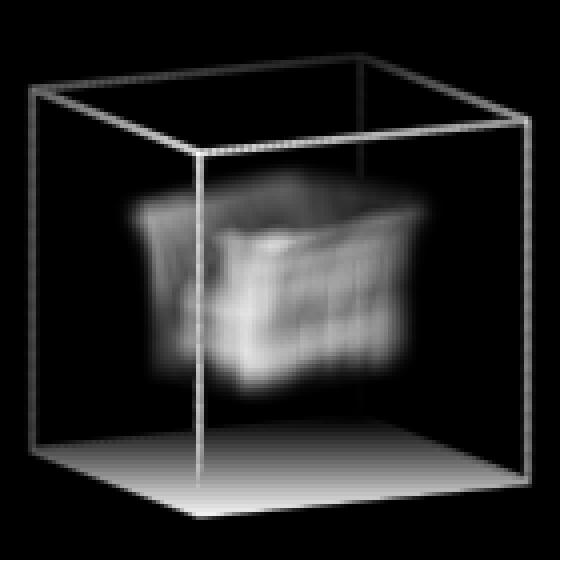} &
		\includegraphics[height=\volumeHeight]{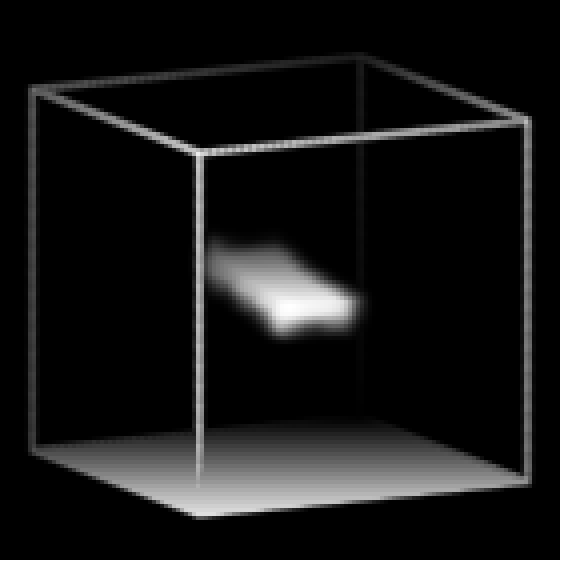} \\ 
		\includegraphics[height=\volumeHeight]{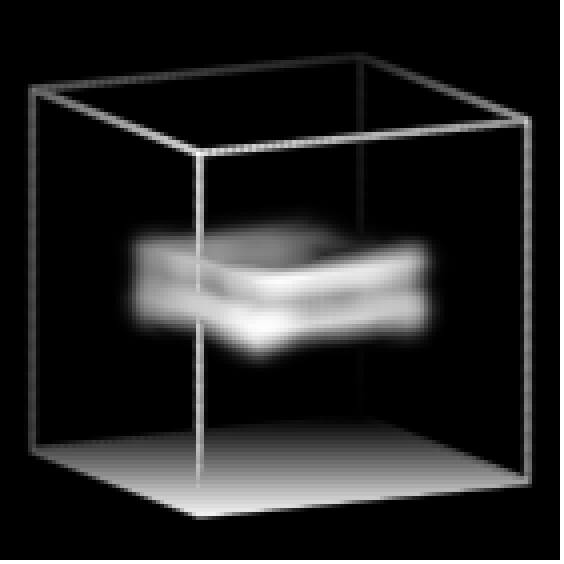} &
		\includegraphics[height=\volumeHeight]{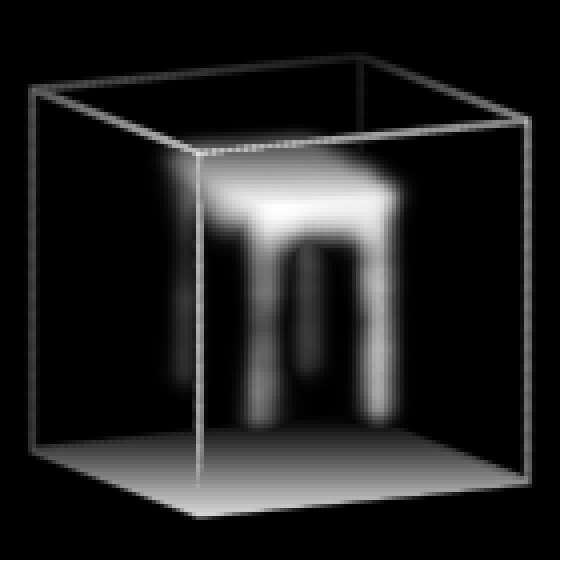} &
		\includegraphics[height=\volumeHeight]{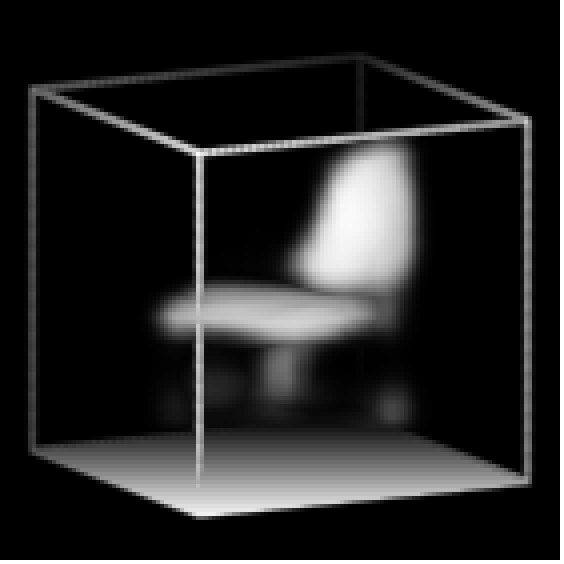} \\ 
	};
	\end{tikzpicture}
	\caption{\textbf{A generative model of volumes:} For each dataset we display 9 samples from the model. The samples are sharp and capture the multi-modality of the data. \textit{Left:} Primitives (trained with translations and rotations). \textit{Middle:} MNIST3D (translations and rotations). \textit{Right:} ShapeNet (trained with rotations only). Videos of these samples can be seen at \url{https://goo.gl/9hCkxs}. 
	\label{fig:unconditional-generation}}
\end{figure*}

\subsection{Generating volumes\label{sec:densities}} 
When ground-truth volumes are available we can directly train the model using the identity projection operator (see section \ref{sec:Volumetric-DRAW}). We explore the performance of our model by training on several datasets. We show in figure \ref{fig:unconditional-generation} that it can capture rich statistics of shapes, translations and rotations across the datasets. For simpler datasets such as Primitives and MNIST3D (figure \ref{fig:unconditional-generation} left, middle), the model learns to produce very sharp samples. Even for the more complex ShapeNet dataset (figure \ref{fig:unconditional-generation} right) its samples show a large diversity of shapes whilst maintaining fine details.

\begin{figure*}
	\centering	
	\begin{tikzpicture}
	\matrix [matrix of nodes, column sep=1mm, row sep=1mm, every node/.style={inner sep=0, outer sep=0, anchor=center}]
	{
		Necker & \includegraphics[height=\volumeHeight]{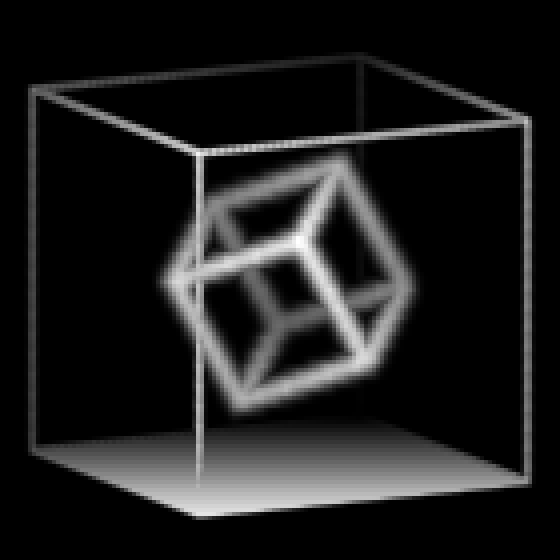} \\ 
		Primitives & \includegraphics[height=\volumeHeight]{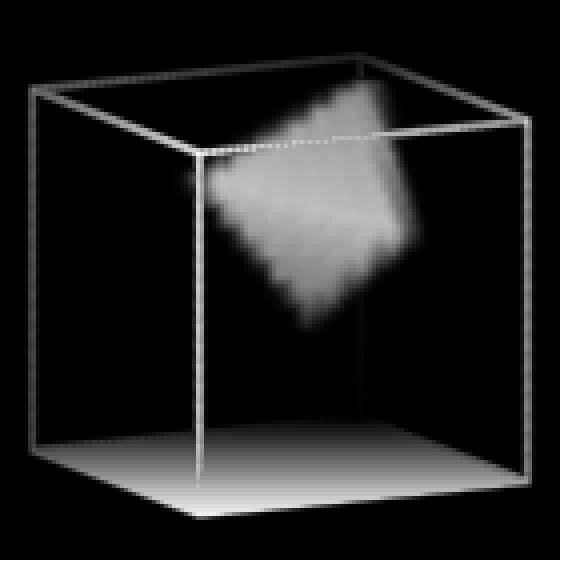} \\ 
		MNIST3D & \includegraphics[height=\volumeHeight]{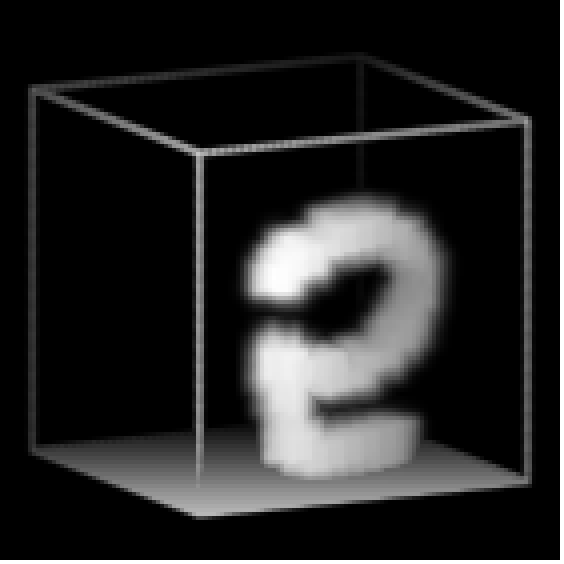} \\ 
	};
	\end{tikzpicture}
	\begin{tikzpicture}
	\matrix [matrix of nodes, column sep=1mm, row sep=1mm, every node/.style={inner sep=0, outer sep=0, anchor=center}]
	{
		\includegraphics[height=\volumeHeight]{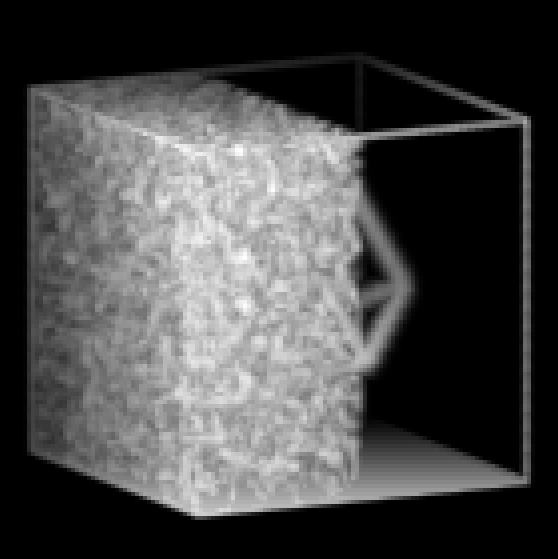} & 
		\includegraphics[height=\volumeHeight]{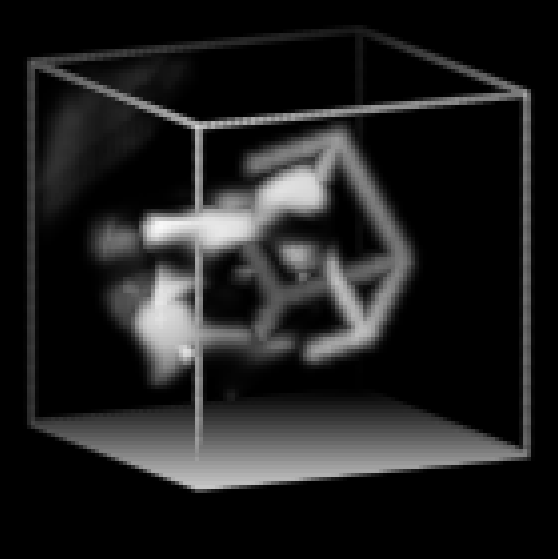} & 
		\includegraphics[height=\volumeHeight]{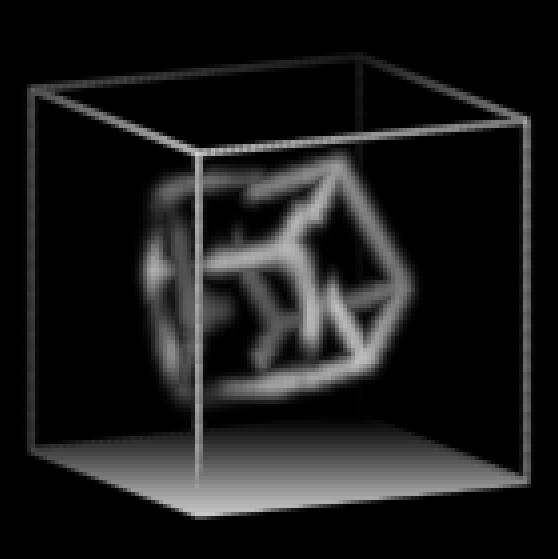} & 
		\includegraphics[height=\volumeHeight]{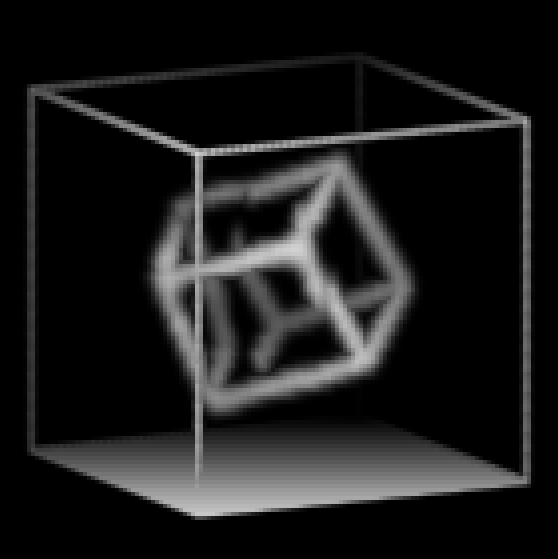} & 
		\includegraphics[height=\volumeHeight]{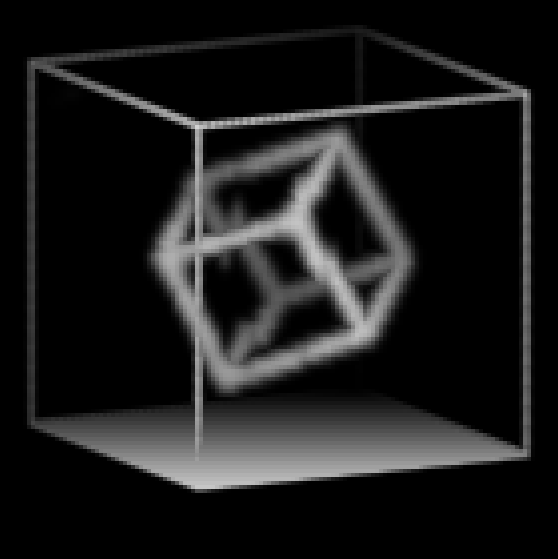} & 
		\includegraphics[height=\volumeHeight]{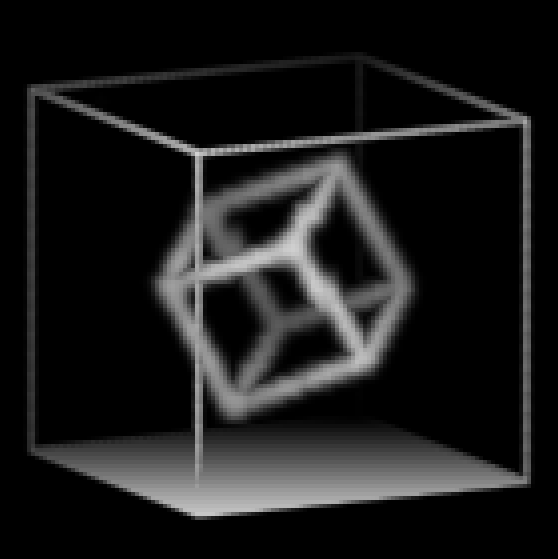} & 
		\includegraphics[height=\volumeHeight]{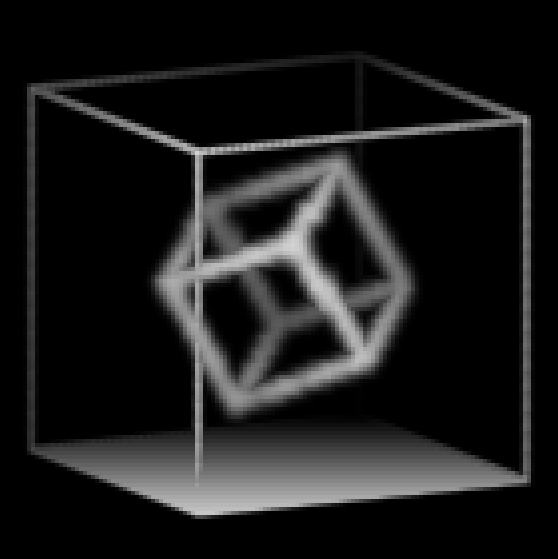} & \\ 
		\includegraphics[height=\volumeHeight]{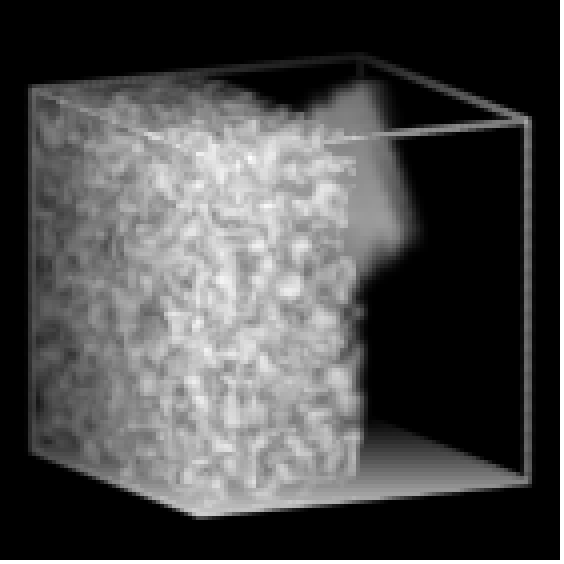} & 
		\includegraphics[height=\volumeHeight]{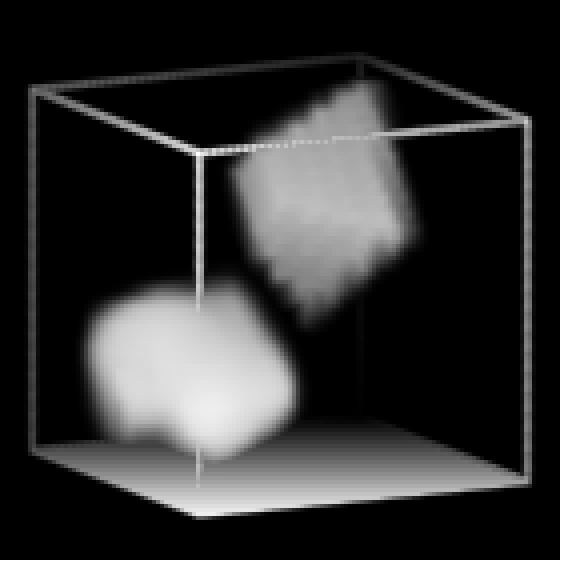} &
		\includegraphics[height=\volumeHeight]{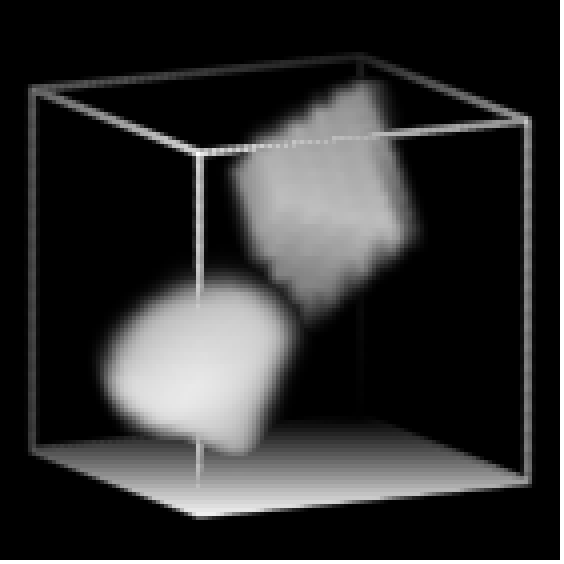} &
		\includegraphics[height=\volumeHeight]{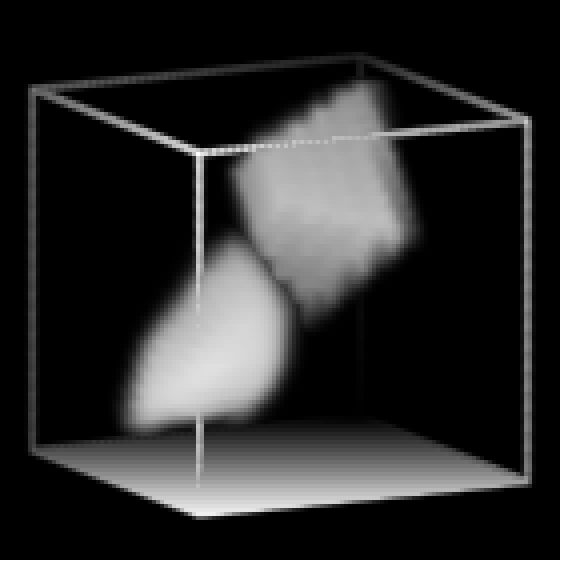} &
		\includegraphics[height=\volumeHeight]{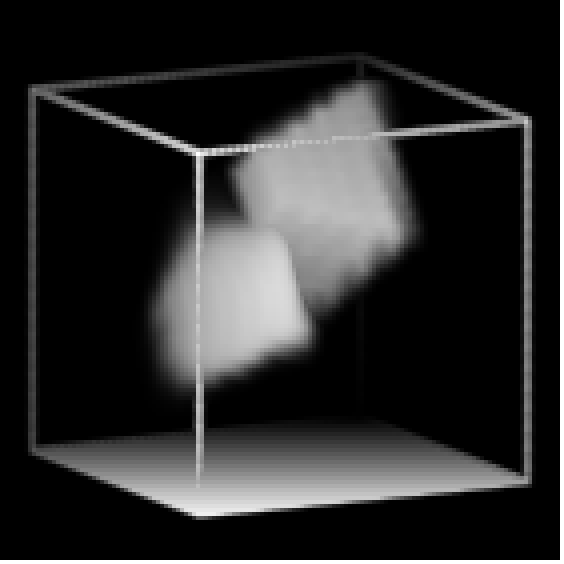} &
		\includegraphics[height=\volumeHeight]{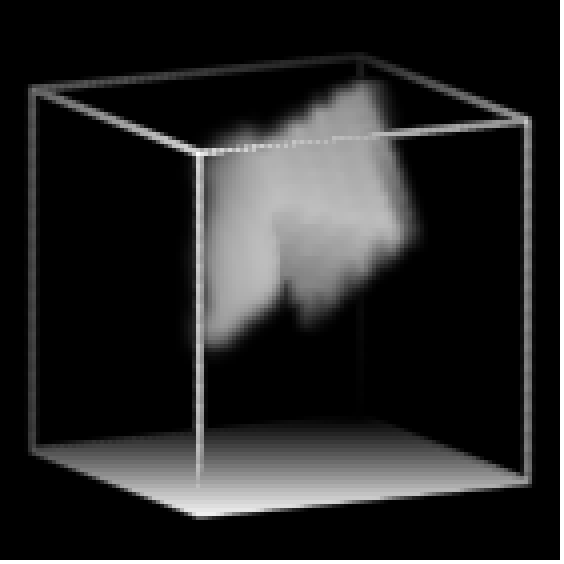} &
		\includegraphics[height=\volumeHeight]{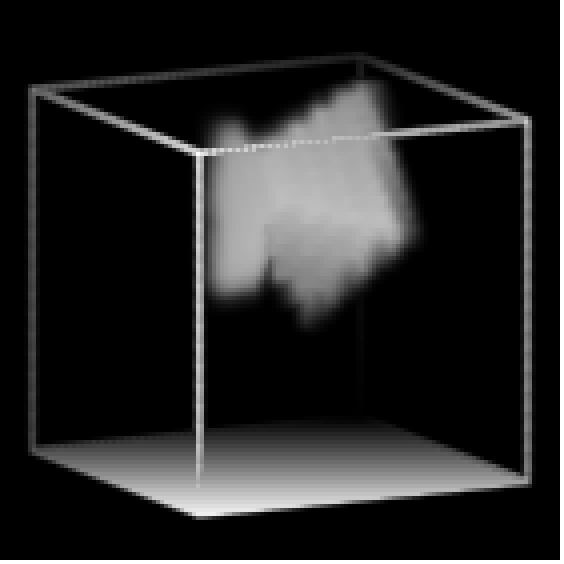} \\
		\includegraphics[height=\volumeHeight]{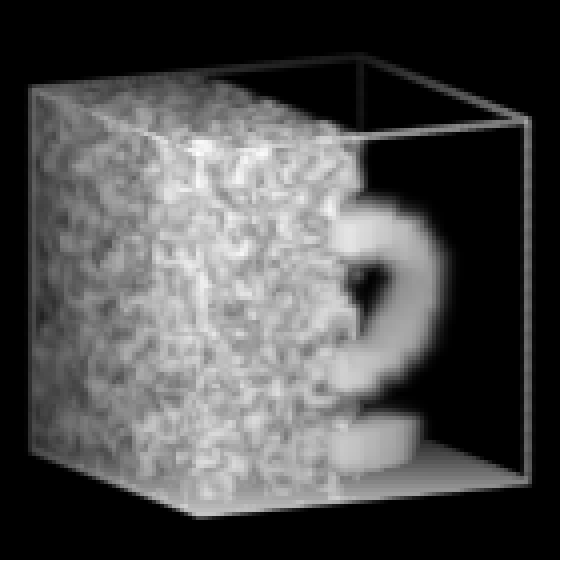} & 
		\includegraphics[height=\volumeHeight]{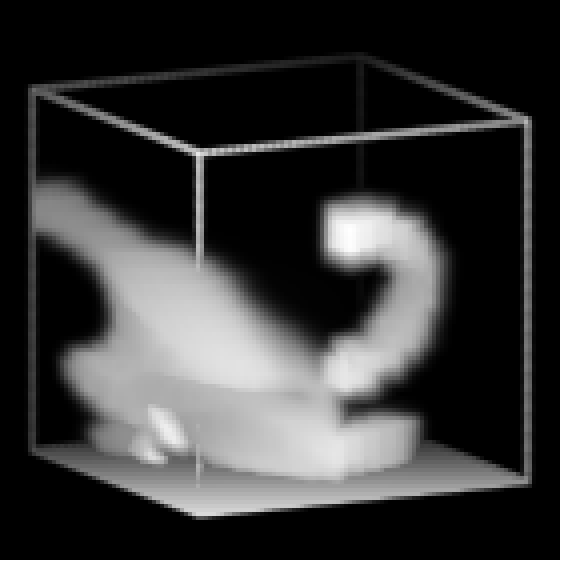} &
		\includegraphics[height=\volumeHeight]{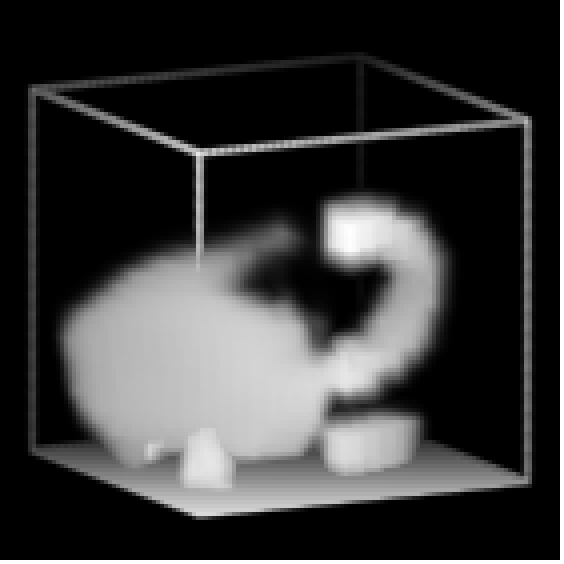} &
		\includegraphics[height=\volumeHeight]{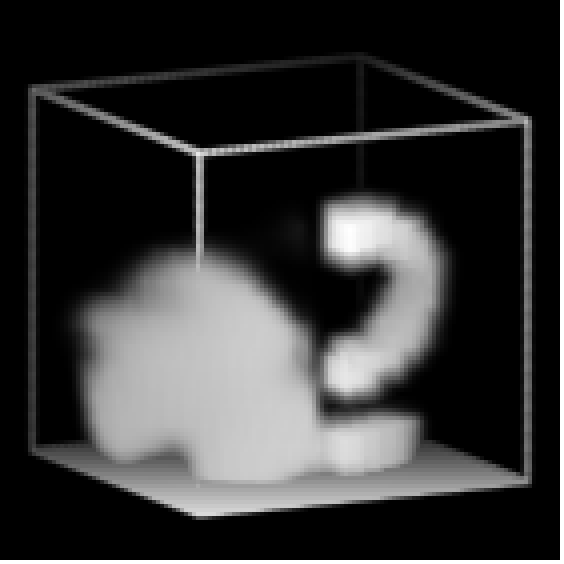} &
		\includegraphics[height=\volumeHeight]{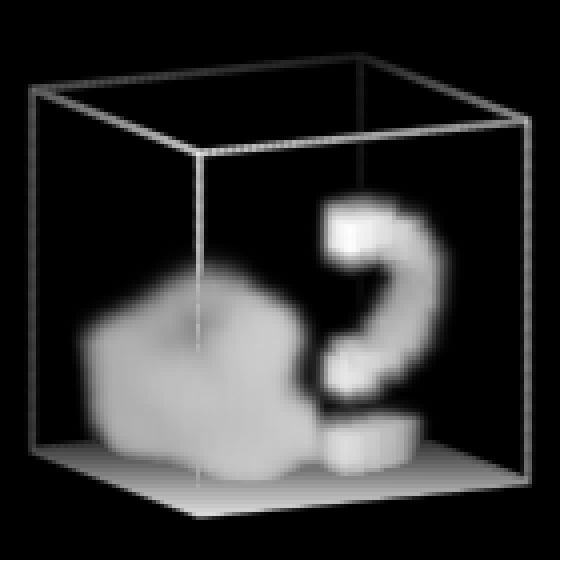} &
		\includegraphics[height=\volumeHeight]{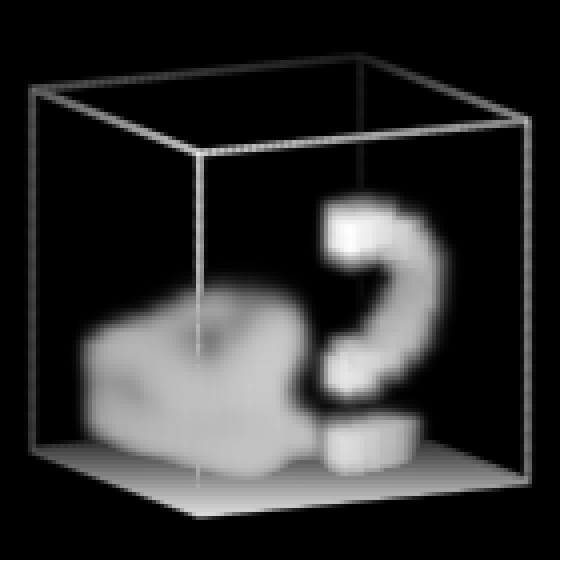} &
		\includegraphics[height=\volumeHeight]{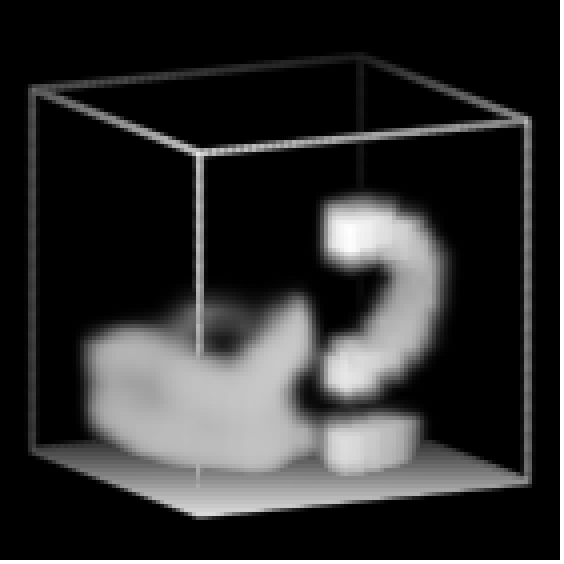}\\
	};
	\end{tikzpicture}
	\begin{tikzpicture}
	\matrix [matrix of nodes, column sep=1mm, row sep=1mm, every node/.style={inner sep=0, outer sep=0, anchor=center}]
	{
		\includegraphics[height=\volumeHeight]{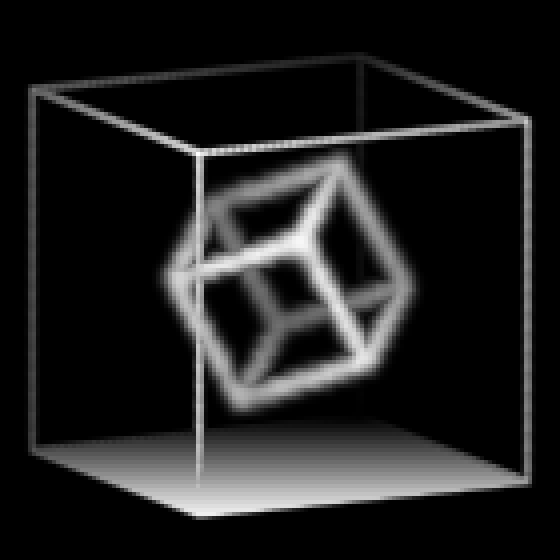} \\ 
		\includegraphics[height=\volumeHeight]{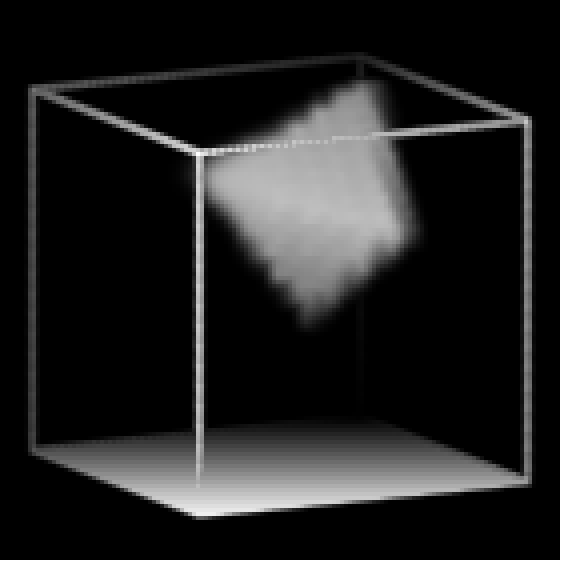} \\ 
		\includegraphics[height=\volumeHeight]{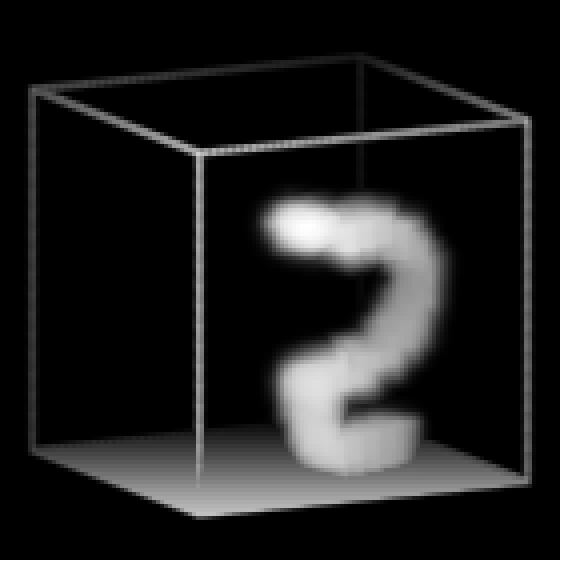} \\ 
	};
	\end{tikzpicture}	
	\caption{\textbf{Probabilistic volume completion (Necker Cube, Primitives, MNIST3D):} \textit{Left:} Full ground-truth volume. \textit{Middle:} First few steps of the MCMC chain completing the missing left half of the data volume. \textit{Right:} 100th iteration of the MCMC chain. Best viewed on a screen. Videos of these samples can be seen at \url{https://goo.gl/9hCkxs}.\label{fig:volume-completion}}
	\vspace{-0.1cm}
\end{figure*}

\begin{figure}[t]
	\centering
	\includegraphics[height=2.7cm]{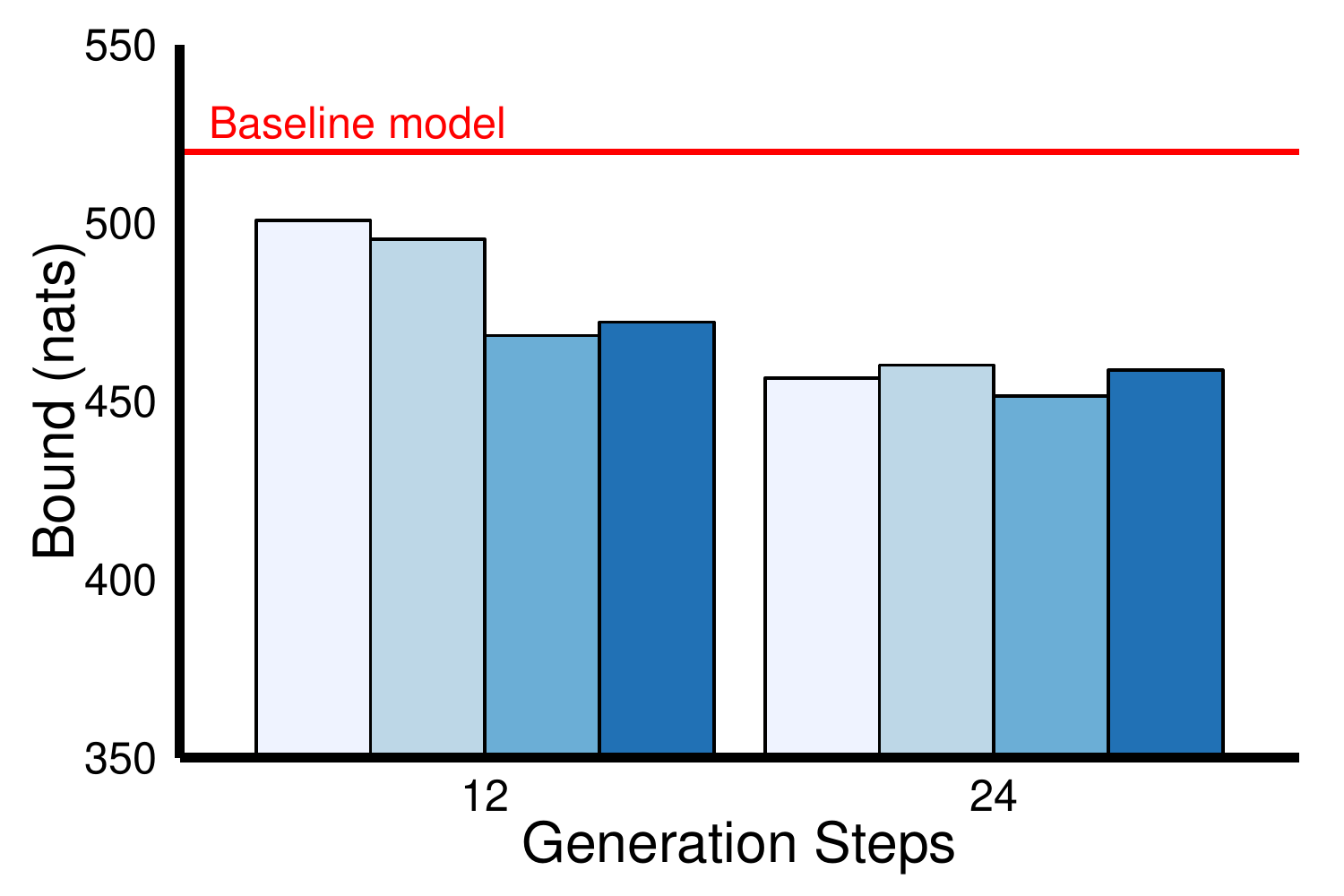} 
	\includegraphics[height=2.7cm]{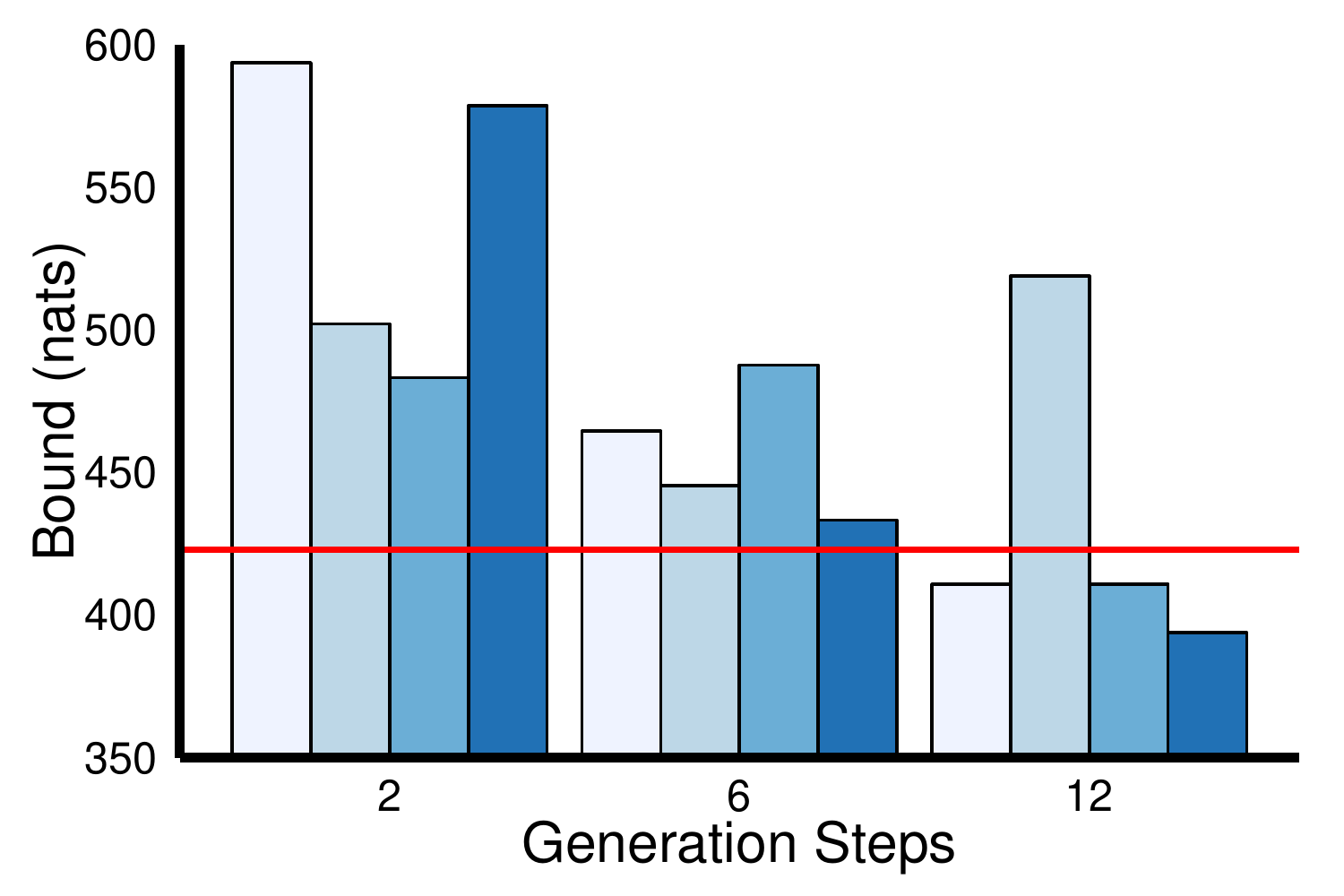} 
	\includegraphics[height=2.7cm]{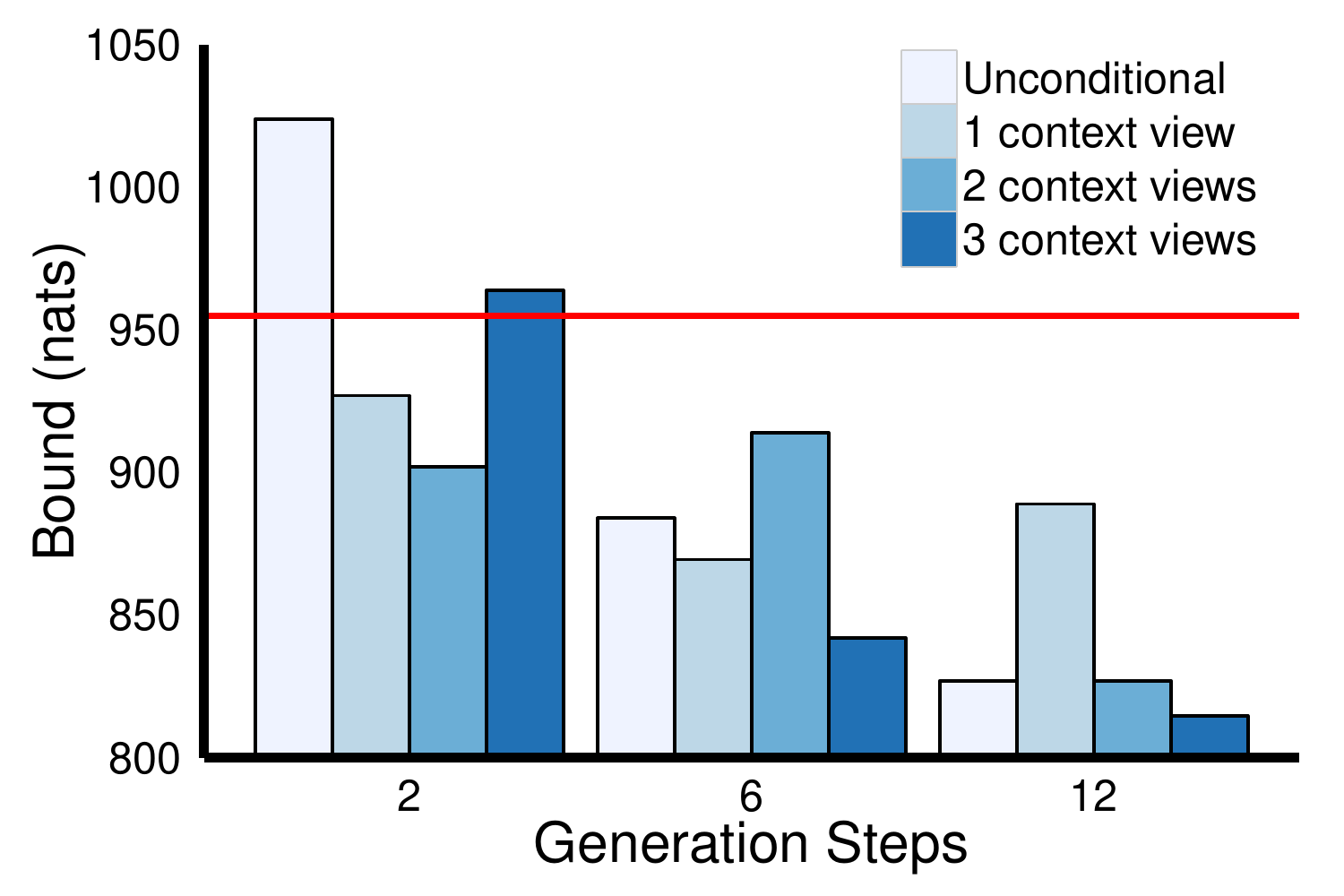} 
	\caption{\textbf{Quantitative results:} Increasing the number of steps or the number of contextual views both lead to improved log-likelihoods. The baseline model (red-line) is a deterministic convnet receiving 3 views as input. \textit{Left:} Primitives. \textit{Middle:} MNIST3D. \textit{Right:} ShapeNet. 
	 \label{fig:performance-view-conditional}}
\end{figure}

\vspace{-2mm}
\subsection{Probabilistic volume completion and denoising\label{sec:completion}} 
We test the ability of the model to impute missing data in 3D volumes. This is a capability that is often needed to remedy sensor defects that result in missing or corrupt regions, (see for instance \cite{wu20153d,dosovitskiy2015learning}). For volume completion, we use an unconditional volumetric model and alternate between inference and generation, feeding the result of one into the other. This procedure simulates a Markov chain and samples from the correct distribution, as we show in appendix \ref{appendix:mcmc}. We test the model by occluding half of a volume and completing the missing half. Figure \ref{fig:volume-completion} demonstrates that our model successfully completes large missing regions with high precision. More examples are shown in the appendix \ref{appendix:volume-completion}. 

\vspace{-2mm}
\subsection{Conditional volume generation\label{sec:conditional}} 
The models can also be trained with context representing the class of the object, allowing for class conditional generation. We train a class-conditional model on ShapeNet and show multiple samples for 10 of the 40 classes in figure \ref{fig:class-conditional-short}. The model produces high-quality samples of all classes. We note their sharpness, and that they accurately capture object rotations, and also provide a variety of plausible generations. Samples for all 40 ShapeNet classes are shown in appendix \ref{appendix:class-conditional}.

We also form conditional models using a single view of 2D contexts. Our results, shown in figure \ref{fig:view-conditional} indicate that the model generates plausible shapes that match the constraints provided by the context and captures the multi-modality of the posterior. For instance, consider figure \ref{fig:view-conditional} (right). The model is conditioned on a single view of an object that has a triangular shape. The model's three shown samples have greatly varying shape (e.g.,\ one is a cone and the other a pyramid), whilst maintaining the same triangular projection. More examples of these inferences are shown in the appendix \ref{appendix:view-conditional}. 

\vspace{-2mm}
\subsection{Performance benchmarking}
We quantify the performance of the model by computing likelihood scores, varying the number of conditioning views and the number of inference steps in the model. Figure \ref{fig:performance-view-conditional} indicates that the number of generation steps is a very important factor for performance (note that increasing the number of steps does not affect the total number of parameters in the model). Additional context views generally improves the model's performance but the effect is relatively small. With these experiments we establish the first benchmark of likelihood-bounds on Primitives (unconditional: 500 nats; 3-views: 472 nats), MNIST3D (unconditional: 410 nats; 3-views: 393 nats) and ShapeNet (unconditional: 827 nats; 3-views: 814 nats). 
As a strong baseline, we have also trained a deterministic 6-layer volumetric convolutional network with Bernoulli likelihoods to generate volumes conditioned on 3 views. The performance of this model is indicated by the red line in figure \ref{fig:performance-view-conditional}. Our generative model substantially outperforms the baseline for all 3 datasets, even when conditioned on a \emph{single view}.

\vspace{-2mm}
\subsection{Multi-view training\label{sec:multi-view}} 
In most practical applications, ground-truth volumes are not available for training. Instead, data is captured as a collection of images (e.g.,\ from a multi-camera rig or a moving robot). To accommodate this fact, we extend the generative model with a projection operator that maps the internal volumetric representation $\vh_T$ to a 2D image $\vxh$. This map imitates a `camera' in that it first applies an affine transformation to the volumetric representation, and then flattens the result using a convolutional network. The parameters of this projection operator are trained jointly with the rest of the model. Further details are explained in the appendix \ref{appendix:learnedProjectionOperator}.

In this experiment we train the model to learn to reproduce an image of the object given one or more views of it from fixed camera locations. It is the model's responsibility to infer the volumetric representation as well as the camera's position relative to the volume. It is clear to see how the model can `cheat' by generating volumes that lead to good reconstructions but do not capture the underlying 3D structure. We overcome this by reconstructing \textit{multiple} views from the \textit{same} volumetric representation and using the context information to fix a reference frame for the internal volume. This enforces a consistent hidden representation that generalises to new views.

We train a model that conditions on 3 fixed context views to reproduce 10 simultaneous random views of an object. After training, we can sample a 3D representation given the context, and render it from arbitrary camera angles. We show the model's ability to perform this kind of inference in figure \ref{fig:multi-view-to-multi-view-shapenet}. The resulting network is capable of producing an abstract 3D representation from 2D observations that is amenable to, for instance, arbitrary camera rotations.

\vspace{-2mm}
\subsection{Single-view training\label{sec:single-view}} 

Finally, we consider a mesh-based 3D representation and demonstrate the feasibility of training our models with a fully-fledged, black-box renderer in the loop. Such renderers (e.g.\ OpenGL) accurately capture the relationship between a 3D representation and its 2D rendering out of the box. This image is a complex function of the objects' colors, materials and textures, positions of lights, and that of other objects. By building this knowledge into the model we give hints for learning and constrain its hidden representation.

We consider again the Primitives dataset, however now we only have access to 2D images of the objects at training time. The primitives are textured with a color on each side (which increases the complexity of the data, but also makes it easier to detect the object's orientation relative to the camera), and are rendered under three lights. We train an unconditional model that given a 2D image, infers the parameters of a 3D mesh and its orientation relative to the camera, such that when textured and rendered reconstructs the image accurately. The inferred mesh is formed by a collection of 162 vertices that can move on fixed lines that spread from the object's center, and is parameterized by the vertices' positions on these lines.

The results of these experiments are shown in figure \ref{fig:primitives-2d-2d}. We observe that in addition to reconstructing the images accurately (which implies correct inference of mesh and camera), the model correctly infers the extents of the object not in view, as demonstrated by views of the inferred mesh from unobserved camera angles. 

\begin{figure*}
	\centering	
	\hspace{-1.35cm}
	\begin{tikzpicture}
	\matrix [matrix of nodes, column sep=1mm, row sep=1mm, every node/.style={inner sep=0, outer sep=0, anchor=center}]
	{
		table & \includegraphics[height=\volumeHeight]{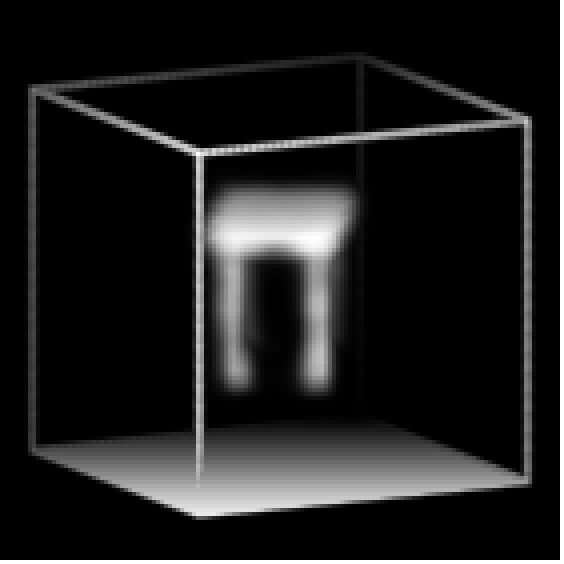} & 
		\includegraphics[height=\volumeHeight]{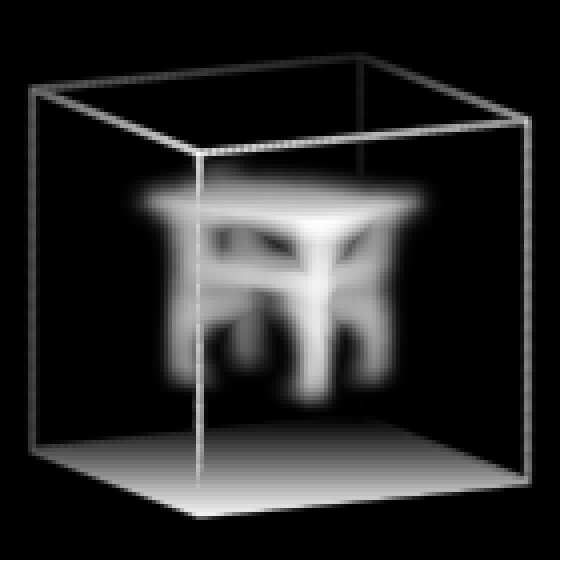} &
		\includegraphics[height=\volumeHeight]{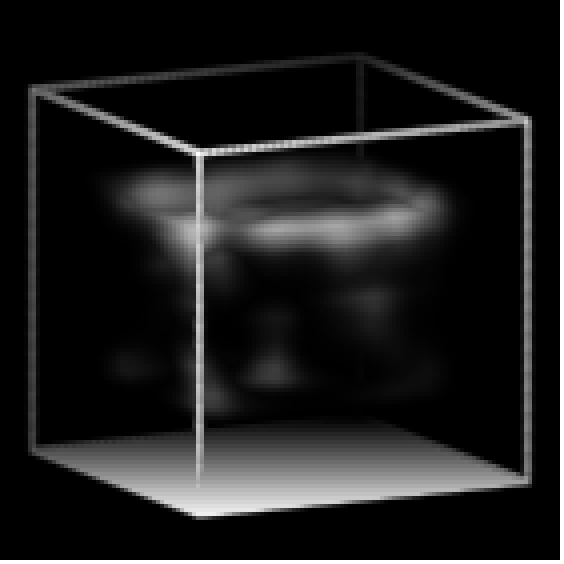} &
		\includegraphics[height=\volumeHeight]{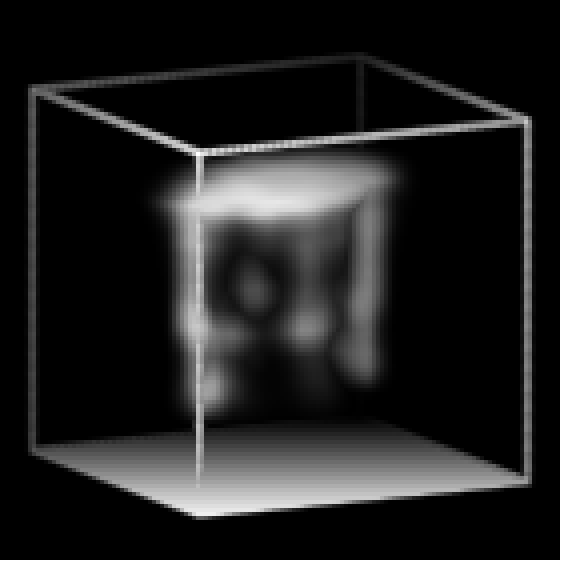} \\
		vase & \includegraphics[height=\volumeHeight]{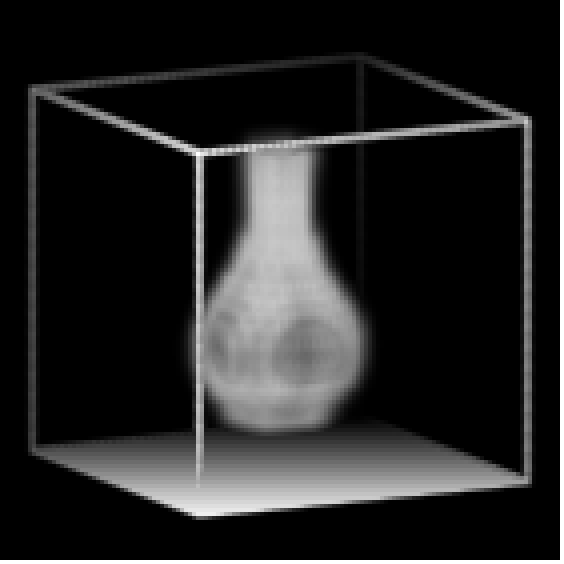} & 
		\includegraphics[height=\volumeHeight]{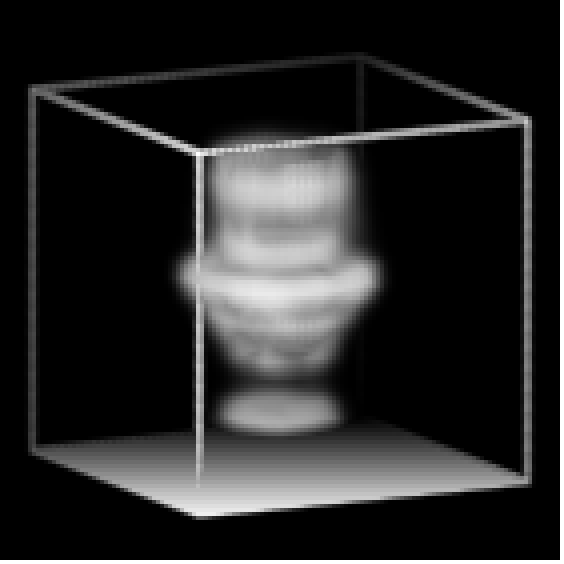} &
		\includegraphics[height=\volumeHeight]{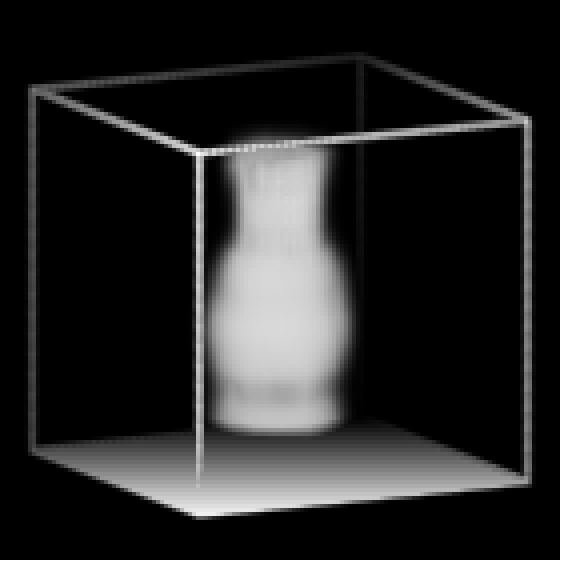} &
		\includegraphics[height=\volumeHeight]{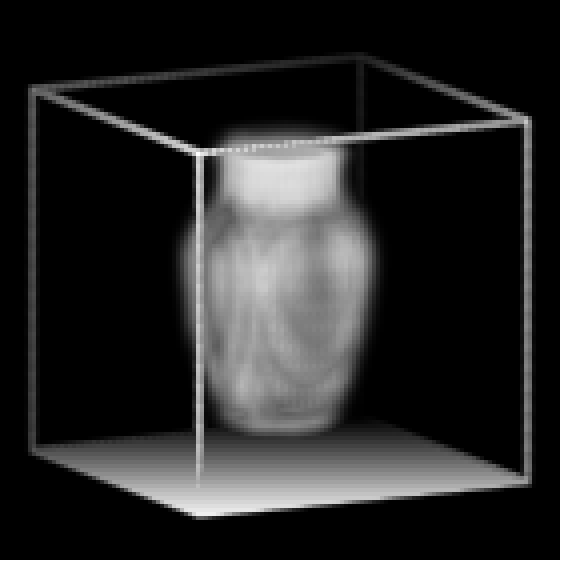} \\
		car & \includegraphics[height=\volumeHeight]{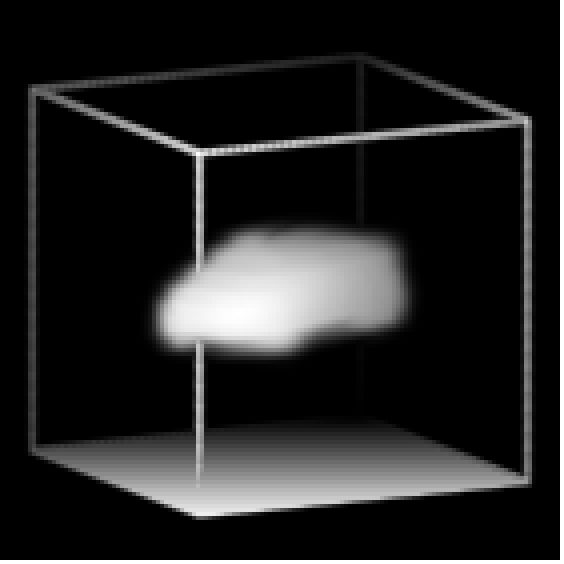} & 
		\includegraphics[height=\volumeHeight]{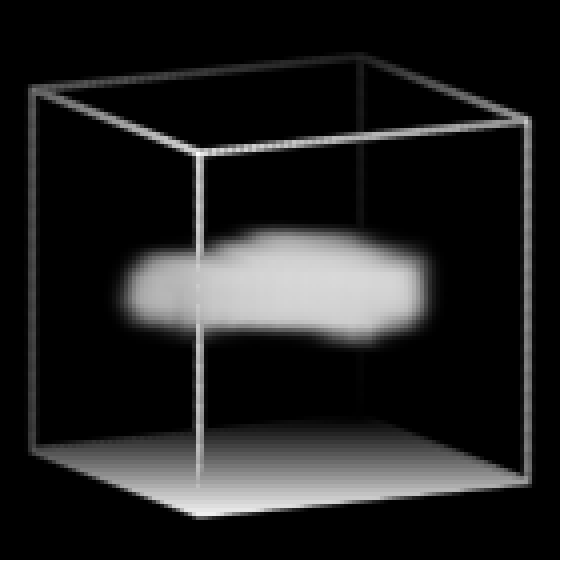} &
		\includegraphics[height=\volumeHeight]{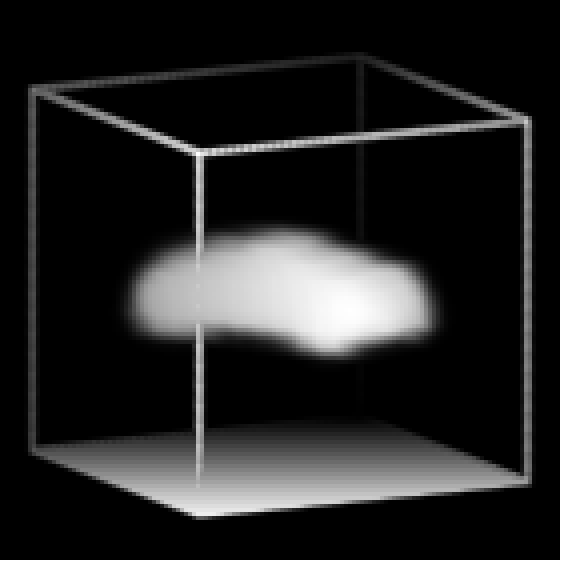} &
		\includegraphics[height=\volumeHeight]{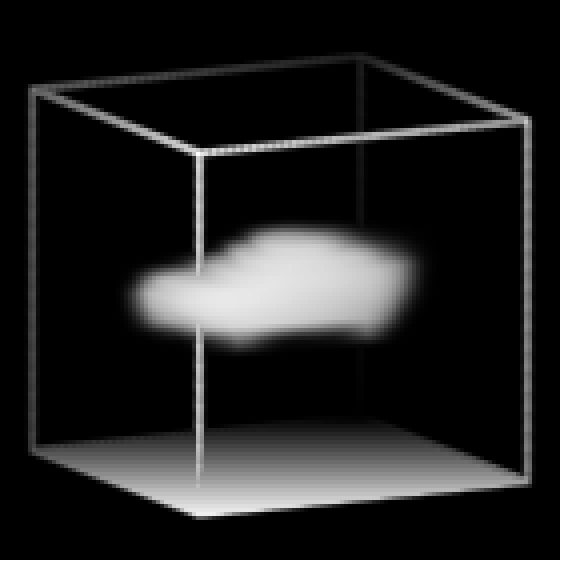} \\
		laptop & \includegraphics[height=\volumeHeight]{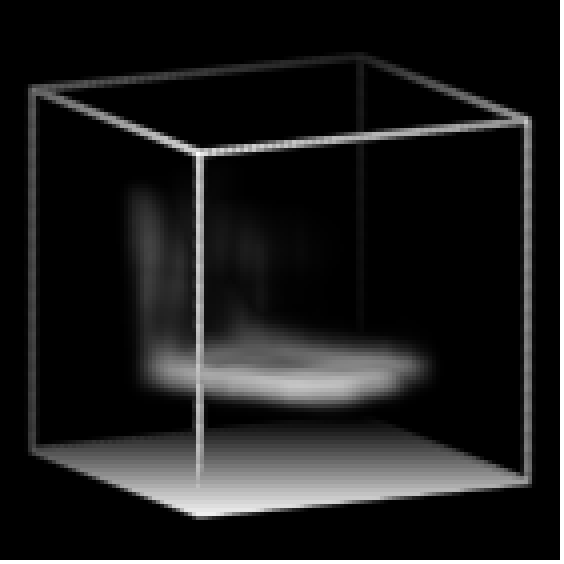} & 
		\includegraphics[height=\volumeHeight]{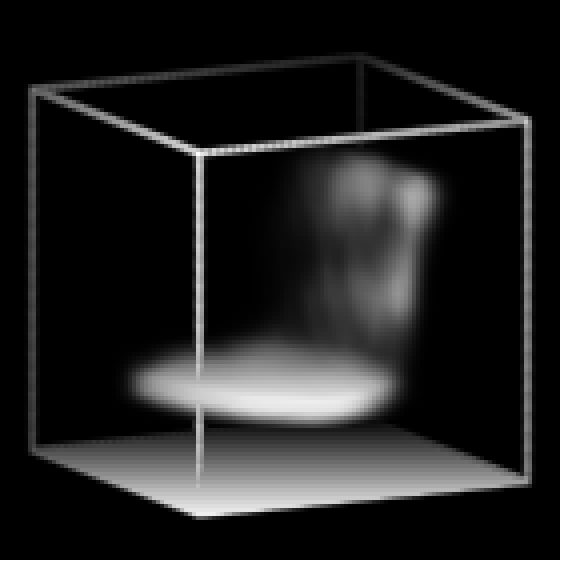} &
		\includegraphics[height=\volumeHeight]{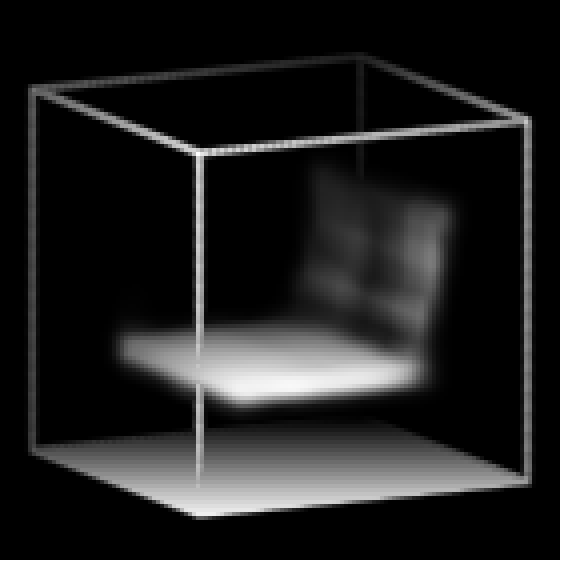} &
		\includegraphics[height=\volumeHeight]{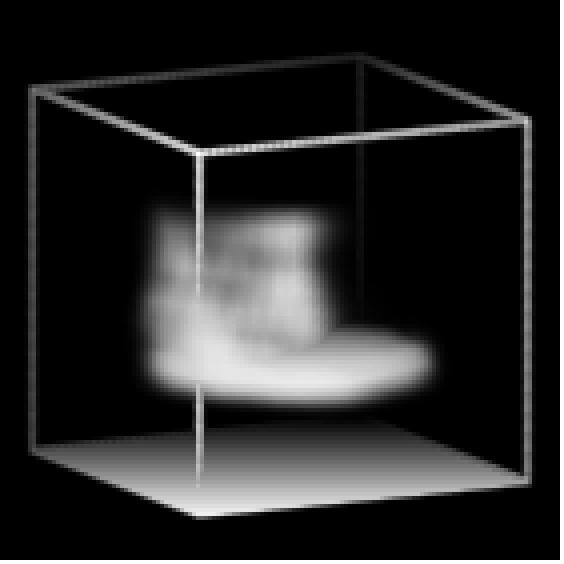} \\
	};
	\end{tikzpicture}
	\begin{tikzpicture}
	\matrix [matrix of nodes, column sep=1mm, row sep=1mm, every node/.style={inner sep=0, outer sep=0, anchor=center}]
	{
		airplane & \includegraphics[height=\volumeHeight]{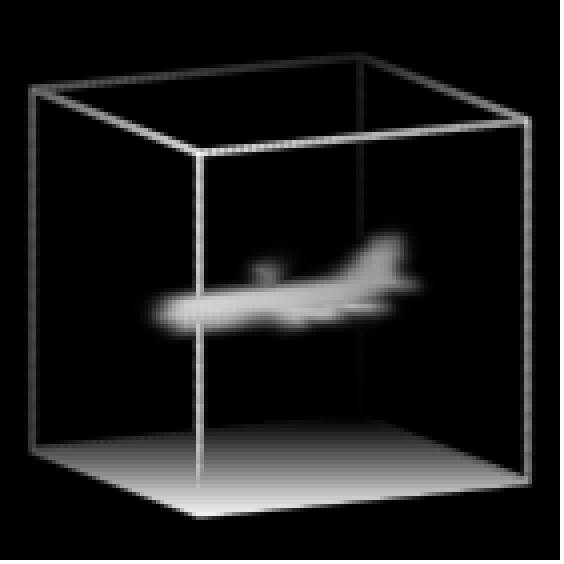} & 
		\includegraphics[height=\volumeHeight]{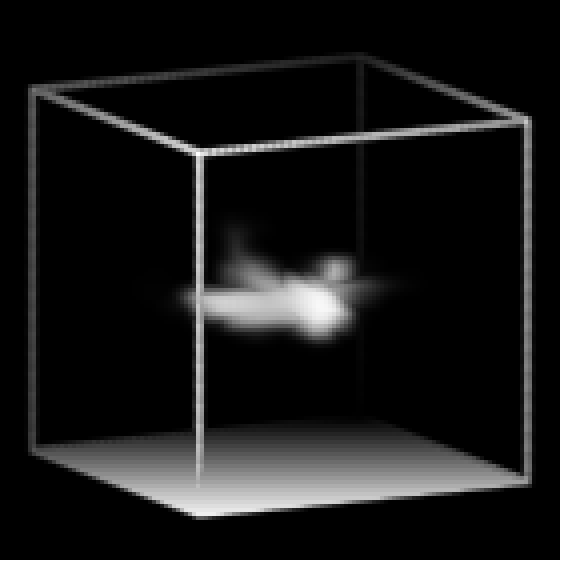} &
		\includegraphics[height=\volumeHeight]{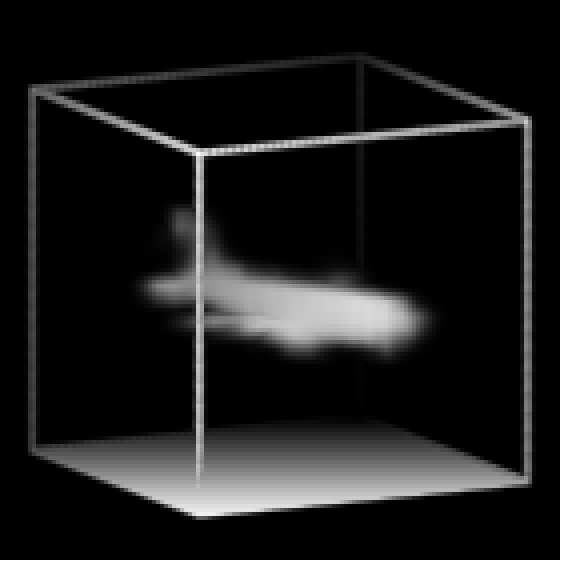} &
		\includegraphics[height=\volumeHeight]{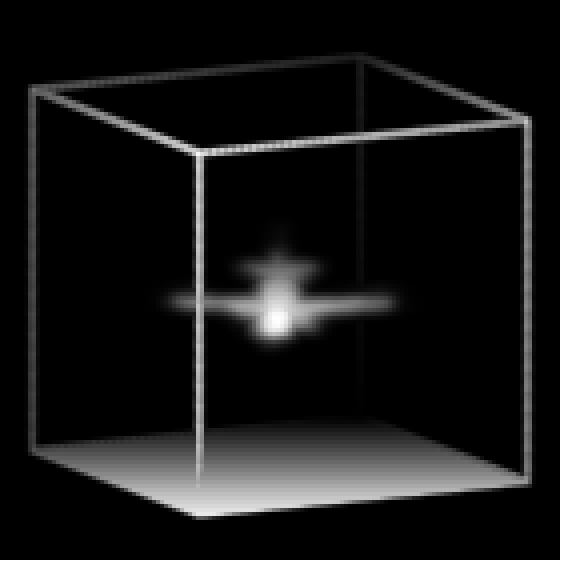} \\
		bowl & \includegraphics[height=\volumeHeight]{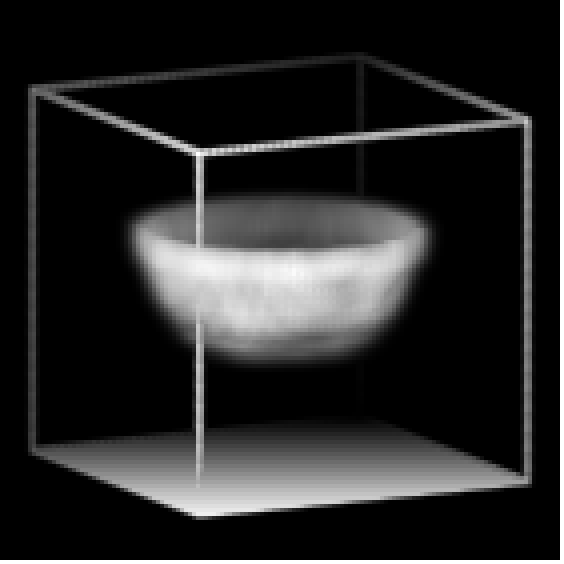} & 
		\includegraphics[height=\volumeHeight]{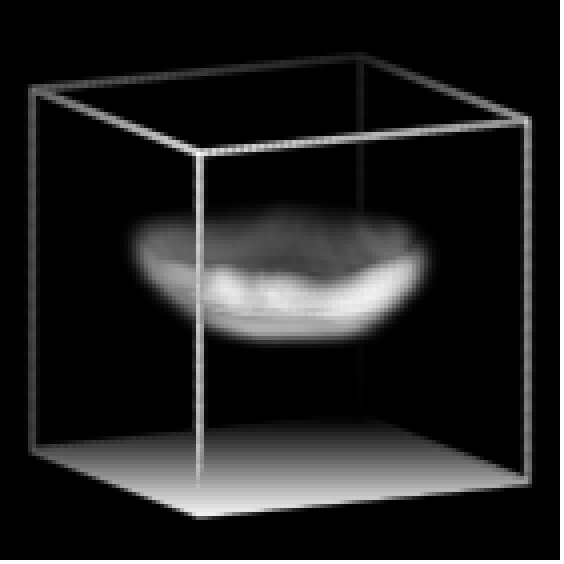} &
		\includegraphics[height=\volumeHeight]{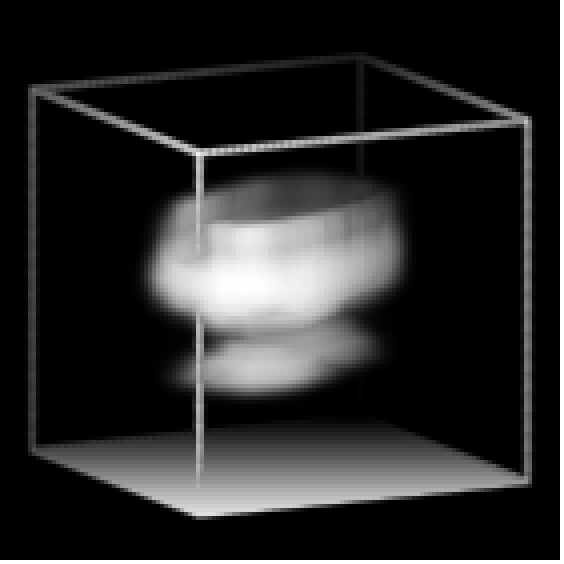} &
		\includegraphics[height=\volumeHeight]{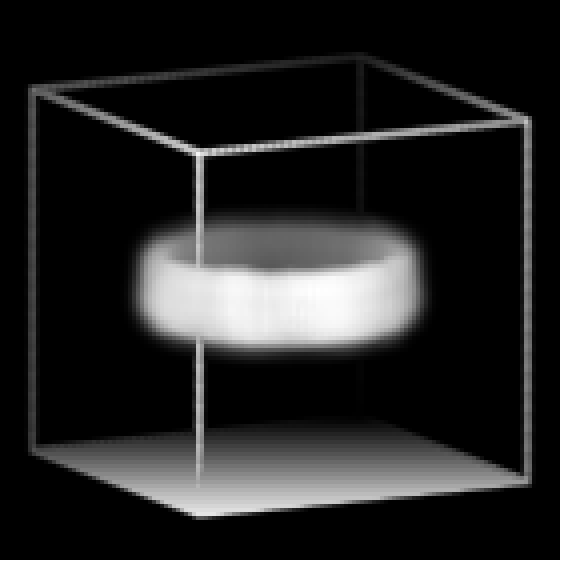} \\
		person & \includegraphics[height=\volumeHeight]{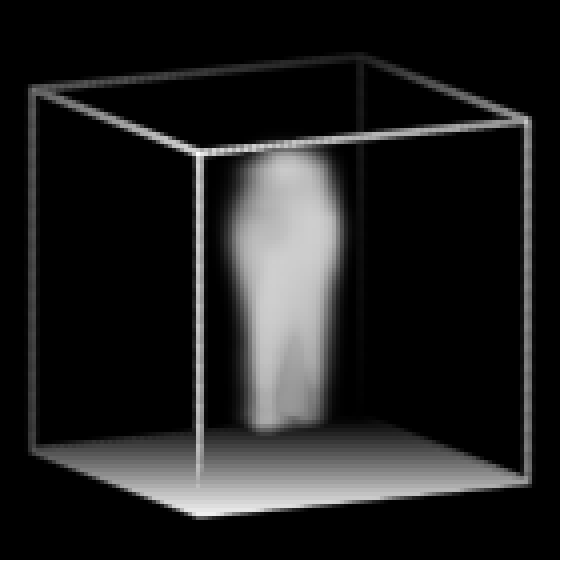} & 
		\includegraphics[height=\volumeHeight]{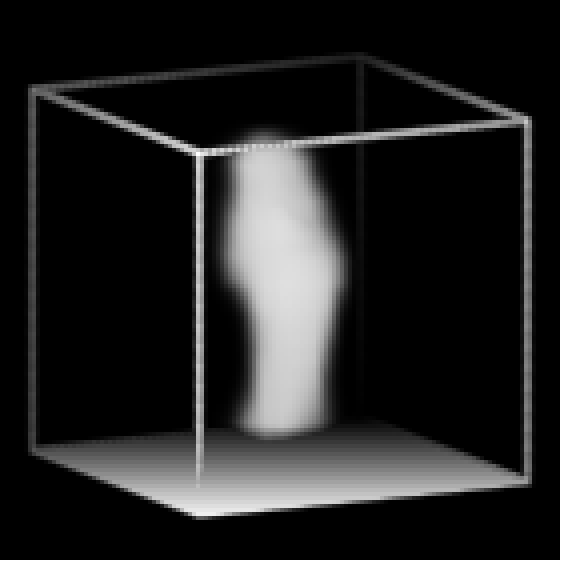} &
		\includegraphics[height=\volumeHeight]{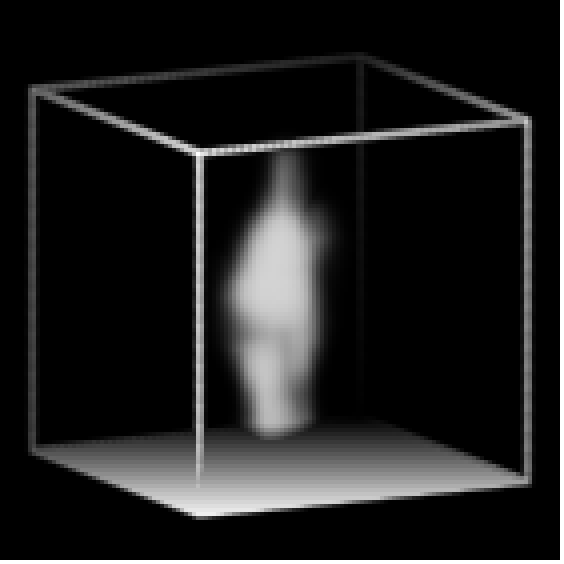} &
		\includegraphics[height=\volumeHeight]{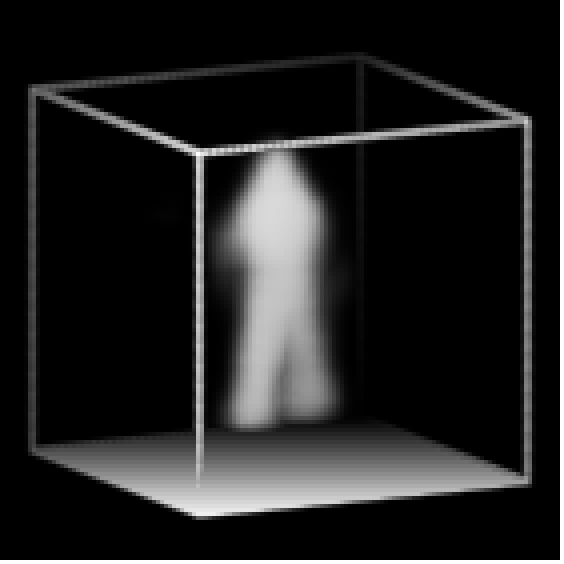} \\
		cone & \includegraphics[height=\volumeHeight]{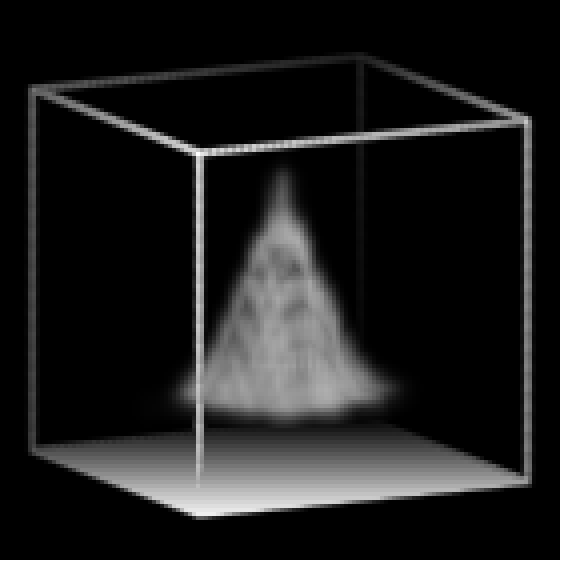} & 
		\includegraphics[height=\volumeHeight]{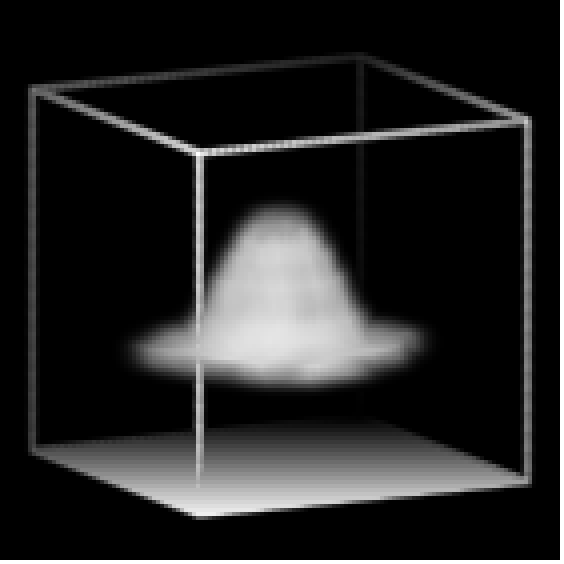} &
		\includegraphics[height=\volumeHeight]{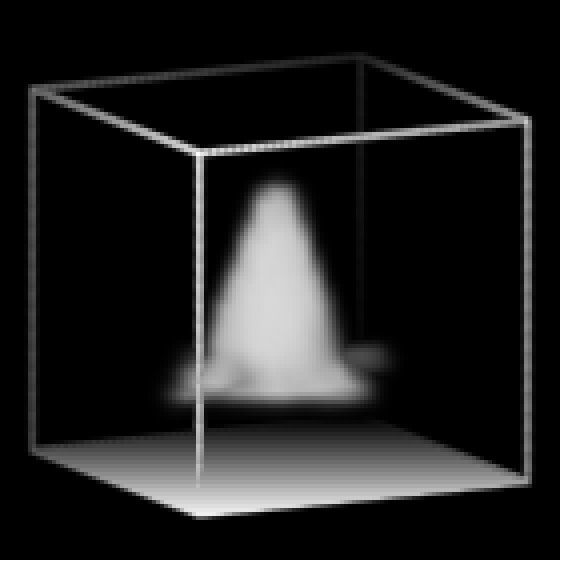} &
		\includegraphics[height=\volumeHeight]{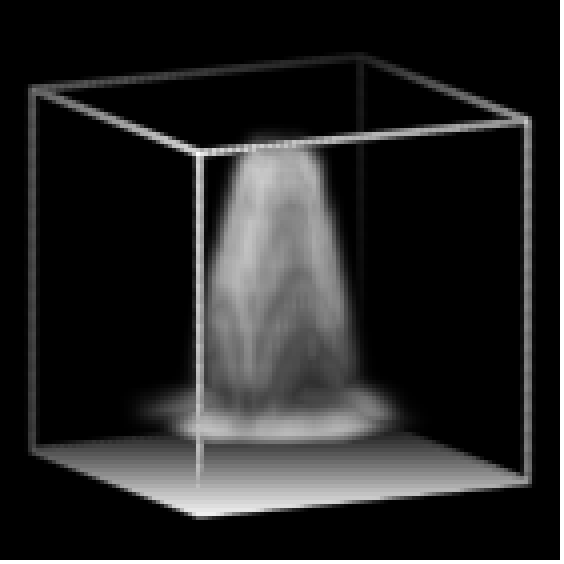} \\
	};
	\end{tikzpicture}
	\caption{\textbf{Class-conditional samples}: Given a one-hot encoding of class as context, the model produces high-quality samples. Notice, for instance, sharpness and variability of generations for `chair', accurate capture of rotations for `car', and even identifiable legs for the `person' class. Videos of these samples can be seen at \url{https://goo.gl/9hCkxs}. \label{fig:class-conditional-short}}
\end{figure*}

\section{Discussion}

In this paper we introduced a powerful family of 3D generative models inspired by recent advances in image modeling. When trained on ground-truth volumes, they can produce high-quality samples that capture the multi-modality of the data. We further showed how common inference tasks, such as that of inferring a posterior over 3D structures given a 2D image, can be performed efficiently via conditional training. We also demonstrated end-to-end training of such models directly from 2D images through the use of differentiable renderers. This demonstrates for the first time the feasibility of learning to infer 3D representations in a purely unsupervised manner.

We experimented with two kinds of 3D representations: volumes and meshes. Volumes are flexible and can capture a diverse range of structures, however they introduce modeling and computational challenges due to their high dimensionality. Conversely, meshes can be much lower dimensional and therefore easier to work with, and they are the data-type of choice for common rendering engines, however standard paramaterizations can be restrictive in the range of shapes they can capture. It will be of interest to consider other representation types, such as NURBS, or training with a volume-to-mesh conversion algorithm (e.g.,\ marching cubes) in the loop.

\newlength{\projectionHeight}
\setlength{\projectionHeight}{1.0cm}

\begin{figure*}[hb!]
	\centering 
	\begin{tikzpicture}
	\matrix [matrix of nodes, column sep=1mm, row sep=1mm, every node/.style={inner sep=0, outer sep=0, anchor=south}]
	{
		\includegraphics[height=\volumeHeight]{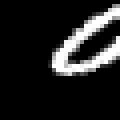} &
		\includegraphics[height=\volumeHeight]{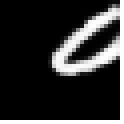} &
		\includegraphics[height=\volumeHeight]{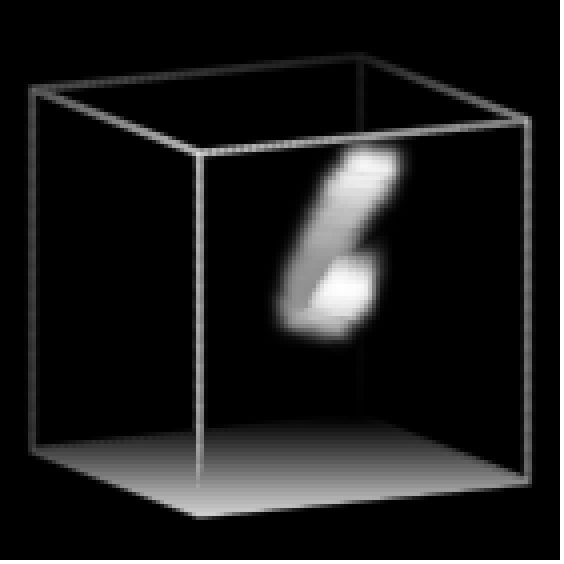} & & &
		\includegraphics[height=\volumeHeight]{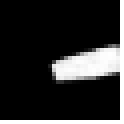} &
		\includegraphics[height=\volumeHeight]{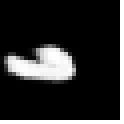} \\
		\includegraphics[height=\volumeHeight]{volumetric/mnistSingleView/_context_view__3__1__1_} &
		\includegraphics[height=\volumeHeight]{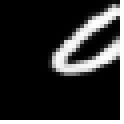} &
		\includegraphics[height=\volumeHeight]{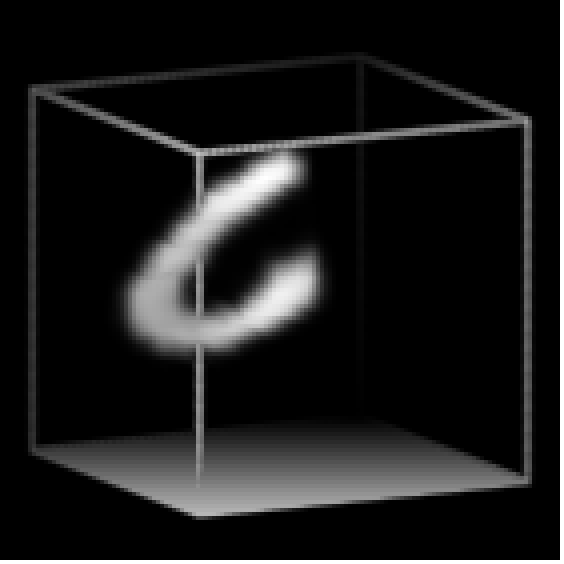} & & &
		\includegraphics[height=\volumeHeight]{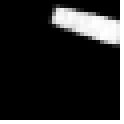} &
		\includegraphics[height=\volumeHeight]{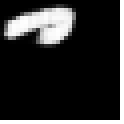} \\
		\includegraphics[height=\volumeHeight]{volumetric/mnistSingleView/_context_view__3__1__1_} &
		\includegraphics[height=\volumeHeight]{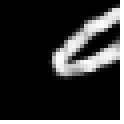} &
		\includegraphics[height=\volumeHeight]{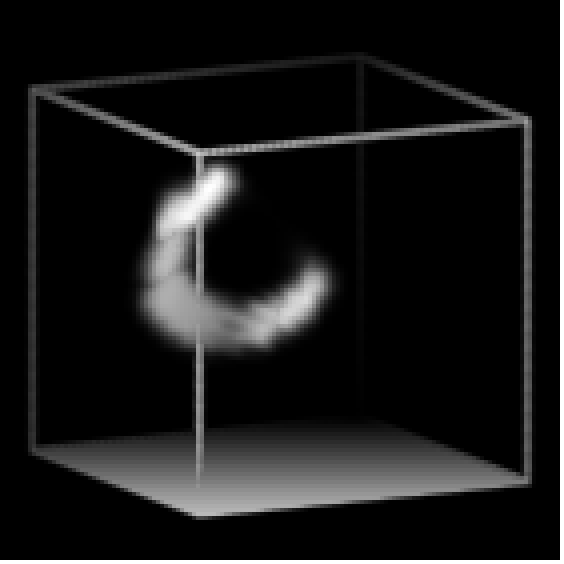} & & &
		\includegraphics[height=\volumeHeight]{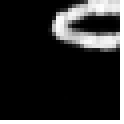} &
		\includegraphics[height=\volumeHeight]{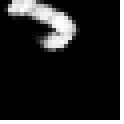}
		\\
		$\mathbf{c}$ &
		$\mathbf{\hat{x}}$ &
		$\mathbf{h}$ & & &
		$\mathbf{r}^1$ & 
		$\mathbf{r}^2$ \\
	};
	\end{tikzpicture}
	\hspace{0.3cm}
	\begin{tikzpicture}
	\matrix [matrix of nodes, column sep=1mm, row sep=1mm, every node/.style={inner sep=0, outer sep=0, anchor=south}]
	{
		\includegraphics[height=\volumeHeight]{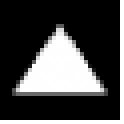} &
		\includegraphics[height=\volumeHeight]{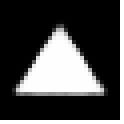} &
		\includegraphics[height=\volumeHeight]{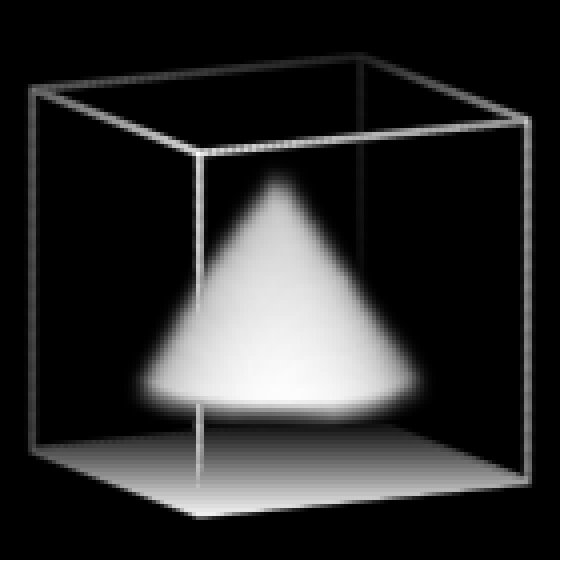} & & &
		\includegraphics[height=\volumeHeight]{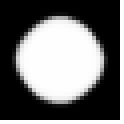} &
		\includegraphics[height=\volumeHeight]{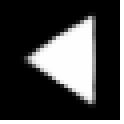} \\
		\includegraphics[height=\volumeHeight]{volumetric/shapenetSingleView/_context_view__2__1__1_} &
		\includegraphics[height=\volumeHeight]{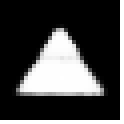} &
		\includegraphics[height=\volumeHeight]{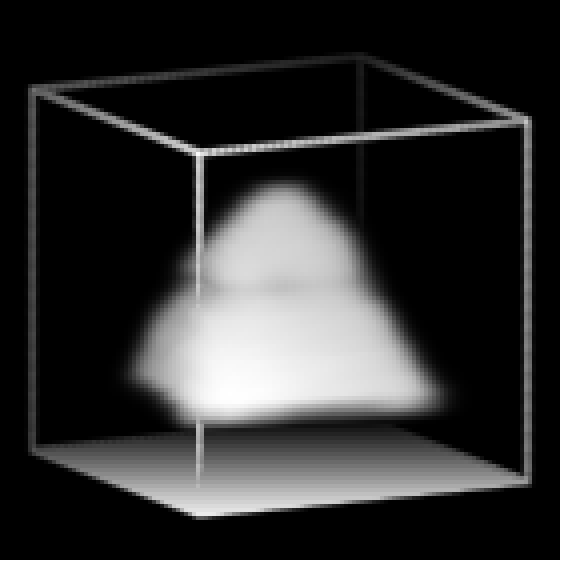} & & &
		\includegraphics[height=\volumeHeight]{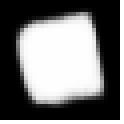} &
		\includegraphics[height=\volumeHeight]{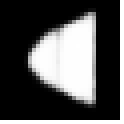} \\
		\includegraphics[height=\volumeHeight]{volumetric/shapenetSingleView/_context_view__2__1__1_} &
		\includegraphics[height=\volumeHeight]{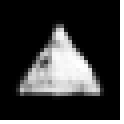} &
		\includegraphics[height=\volumeHeight]{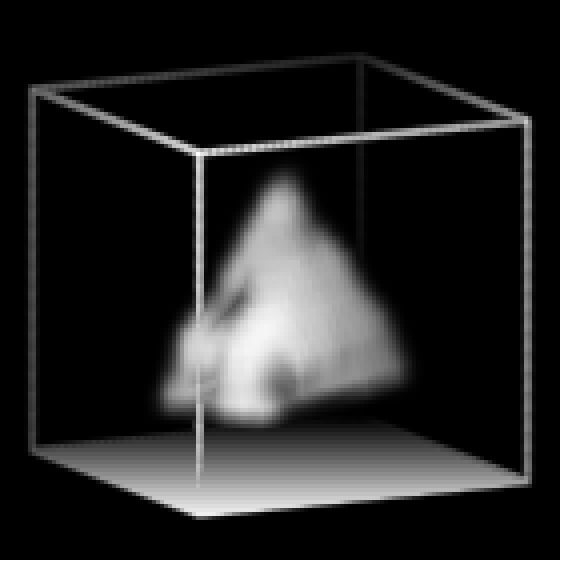} & & &
		\includegraphics[height=\volumeHeight]{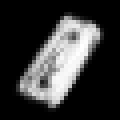} &
		\includegraphics[height=\volumeHeight]{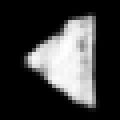}
		\\
		$\mathbf{c}$ &
		$\mathbf{\hat{x}}$ &
		$\mathbf{h}$ & & &
		$\mathbf{r}^1$ & 
		$\mathbf{r}^2$ \\
	};
	\end{tikzpicture}
	\vspace{-1mm}
	\caption{\textbf{Recovering 3D structure from 2D images:} The model is trained on volumes, conditioned on $\mathbf{c}$ as context. Each row corresponds to an independent sample $\mathbf{h}$ from the model given $\mathbf{c}$. We display $\mathbf{\hat{x}}$, which is $\mathbf{h}$ viewed from the same angle as $\mathbf{c}$. Columns $\mathbf{r}^1$ and $\mathbf{r}^2$ display the inferred 3D representation $\mathbf{h}$ from different viewpoints. The model generates plausible, but varying, interpretations, capturing the inherent ambiguity of the problem. \textit{Left:} MNIST3D. \textit{Right:} ShapeNet. Videos of these samples can be seen at \url{https://goo.gl/9hCkxs}.\label{fig:view-conditional}}
\end{figure*}

\begin{figure*}[hb!]
	\centering	
	\begin{tikzpicture}
	\matrix [matrix of nodes, column sep=1mm, row sep=1mm, every node/.style={inner sep=0, outer sep=0, anchor=center}]
	{
		\includegraphics[height=\volumeHeight, angle=90]{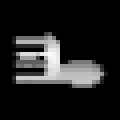} & 
		\includegraphics[height=\volumeHeight, angle=90]{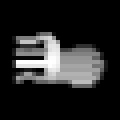} &
		\includegraphics[height=\volumeHeight, angle=90]{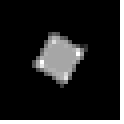} \\
		\includegraphics[height=\volumeHeight, angle=90]{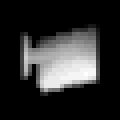} & 
		\includegraphics[height=\volumeHeight, angle=90]{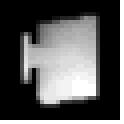} &
		\includegraphics[height=\volumeHeight, angle=90]{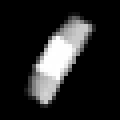} \\
		\includegraphics[height=\volumeHeight, angle=90]{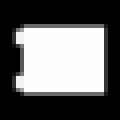} & 
		\includegraphics[height=\volumeHeight, angle=90]{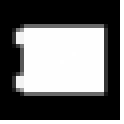} &
		\includegraphics[height=\volumeHeight, angle=90]{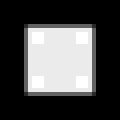} \\
		$\mathbf{c}^1$ &
		$\mathbf{c}^2$ &
		$\mathbf{c}^3$ \\
	};
	\end{tikzpicture}
	\begin{tikzpicture}
	\matrix [matrix of nodes, column sep=1mm, row sep=1mm, every node/.style={inner sep=0, outer sep=0, anchor=center}]
	{
		\includegraphics[height=\volumeHeight]{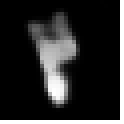} & 
		\includegraphics[height=\volumeHeight]{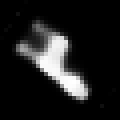} &
		\includegraphics[height=\volumeHeight]{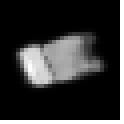} &
		\includegraphics[height=\volumeHeight]{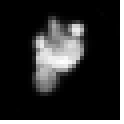} &
		\includegraphics[height=\volumeHeight]{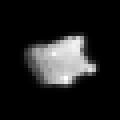} &
		\includegraphics[height=\volumeHeight]{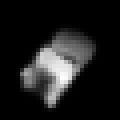} &
		\includegraphics[height=\volumeHeight]{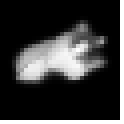} &
		\includegraphics[height=\volumeHeight]{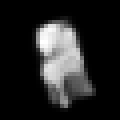} \\
		\includegraphics[height=\volumeHeight]{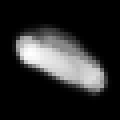} & 
		\includegraphics[height=\volumeHeight]{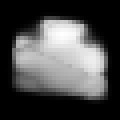} &
		\includegraphics[height=\volumeHeight]{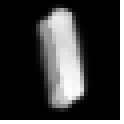} &
		\includegraphics[height=\volumeHeight]{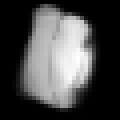} &
		\includegraphics[height=\volumeHeight]{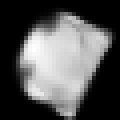} &
		\includegraphics[height=\volumeHeight]{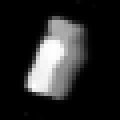} &
		\includegraphics[height=\volumeHeight]{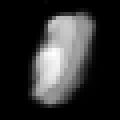} &
		\includegraphics[height=\volumeHeight]{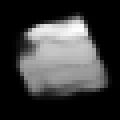} \\
		\includegraphics[height=\volumeHeight]{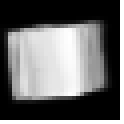} & 
		\includegraphics[height=\volumeHeight]{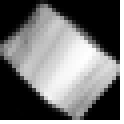} &
		\includegraphics[height=\volumeHeight]{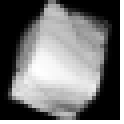} &
		\includegraphics[height=\volumeHeight]{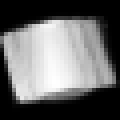} &
		\includegraphics[height=\volumeHeight]{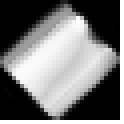} &
		\includegraphics[height=\volumeHeight]{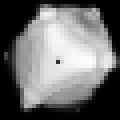} &
		\includegraphics[height=\volumeHeight]{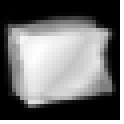} &
		\includegraphics[height=\volumeHeight]{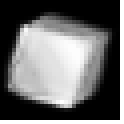} \\
		$\mathbf{r}^1$ &
		$\mathbf{r}^2$ &
		$\mathbf{r}^3$ &
		$\mathbf{r}^4$ &
		$\mathbf{r}^5$ &
		$\mathbf{r}^6$ &
		$\mathbf{r}^7$ &
		$\mathbf{r}^8$ \\ 
	};
	\end{tikzpicture}
	\vspace{-1mm}
	\caption{\textbf{3D structure from multiple 2D images:} Conditioned on 3 depth images of an object, the model is trained to generate depth images of that object from 10 different views. \textit{Left:} Context views. \textit{Right:} Columns $\mathbf{r}^1$ through $\mathbf{r}^8$ display the inferred abstract 3D representation $\mathbf{h}$ rendered from different viewpoints by the learned projection operator. Videos of these samples can be seen at \url{https://goo.gl/9hCkxs}. \label{fig:multi-view-to-multi-view-shapenet}}
\end{figure*}

\begin{figure*}[hb!]
	\centering	
	\begin{tikzpicture}
		\matrix [matrix of nodes, column sep=1mm, row sep=1mm, every node/.style={inner sep=0, outer sep=0, anchor=south}]
		{
		  \includegraphics[height=\volumeHeight]{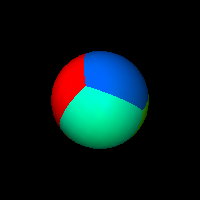} & 
		  \includegraphics[height=\volumeHeight]{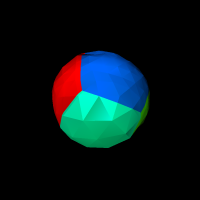} & 
		  & 
		  &
		  &
		  \includegraphics[height=\volumeHeight]{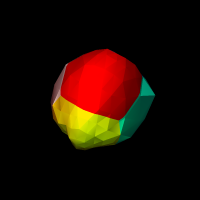} & 
		  \includegraphics[height=\volumeHeight]{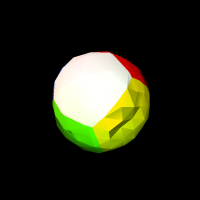} & 
		  \includegraphics[height=\volumeHeight]{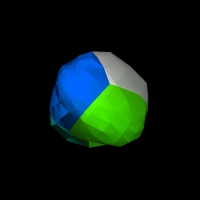} \\
		  \includegraphics[height=\volumeHeight]{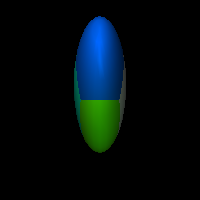} & 
		  \includegraphics[height=\volumeHeight]{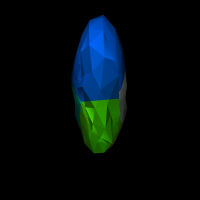} & 
		  & 
		  &
		  &
		  \includegraphics[height=\volumeHeight]{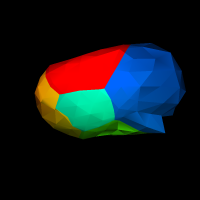} & 
		  \includegraphics[height=\volumeHeight]{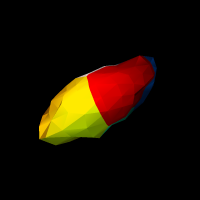} & 
		  \includegraphics[height=\volumeHeight]{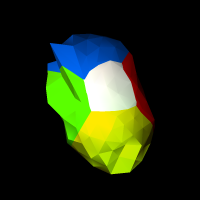} \\
		  \includegraphics[height=\volumeHeight]{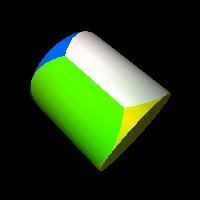} & 
		  \includegraphics[height=\volumeHeight]{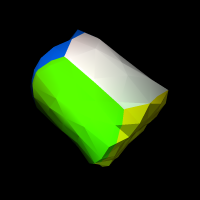} & 
		  & 
		  &
		  &
		  \includegraphics[height=\volumeHeight]{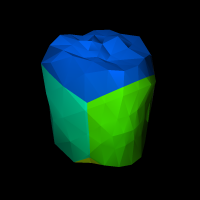} & 
		  \includegraphics[height=\volumeHeight]{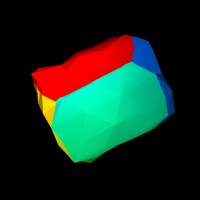} & 
		  \includegraphics[height=\volumeHeight]{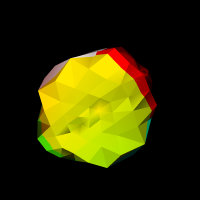} \\
		  \includegraphics[height=\volumeHeight]{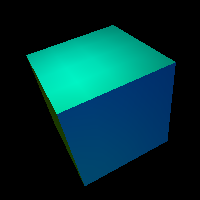} & 
		  \includegraphics[height=\volumeHeight]{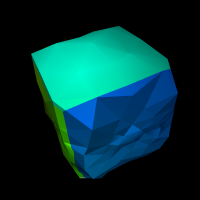} & 
		  & 
		  &
		  &
		  \includegraphics[height=\volumeHeight]{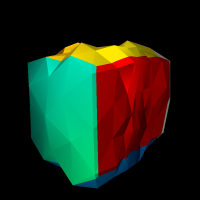} & 
		  \includegraphics[height=\volumeHeight]{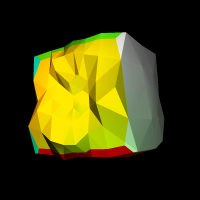} & 
		  \includegraphics[height=\volumeHeight]{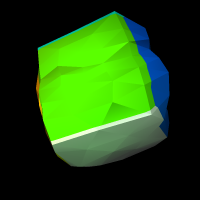} \\
		  $\mathbf{x}$ & 
		  $\mathbf{\hat{x}}$ & 
		  & 
		  &
		  &
		  $\mathbf{r}^1$ & 
		  $\mathbf{r}^2$ & 
		  $\mathbf{r}^3$ \\
		};
	\end{tikzpicture}
	\hspace{0.3cm}
	\begin{tikzpicture}
		\matrix [matrix of nodes, column sep=1mm, row sep=1mm, every node/.style={inner sep=0, outer sep=0, anchor=south}]
		{
		  \includegraphics[height=\volumeHeight]{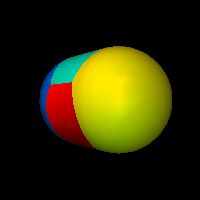} &
		  \includegraphics[height=\volumeHeight]{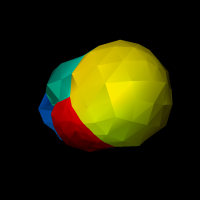} & 
		  & 
		  &
		  &
		  \includegraphics[height=\volumeHeight]{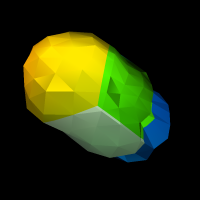} & 
		  \includegraphics[height=\volumeHeight]{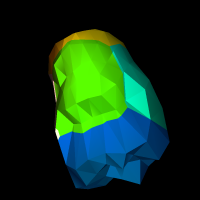} & 
		  \includegraphics[height=\volumeHeight]{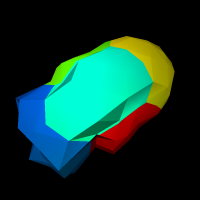} \\
		  \includegraphics[height=\volumeHeight]{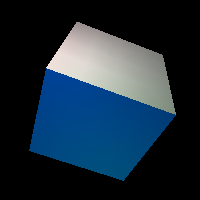} &
		  \includegraphics[height=\volumeHeight]{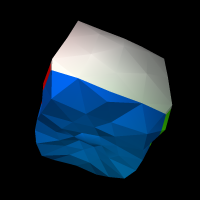} & 
		  & 
		  &
		  &
		  \includegraphics[height=\volumeHeight]{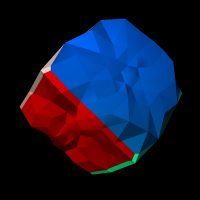} & 
		  \includegraphics[height=\volumeHeight]{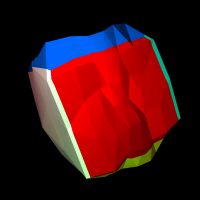} & 
		  \includegraphics[height=\volumeHeight]{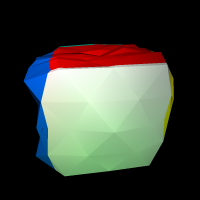} \\
		  \includegraphics[height=\volumeHeight]{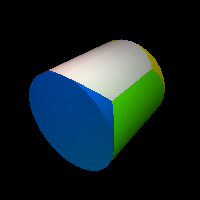} &
		  \includegraphics[height=\volumeHeight]{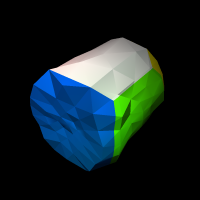} & 
		  & 
		  &
		  &
		  \includegraphics[height=\volumeHeight]{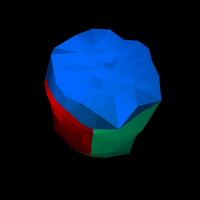} & 
		  \includegraphics[height=\volumeHeight]{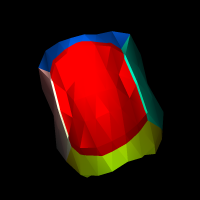} & 
		  \includegraphics[height=\volumeHeight]{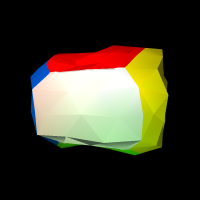} \\
		  \includegraphics[height=\volumeHeight]{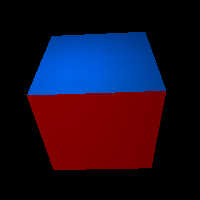} &
		  \includegraphics[height=\volumeHeight]{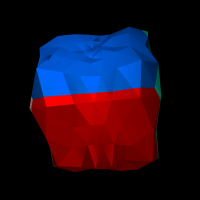} & 
		  & 
		  &
		  &
		  \includegraphics[height=\volumeHeight]{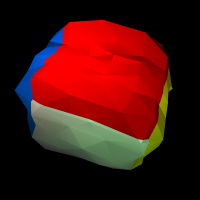} & 
		  \includegraphics[height=\volumeHeight]{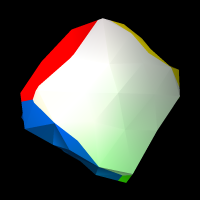} & 
		  \includegraphics[height=\volumeHeight]{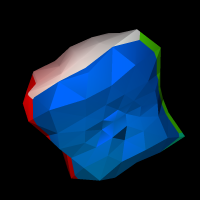} \\
		  $\mathbf{x}$ & 
		  $\mathbf{\hat{x}}$ & 
		  & 
		  &
		  &
		  $\mathbf{r}^1$ & 
		  $\mathbf{r}^2$ & 
		  $\mathbf{r}^3$ \\
		};
	\end{tikzpicture}
	\vspace{-1mm}
	\caption{\textbf{Unsupervised learning of 3D structure:} The model observes $\mathbf{x}$ and is trained to reconstruct it using a mesh representation and an OpenGL renderer, resulting in $\mathbf{\hat{x}}$. We rotate the camera around the inferred mesh to visualize the model's understanding of 3D shape. We observe that in addition to reconstructing accurately, the model correctly infers the extents of the object not in view, demonstrating true 3D understanding of the scene. Videos of these reconstructions have been included in the supplementary material. Best viewed in color. Videos of these samples can be seen at \url{https://goo.gl/9hCkxs}. \label{fig:primitives-2d-2d}}
\end{figure*}


\clearpage
\small
\bibliographystyle{plain}
\bibliography{refs}

\clearpage
\appendix

\section{Appendix}

\subsection{Supplementary related work \label{ap:relatedwork}}

Volumetric representations have been explored extensively for the tasks of object classification \cite{su2015multi, 
	nair20093d, 
	socher2012convolutional, wu20153d}, object reconstruction from images \cite{choy20163d}, volumetric denoising \cite{wu20153d, choy20163d} and density estimation \cite{wu20153d}. The model we present in this paper extends ideas from the current state-of-the art in deep generative modelling of images \cite{gregor2015draw,gregor2016compression,rezende2016one} to volumetric data. Since these models operate on smooth internal representations, they can be combined with continuous projection operators more easily than prior work.

On the other hand, mesh representations allow for a more compact, yet still rich, representation space. When combined with OpenGL, we can exploit these representations to more accurately capture the physics of the rendering process. Related work include deformable-parts models \cite{chaudhuri2011probabilistic, 
	funkhouser2004modeling, 
	kalogerakis2012probabilistic} and approaches from inverse graphics \cite{wu2015galileo,kulkarni2015picture,eslami2016attend,loper2014opendr}. 

\subsection{Inference model \label{appendix:inference}}

We use a structured posterior approximation that has an auto-regressive form, i.e. $q(\vz_t | \vz_{<t}, \vx, \vc)$. This distribution is parameterized by a deep network:
\begin{eqnarray}
\textrm{Read Operation} & \vr_t  &=  f_r(\vx, \vs_{t-1}; \phi_r)\\
\textrm{Sample } & \vz_t  &\sim \mathcal{N}(\vz_t | \vmu(\vr_t, \!\vs_{t-1}, \vc;\! \phi_\mu),\! \sigma(\vr_t, \!\vs_{t-1}, \vc; \phi_\sigma)\!)
\end{eqnarray}
The `read' function $f_r$ is parametrized in the same way as $f_w(  \vs_t, \vh_{t-1};\theta_h )$. During inference, the states $s_t$ are computed using the same state transition function as in the generative model.
We denote the parameters of the inference model by $\phi = \{\phi_r, \phi_\mu, \phi_\sigma\}$.

The variational loss function associated with this model is given by:
\begin{eqnarray}
& \mathcal{F} = -\mathbb{E}_{q(\vz_{1, \ldots,T}|\vx, \vc)}[\log p_\theta(\vx | \vz_{1, \ldots,T}, \vc )] + \sum_{t=1}^T \KL[q_\phi(\vz_t | \vz_{<t} \vx) \| p(\vz_t)],
\end{eqnarray}
where $\vz_{<t}$ indicates the collection of all latent variables from iteration 1 to $t-1$.
We can now optimize this objective function for the variational parameters $\phi$ and the model parameters $\theta$ by stochastic gradient descent.

\subsection{Volumetric Spatial Transformers \label{appendix:VST}}

Spatial transformers \citep{jaderberg2015spatial} provide a flexible mechanism for smooth attention and can be easily applied to both 2 and 3 dimensional data. Spatial Transformers process an input image $\vx$, using parameters $\vh$, and generate an output $\text{ST}(\vx, \vh)$:
\begin{align}
\text{ST}(\vx, \vh) &= \left[  \kappa_h(\vh) \otimes \kappa_w(\vh) \right]   \ast  \vx, \nonumber
\end{align}
where $\kappa_h$ and $\kappa_w$ are 1-dimensional kernels, $\otimes$ indicates the tensor outer-product of the three kernels and $\ast$ indicates a convolution.  Similarly, Volumetric Spatial Transformers (VST) process an input data volume $\vx$, using parameters $\vh$, and generate an output $\text{VST}(\vx, \vh)$:
\begin{align}
\text{VST}(\vx, \vh) &= \left[ \kappa_d(\vh) \otimes \kappa_h(\vh) \otimes \kappa_w(\vh) \right]   \ast  \vx, \nonumber
\end{align}
where $\kappa_d$, $\kappa_h$ and $\kappa_w$ are 1-dimensional kernels, $\otimes$ indicates the tensor outer-product of the three kernels and $\ast$ indicates a convolution. The kernels $\kappa_d$, $\kappa_h$ and $\kappa_w$ used in this paper correspond to a simple affine transformation of a 3-dimensional grid of points that uniformly covers the input image.

\pagebreak
\subsection{Learnable $3{\bf D} \rightarrow 2{\bf D}$ projection operators \label{appendix:learnedProjectionOperator}}

These projection operators or `learnable cameras' are built by first applying a affine transformation to the volumetric canvas $\vc_T$ using the Spatial Transformer followed a combination on 3D and 2D convolutions as depicted in figure \ref{fig:learnableProjs}.

\begin{figure*}[ht!]
	\centering
	\label{fig:learnableProjs}
	\includegraphics[width=\linewidth]{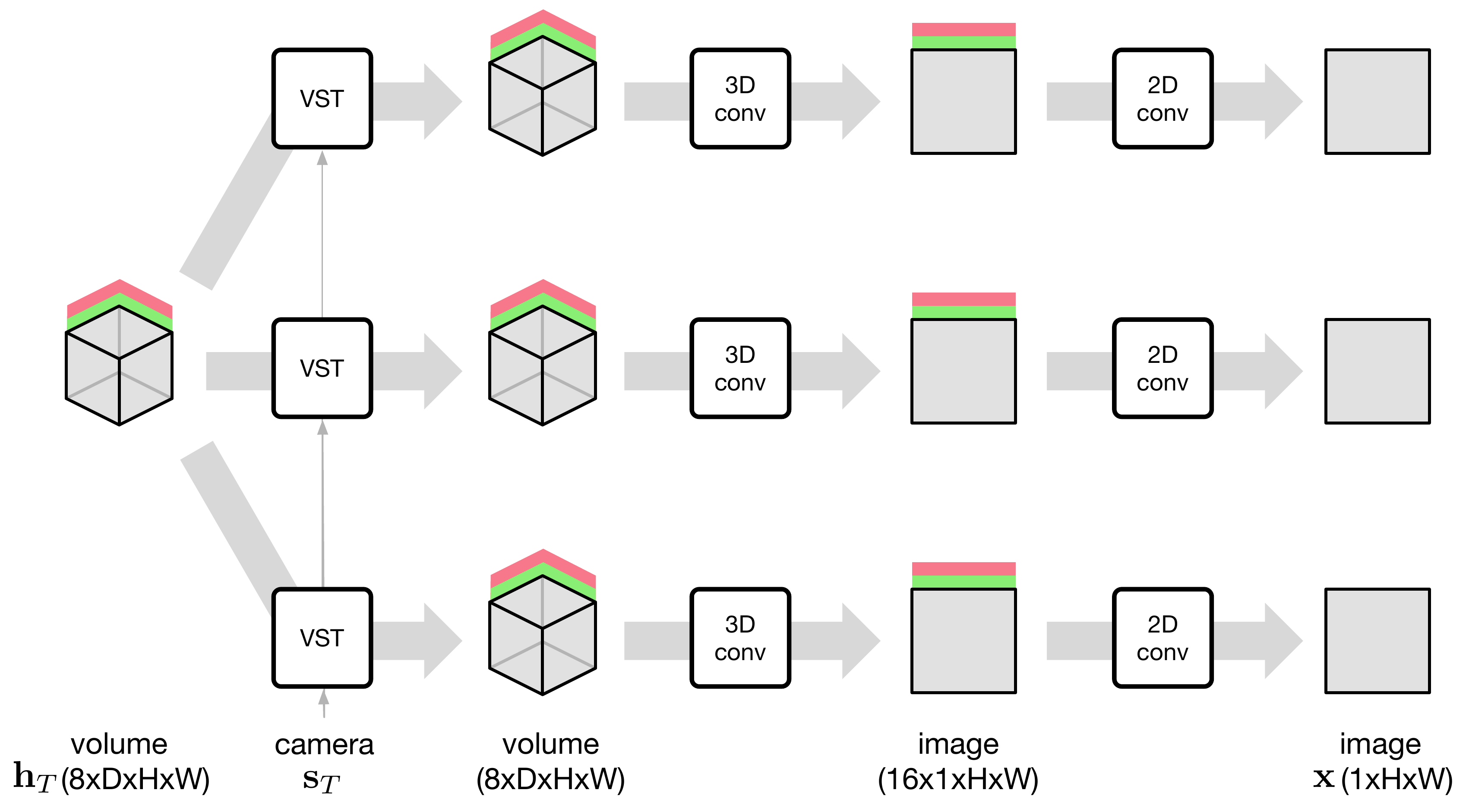}
	\caption{\textbf{Learnable projection operators:} Multiple instances of the projection operator (with shared parameters).}
\end{figure*}

\subsection{Stochastic Gradient Estimators for Expectations of Black-Box Functions \label{ap:VIMCO}}

We employ a multi-sample extension of REINFORCE, inspired by \cite{mnih2016variational,burda2015importance}. For each image we sample $K$ realizations of the inferred mesh for a fixed set of latent variables using a small Gaussian noise and compute its corresponding render. The variance of the learning signal for each sample $k$ is reduced by computing a `baseline' using the $K-1$ remaining samples. See \cite{mnih2016variational} for further details. The estimator is easy to implement and we found this approach to work well in practice even for relatively high-dimensional meshes.

\setlength{\volumeHeight}{1.1cm}

\clearpage
\pagebreak
\subsection{Unconditional generation \label{appendix:unconditional-generation}}

In figures \ref{fig:unconditional-generation-full-primitives} and \ref{fig:unconditional-generation-full-mnist} we show further examples of our model's capabilities at unconditional volume generation.

	\begin{figure*}[ht!]
		\centering	
		\begin{tikzpicture}
		\matrix [matrix of nodes, column sep=1mm, row sep=1mm, every node/.style={inner sep=0, outer sep=0, anchor=center}]
		{
			\includegraphics[height=\volumeHeight]{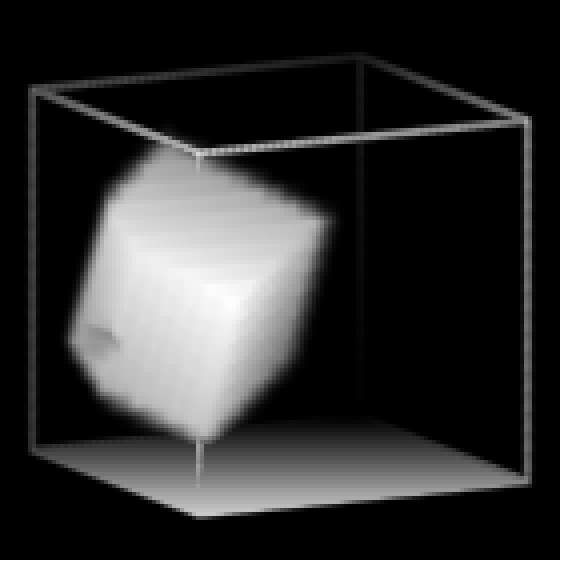}&
			\includegraphics[height=\volumeHeight]{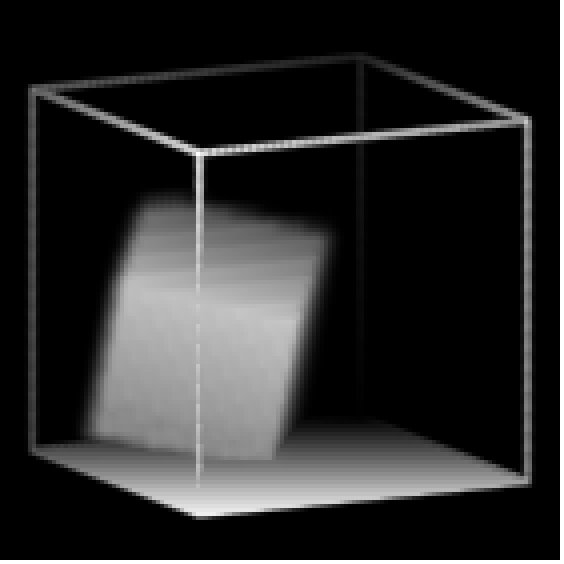}&
			\includegraphics[height=\volumeHeight]{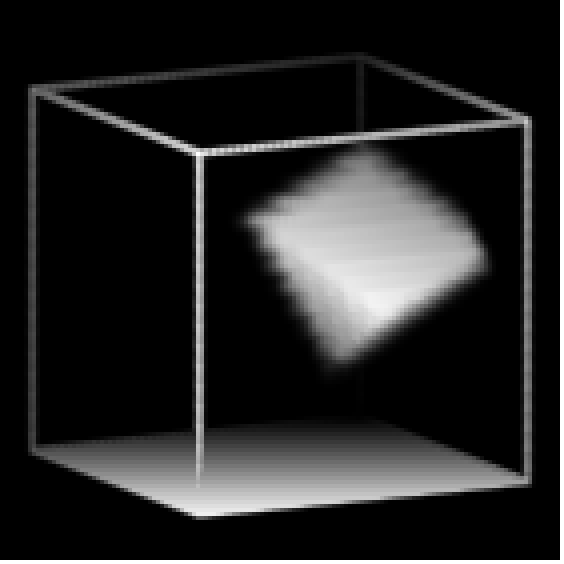}\\
			\includegraphics[height=\volumeHeight]{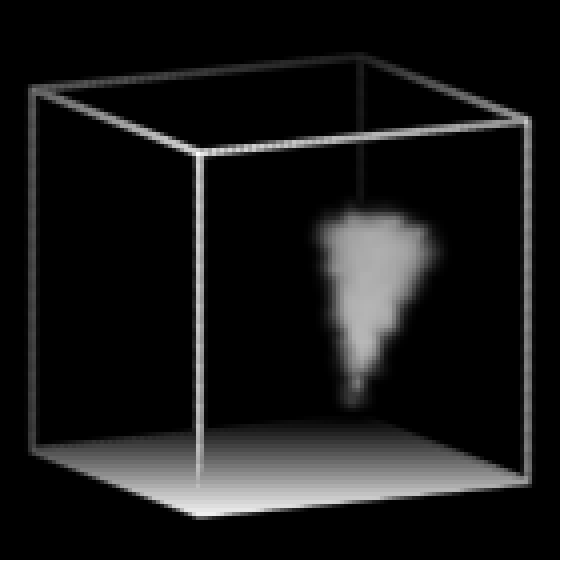}&
			\includegraphics[height=\volumeHeight]{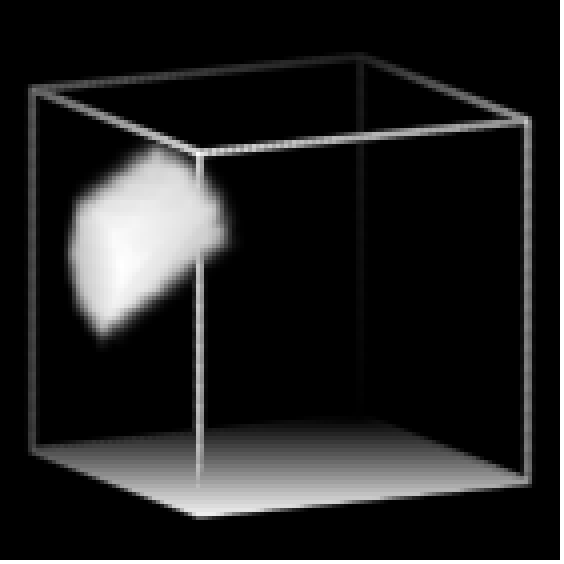}&
			\includegraphics[height=\volumeHeight]{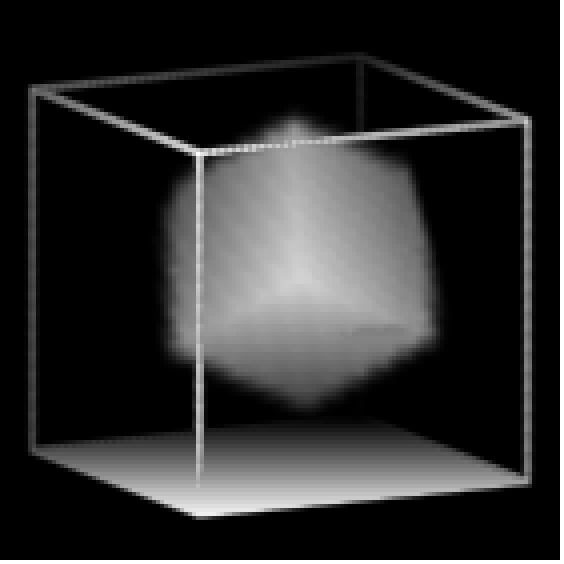}\\
			\includegraphics[height=\volumeHeight]{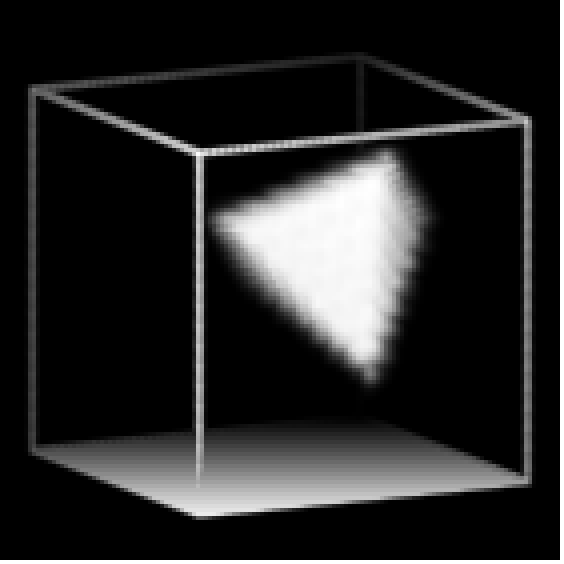}&
			\includegraphics[height=\volumeHeight]{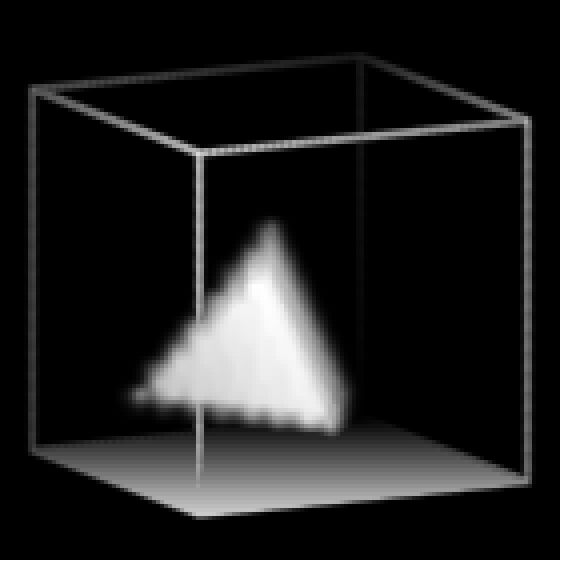}&
			\includegraphics[height=\volumeHeight]{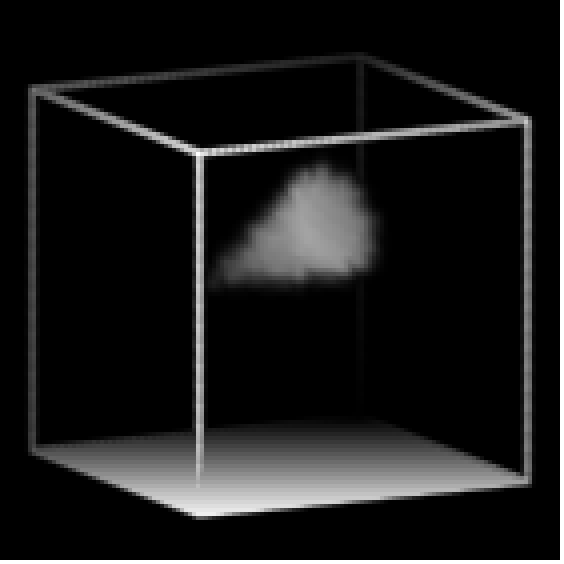}\\
			\includegraphics[height=\volumeHeight]{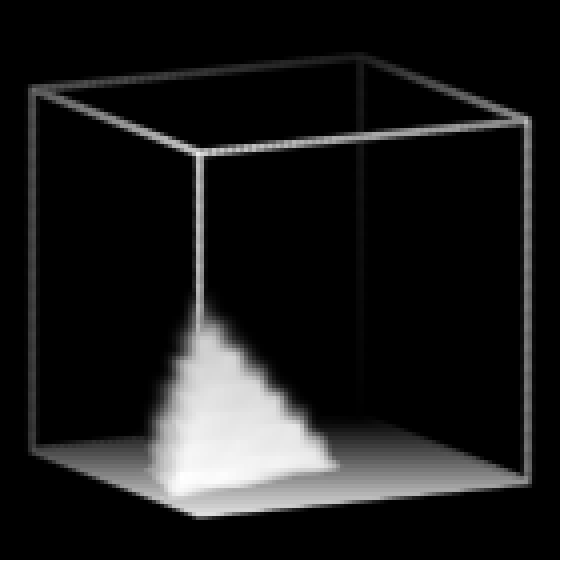}&
			\includegraphics[height=\volumeHeight]{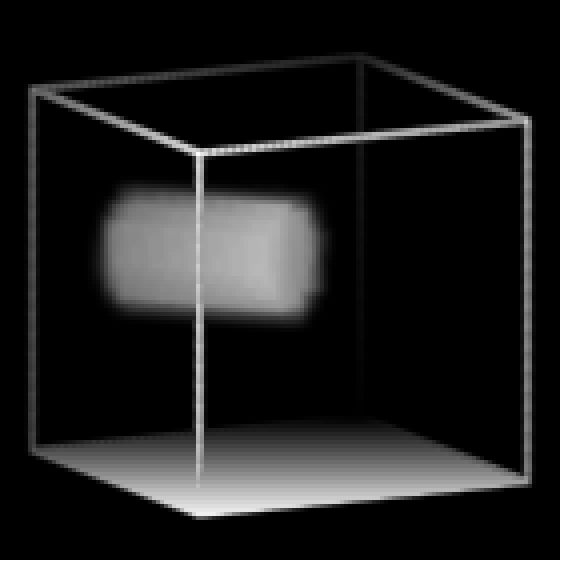}&
			\includegraphics[height=\volumeHeight]{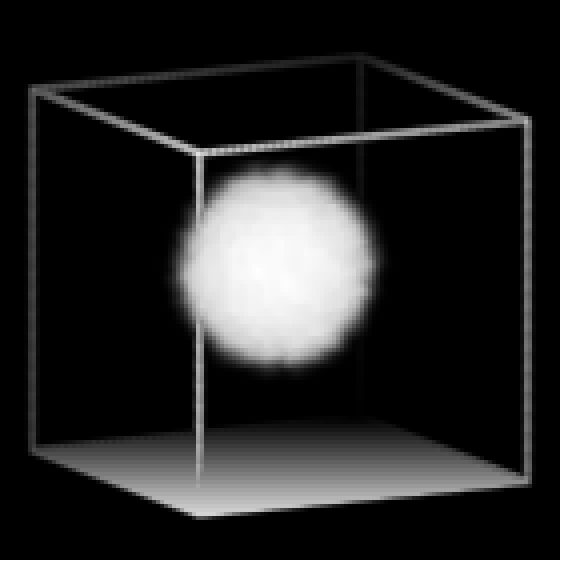}\\
		};
		\end{tikzpicture}
		\hspace{2mm}
		\begin{tikzpicture}
		\matrix [matrix of nodes, column sep=1mm, row sep=1mm, every node/.style={inner sep=0, outer sep=0, anchor=center}]
		{
			\includegraphics[height=\volumeHeight]{volumetric/primitivesUnconditional/0/_sample__1__1_}&
			\includegraphics[height=\volumeHeight]{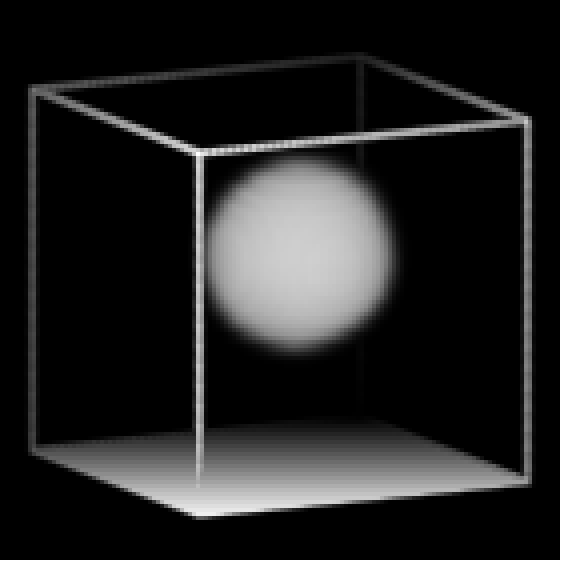}&
			\includegraphics[height=\volumeHeight]{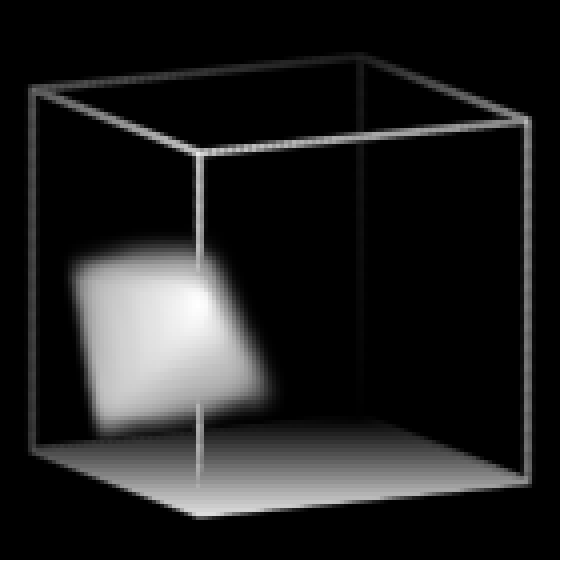}&
			\includegraphics[height=\volumeHeight]{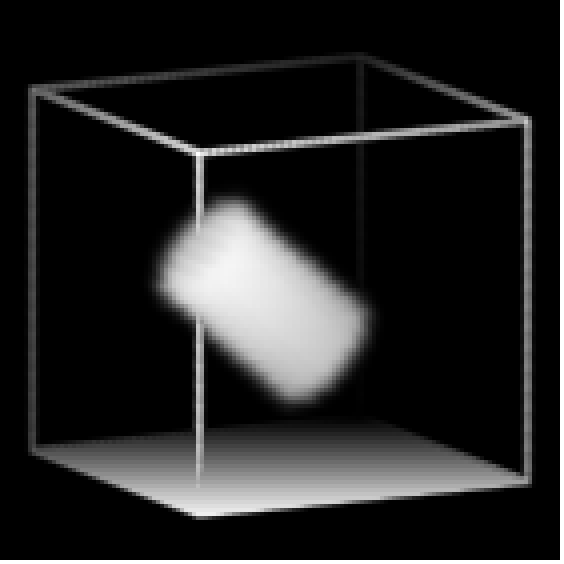}&
			\includegraphics[height=\volumeHeight]{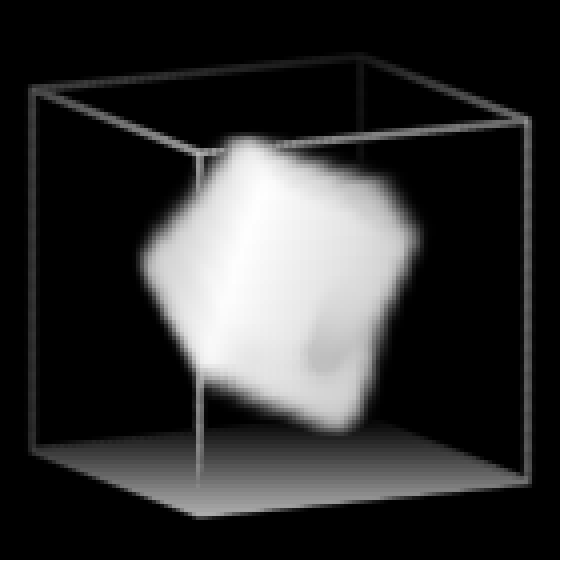}&
			\includegraphics[height=\volumeHeight]{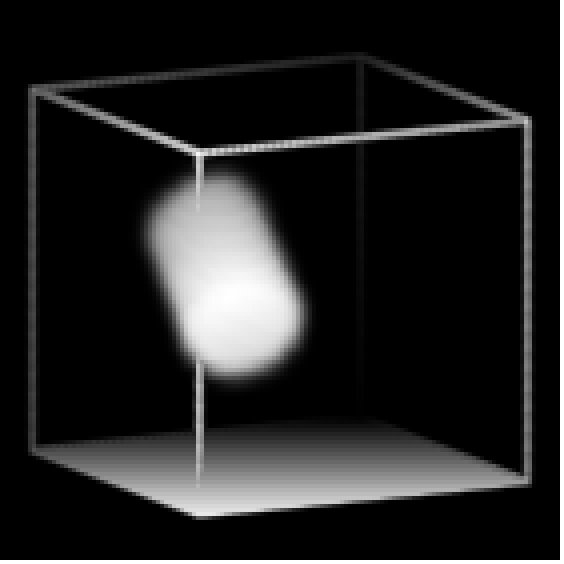}&
			\includegraphics[height=\volumeHeight]{volumetric/primitivesUnconditional/0/_sample__3__1_}&
			\includegraphics[height=\volumeHeight]{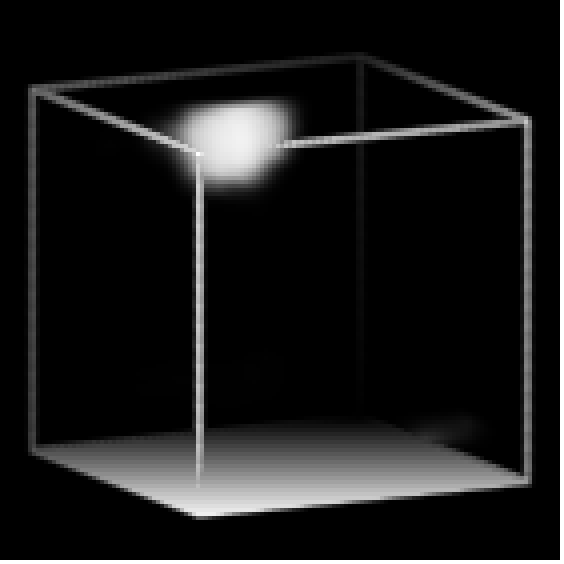}\\
			\includegraphics[height=\volumeHeight]{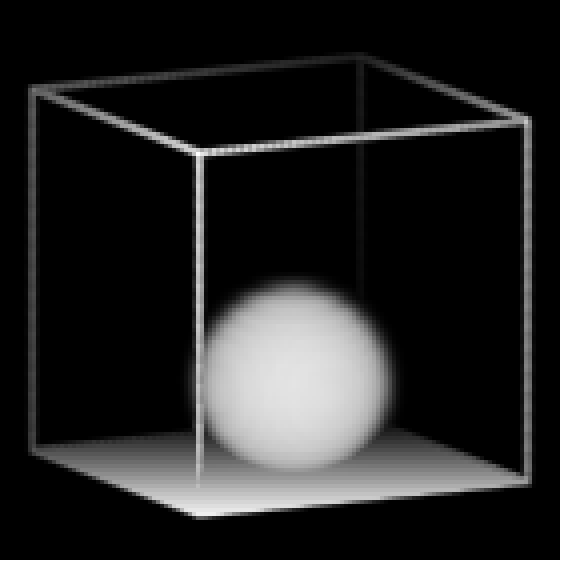}&
			\includegraphics[height=\volumeHeight]{volumetric/primitivesUnconditional/0/_sample__4__1_}&
			\includegraphics[height=\volumeHeight]{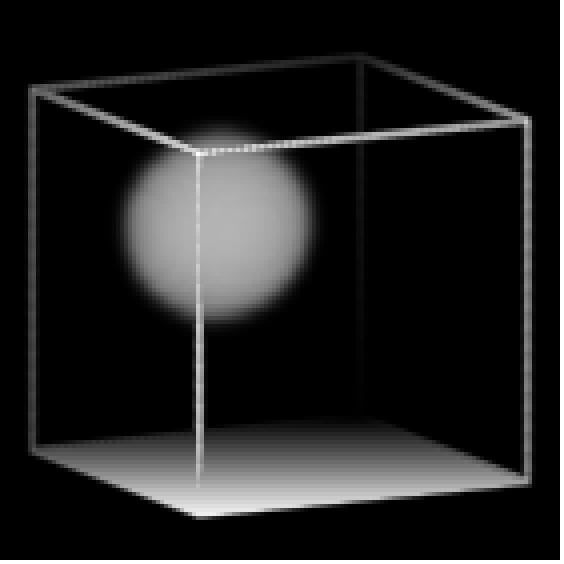}&
			\includegraphics[height=\volumeHeight]{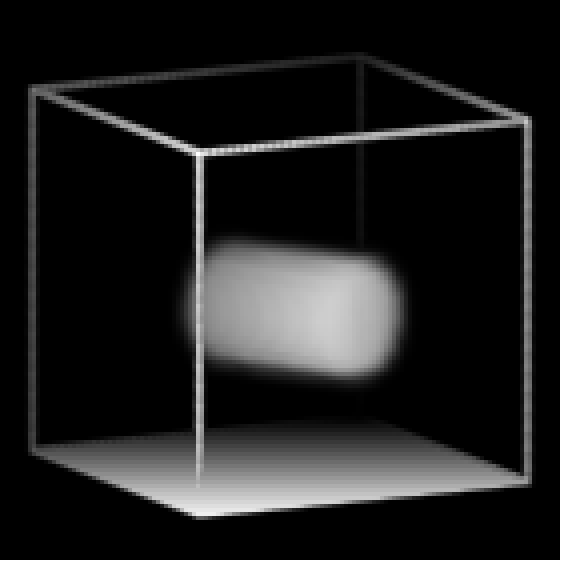}&
			\includegraphics[height=\volumeHeight]{volumetric/primitivesUnconditional/0/_sample__1__2_}&
			\includegraphics[height=\volumeHeight]{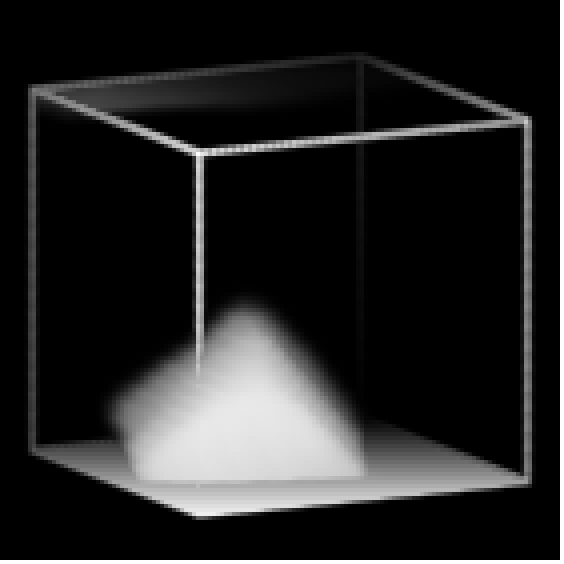}&
			\includegraphics[height=\volumeHeight]{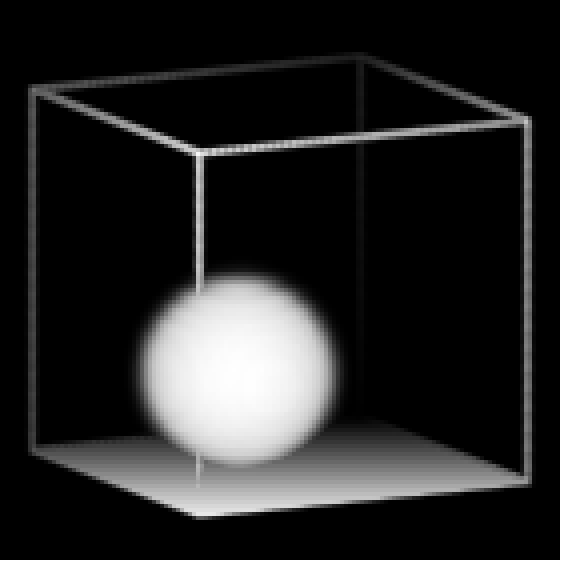}&
			\includegraphics[height=\volumeHeight]{volumetric/primitivesUnconditional/0/_sample__2__2_}\\
			\includegraphics[height=\volumeHeight]{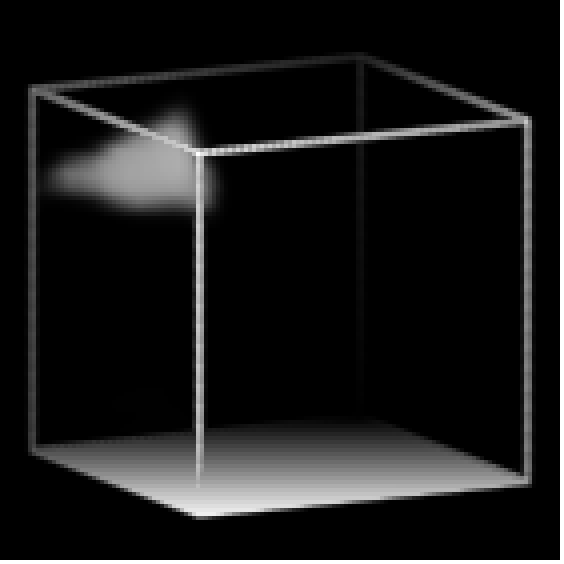}&
			\includegraphics[height=\volumeHeight]{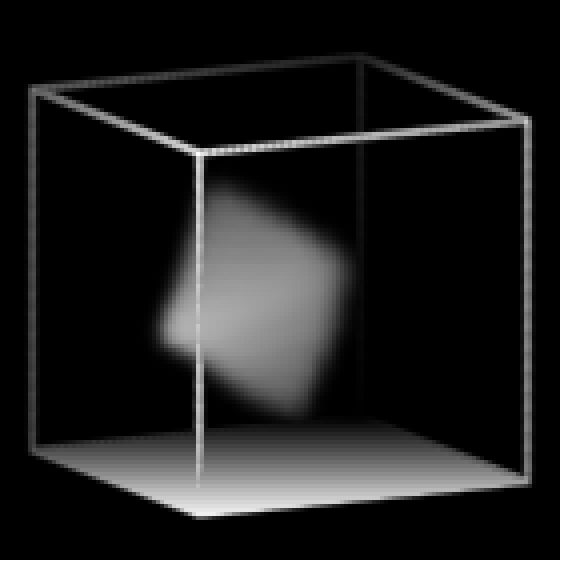}&
			\includegraphics[height=\volumeHeight]{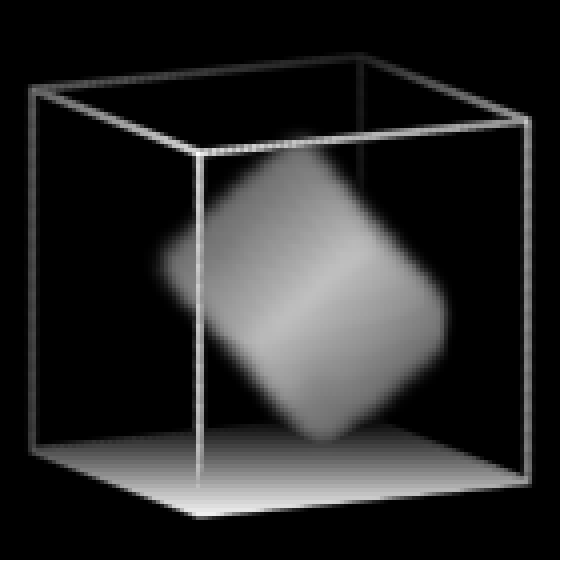}&
			\includegraphics[height=\volumeHeight]{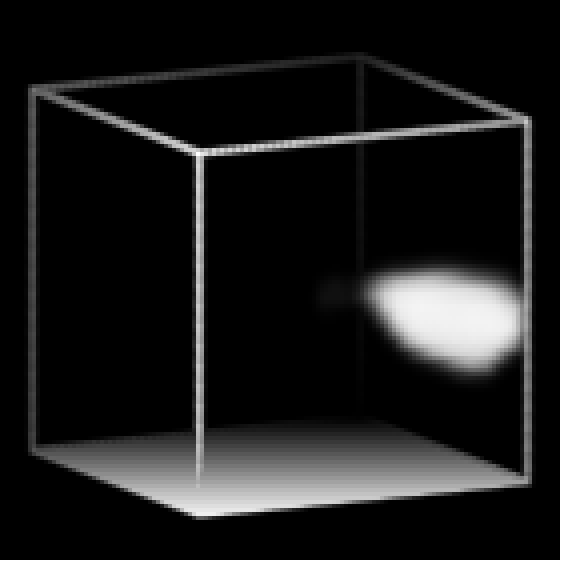}&
			\includegraphics[height=\volumeHeight]{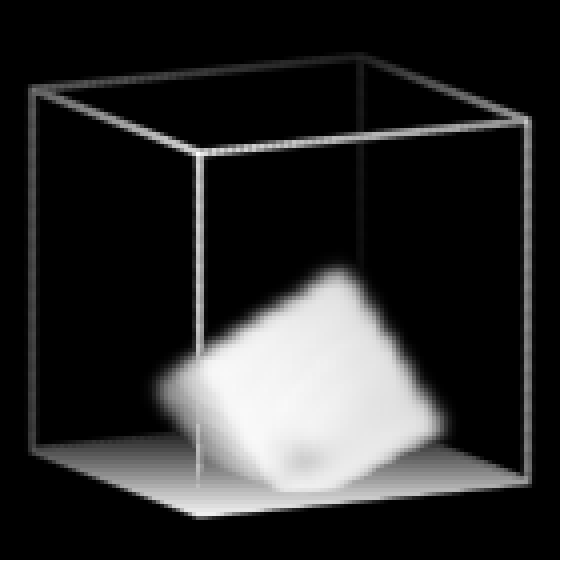}&
			\includegraphics[height=\volumeHeight]{volumetric/primitivesUnconditional/0/_sample__4__2_}&
			\includegraphics[height=\volumeHeight]{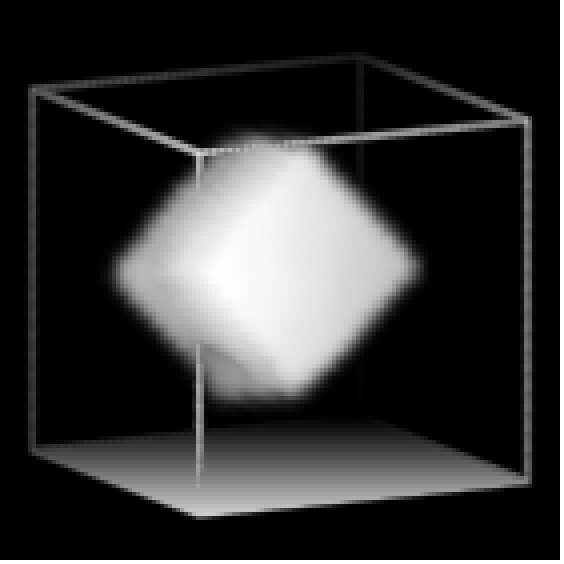}&
			\includegraphics[height=\volumeHeight]{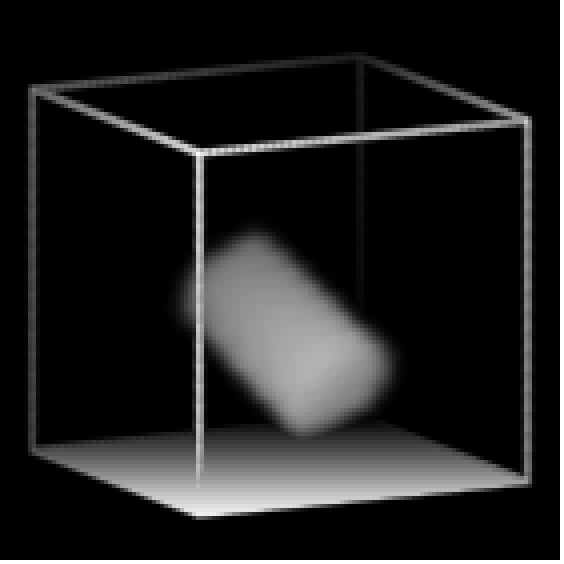}\\
			\includegraphics[height=\volumeHeight]{volumetric/primitivesUnconditional/0/_sample__1__3_}&
			\includegraphics[height=\volumeHeight]{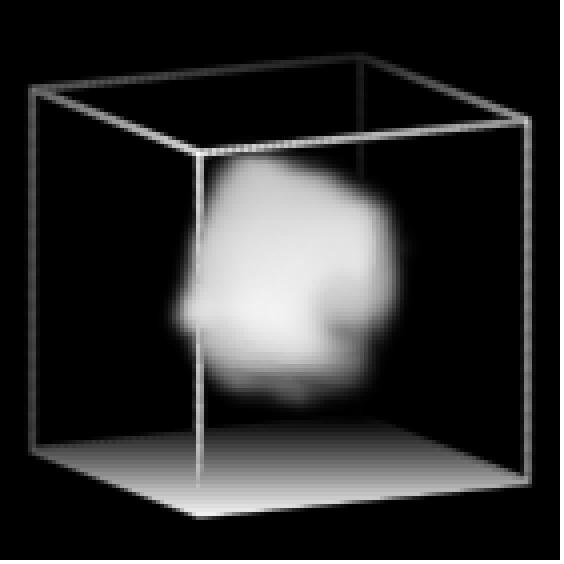}&
			\includegraphics[height=\volumeHeight]{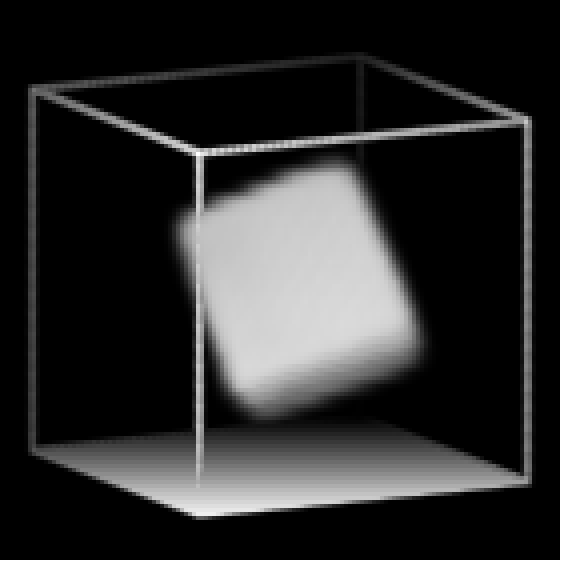}&
			\includegraphics[height=\volumeHeight]{volumetric/primitivesUnconditional/0/_sample__2__3_}&
			\includegraphics[height=\volumeHeight]{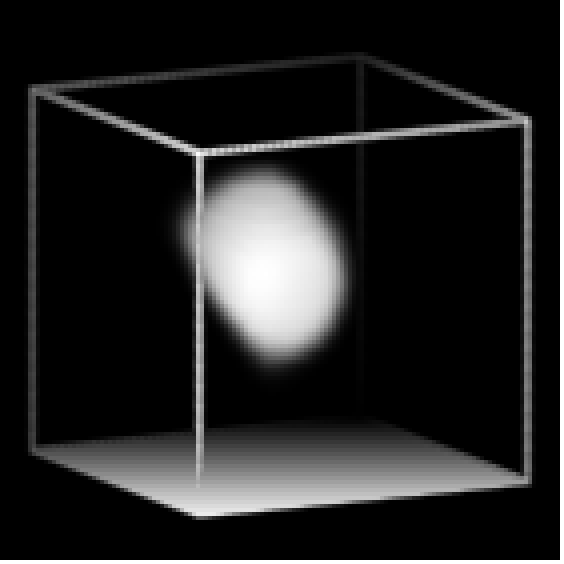}&
			\includegraphics[height=\volumeHeight]{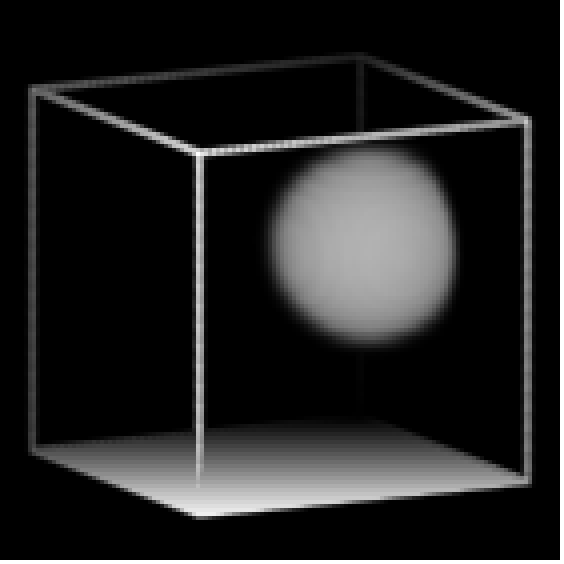}&
			\includegraphics[height=\volumeHeight]{volumetric/primitivesUnconditional/0/_sample__3__3_}&
			\includegraphics[height=\volumeHeight]{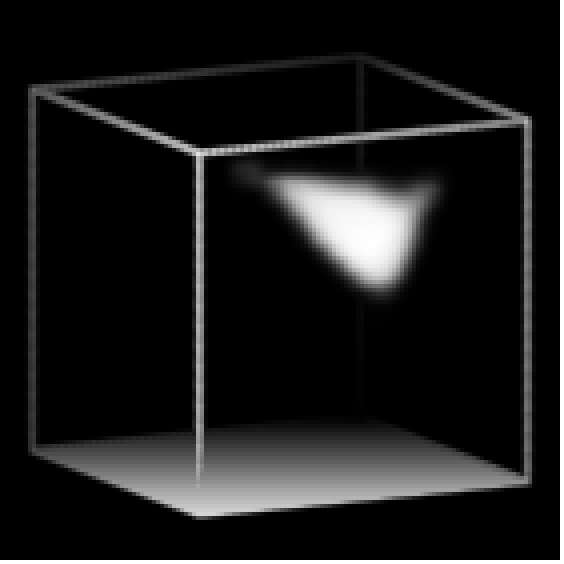}\\
		};
		\end{tikzpicture}

		\caption{\textbf{A strong generative model of volumes (Primitives):} \textit{Left:} Examples of training data. \textit{Right:} Samples from the model.\label{fig:unconditional-generation-full-primitives}}
	\end{figure*}

	\begin{figure*}[ht!]
		\centering	
		\begin{tikzpicture}
		\matrix [matrix of nodes, column sep=1mm, row sep=1mm, every node/.style={inner sep=0, outer sep=0, anchor=center}]
		{
			\includegraphics[height=\volumeHeight]{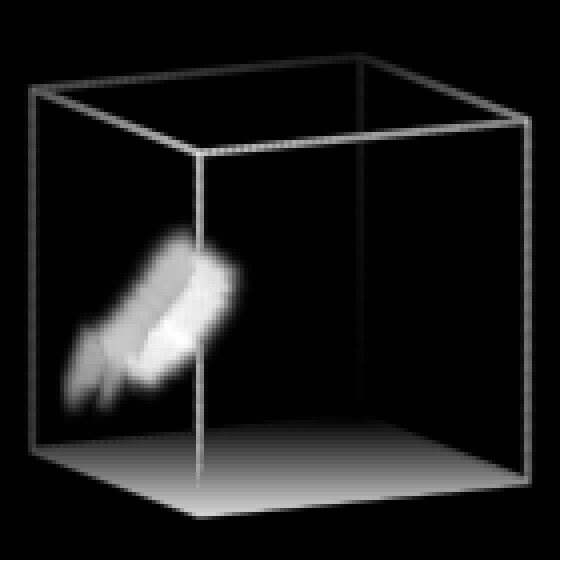}&
			\includegraphics[height=\volumeHeight]{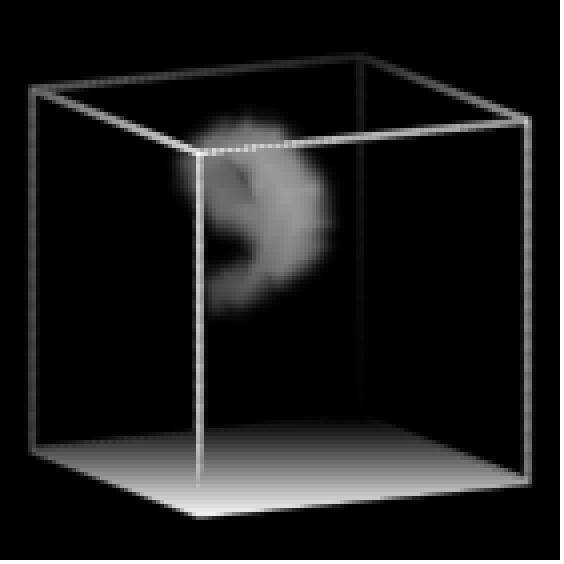}&
			\includegraphics[height=\volumeHeight]{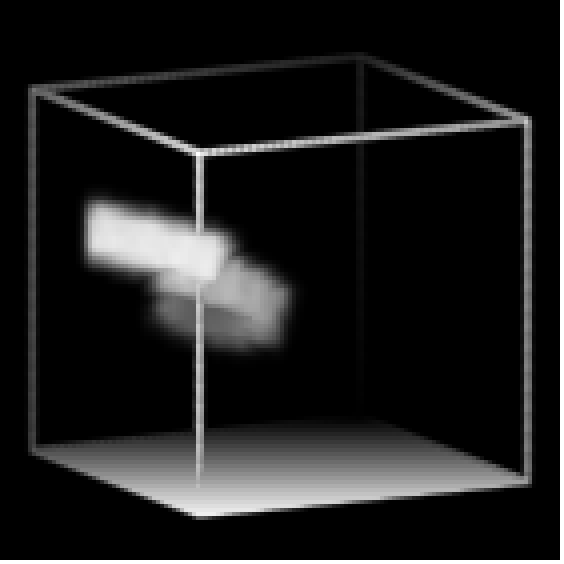}\\
			\includegraphics[height=\volumeHeight]{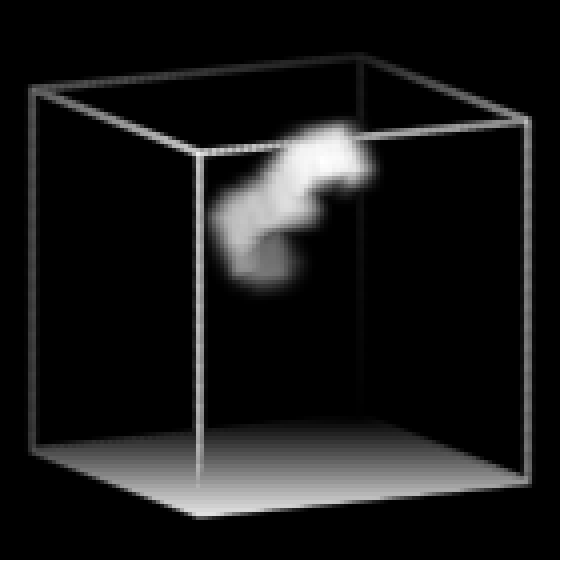}&
			\includegraphics[height=\volumeHeight]{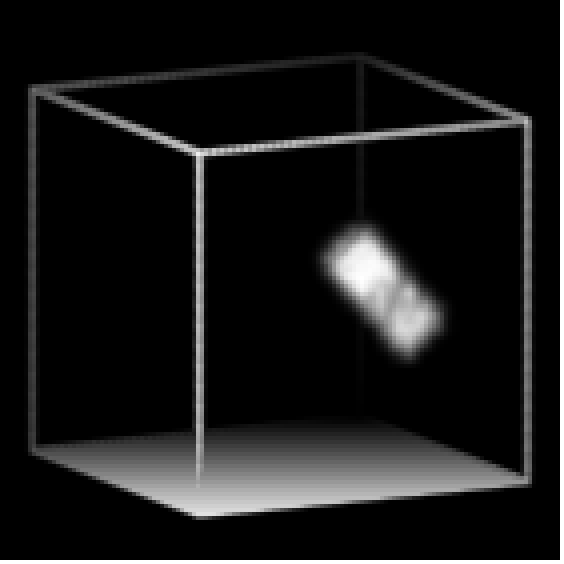}&
			\includegraphics[height=\volumeHeight]{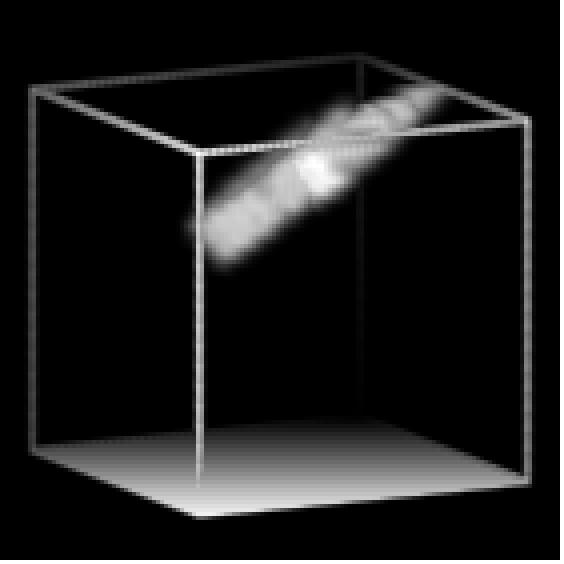}\\
			\includegraphics[height=\volumeHeight]{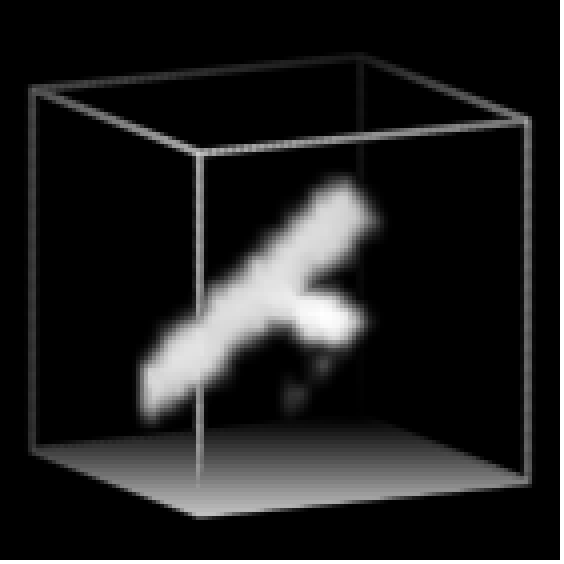}&
			\includegraphics[height=\volumeHeight]{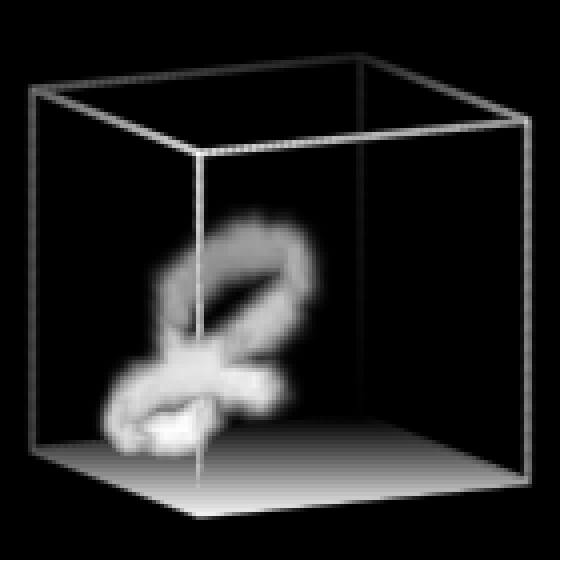}&
			\includegraphics[height=\volumeHeight]{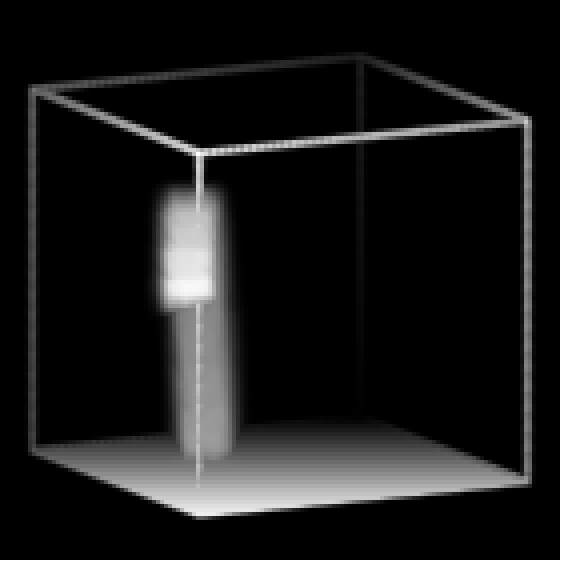}\\
			\includegraphics[height=\volumeHeight]{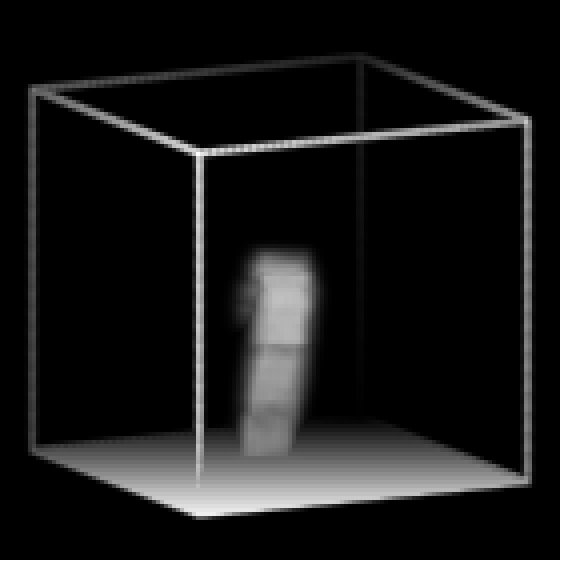}&
			\includegraphics[height=\volumeHeight]{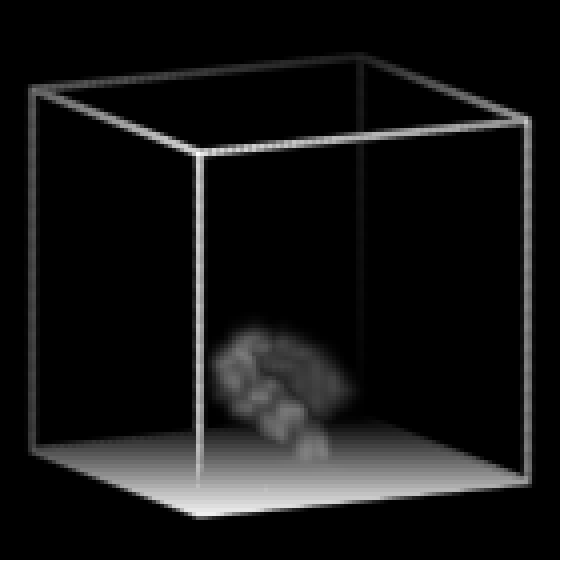}&
			\includegraphics[height=\volumeHeight]{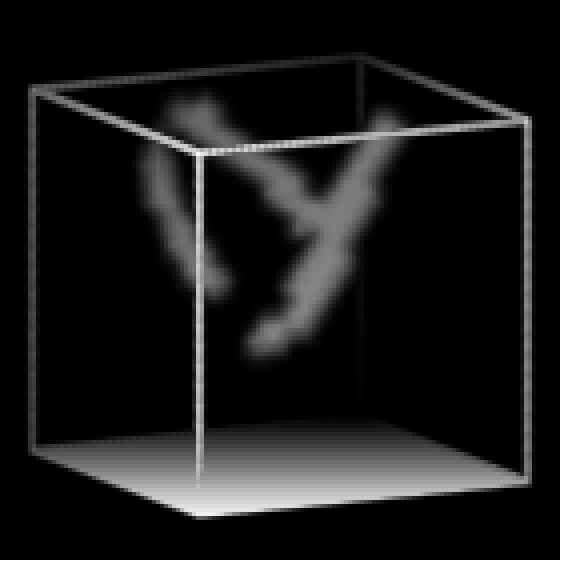}\\
		};
		\end{tikzpicture}
		\hspace{1mm}
		\begin{tikzpicture}
		\matrix [matrix of nodes, column sep=1mm, row sep=1mm, every node/.style={inner sep=0, outer sep=0, anchor=center}]
		{
			\includegraphics[height=\volumeHeight]{volumetric/mnistUnconditional/0/_sample__1__1_}&
			\includegraphics[height=\volumeHeight]{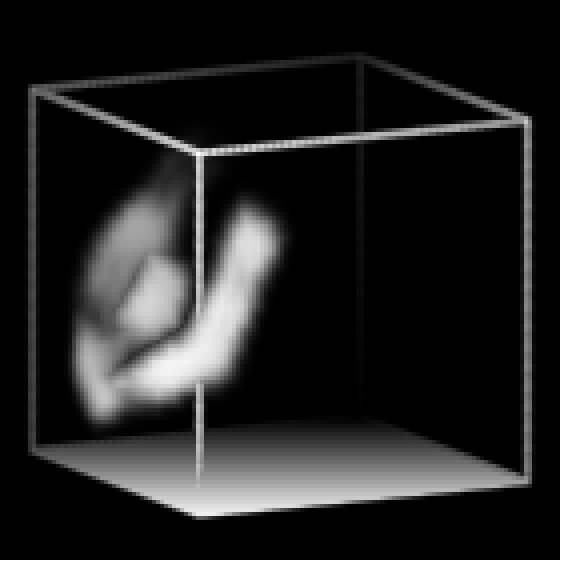}&
			\includegraphics[height=\volumeHeight]{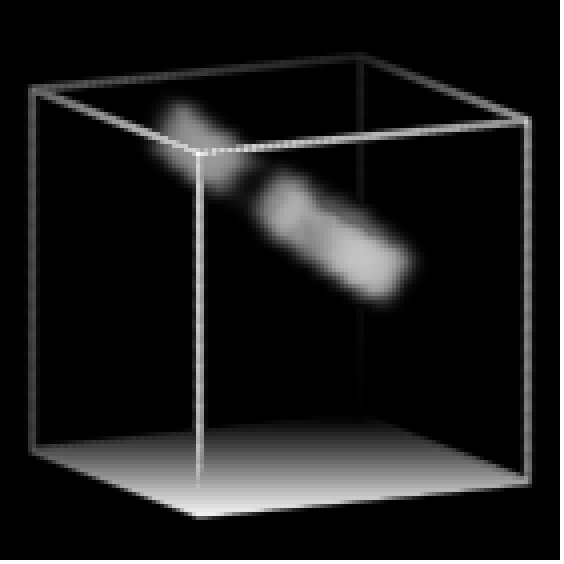}&
			\includegraphics[height=\volumeHeight]{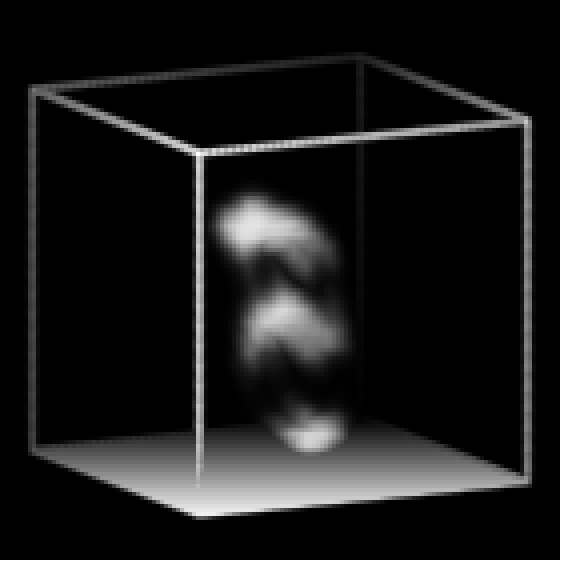}&
			\includegraphics[height=\volumeHeight]{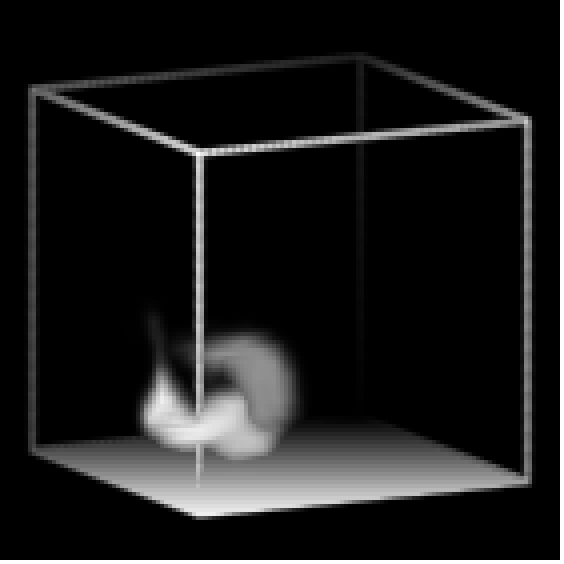}&
			\includegraphics[height=\volumeHeight]{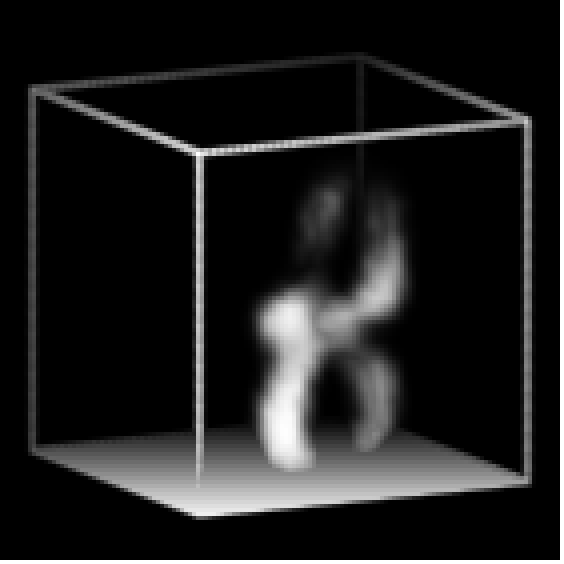}&
			\includegraphics[height=\volumeHeight]{volumetric/mnistUnconditional/0/_sample__3__1_}&
			\includegraphics[height=\volumeHeight]{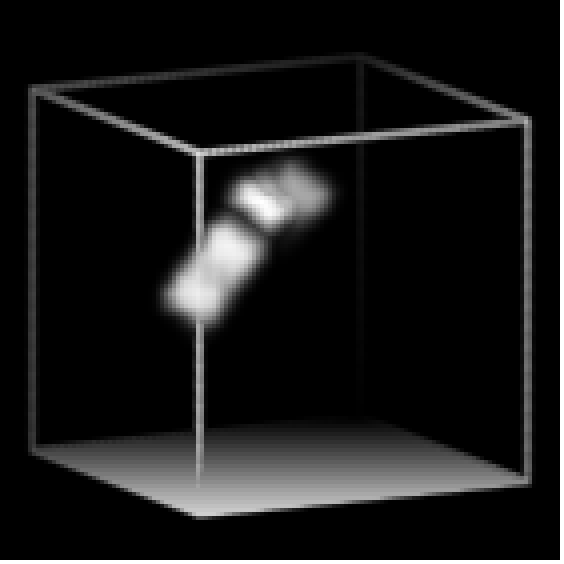}\\
			\includegraphics[height=\volumeHeight]{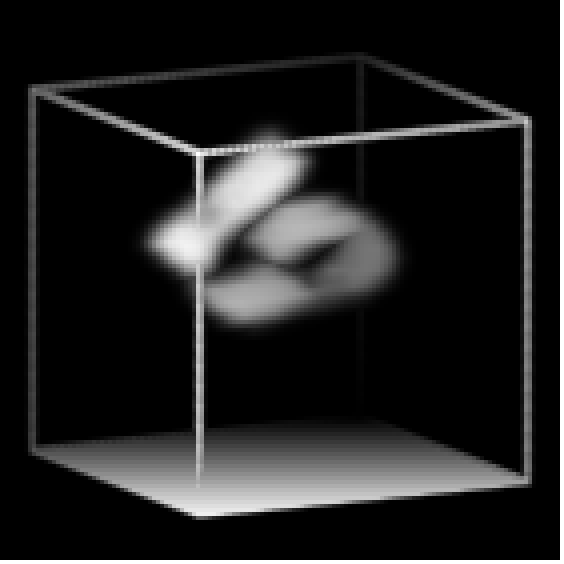}&
			\includegraphics[height=\volumeHeight]{volumetric/mnistUnconditional/0/_sample__4__1_}&
			\includegraphics[height=\volumeHeight]{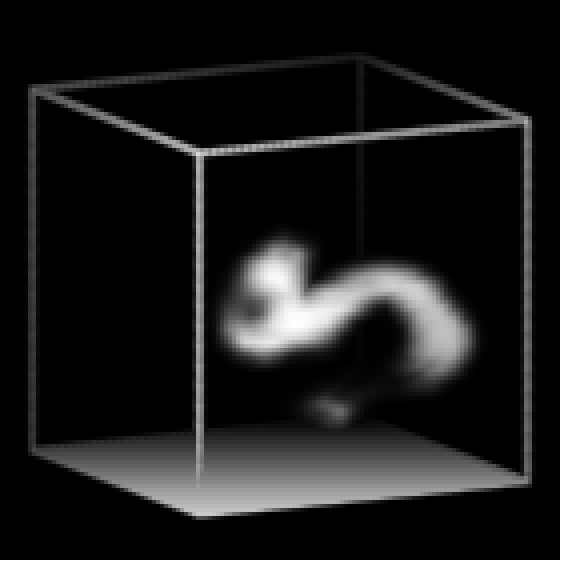}&
			\includegraphics[height=\volumeHeight]{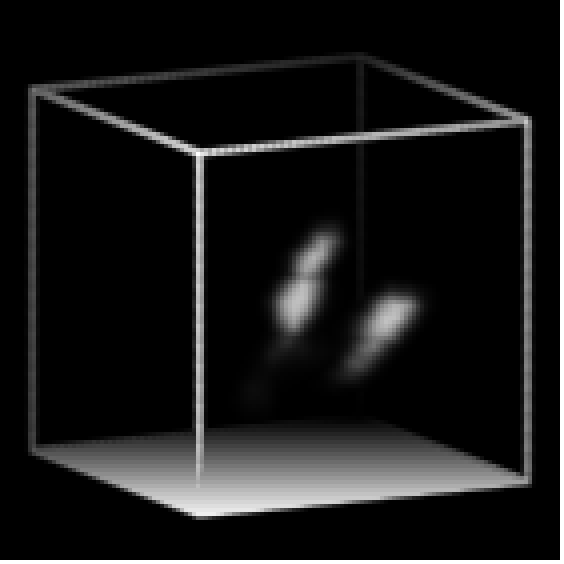}&
			\includegraphics[height=\volumeHeight]{volumetric/mnistUnconditional/0/_sample__1__2_}&
			\includegraphics[height=\volumeHeight]{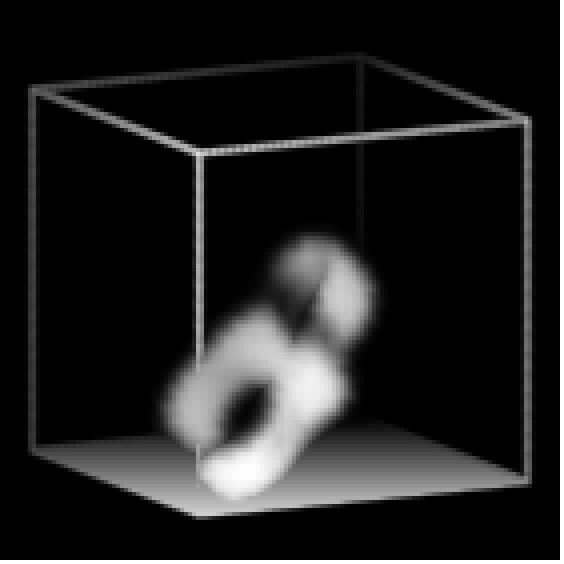}&
			\includegraphics[height=\volumeHeight]{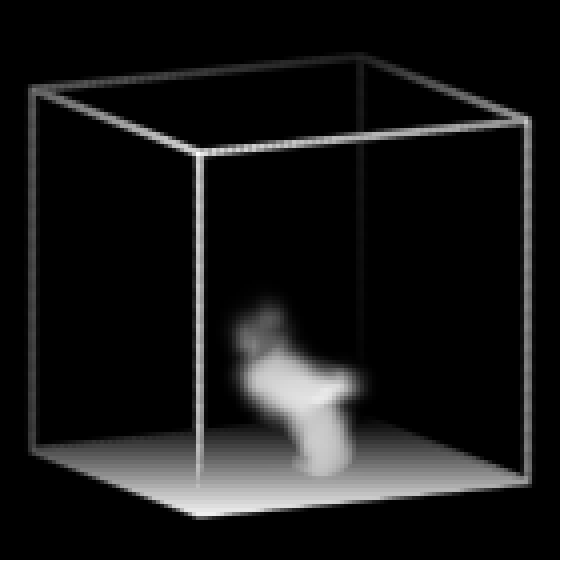}&
			\includegraphics[height=\volumeHeight]{volumetric/mnistUnconditional/0/_sample__2__2_}\\
			\includegraphics[height=\volumeHeight]{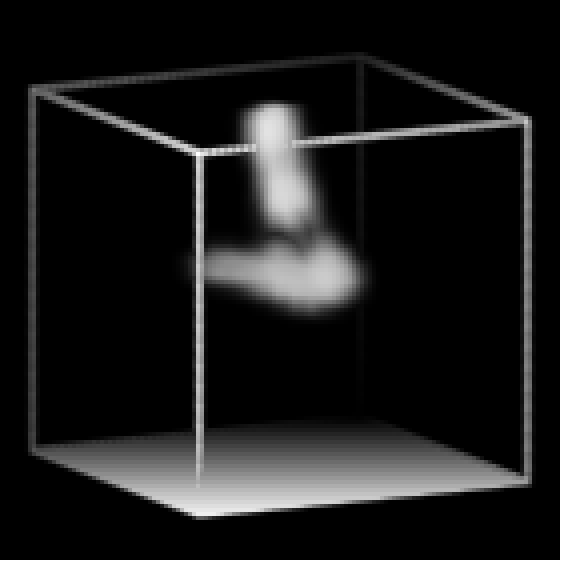}&
			\includegraphics[height=\volumeHeight]{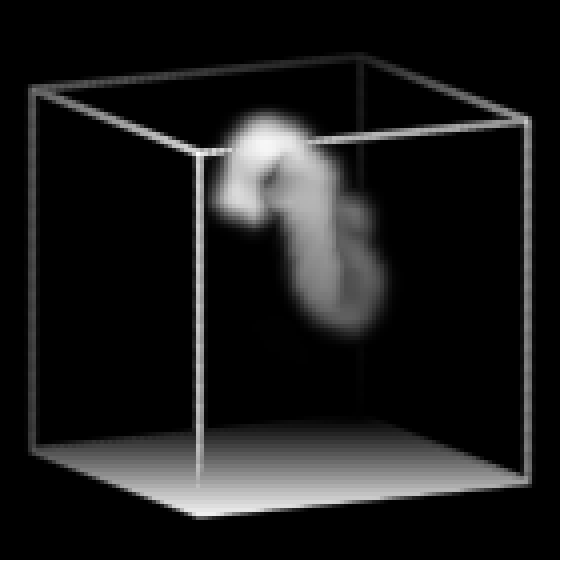}&
			\includegraphics[height=\volumeHeight]{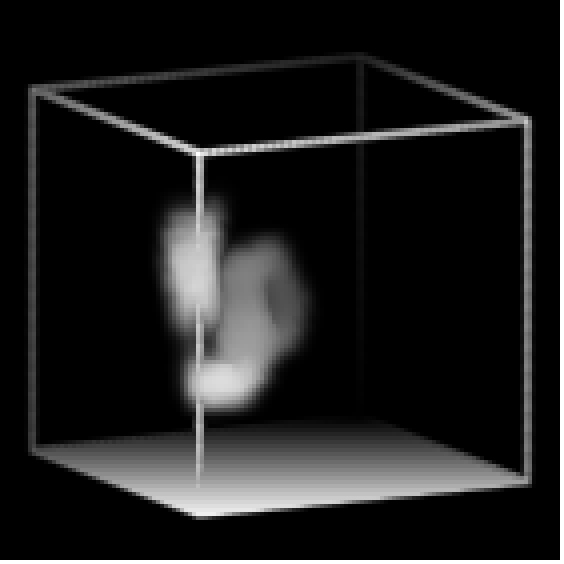}&
			\includegraphics[height=\volumeHeight]{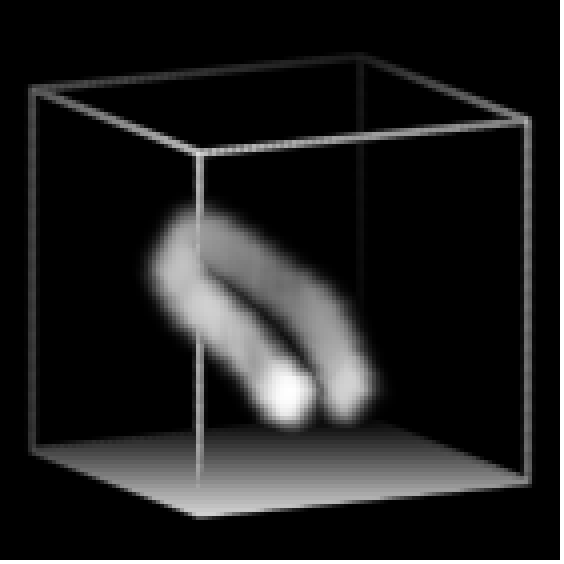}&
			\includegraphics[height=\volumeHeight]{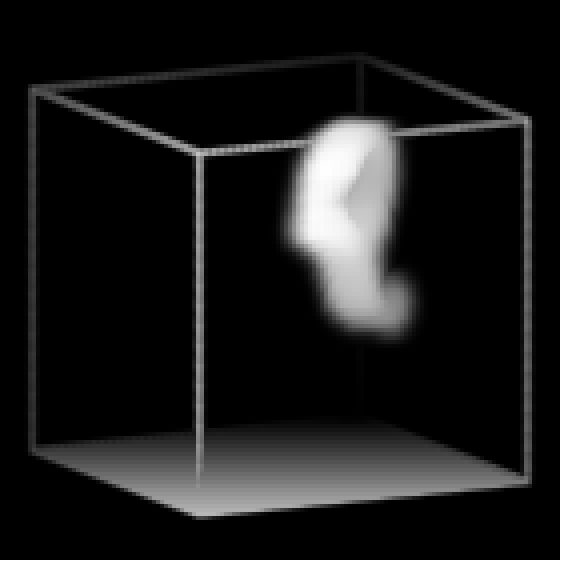}&
			\includegraphics[height=\volumeHeight]{volumetric/mnistUnconditional/0/_sample__4__2_}&
			\includegraphics[height=\volumeHeight]{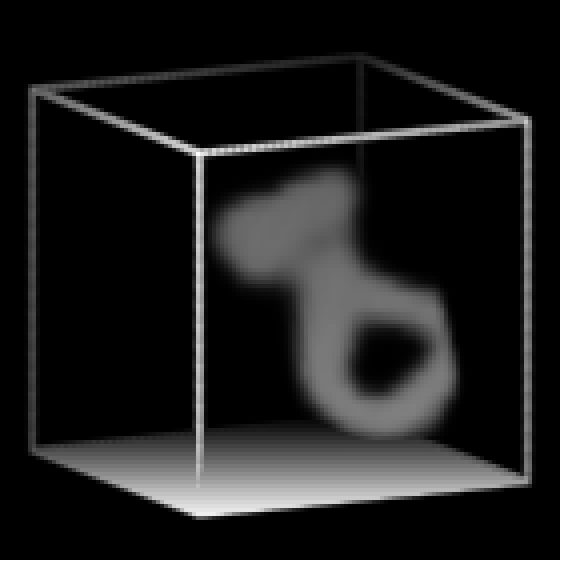}&
			\includegraphics[height=\volumeHeight]{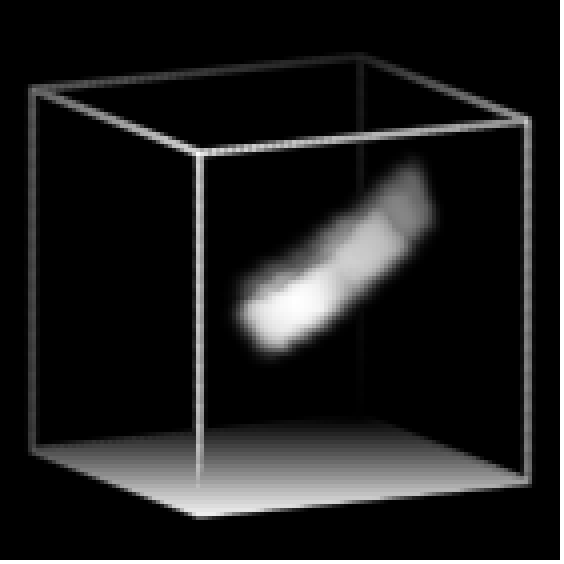}\\
			\includegraphics[height=\volumeHeight]{volumetric/mnistUnconditional/0/_sample__1__3_}&
			\includegraphics[height=\volumeHeight]{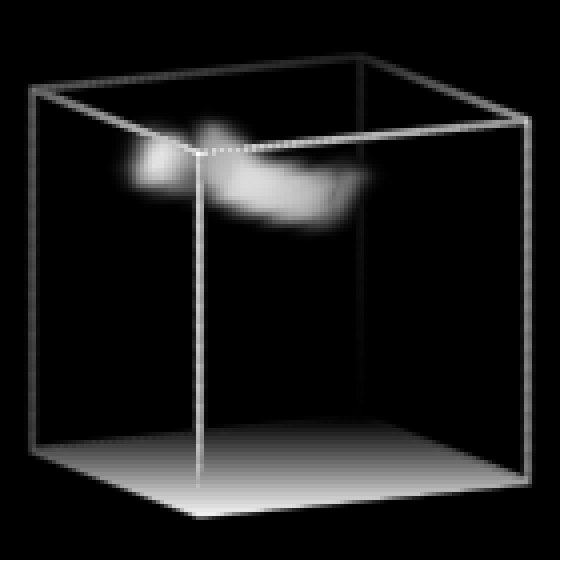}&
			\includegraphics[height=\volumeHeight]{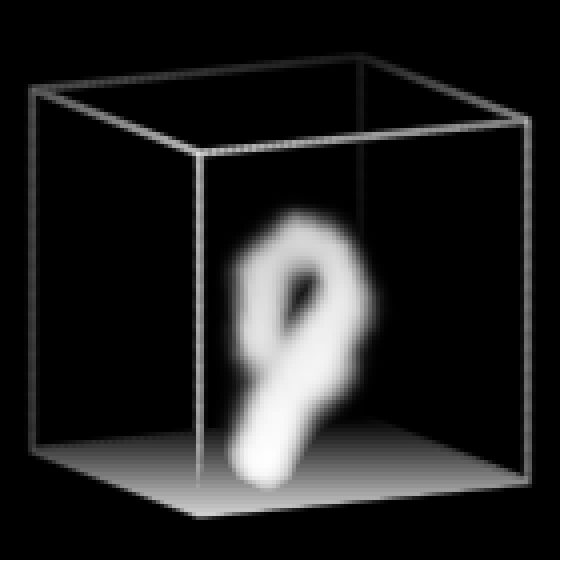}&
			\includegraphics[height=\volumeHeight]{volumetric/mnistUnconditional/0/_sample__2__3_}&
			\includegraphics[height=\volumeHeight]{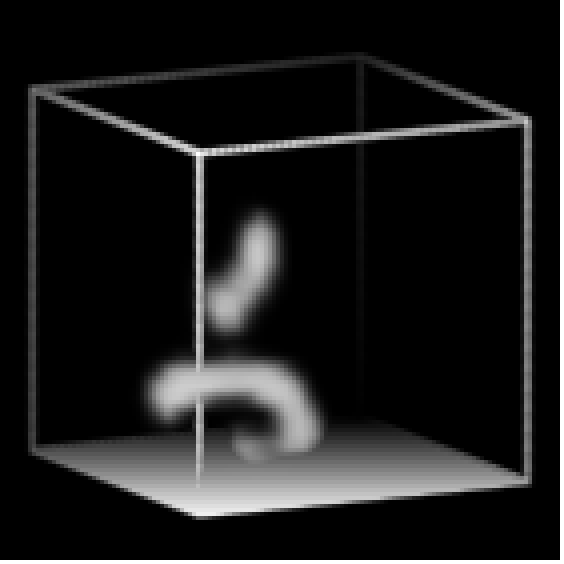}&
			\includegraphics[height=\volumeHeight]{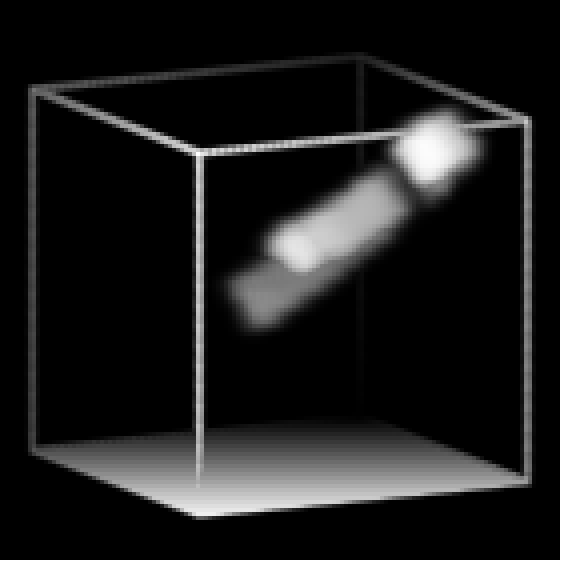}&
			\includegraphics[height=\volumeHeight]{volumetric/mnistUnconditional/0/_sample__3__3_}&
			\includegraphics[height=\volumeHeight]{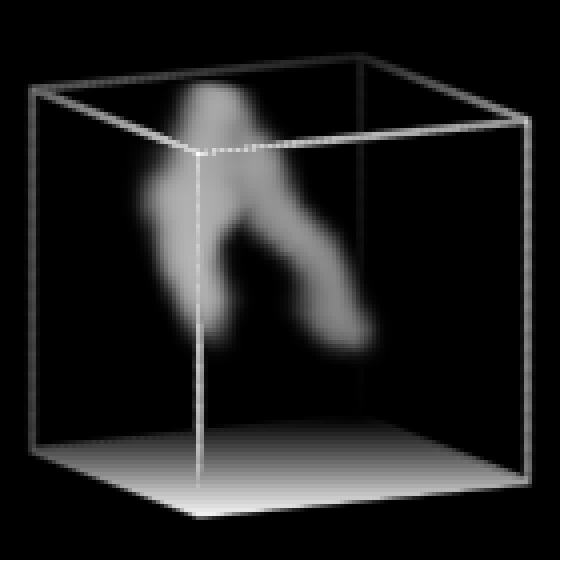}\\
		};
		\end{tikzpicture}

		\caption{\textbf{A strong generative model of volumes (MNIST3D):} \textit{Left:} Examples of training data. \textit{Right:} Samples from the model.\label{fig:unconditional-generation-full-mnist}}
	\end{figure*}

\clearpage
\subsection{Volume completion\label{appendix:volume-completion}}

In figures \ref{fig:volume-completion-necker-cube}, \ref{fig:volume-completion-primitives} and \ref{fig:volume-completion-mnist} we show the model's capabilities at volume completion.

\begin{figure*}[ht!]
	\centering
	\begin{tikzpicture}
	\matrix [matrix of nodes, column sep=1mm, row sep=1mm, every node/.style={inner sep=0, outer sep=0, anchor=center}]
	{
		\includegraphics[height=\volumeHeight]{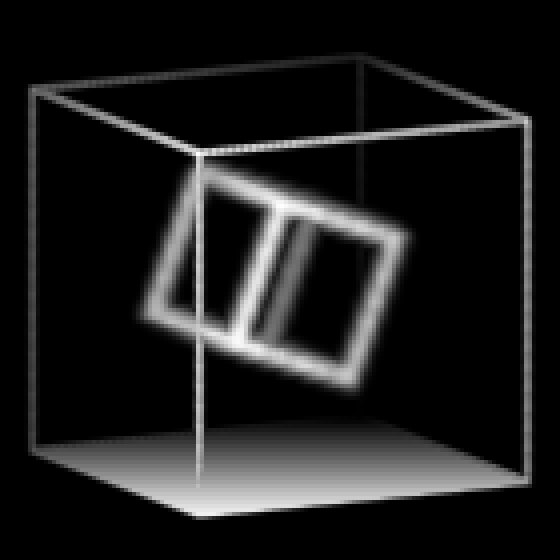} \\
		\includegraphics[height=\volumeHeight]{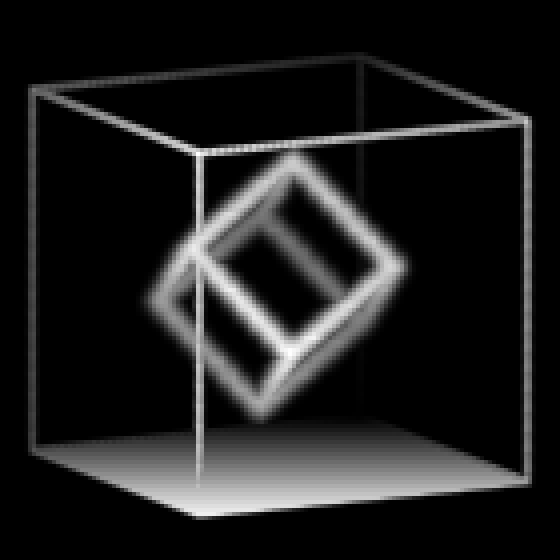} \\
		\includegraphics[height=\volumeHeight]{volumetric/volumeCompletionNECKERCUBE/target_1_3} \\ 
	};
	\end{tikzpicture}
	\begin{tikzpicture}
	\matrix [matrix of nodes, column sep=1mm, row sep=1mm, every node/.style={inner sep=0, outer sep=0, anchor=center}]
	{
		\includegraphics[height=\volumeHeight]{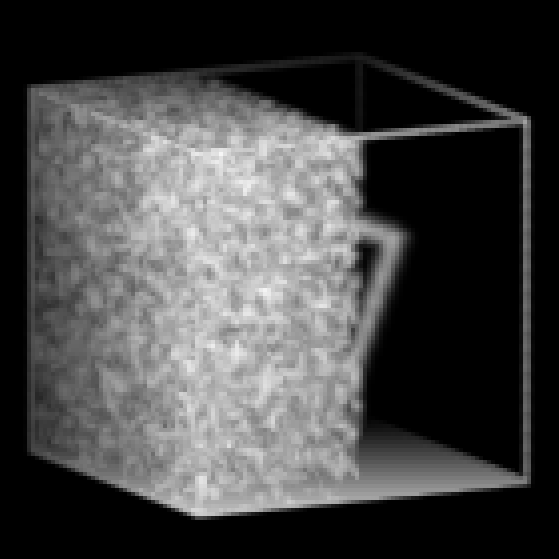} & 
		\includegraphics[height=\volumeHeight]{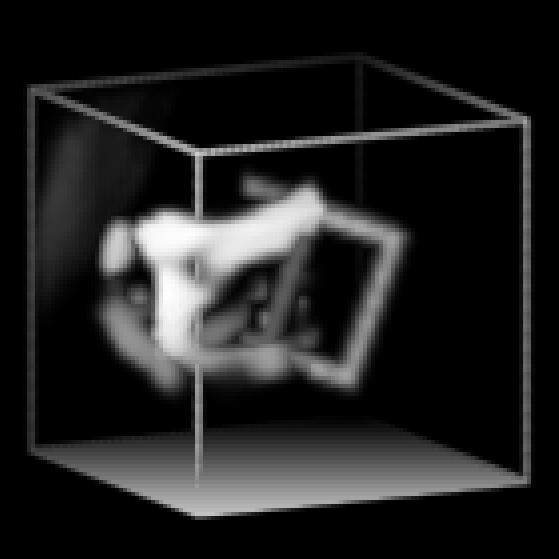} & 
		\includegraphics[height=\volumeHeight]{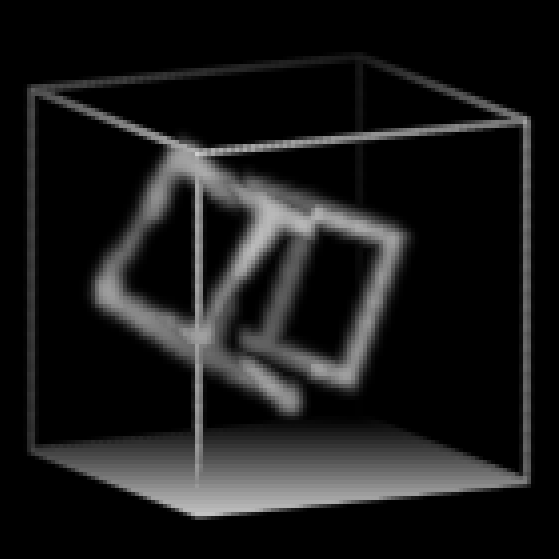} & 
		\includegraphics[height=\volumeHeight]{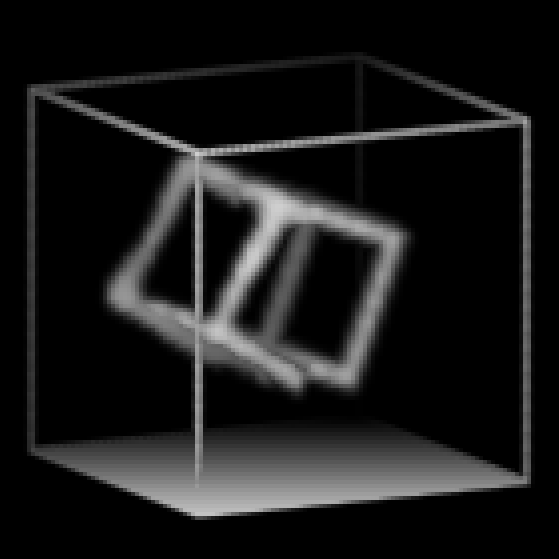} & 
		\includegraphics[height=\volumeHeight]{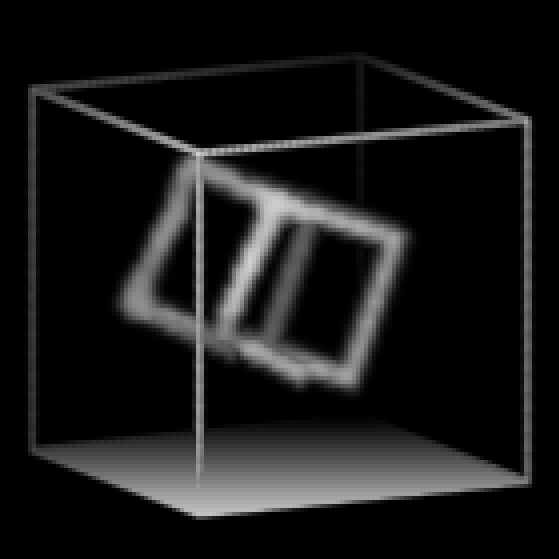} & 
		\includegraphics[height=\volumeHeight]{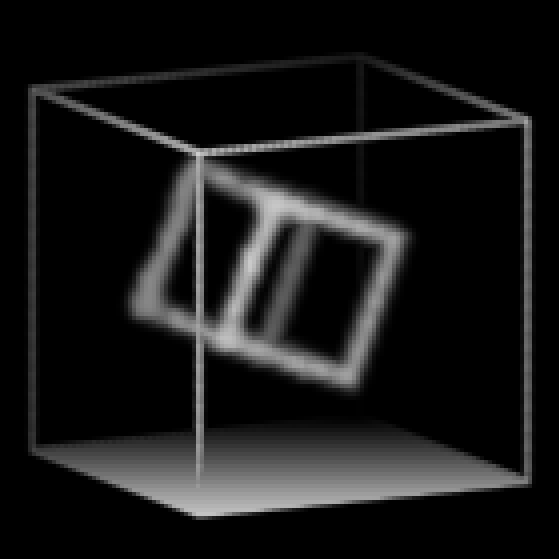} & 
		\includegraphics[height=\volumeHeight]{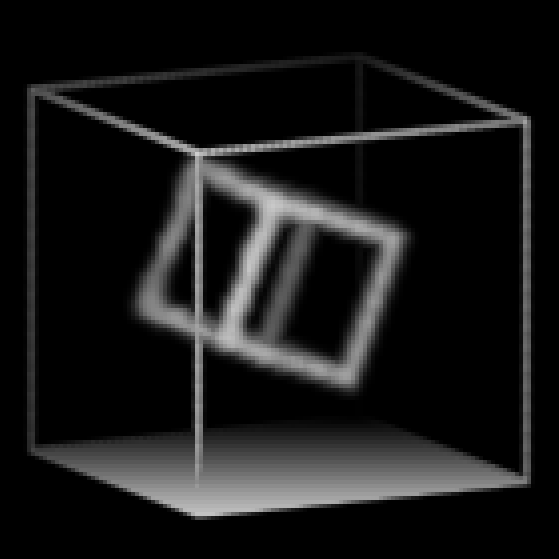} & 
		\includegraphics[height=\volumeHeight]{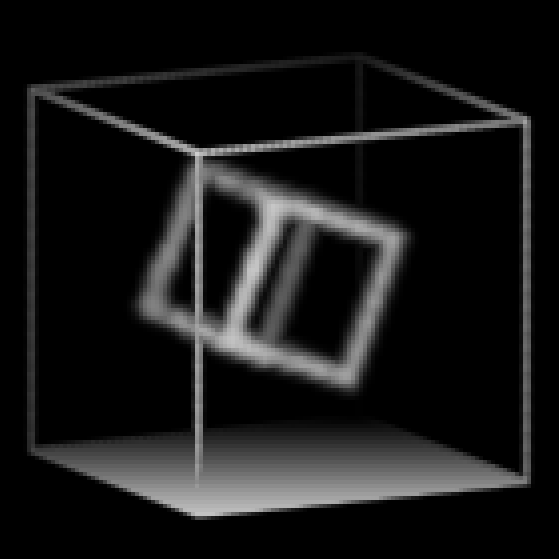} & \\ 
		\includegraphics[height=\volumeHeight]{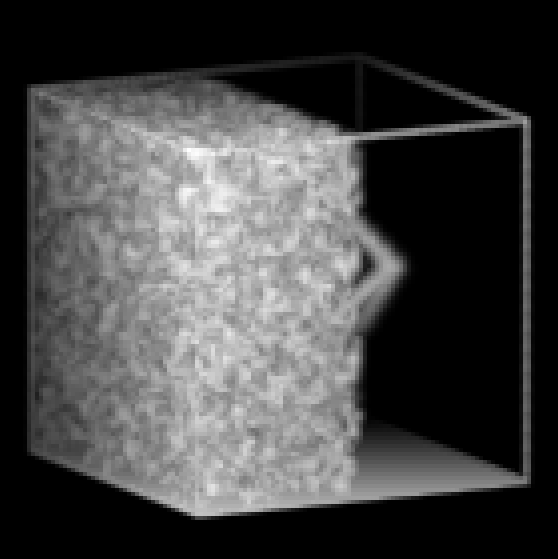} & 
		\includegraphics[height=\volumeHeight]{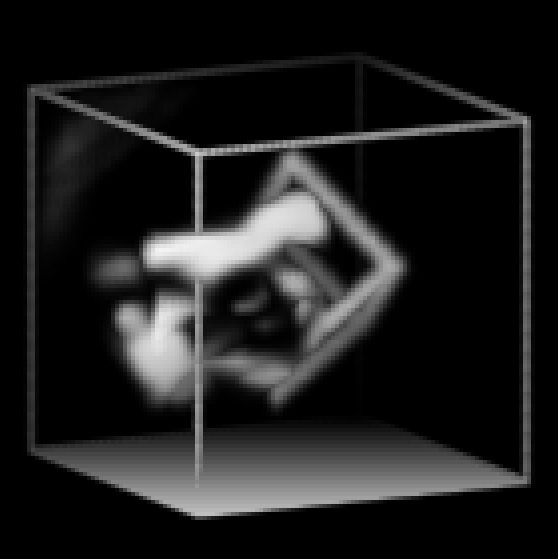} & 
		\includegraphics[height=\volumeHeight]{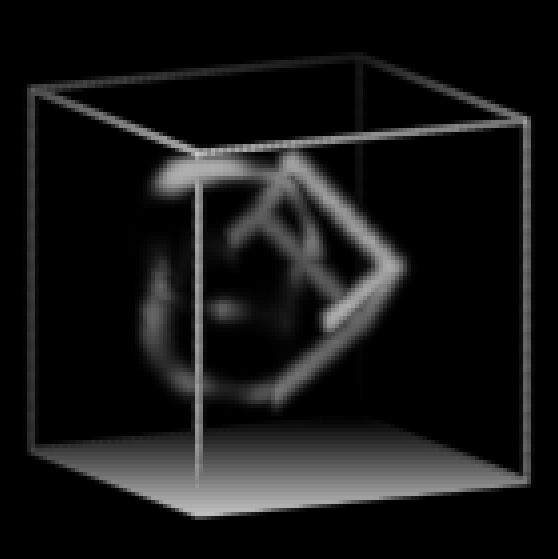} & 
		\includegraphics[height=\volumeHeight]{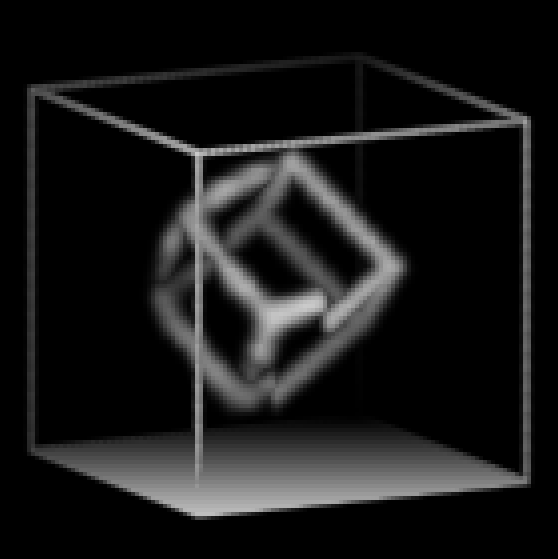} & 
		\includegraphics[height=\volumeHeight]{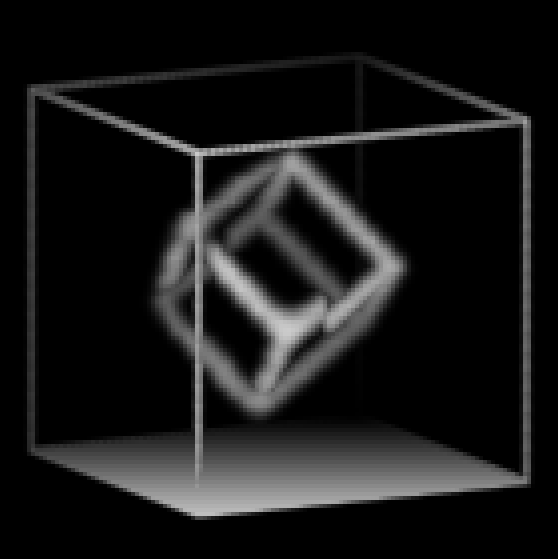} & 
		\includegraphics[height=\volumeHeight]{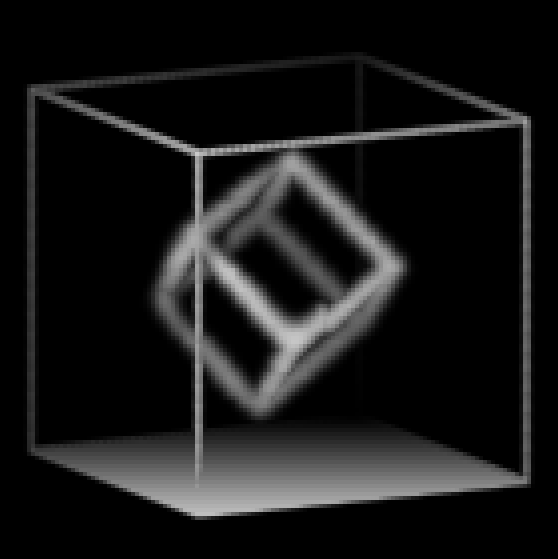} & 
		\includegraphics[height=\volumeHeight]{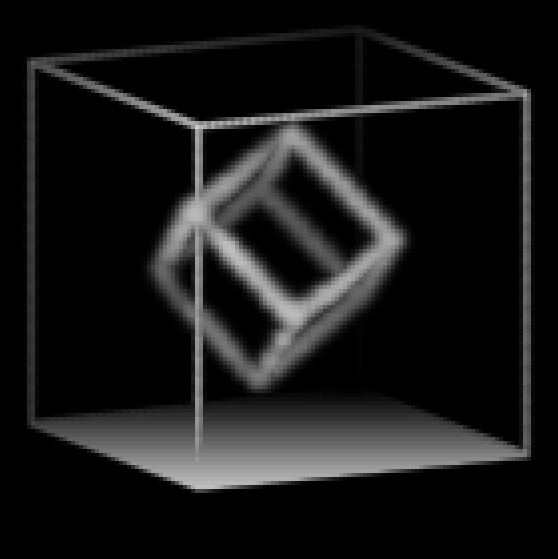} & 
		\includegraphics[height=\volumeHeight]{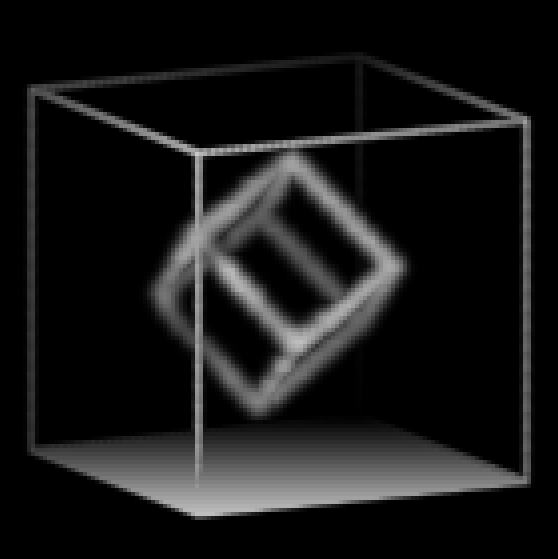} & \\ 
		\includegraphics[height=\volumeHeight]{volumetric/volumeCompletionNECKERCUBE/short_mcmc_chain_1_3_1} & 
		\includegraphics[height=\volumeHeight]{volumetric/volumeCompletionNECKERCUBE/short_mcmc_chain_1_3_2} & 
		\includegraphics[height=\volumeHeight]{volumetric/volumeCompletionNECKERCUBE/short_mcmc_chain_1_3_3} & 
		\includegraphics[height=\volumeHeight]{volumetric/volumeCompletionNECKERCUBE/short_mcmc_chain_1_3_4} & 
		\includegraphics[height=\volumeHeight]{volumetric/volumeCompletionNECKERCUBE/short_mcmc_chain_1_3_5} & 
		\includegraphics[height=\volumeHeight]{volumetric/volumeCompletionNECKERCUBE/short_mcmc_chain_1_3_6} & 
		\includegraphics[height=\volumeHeight]{volumetric/volumeCompletionNECKERCUBE/short_mcmc_chain_1_3_7} & 
		\includegraphics[height=\volumeHeight]{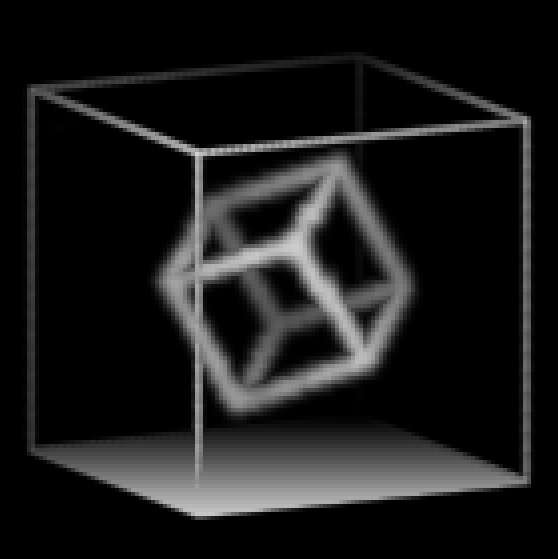} & \\ 
	};
	\end{tikzpicture}
	\begin{tikzpicture}
	\matrix [matrix of nodes, column sep=1mm, row sep=1mm, every node/.style={inner sep=0, outer sep=0, anchor=center}]
	{
		\includegraphics[height=\volumeHeight]{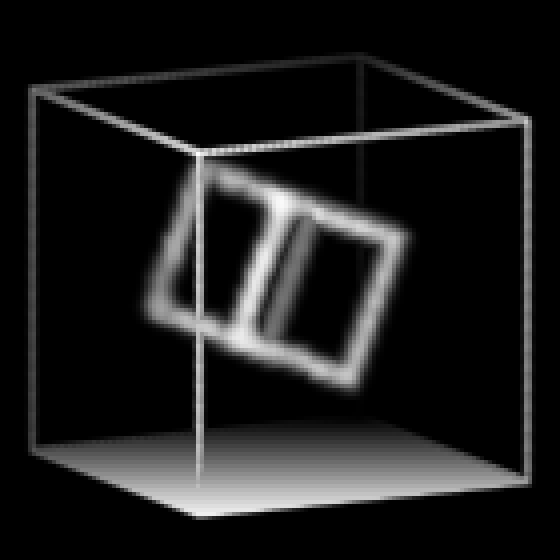} \\
		\includegraphics[height=\volumeHeight]{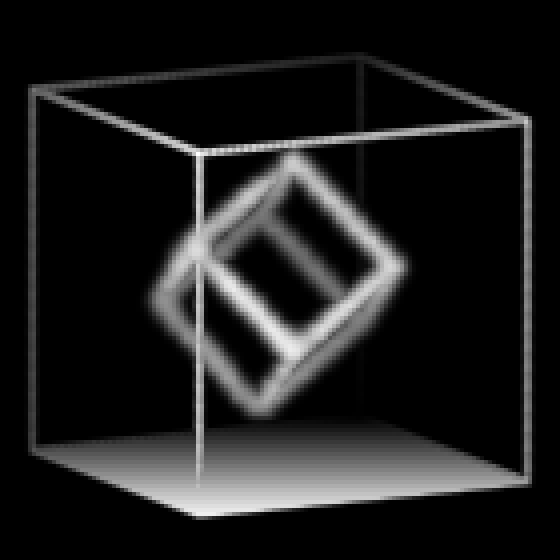} \\
		\includegraphics[height=\volumeHeight]{volumetric/volumeCompletionNECKERCUBE/last_1_3} \\ 
	};
	\end{tikzpicture}
	\caption{\textbf{Completion using a model of 3D trained on volumes (Necker cube):} \textit{Left:} Full target volume. \textit{Middle:} First 8 steps of the MCMC chain completing the missing left half of the data volume. \textit{Right:} 100th iteration of the MCMC chain.\label{fig:volume-completion-necker-cube}	}
\end{figure*}

\begin{figure*}[ht!]
	\centering
	\begin{tikzpicture}
	\matrix [matrix of nodes, column sep=1mm, row sep=1mm, every node/.style={inner sep=0, outer sep=0, anchor=center}]
	{
		\includegraphics[height=\volumeHeight]{volumetric/primitivesVolumeCompletion/_target__1__2_} \\ 
		\includegraphics[height=\volumeHeight]{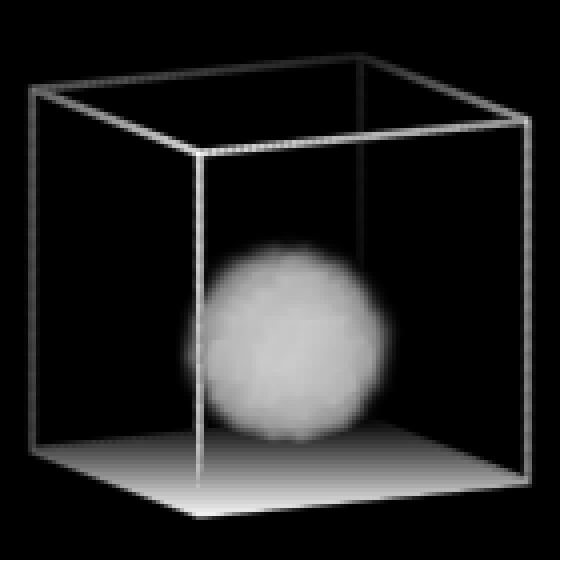} \\
		\includegraphics[height=\volumeHeight]{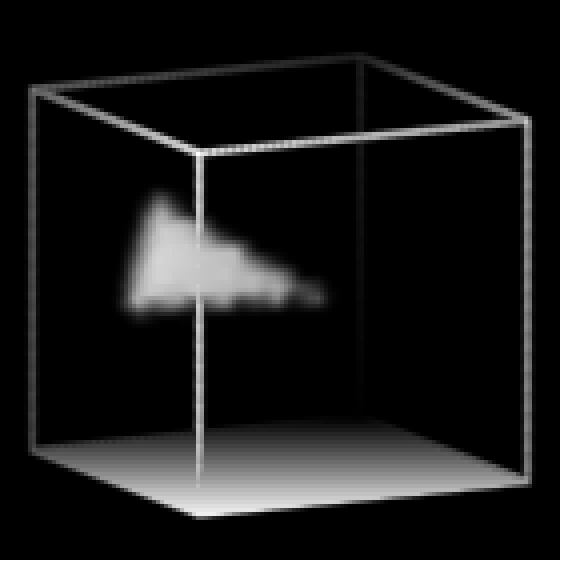} \\ 
	};
	\end{tikzpicture}
	\begin{tikzpicture}
	\matrix [matrix of nodes, column sep=1mm, row sep=1mm, every node/.style={inner sep=0, outer sep=0, anchor=center}]
	{
		\includegraphics[height=\volumeHeight]{volumetric/primitivesVolumeCompletion/_mcmc_chain__1__2__1_} & 
		\includegraphics[height=\volumeHeight]{volumetric/primitivesVolumeCompletion/_mcmc_chain__1__2__2_} &
		\includegraphics[height=\volumeHeight]{volumetric/primitivesVolumeCompletion/_mcmc_chain__1__2__3_} &
		\includegraphics[height=\volumeHeight]{volumetric/primitivesVolumeCompletion/_mcmc_chain__1__2__4_} &
		\includegraphics[height=\volumeHeight]{volumetric/primitivesVolumeCompletion/_mcmc_chain__1__2__5_} &
		\includegraphics[height=\volumeHeight]{volumetric/primitivesVolumeCompletion/_mcmc_chain__1__2__6_} &
		\includegraphics[height=\volumeHeight]{volumetric/primitivesVolumeCompletion/_mcmc_chain__1__2__7_} &
		\includegraphics[height=\volumeHeight]{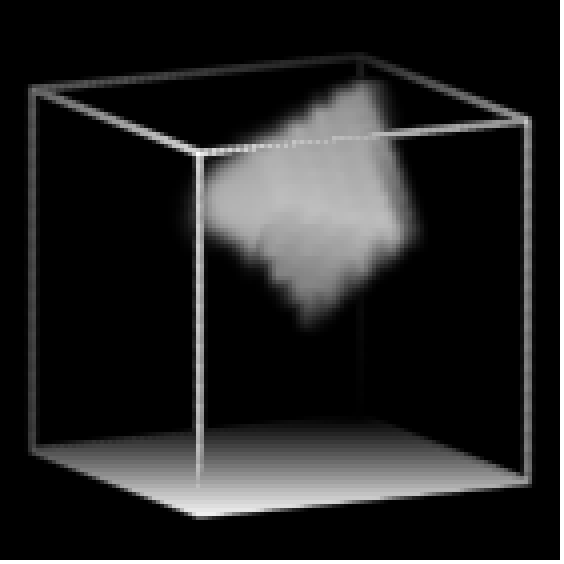}\\
		\includegraphics[height=\volumeHeight]{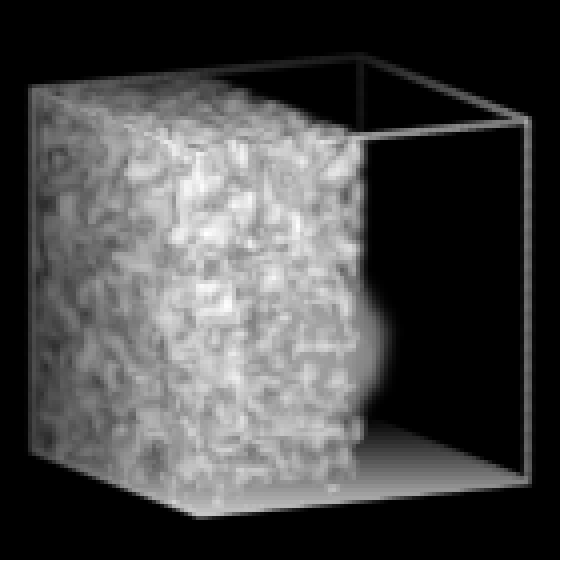} & 
		\includegraphics[height=\volumeHeight]{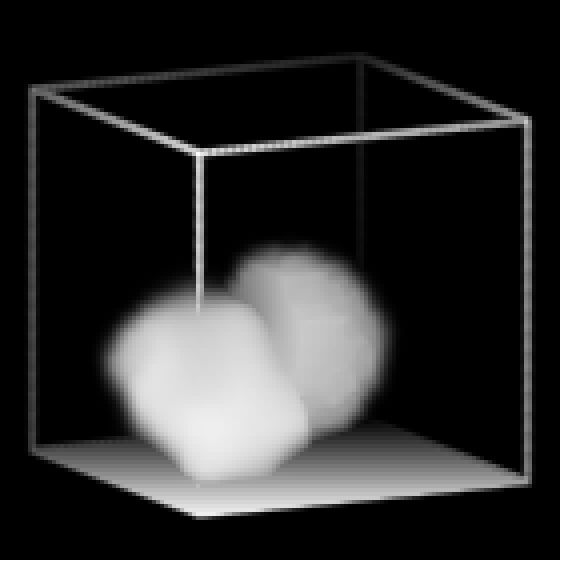} &
		\includegraphics[height=\volumeHeight]{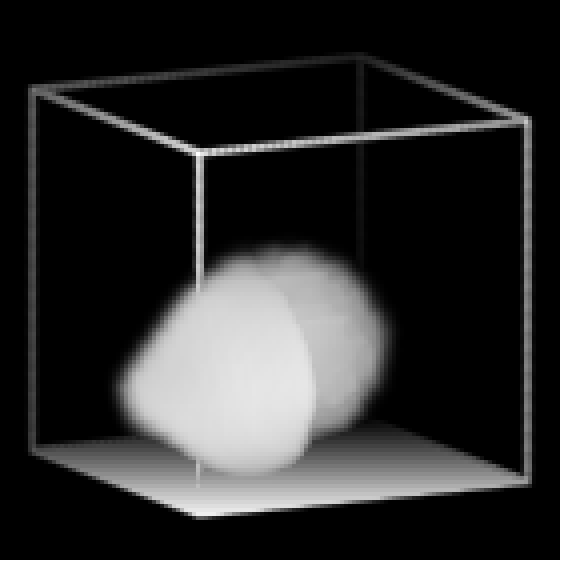} &
		\includegraphics[height=\volumeHeight]{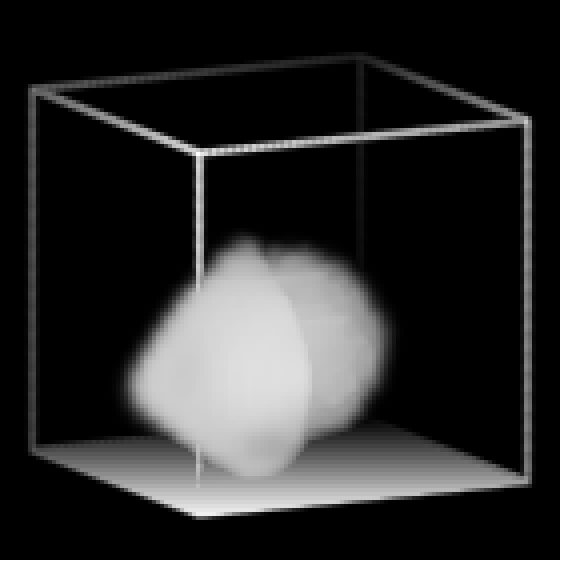} &
		\includegraphics[height=\volumeHeight]{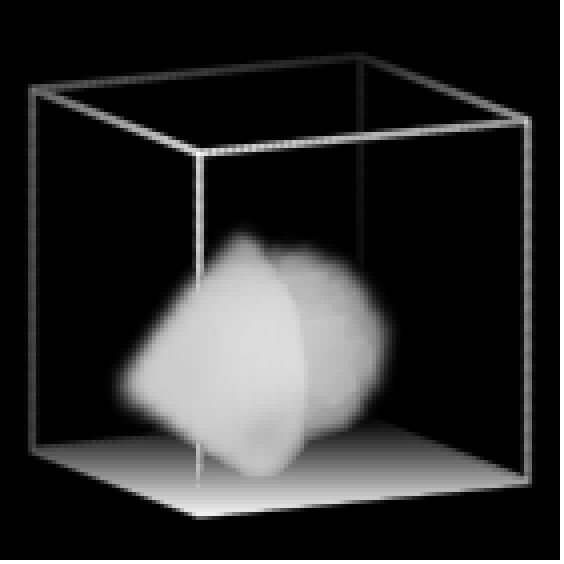} &
		\includegraphics[height=\volumeHeight]{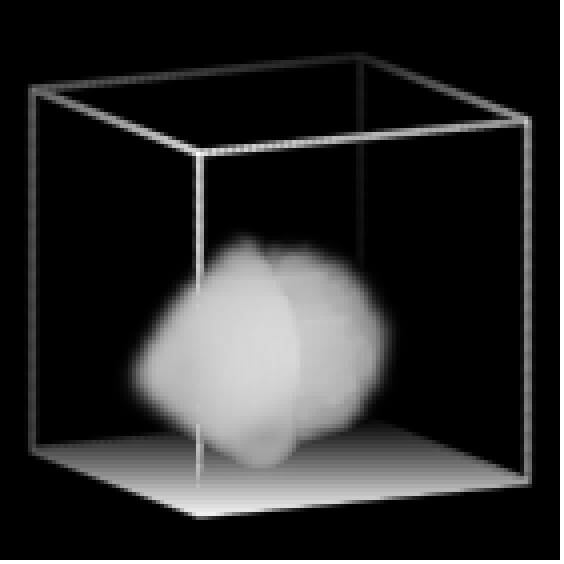} &
		\includegraphics[height=\volumeHeight]{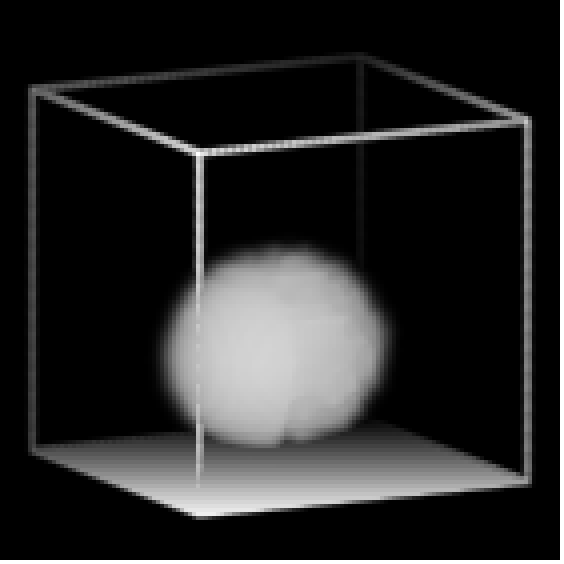} &
		\includegraphics[height=\volumeHeight]{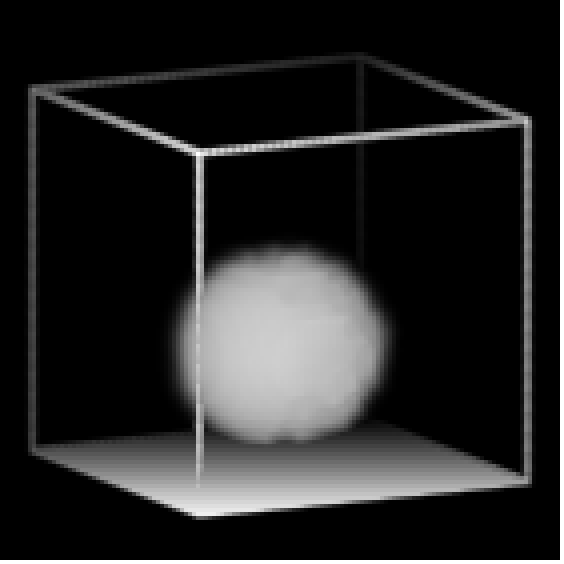}\\
		\includegraphics[height=\volumeHeight]{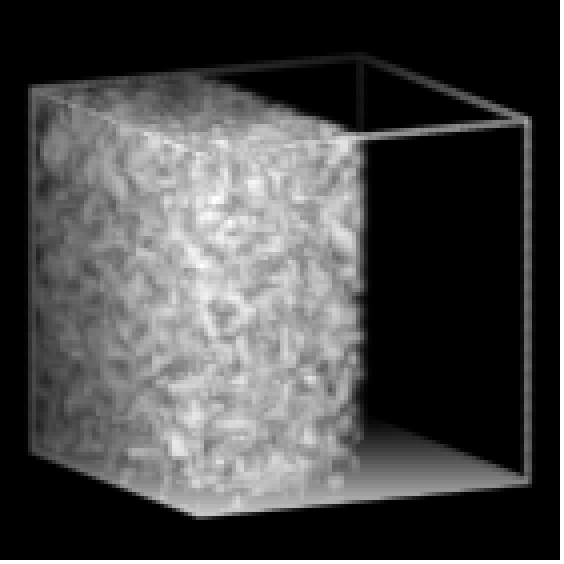} & 
		\includegraphics[height=\volumeHeight]{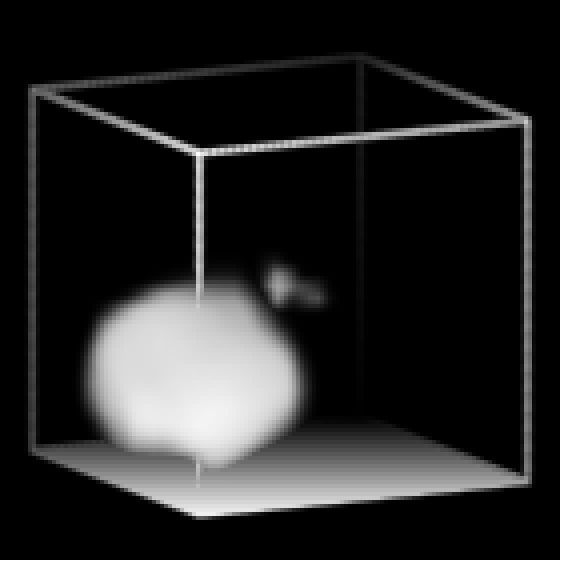} &
		\includegraphics[height=\volumeHeight]{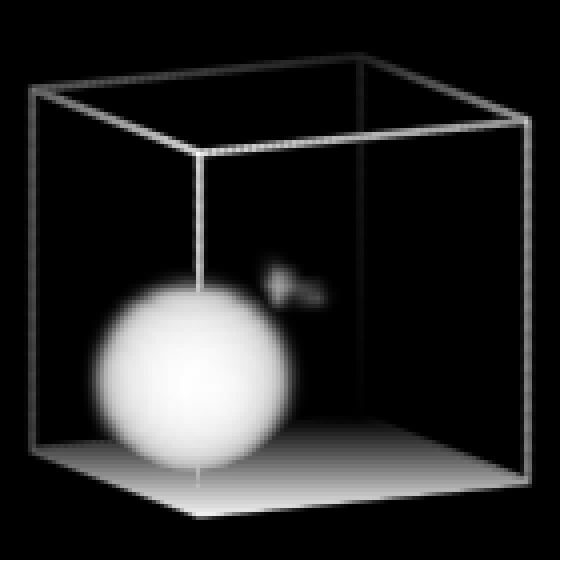} &
		\includegraphics[height=\volumeHeight]{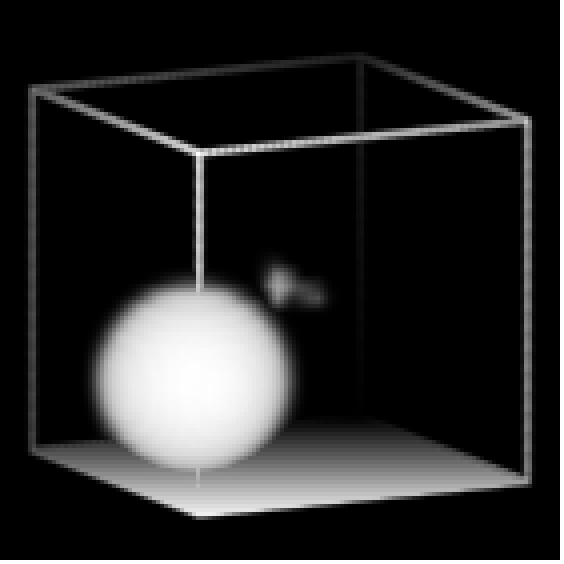} &
		\includegraphics[height=\volumeHeight]{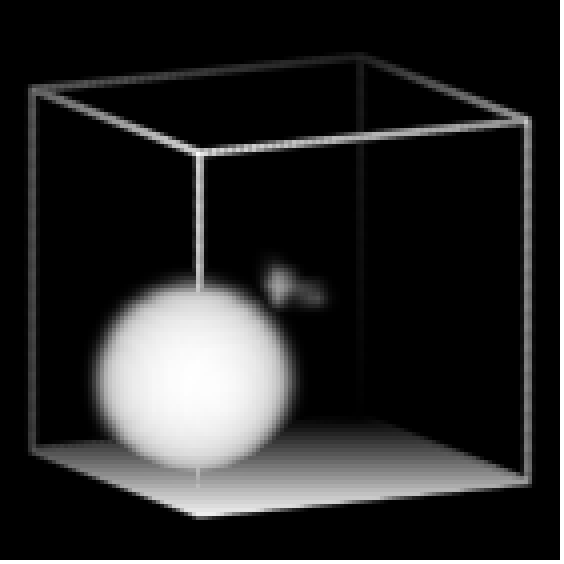} &
		\includegraphics[height=\volumeHeight]{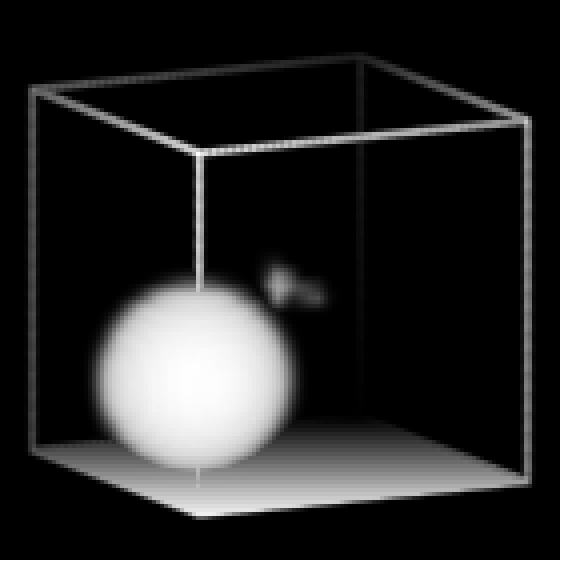} &
		\includegraphics[height=\volumeHeight]{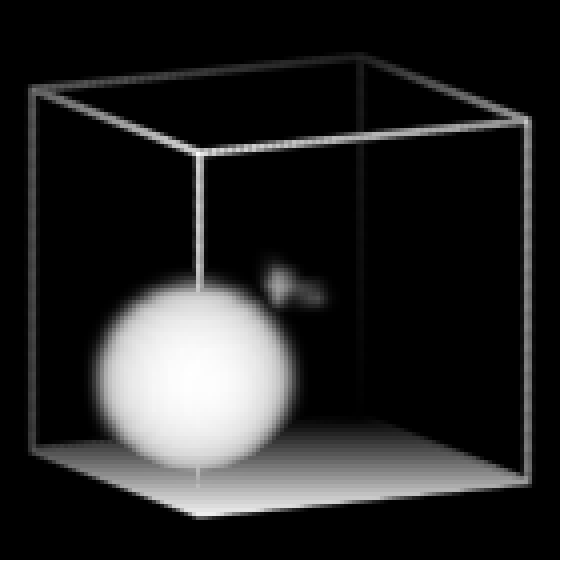} &
		\includegraphics[height=\volumeHeight]{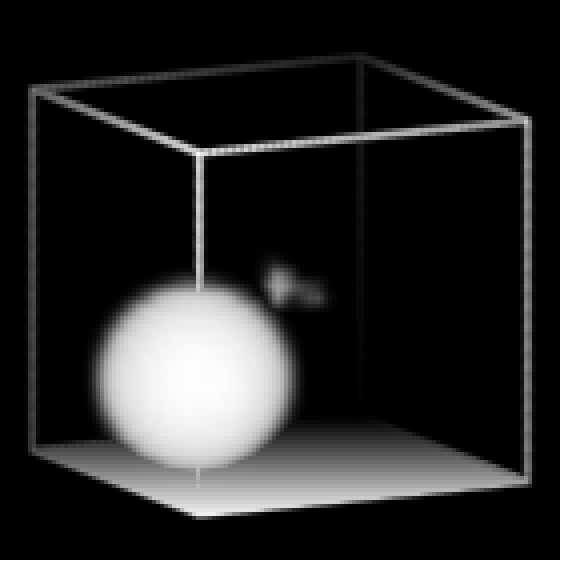}\\
	};
	\end{tikzpicture}
	\begin{tikzpicture}
	\matrix [matrix of nodes, column sep=1mm, row sep=1mm, every node/.style={inner sep=0, outer sep=0, anchor=center}]
	{
		\includegraphics[height=\volumeHeight]{volumetric/primitivesVolumeCompletion/_mcmc_last__1__2_} \\ 
		\includegraphics[height=\volumeHeight]{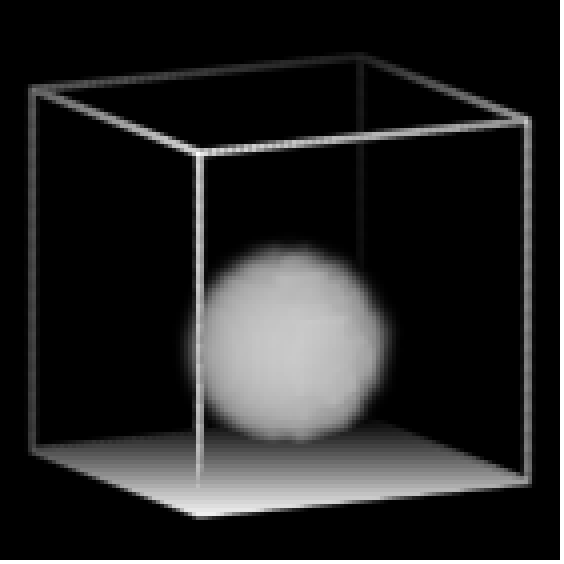} \\
		\includegraphics[height=\volumeHeight]{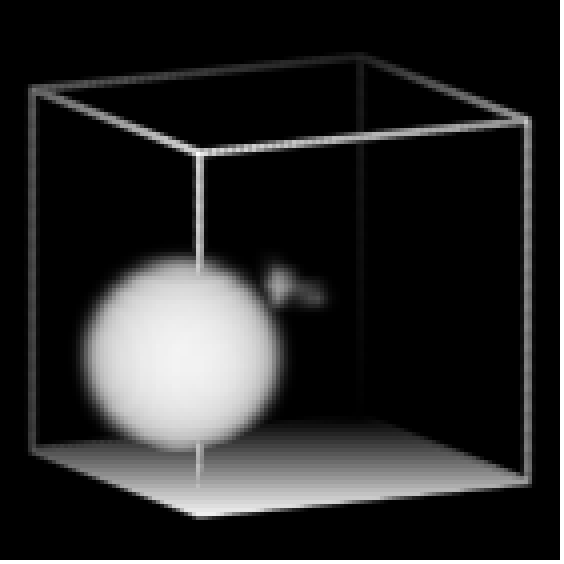} \\ 
	};
	\end{tikzpicture}
	\caption{\textbf{Completion using a model of 3D trained on volumes (Primitives):} \textit{Left:} Full target volume. \textit{Middle:} First 8 steps of the MCMC chain completing the missing left half of the data volume. \textit{Right:} 100th iteration of the MCMC chain.\label{fig:volume-completion-primitives}	}
\end{figure*}

\begin{figure*}[ht!]
	\centering	
	\begin{tikzpicture}
	\matrix [matrix of nodes, column sep=1mm, row sep=1mm, every node/.style={inner sep=0, outer sep=0, anchor=center}]
	{
		\includegraphics[height=\volumeHeight]{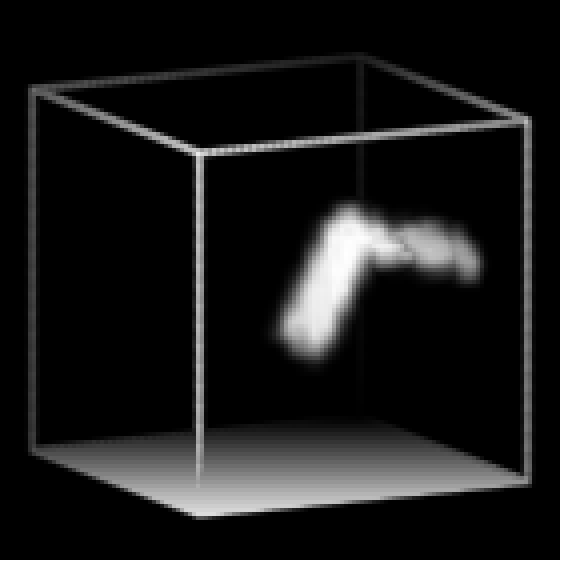} \\ 
		\includegraphics[height=\volumeHeight]{volumetric/mnistVolumeCompletion/0/_target__2__3_} \\
		\includegraphics[height=\volumeHeight]{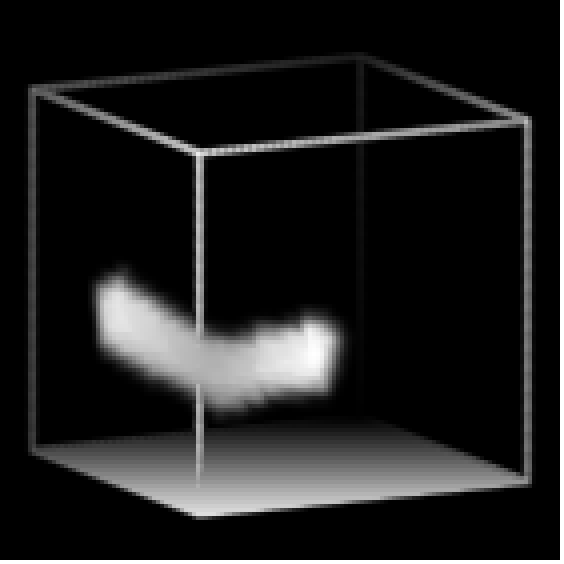} \\ 
	};
	\end{tikzpicture}
	\begin{tikzpicture}
	\matrix [matrix of nodes, column sep=1mm, row sep=1mm, every node/.style={inner sep=0, outer sep=0, anchor=center}]
	{
		\includegraphics[height=\volumeHeight]{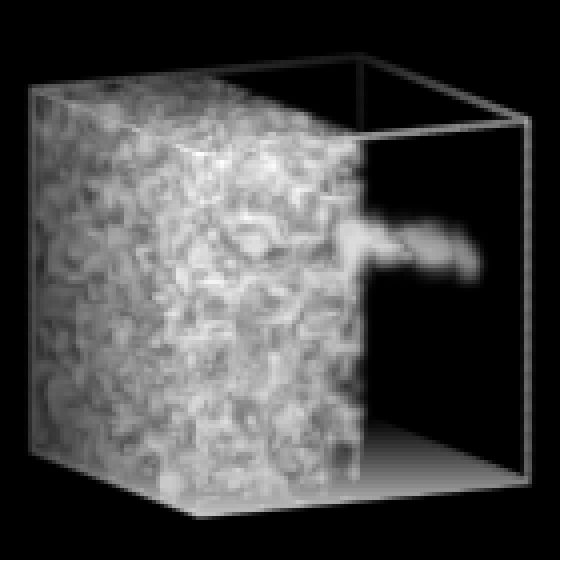} & 
		\includegraphics[height=\volumeHeight]{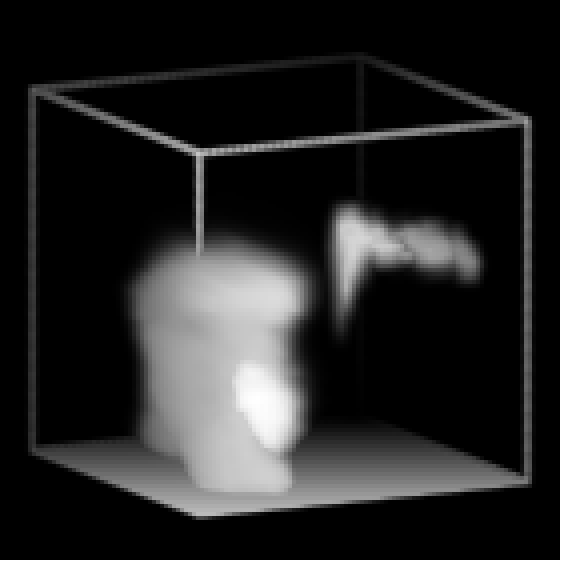} &
		\includegraphics[height=\volumeHeight]{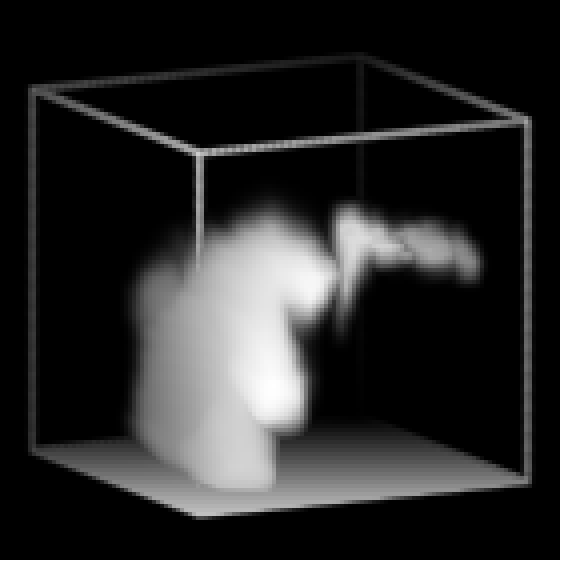} &
		\includegraphics[height=\volumeHeight]{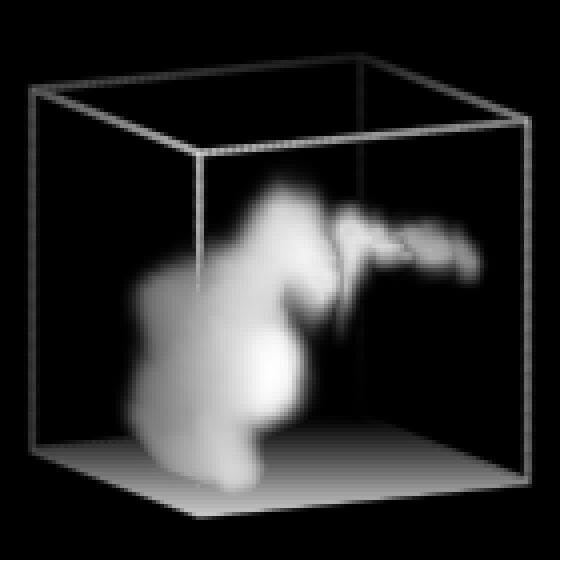} &
		\includegraphics[height=\volumeHeight]{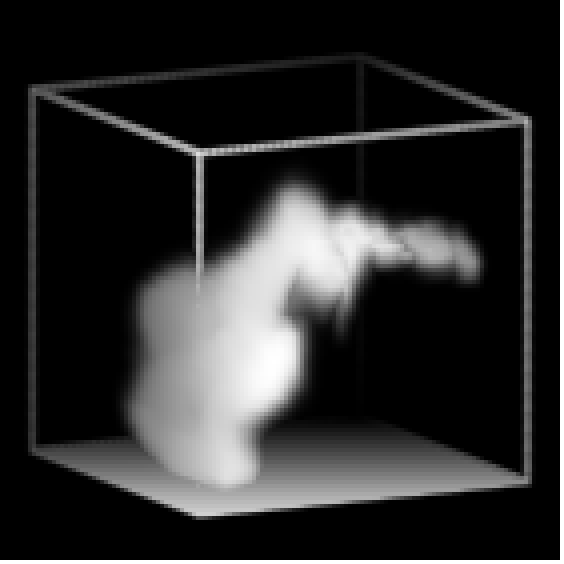} &
		\includegraphics[height=\volumeHeight]{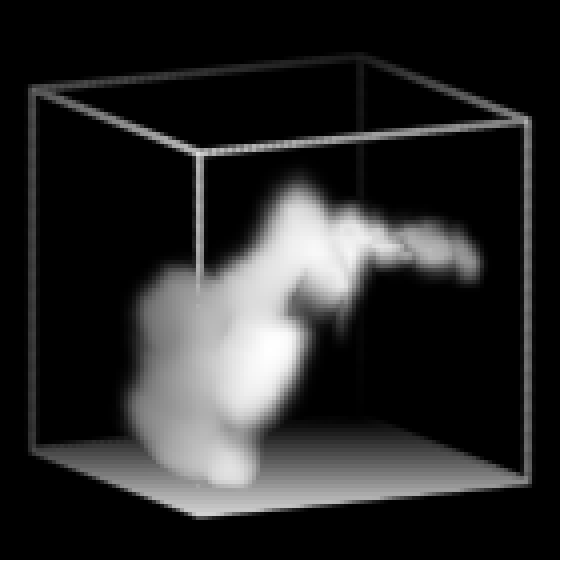} &
		\includegraphics[height=\volumeHeight]{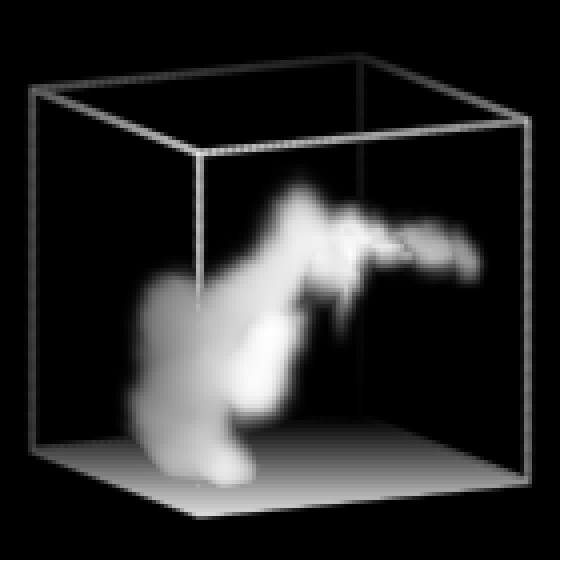} &
		\includegraphics[height=\volumeHeight]{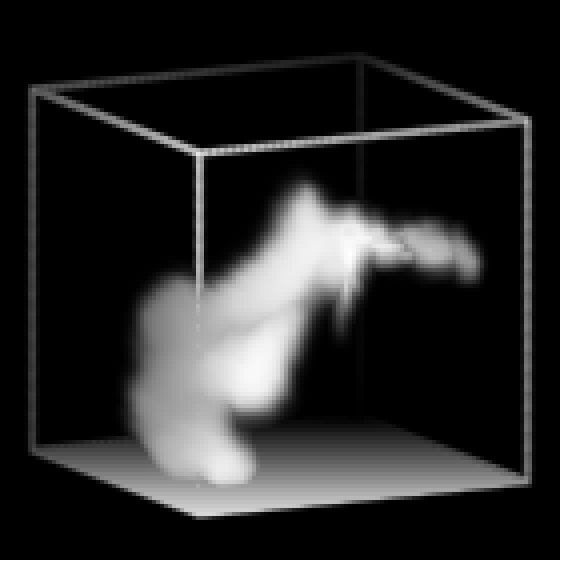}\\
		\includegraphics[height=\volumeHeight]{volumetric/mnistVolumeCompletion/0/_mcmc_chain__2__3__1_} & 
		\includegraphics[height=\volumeHeight]{volumetric/mnistVolumeCompletion/0/_mcmc_chain__2__3__2_} &
		\includegraphics[height=\volumeHeight]{volumetric/mnistVolumeCompletion/0/_mcmc_chain__2__3__3_} &
		\includegraphics[height=\volumeHeight]{volumetric/mnistVolumeCompletion/0/_mcmc_chain__2__3__4_} &
		\includegraphics[height=\volumeHeight]{volumetric/mnistVolumeCompletion/0/_mcmc_chain__2__3__5_} &
		\includegraphics[height=\volumeHeight]{volumetric/mnistVolumeCompletion/0/_mcmc_chain__2__3__6_} &
		\includegraphics[height=\volumeHeight]{volumetric/mnistVolumeCompletion/0/_mcmc_chain__2__3__7_} &
		\includegraphics[height=\volumeHeight]{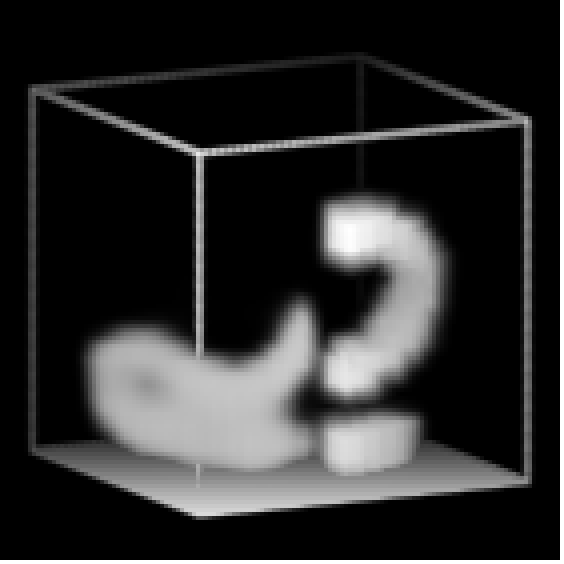}\\
		\includegraphics[height=\volumeHeight]{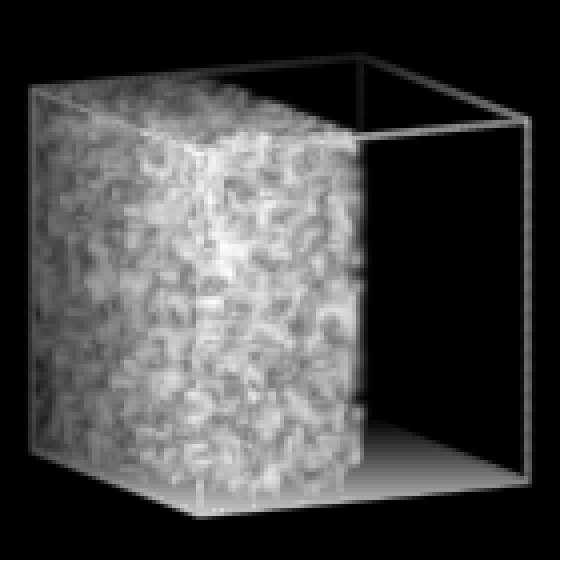} & 
		\includegraphics[height=\volumeHeight]{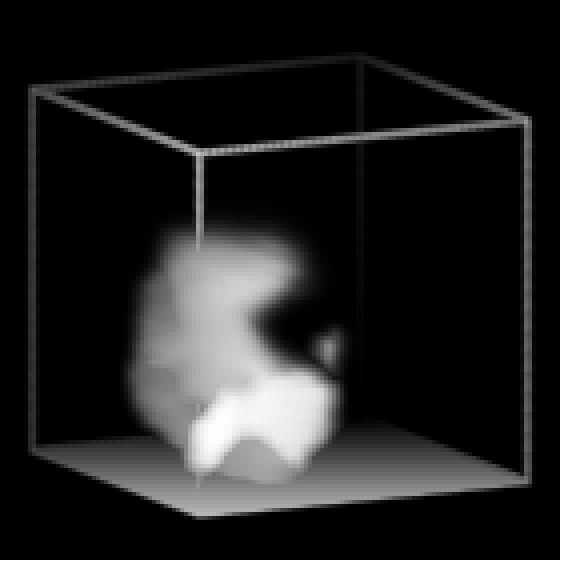} &
		\includegraphics[height=\volumeHeight]{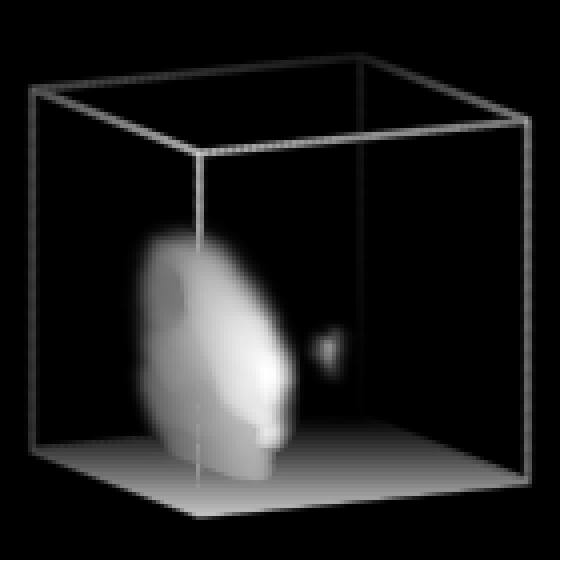} &
		\includegraphics[height=\volumeHeight]{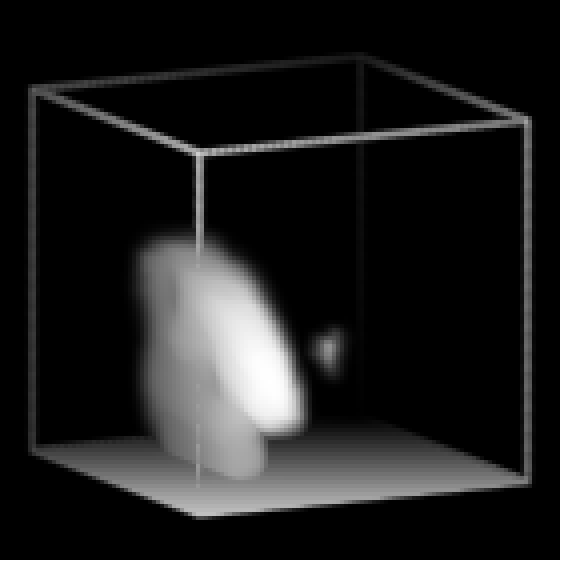} &
		\includegraphics[height=\volumeHeight]{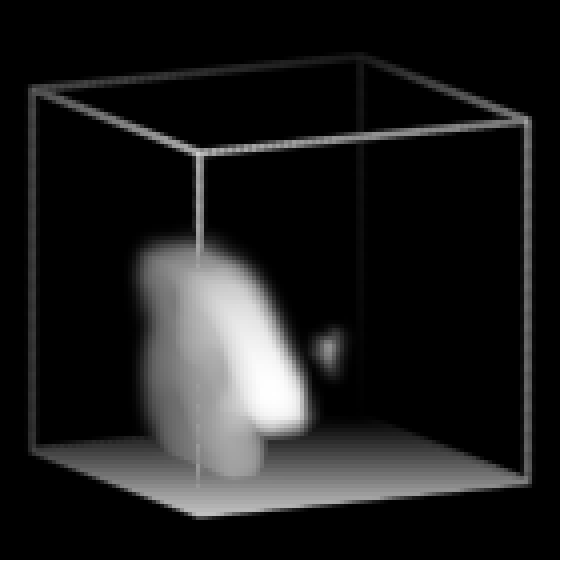} &
		\includegraphics[height=\volumeHeight]{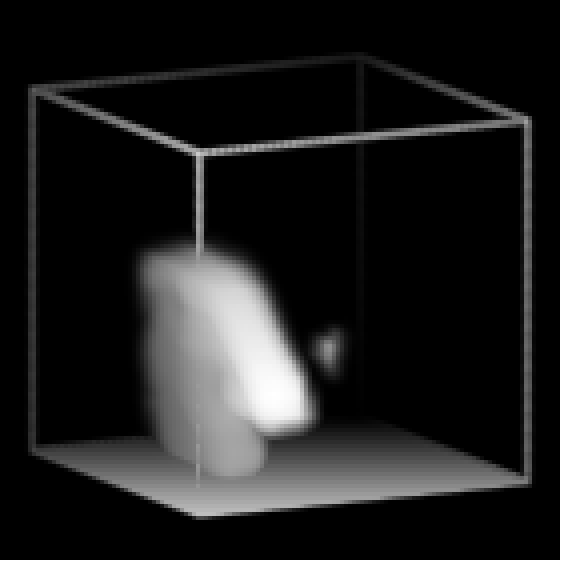} &
		\includegraphics[height=\volumeHeight]{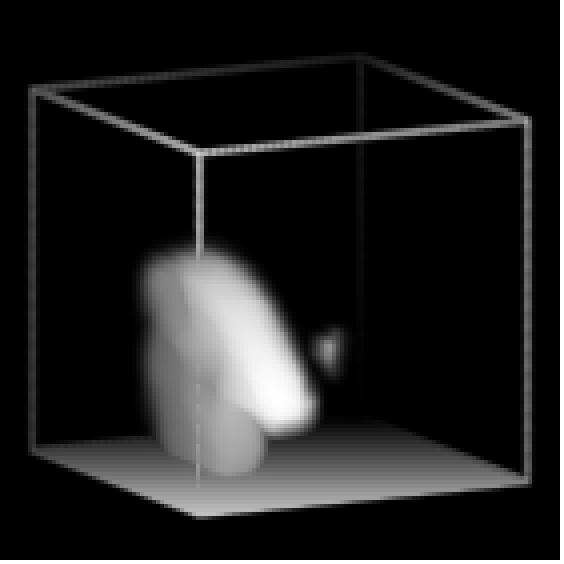} &
		\includegraphics[height=\volumeHeight]{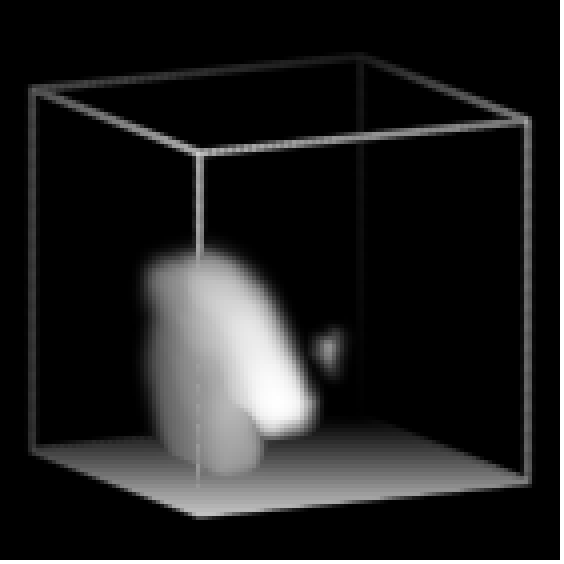}\\
	};
	\end{tikzpicture}
	\begin{tikzpicture}
	\matrix [matrix of nodes, column sep=1mm, row sep=1mm, every node/.style={inner sep=0, outer sep=0, anchor=center}]
	{
		\includegraphics[height=\volumeHeight]{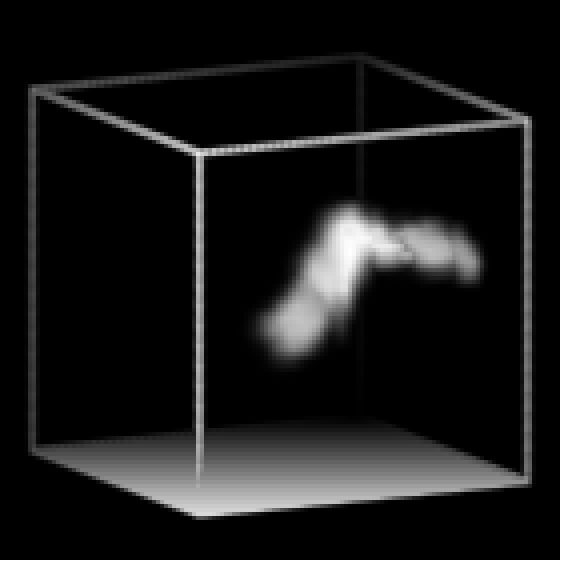} \\ 
		\includegraphics[height=\volumeHeight]{volumetric/mnistVolumeCompletion/0/_mcmc_last__2__3_} \\
		\includegraphics[height=\volumeHeight]{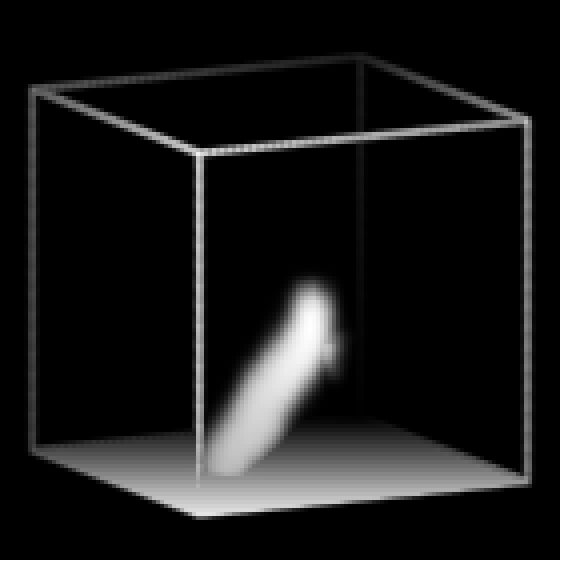} \\ 
	};
	\end{tikzpicture}
	\caption{\textbf{Completion using a model of 3D trained on volumes (MNIST3D):} \textit{Left:} Full target volume. \textit{Middle:} First 8 steps of the MCMC chain completing the missing left half of the data volume. \textit{Right:} 100th iteration of the MCMC chain.\label{fig:volume-completion-mnist}	}
\end{figure*}

\clearpage
\subsection{Class-conditional volume generation\label{appendix:class-conditional}}

In figure \ref{fig:class-conditional-all} we show samples from a class-conditional volumetric generative model for all 40 ShapeNet classes.

\setlength{\volumeHeight}{0.9cm}

\begin{figure*}[ht!]
	\centering
	\hspace{-1.35cm}
	\begin{tikzpicture}
	\matrix [matrix of nodes, column sep=1mm, row sep=1mm, every node/.style={inner sep=0, outer sep=0, anchor=center}]
	{
		chair & \includegraphics[height=\volumeHeight]{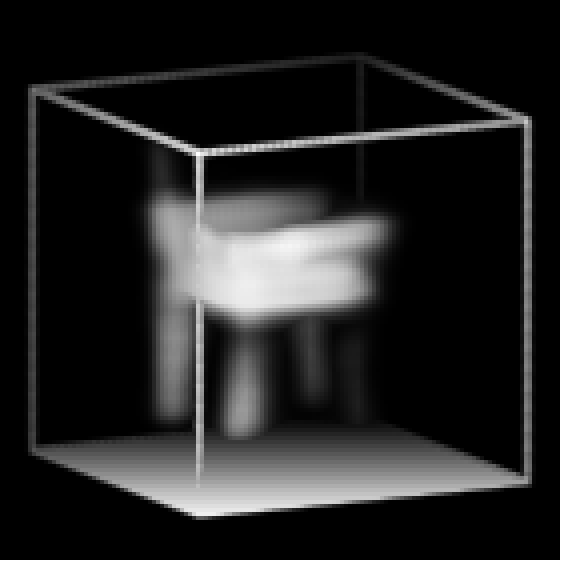} & 
		\includegraphics[height=\volumeHeight]{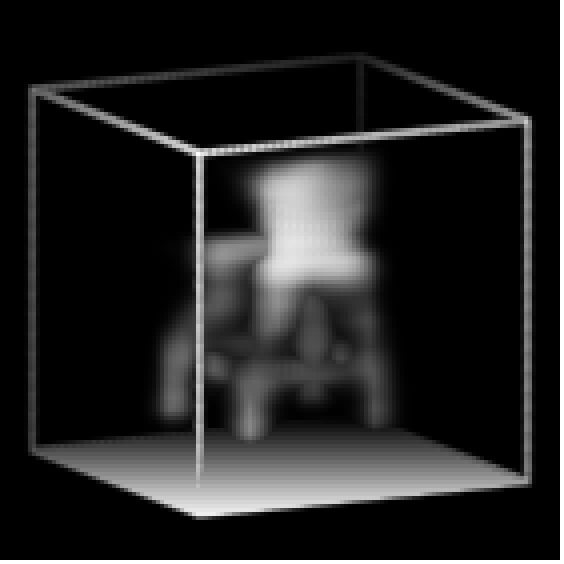} &
		\includegraphics[height=\volumeHeight]{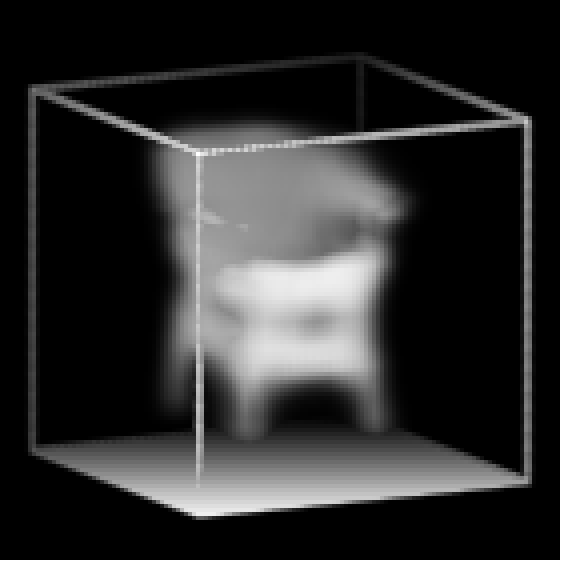} &
		\includegraphics[height=\volumeHeight]{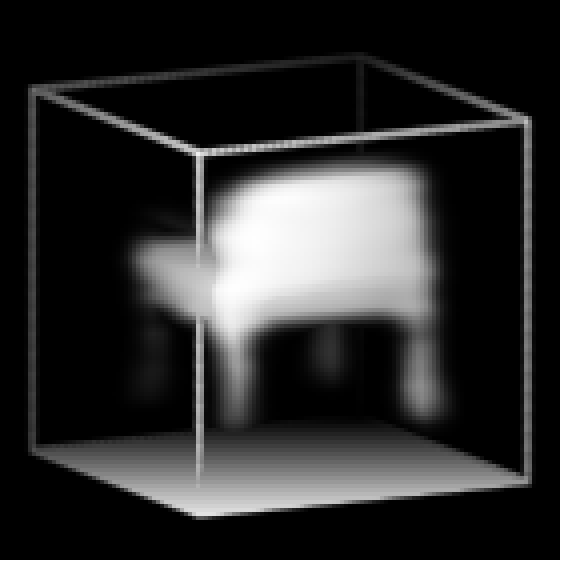} \\
		table & \includegraphics[height=\volumeHeight]{volumetric/shapenetClassConditional/_sample__1__1__table_} & 
		\includegraphics[height=\volumeHeight]{volumetric/shapenetClassConditional/_sample__1__2__table_} &
		\includegraphics[height=\volumeHeight]{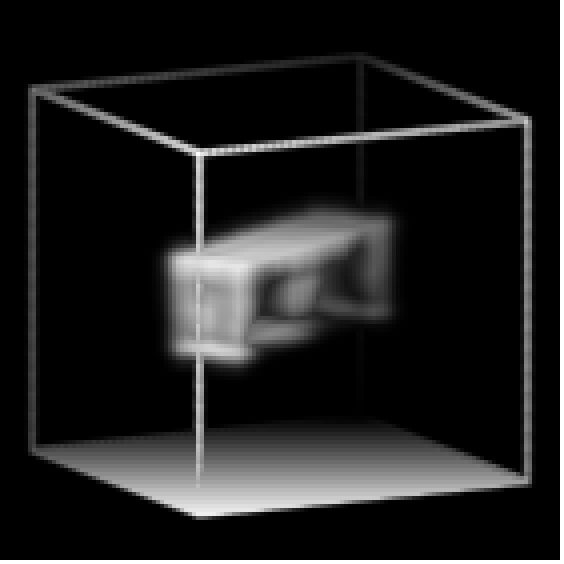} &
		\includegraphics[height=\volumeHeight]{volumetric/shapenetClassConditional/_sample__3__2__table_} \\
		vase & \includegraphics[height=\volumeHeight]{volumetric/shapenetClassConditional/_sample__1__1__vase_} & 
		\includegraphics[height=\volumeHeight]{volumetric/shapenetClassConditional/_sample__1__2__vase_} &
		\includegraphics[height=\volumeHeight]{volumetric/shapenetClassConditional/_sample__2__2__vase_} &
		\includegraphics[height=\volumeHeight]{volumetric/shapenetClassConditional/_sample__4__2__vase_} \\
		car & \includegraphics[height=\volumeHeight]{volumetric/shapenetClassConditional/_sample__1__1__car_} & 
		\includegraphics[height=\volumeHeight]{volumetric/shapenetClassConditional/_sample__1__2__car_} &
		\includegraphics[height=\volumeHeight]{volumetric/shapenetClassConditional/_sample__2__2__car_} &
		\includegraphics[height=\volumeHeight]{volumetric/shapenetClassConditional/_sample__3__2__car_} \\
		laptop & \includegraphics[height=\volumeHeight]{volumetric/shapenetClassConditional/_sample__1__1__laptop_} & 
		\includegraphics[height=\volumeHeight]{volumetric/shapenetClassConditional/_sample__1__2__laptop_} &
		\includegraphics[height=\volumeHeight]{volumetric/shapenetClassConditional/_sample__2__2__laptop_} &
		\includegraphics[height=\volumeHeight]{volumetric/shapenetClassConditional/_sample__3__2__laptop_} \\
		bathtub & \includegraphics[height=\volumeHeight]{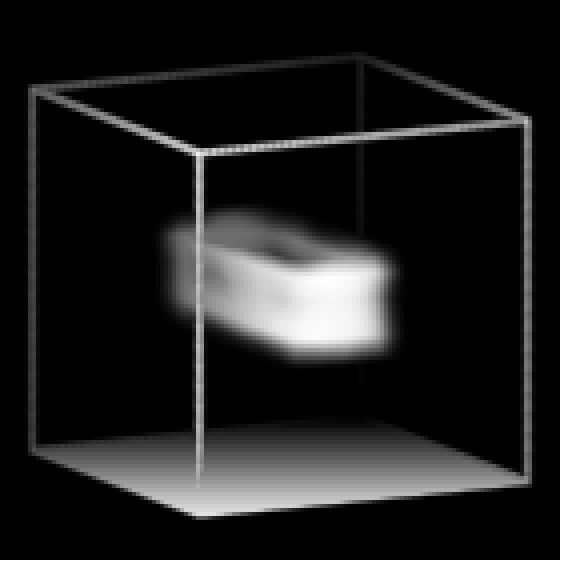} & 
		\includegraphics[height=\volumeHeight]{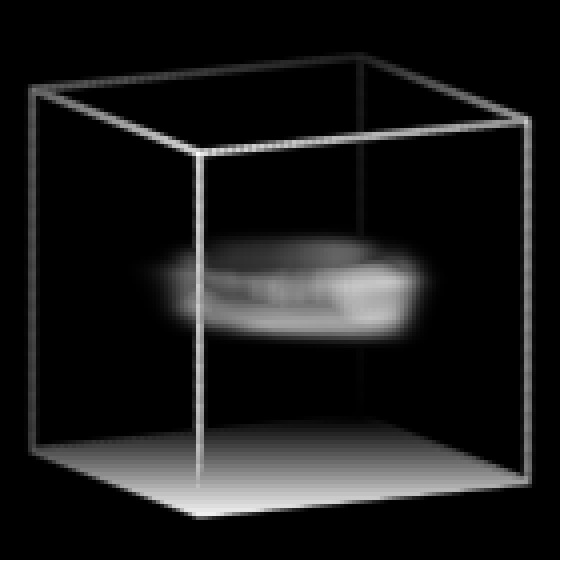} &
		\includegraphics[height=\volumeHeight]{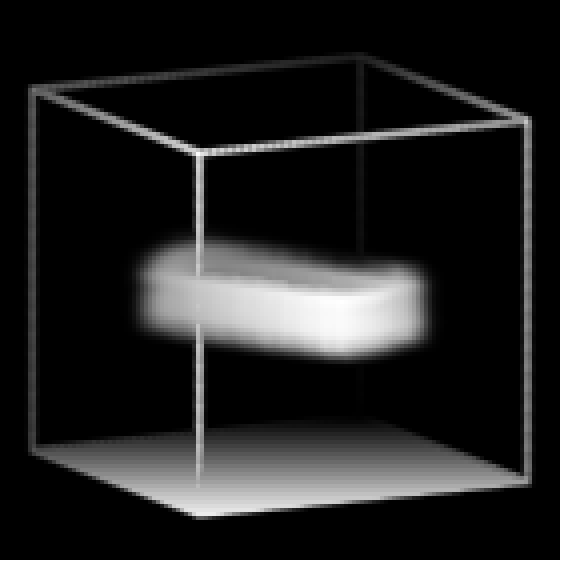} &
		\includegraphics[height=\volumeHeight]{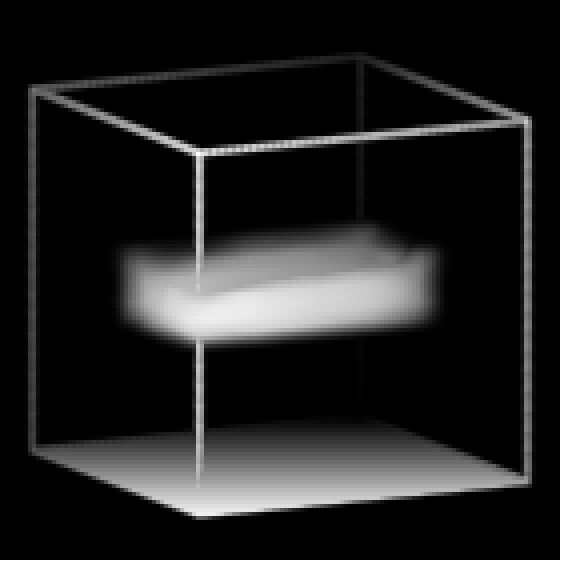} \\
		bed & \includegraphics[height=\volumeHeight]{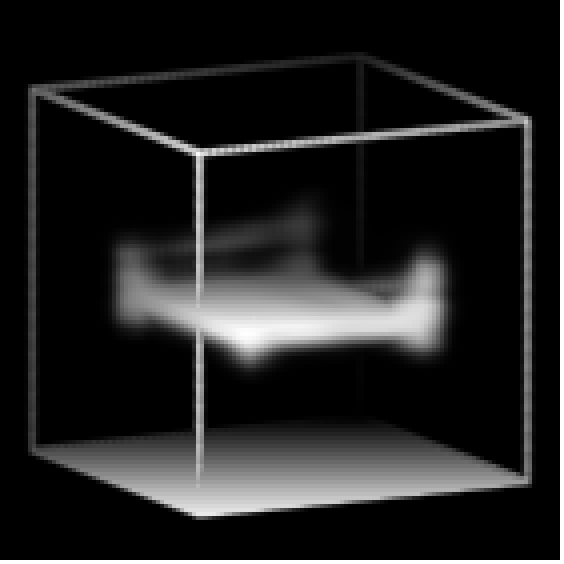} & 
		\includegraphics[height=\volumeHeight]{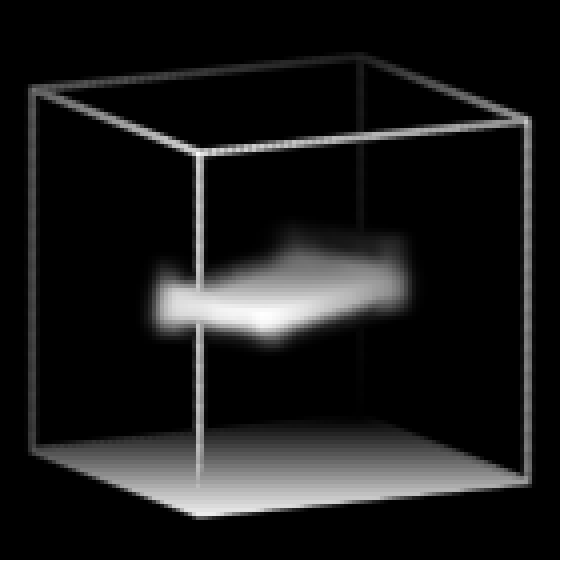} &
		\includegraphics[height=\volumeHeight]{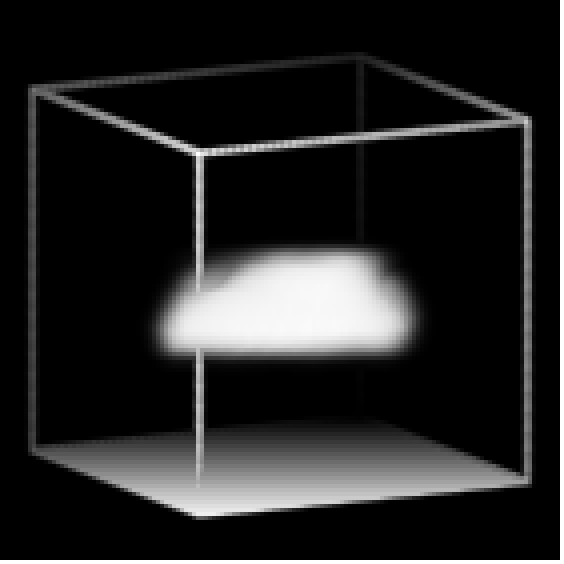} &
		\includegraphics[height=\volumeHeight]{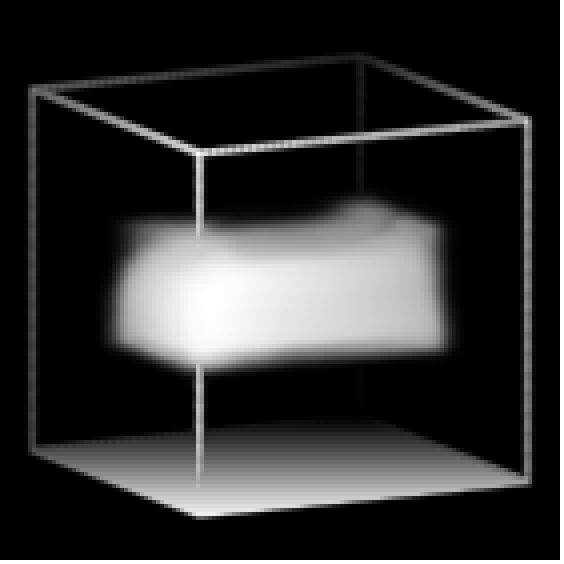} \\
		stool & \includegraphics[height=\volumeHeight]{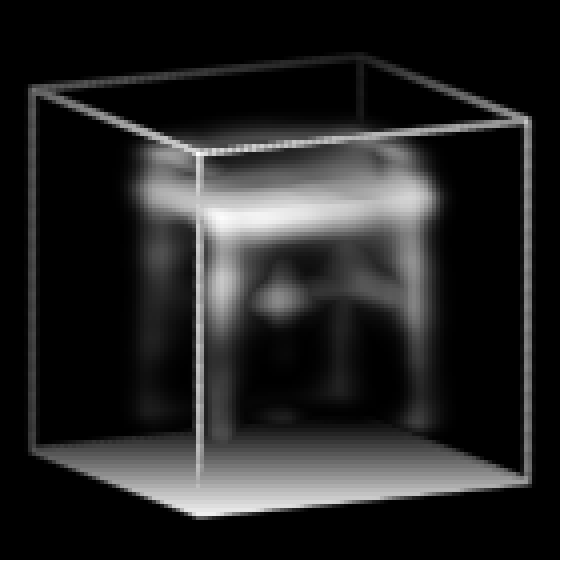} & 
		\includegraphics[height=\volumeHeight]{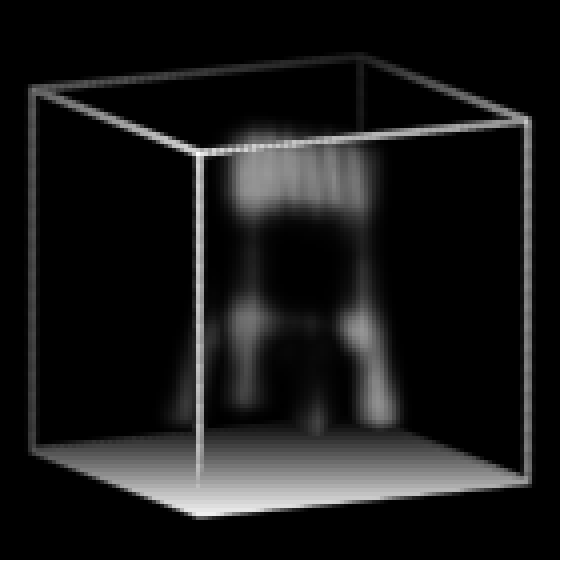} &
		\includegraphics[height=\volumeHeight]{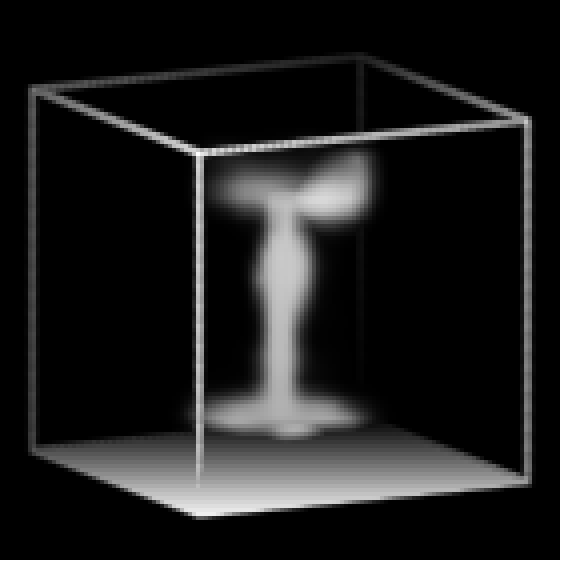} &
		\includegraphics[height=\volumeHeight]{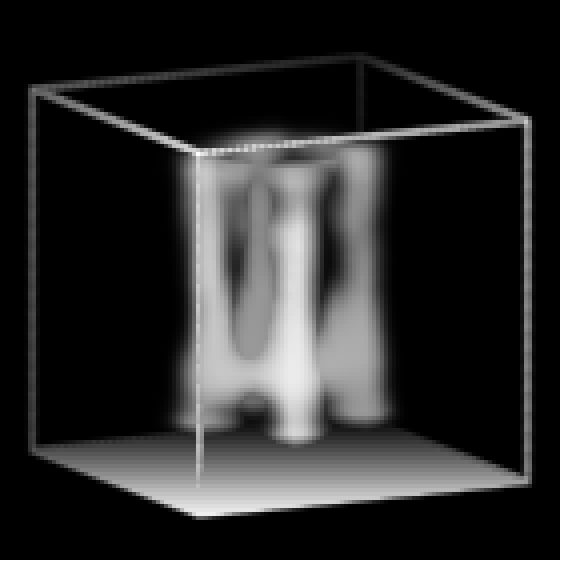} \\
		sofa & \includegraphics[height=\volumeHeight]{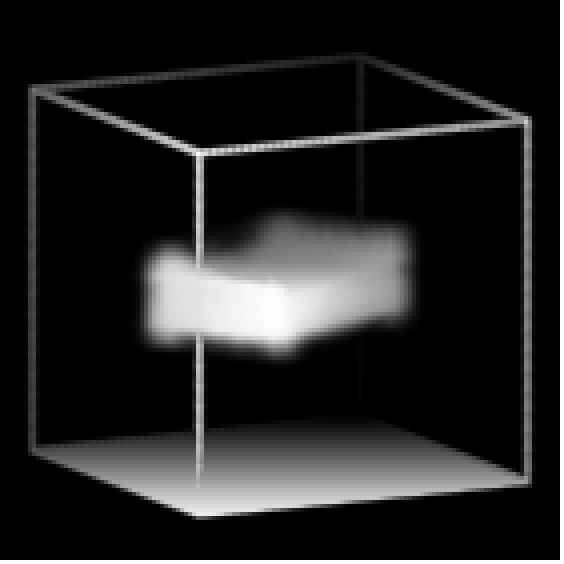} & 
		\includegraphics[height=\volumeHeight]{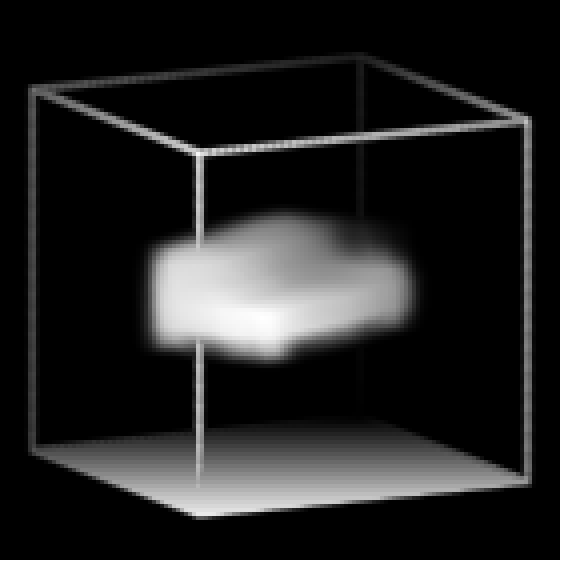} &
		\includegraphics[height=\volumeHeight]{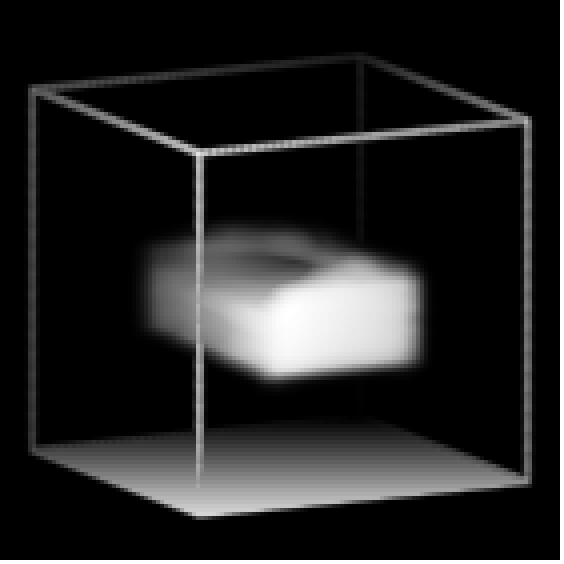} &
		\includegraphics[height=\volumeHeight]{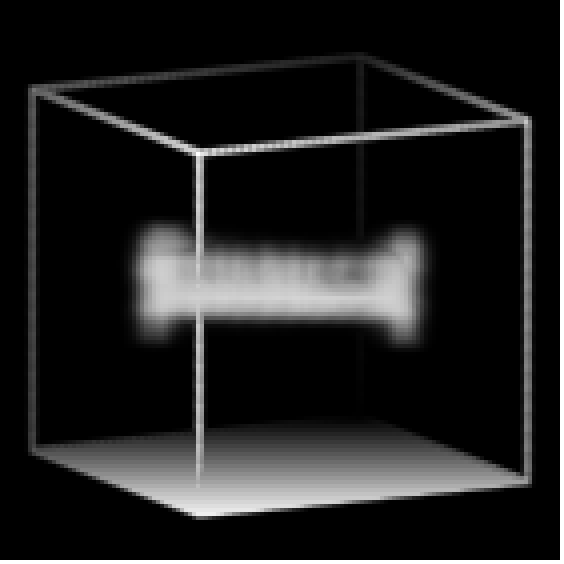} \\
		toilet & \includegraphics[height=\volumeHeight]{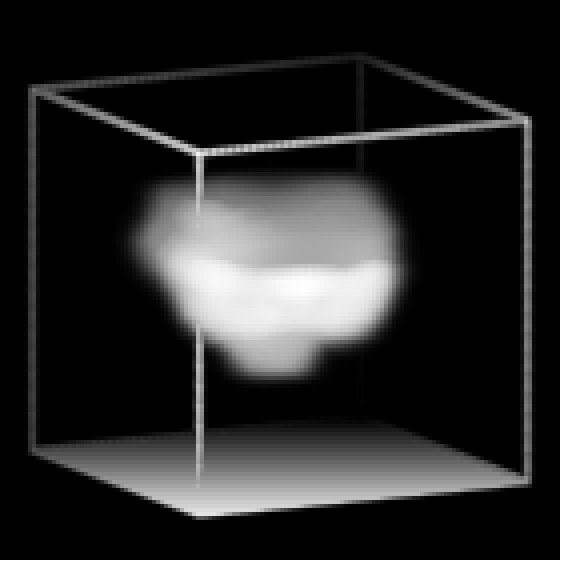} & 
		\includegraphics[height=\volumeHeight]{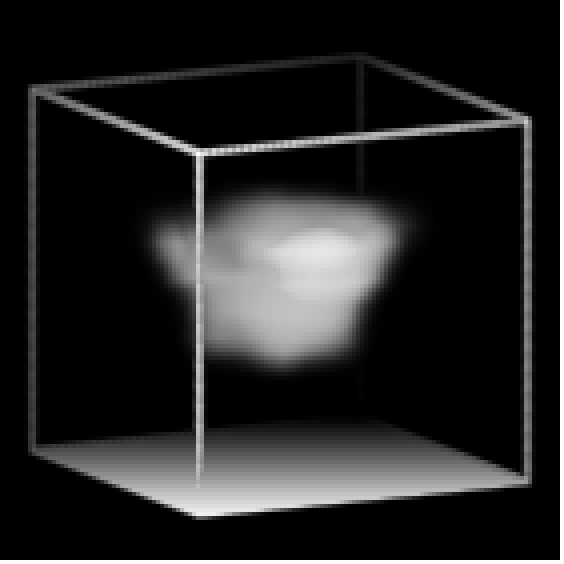} &
		\includegraphics[height=\volumeHeight]{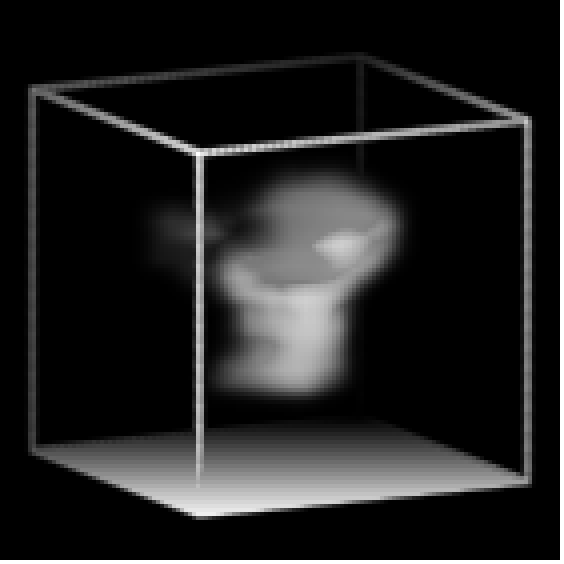} &
		\includegraphics[height=\volumeHeight]{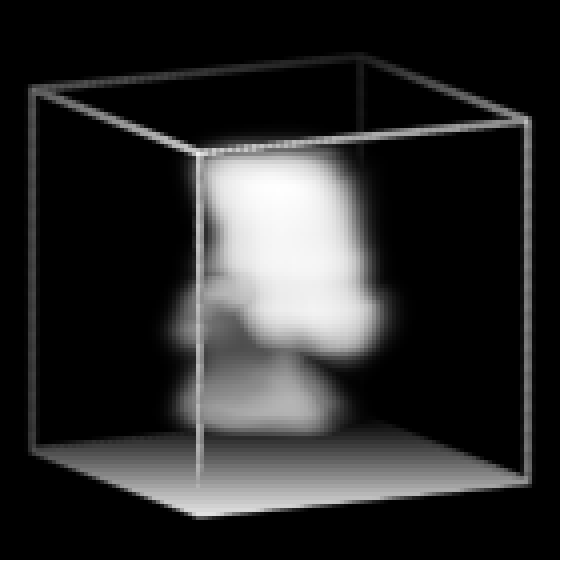} \\
		keyboard & \includegraphics[height=\volumeHeight]{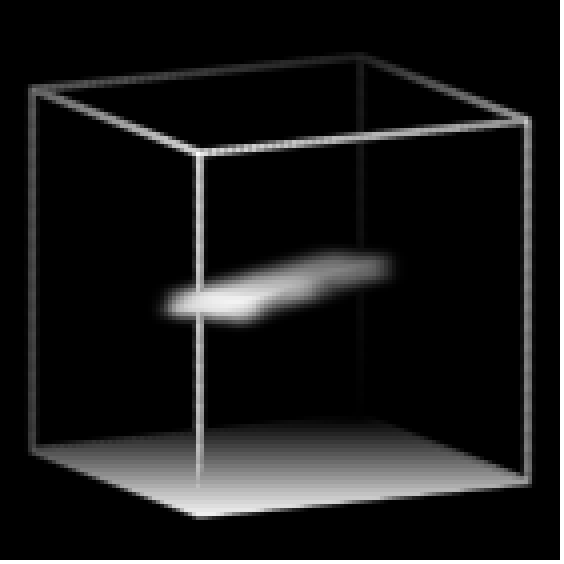} & 
		\includegraphics[height=\volumeHeight]{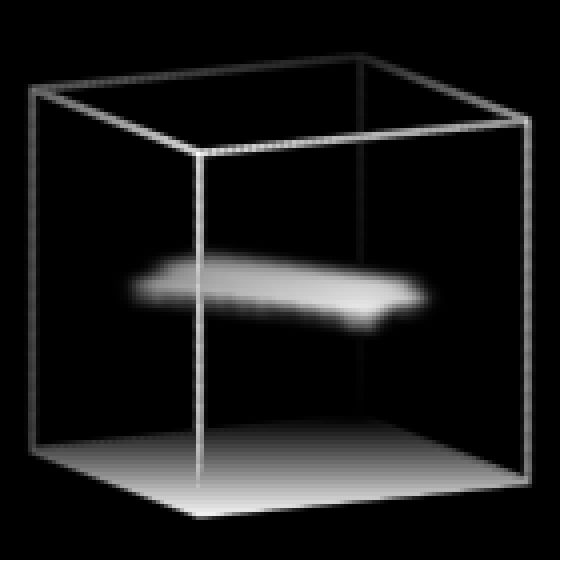} &
		\includegraphics[height=\volumeHeight]{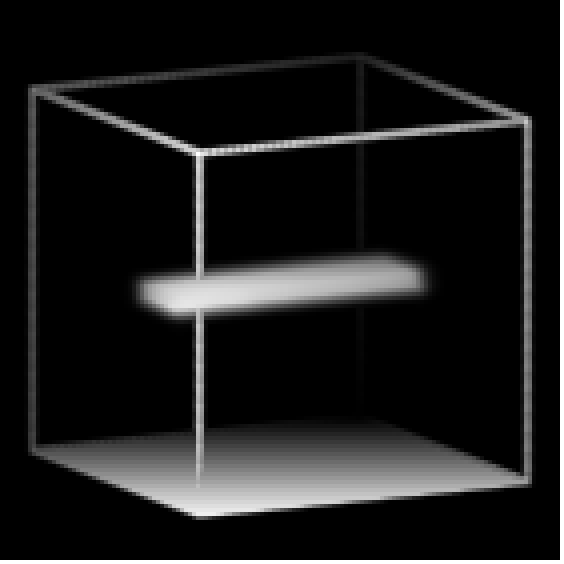} &
		\includegraphics[height=\volumeHeight]{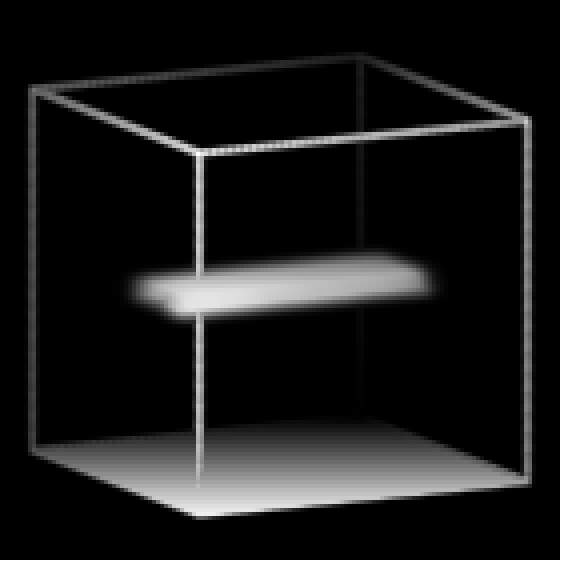} \\
		mantel & \includegraphics[height=\volumeHeight]{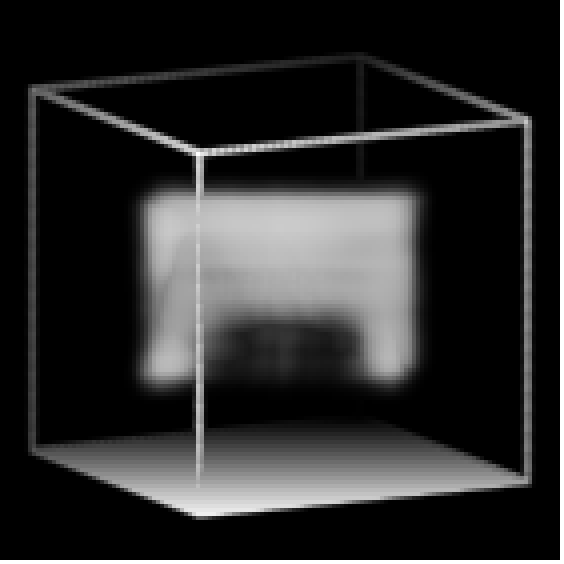} & 
		\includegraphics[height=\volumeHeight]{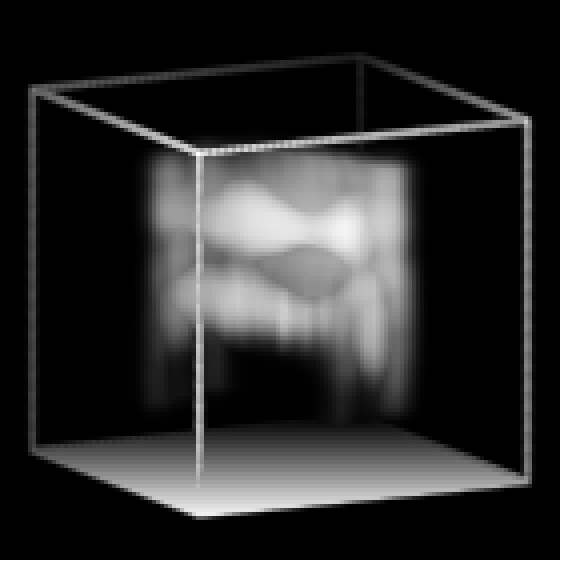} &
		\includegraphics[height=\volumeHeight]{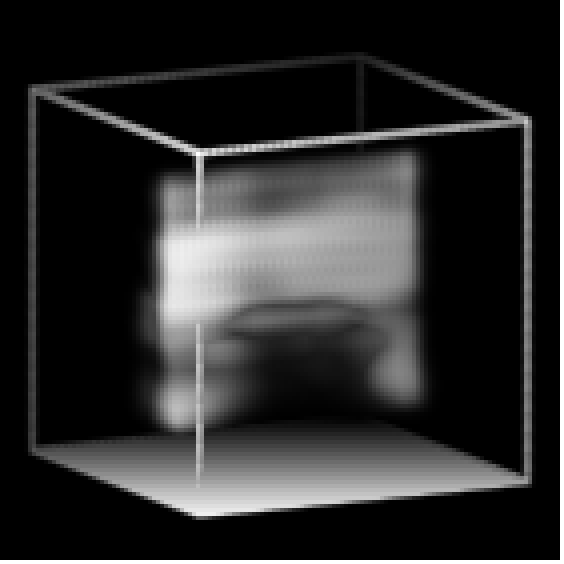} &
		\includegraphics[height=\volumeHeight]{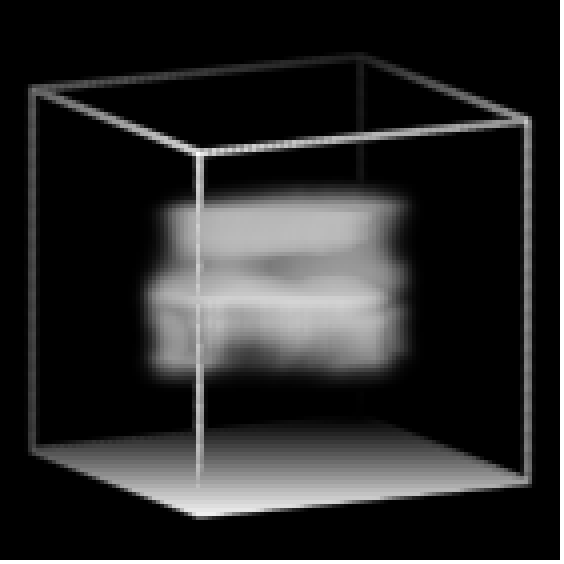} \\
		radio & \includegraphics[height=\volumeHeight]{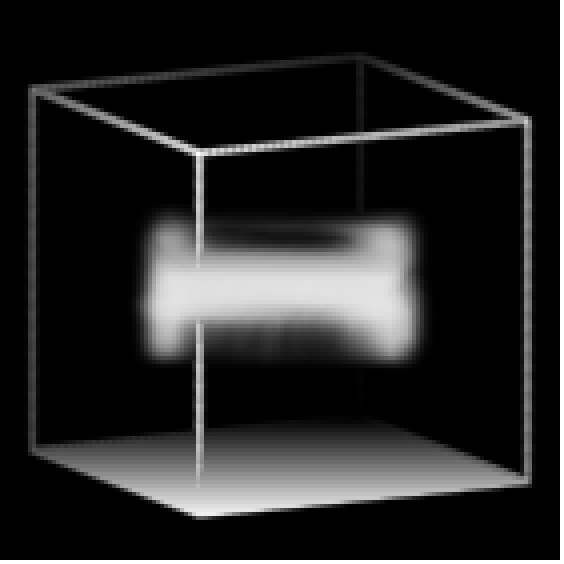} & 
		\includegraphics[height=\volumeHeight]{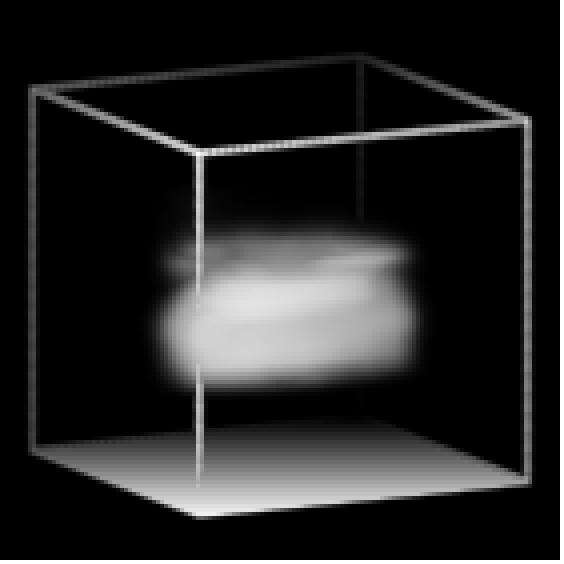} &
		\includegraphics[height=\volumeHeight]{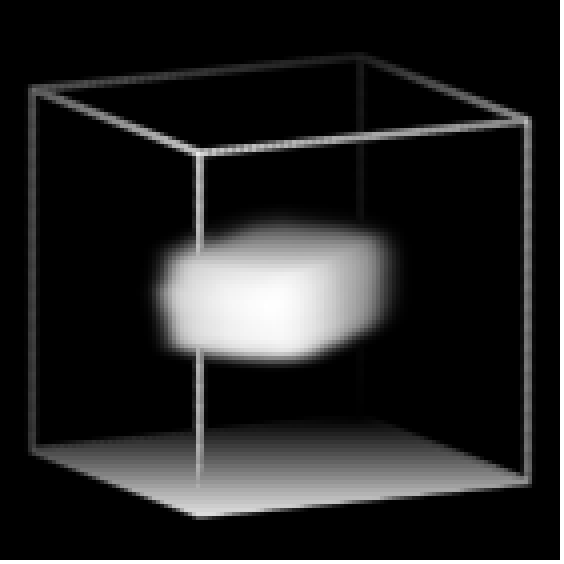} &
		\includegraphics[height=\volumeHeight]{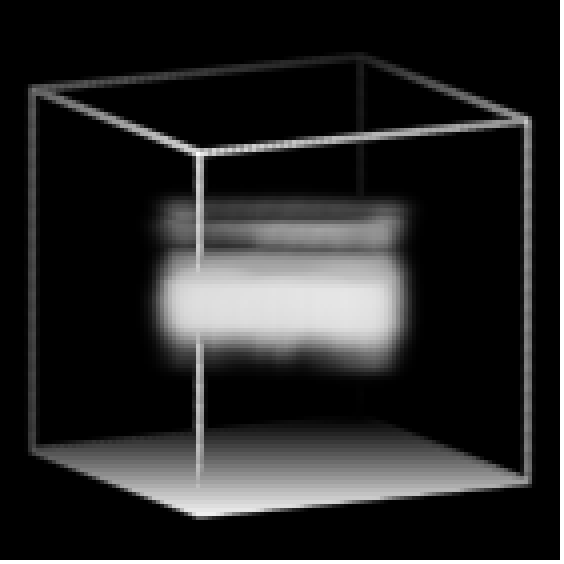} \\
		bookshelf & \includegraphics[height=\volumeHeight]{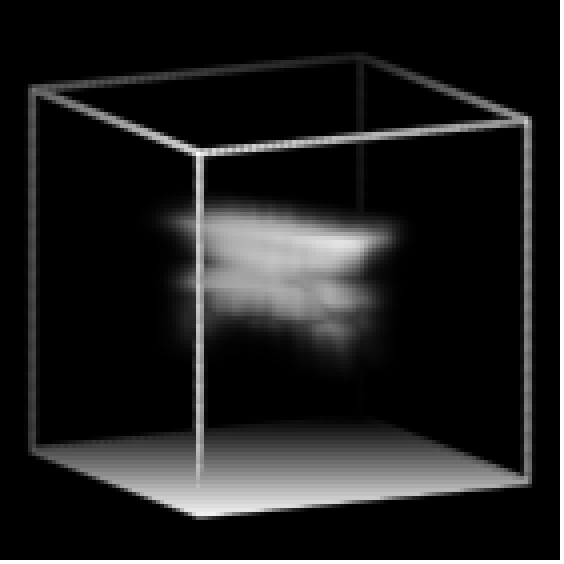} & 
		\includegraphics[height=\volumeHeight]{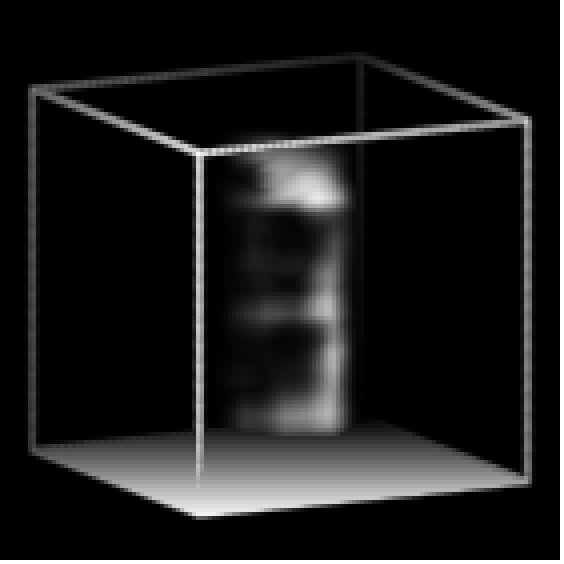} &
		\includegraphics[height=\volumeHeight]{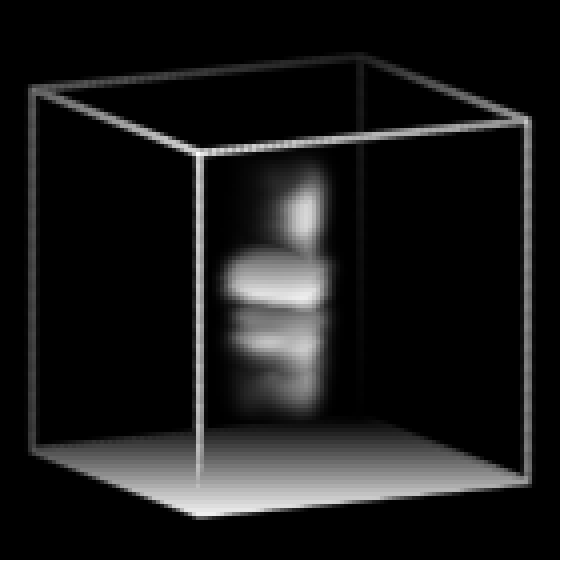} &
		\includegraphics[height=\volumeHeight]{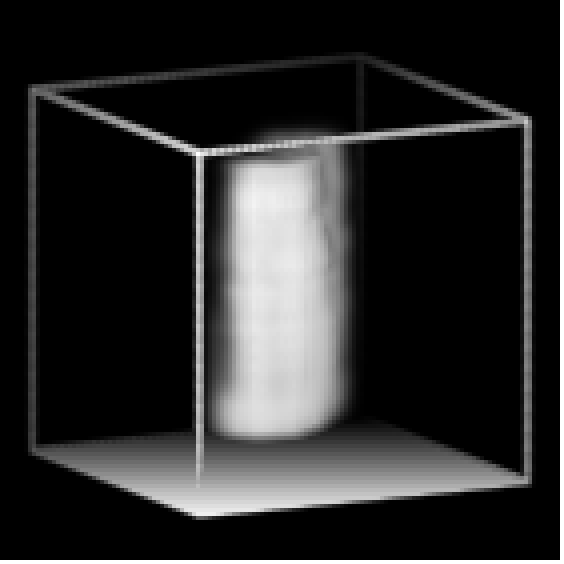} \\
		door & \includegraphics[height=\volumeHeight]{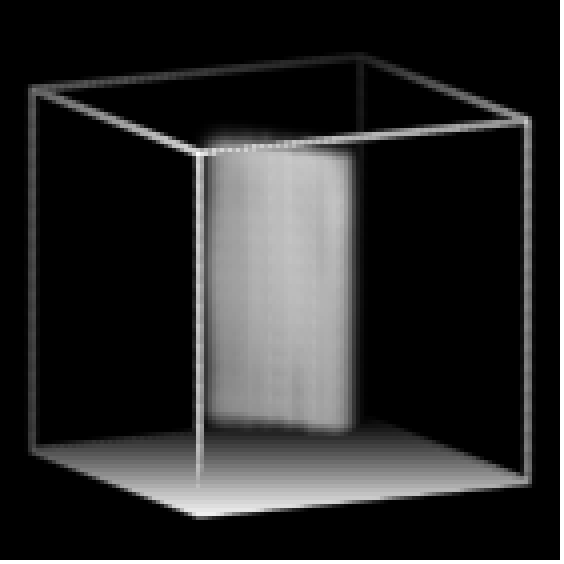} & 
		\includegraphics[height=\volumeHeight]{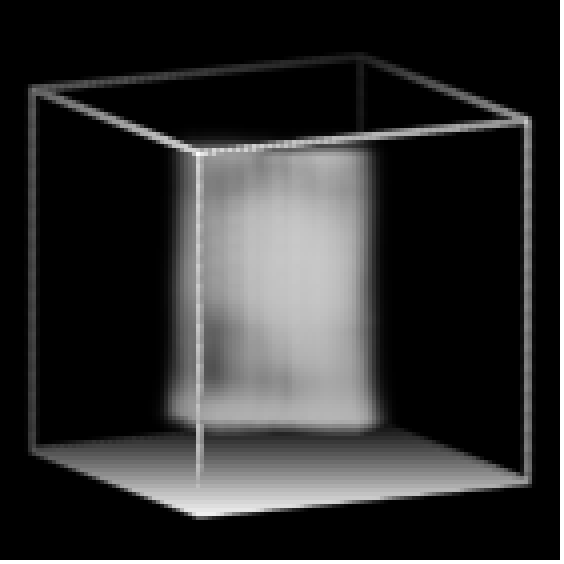} &
		\includegraphics[height=\volumeHeight]{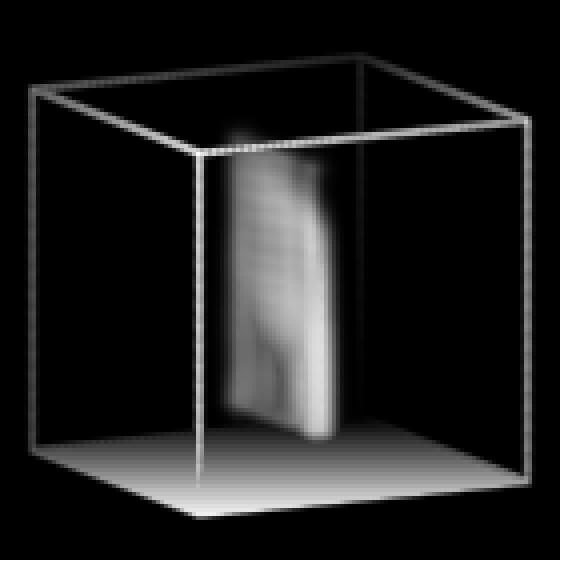} &
		\includegraphics[height=\volumeHeight]{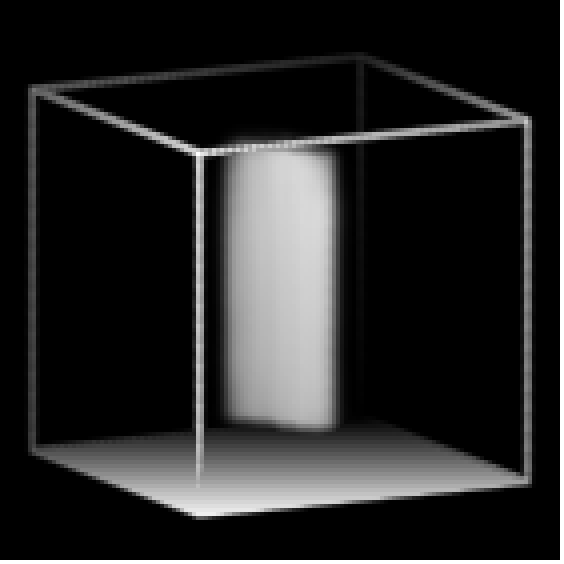} \\
		glass box & \includegraphics[height=\volumeHeight]{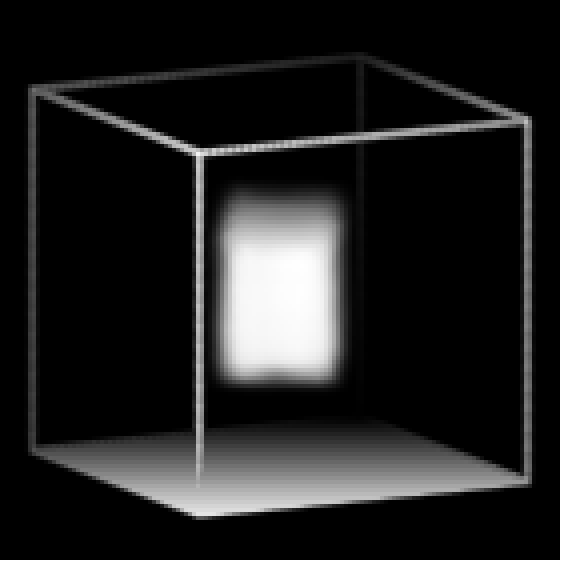} & 
		\includegraphics[height=\volumeHeight]{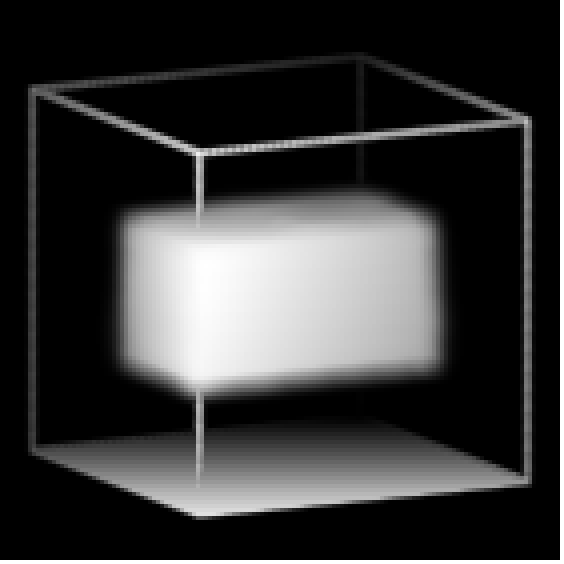} &
		\includegraphics[height=\volumeHeight]{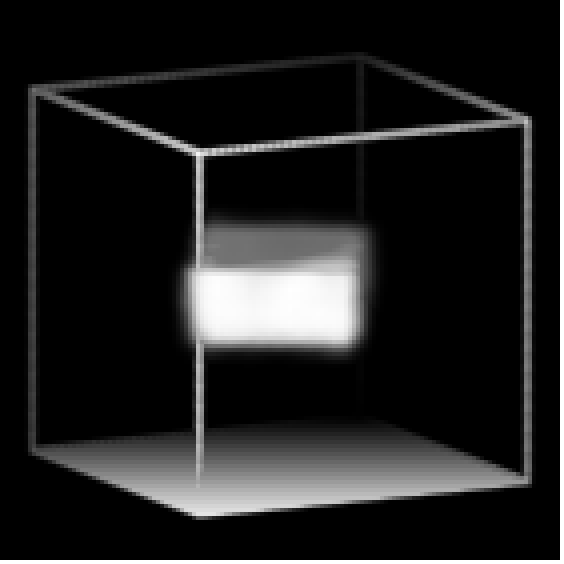} &
		\includegraphics[height=\volumeHeight]{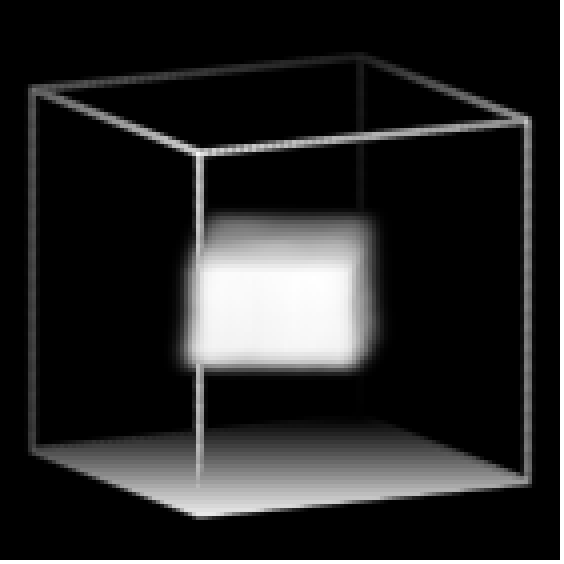} \\
		piano & \includegraphics[height=\volumeHeight]{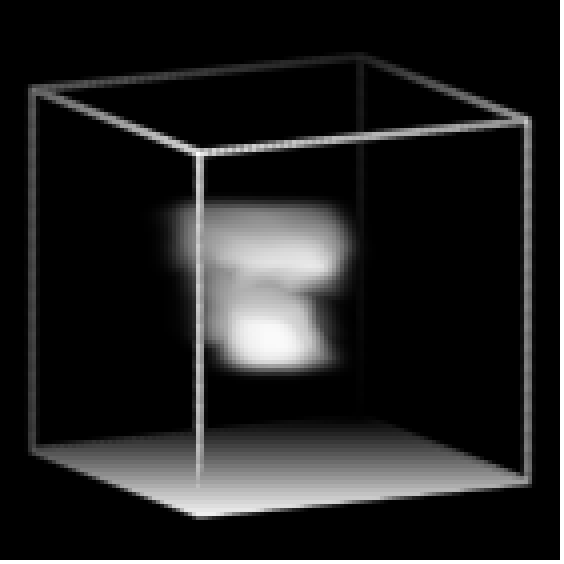} & 
		\includegraphics[height=\volumeHeight]{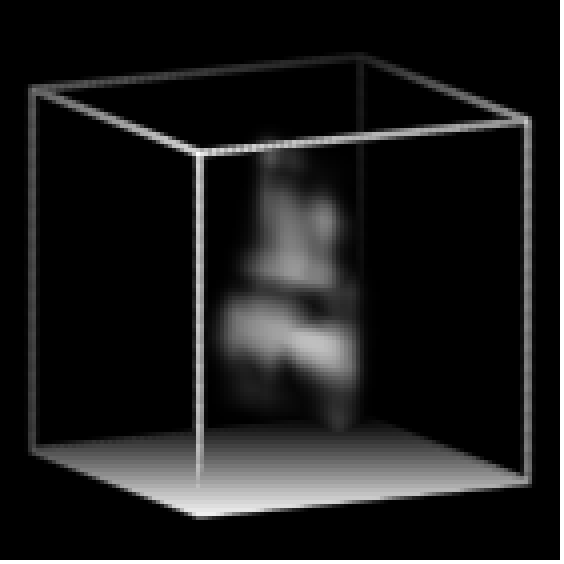} &
		\includegraphics[height=\volumeHeight]{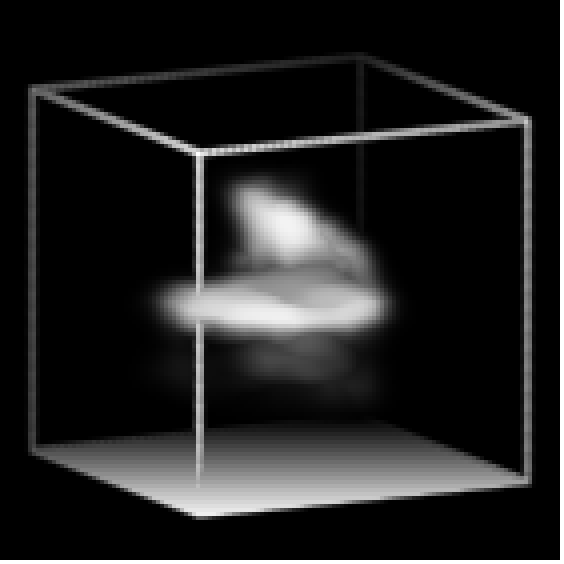} &
		\includegraphics[height=\volumeHeight]{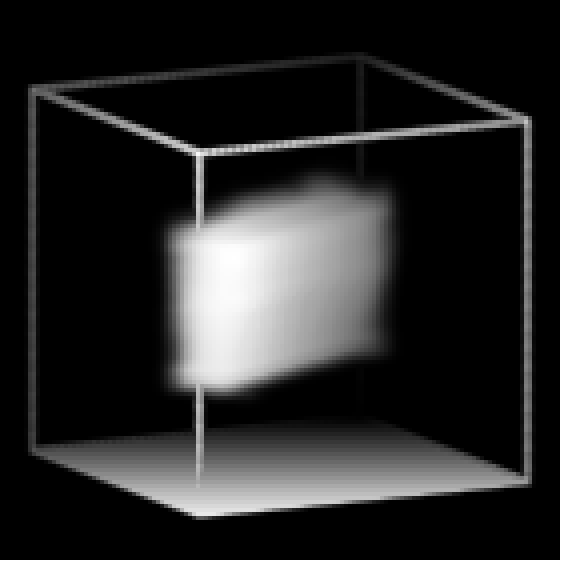} \\
		wardrobe & \includegraphics[height=\volumeHeight]{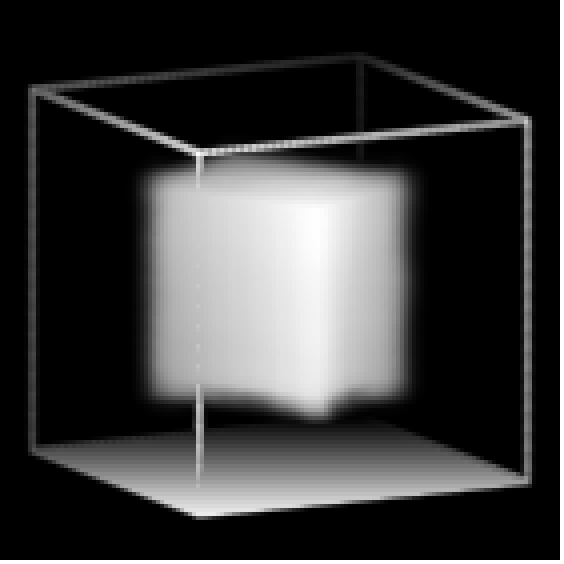} & 
		\includegraphics[height=\volumeHeight]{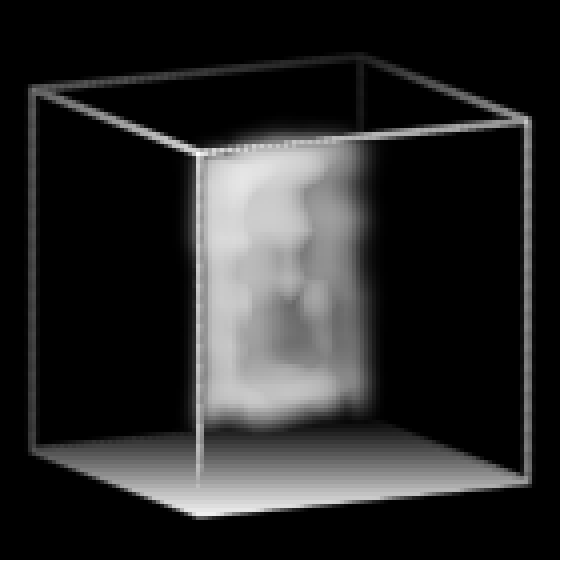} &
		\includegraphics[height=\volumeHeight]{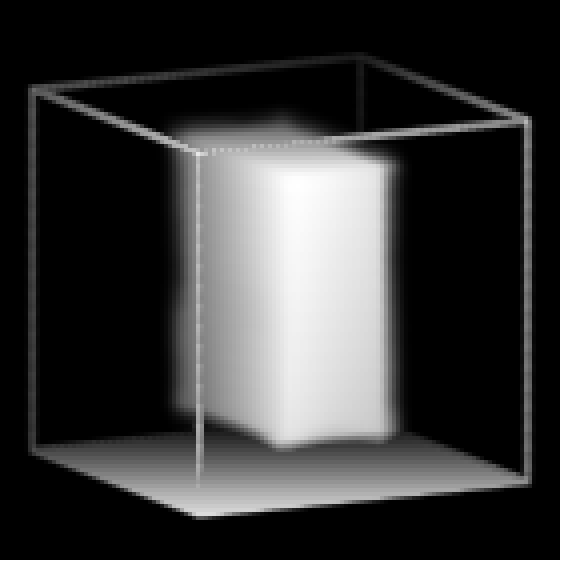} &
		\includegraphics[height=\volumeHeight]{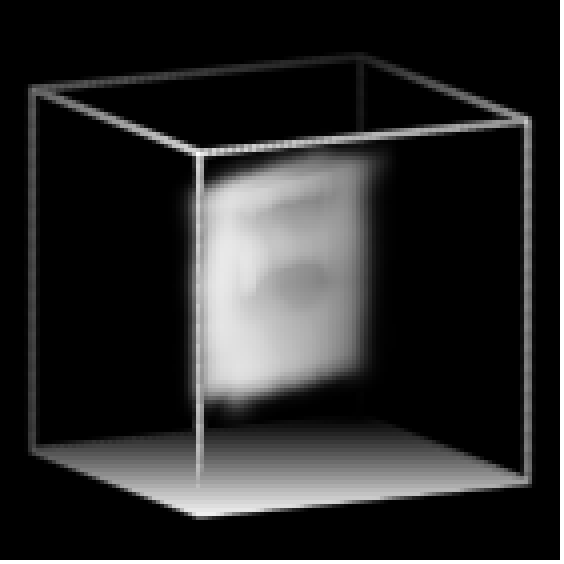} \\
		guitar & \includegraphics[height=\volumeHeight]{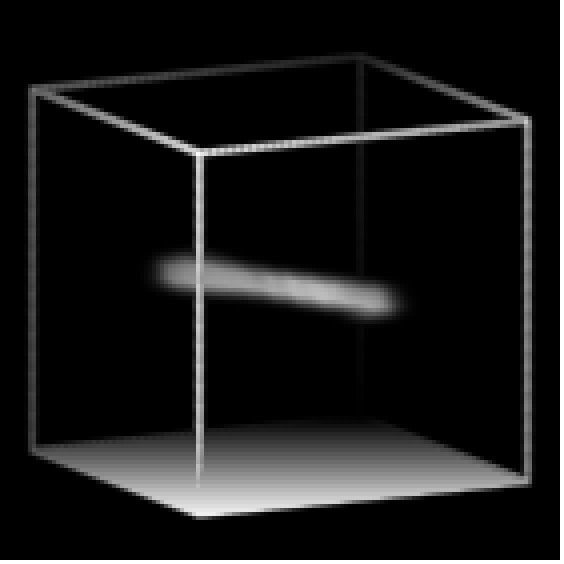} & 
		\includegraphics[height=\volumeHeight]{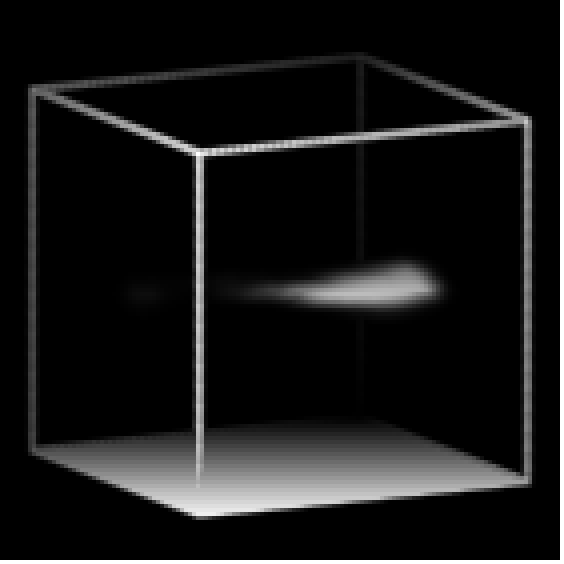} &
		\includegraphics[height=\volumeHeight]{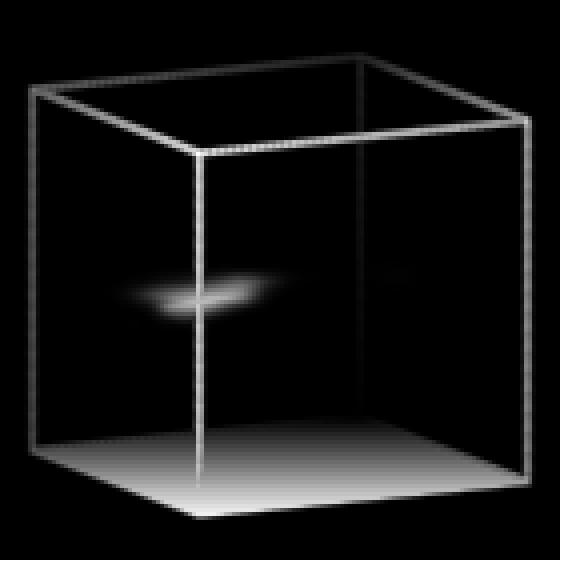} &
		\includegraphics[height=\volumeHeight]{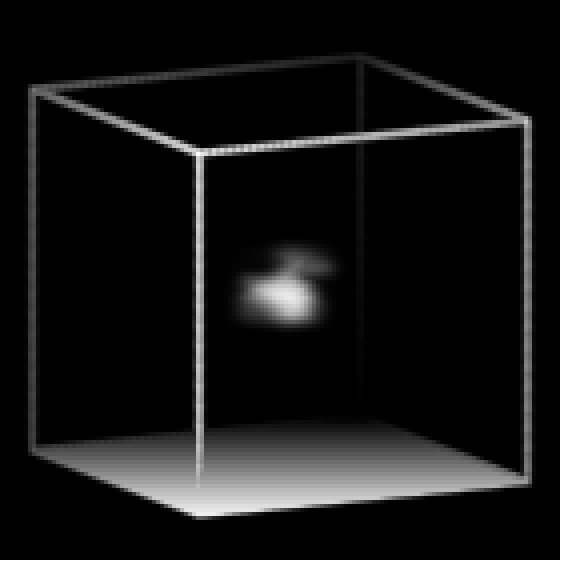} \\
		sink & \includegraphics[height=\volumeHeight]{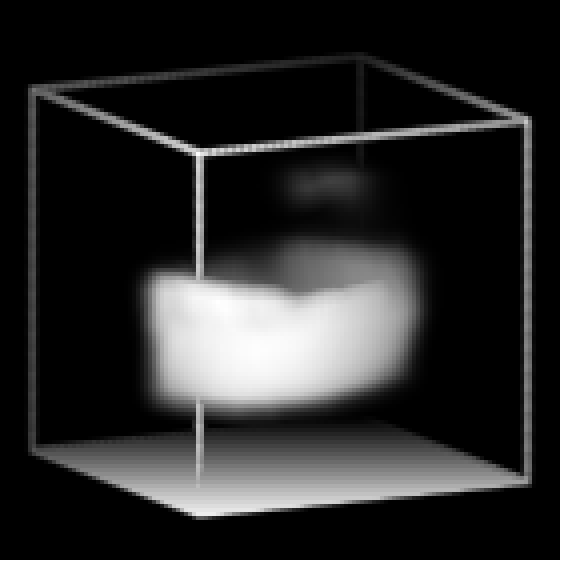} & 
		\includegraphics[height=\volumeHeight]{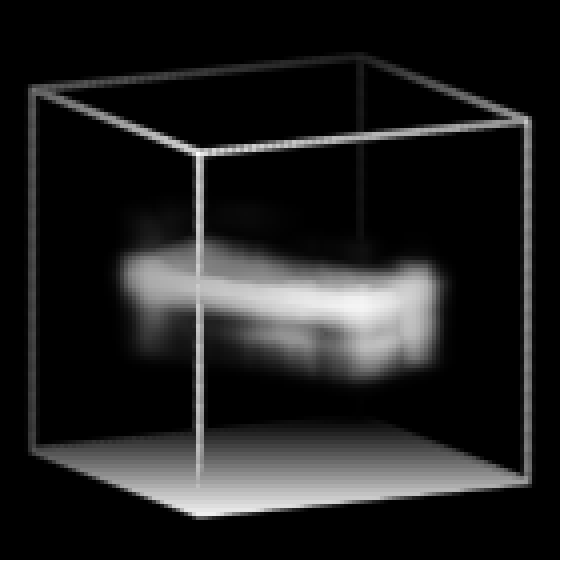} &
		\includegraphics[height=\volumeHeight]{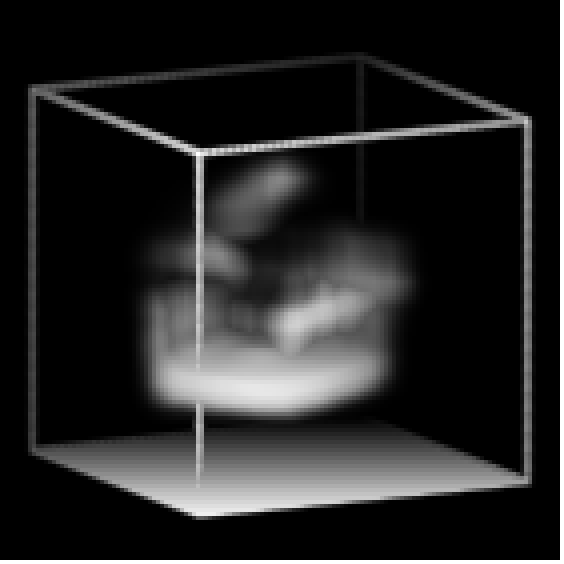} &
		\includegraphics[height=\volumeHeight]{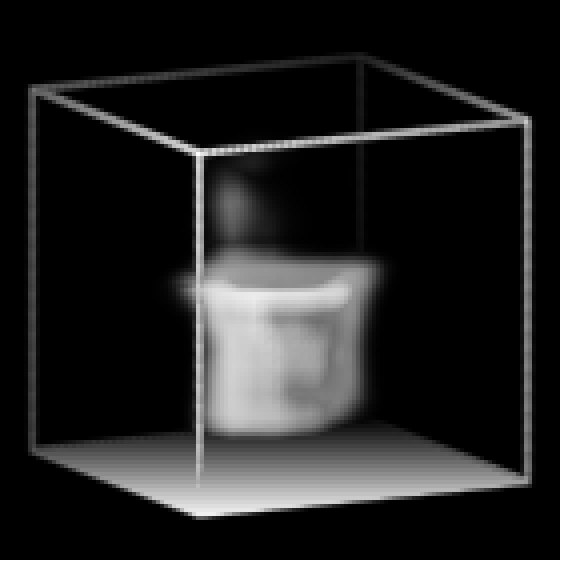} \\
	};
	\end{tikzpicture}
	\begin{tikzpicture}
	\matrix [matrix of nodes, column sep=1mm, row sep=1mm, every node/.style={inner sep=0, outer sep=0, anchor=center}]
	{
		airplane & \includegraphics[height=\volumeHeight]{volumetric/shapenetClassConditional/_sample__1__1__airplane_} & 
		\includegraphics[height=\volumeHeight]{volumetric/shapenetClassConditional/_sample__1__2__airplane_} &
		\includegraphics[height=\volumeHeight]{volumetric/shapenetClassConditional/_sample__2__2__airplane_} &
		\includegraphics[height=\volumeHeight]{volumetric/shapenetClassConditional/_sample__3__2__airplane_} \\
		plant & \includegraphics[height=\volumeHeight]{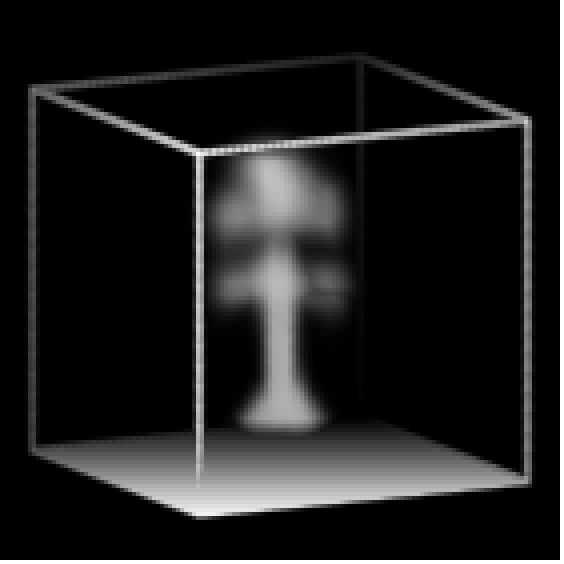} & 
		\includegraphics[height=\volumeHeight]{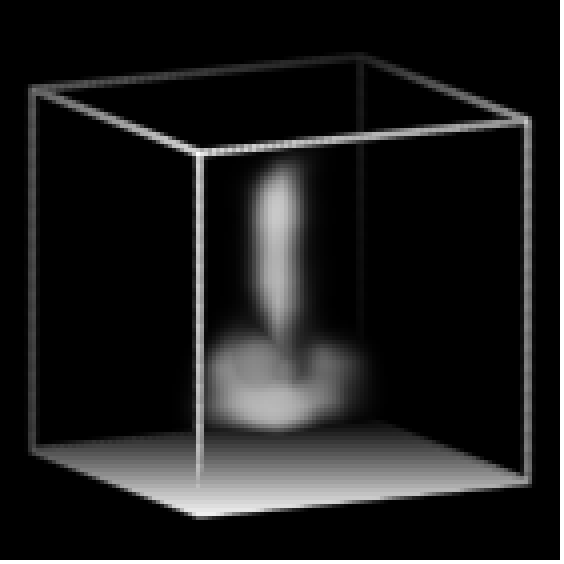} &
		\includegraphics[height=\volumeHeight]{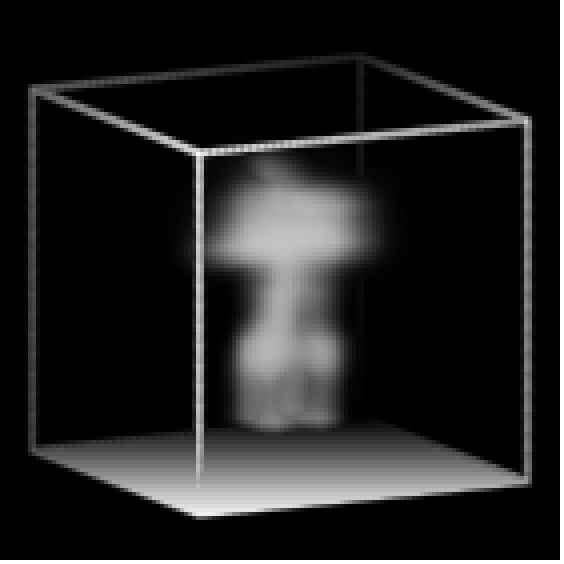} &
		\includegraphics[height=\volumeHeight]{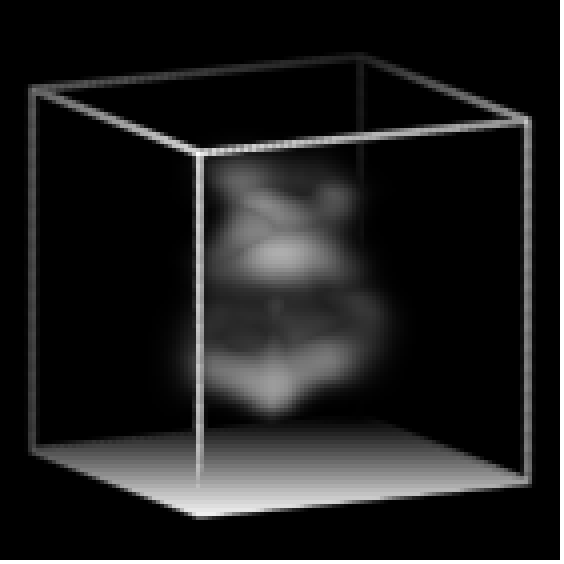} \\
		tent & \includegraphics[height=\volumeHeight]{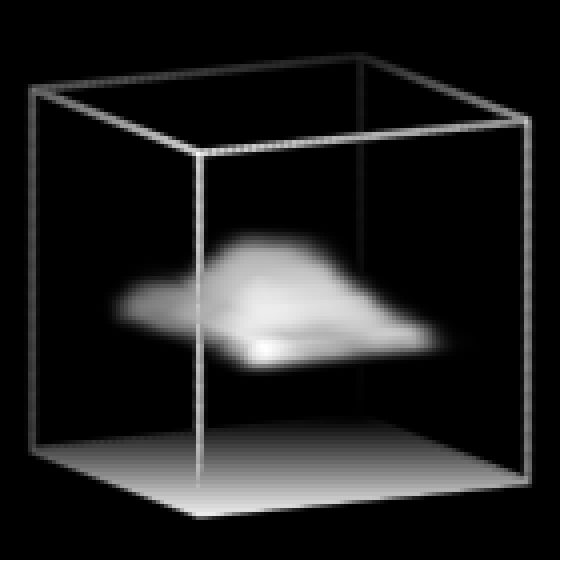} & 
		\includegraphics[height=\volumeHeight]{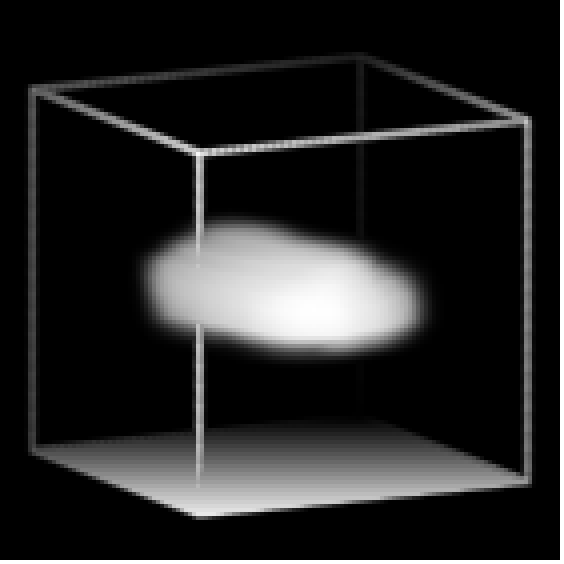} &
		\includegraphics[height=\volumeHeight]{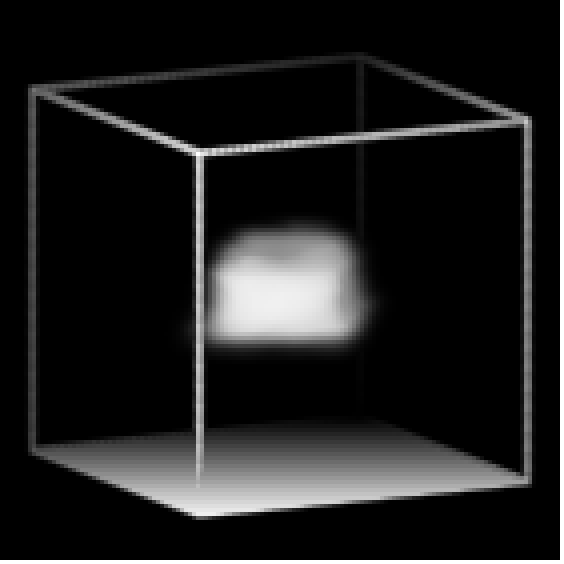} &
		\includegraphics[height=\volumeHeight]{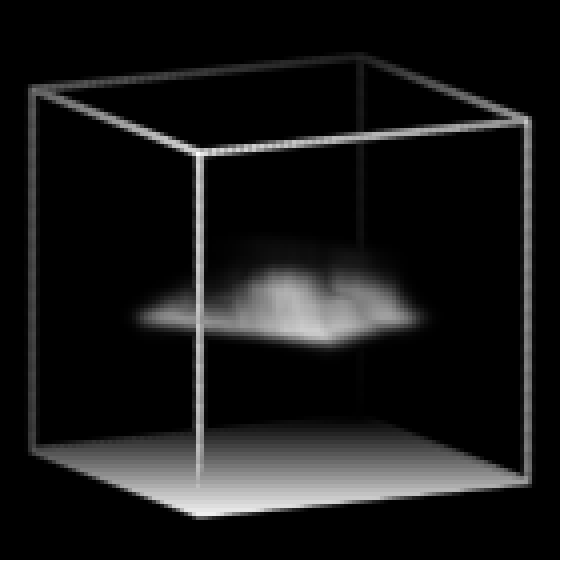} \\
		bowl & \includegraphics[height=\volumeHeight]{volumetric/shapenetClassConditional/_sample__1__1__bowl_} & 
		\includegraphics[height=\volumeHeight]{volumetric/shapenetClassConditional/_sample__1__2__bowl_} &
		\includegraphics[height=\volumeHeight]{volumetric/shapenetClassConditional/_sample__2__2__bowl_} &
		\includegraphics[height=\volumeHeight]{volumetric/shapenetClassConditional/_sample__3__2__bowl_} \\
		person & \includegraphics[height=\volumeHeight]{volumetric/shapenetClassConditional/_sample__1__1__person_} & 
		\includegraphics[height=\volumeHeight]{volumetric/shapenetClassConditional/_sample__1__2__person_} &
		\includegraphics[height=\volumeHeight]{volumetric/shapenetClassConditional/_sample__4__2__person_} &
		\includegraphics[height=\volumeHeight]{volumetric/shapenetClassConditional/_sample__3__2__person_} \\
		cone & \includegraphics[height=\volumeHeight]{volumetric/shapenetClassConditional/_sample__1__1__cone_} & 
		\includegraphics[height=\volumeHeight]{volumetric/shapenetClassConditional/_sample__1__2__cone_} &
		\includegraphics[height=\volumeHeight]{volumetric/shapenetClassConditional/_sample__2__2__cone_} &
		\includegraphics[height=\volumeHeight]{volumetric/shapenetClassConditional/_sample__3__2__cone_} \\
		desk & \includegraphics[height=\volumeHeight]{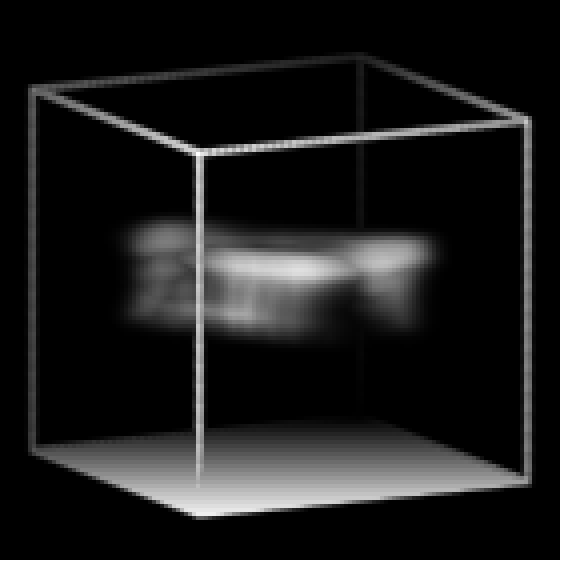} & 
		\includegraphics[height=\volumeHeight]{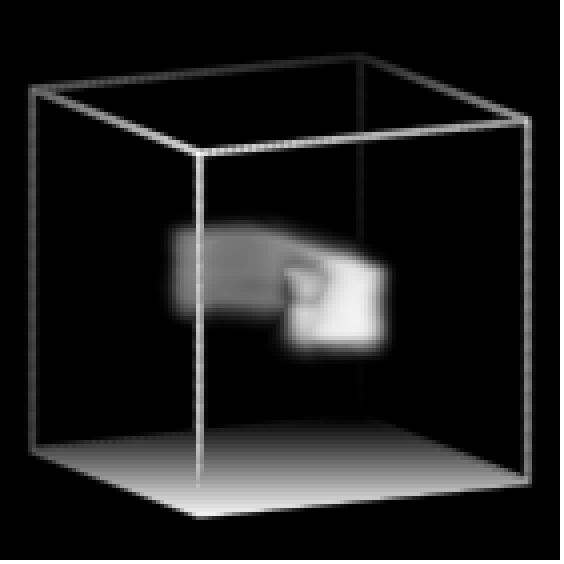} &
		\includegraphics[height=\volumeHeight]{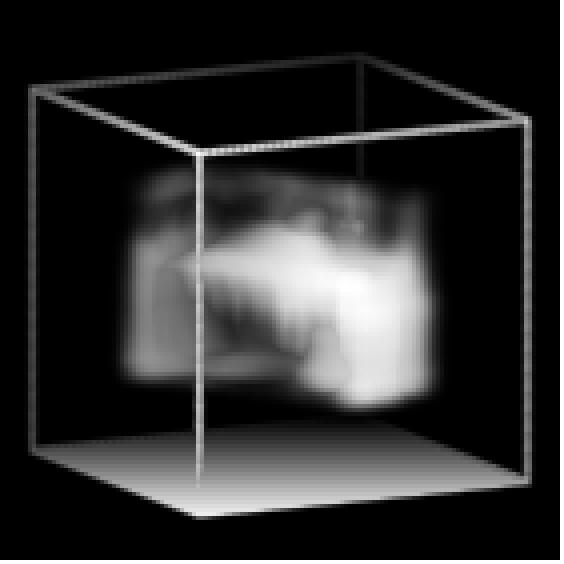} &
		\includegraphics[height=\volumeHeight]{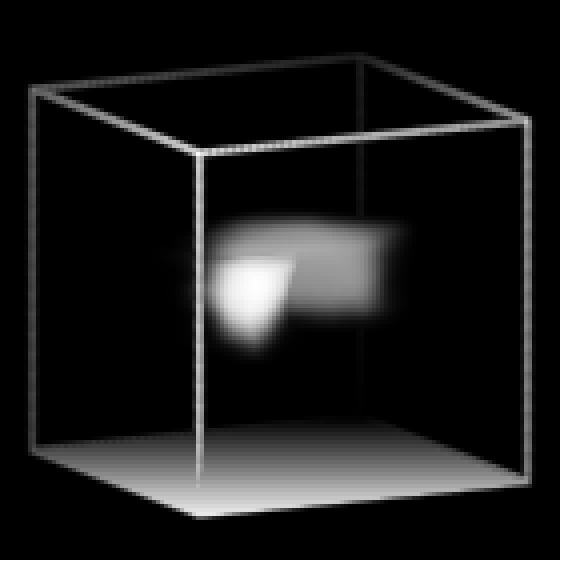} \\
		dresser & \includegraphics[height=\volumeHeight]{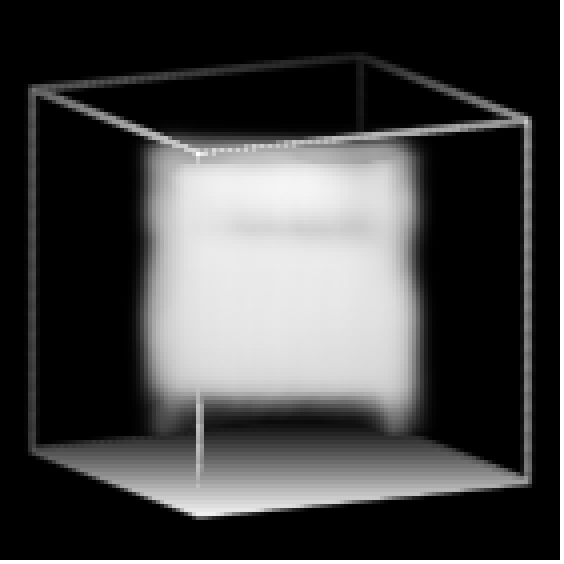} & 
		\includegraphics[height=\volumeHeight]{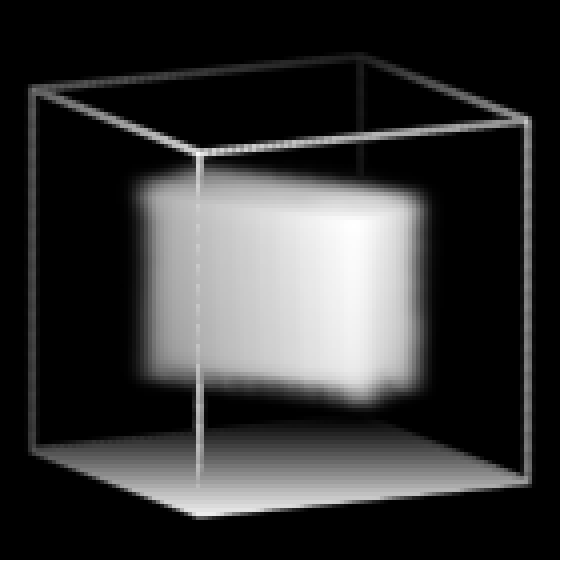} &
		\includegraphics[height=\volumeHeight]{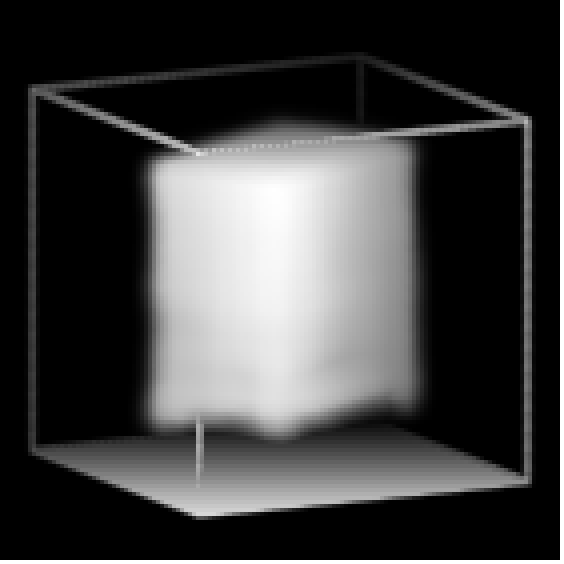} &
		\includegraphics[height=\volumeHeight]{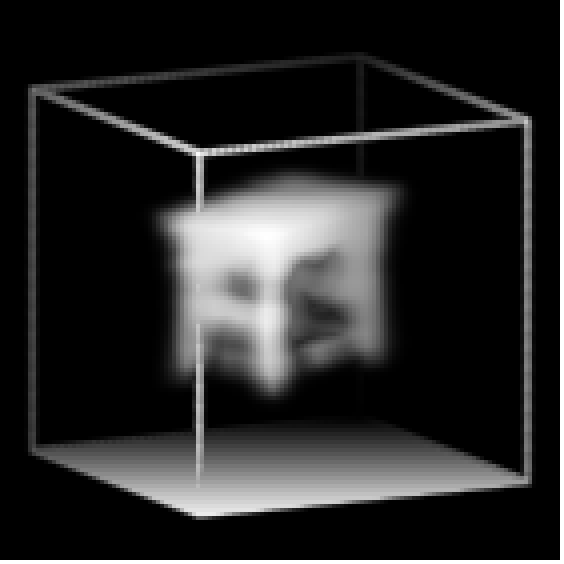} \\
		monitor & \includegraphics[height=\volumeHeight]{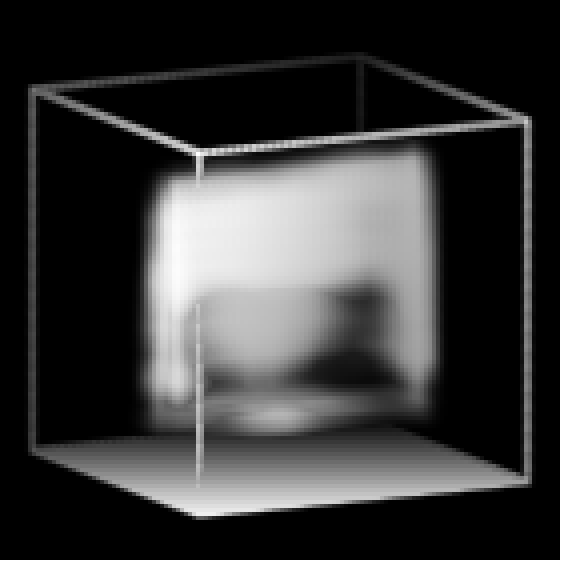} & 
		\includegraphics[height=\volumeHeight]{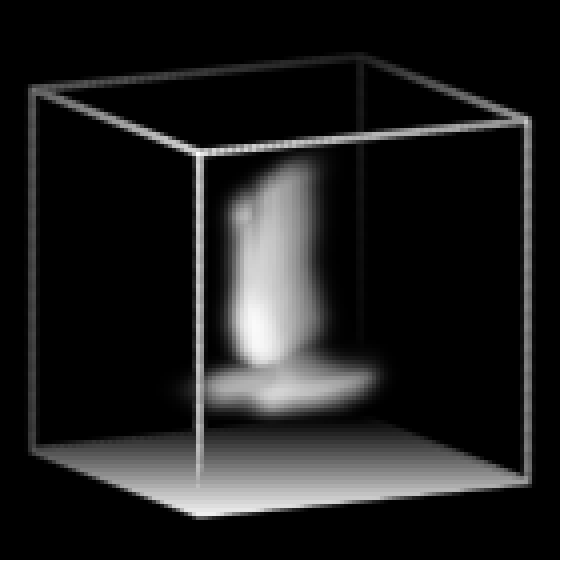} &
		\includegraphics[height=\volumeHeight]{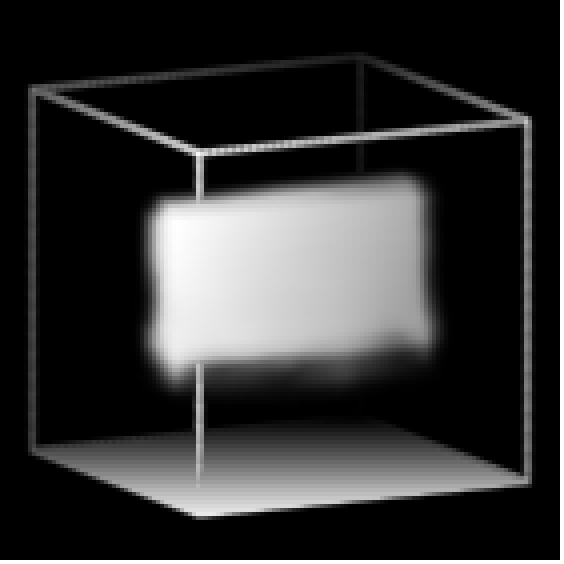} &
		\includegraphics[height=\volumeHeight]{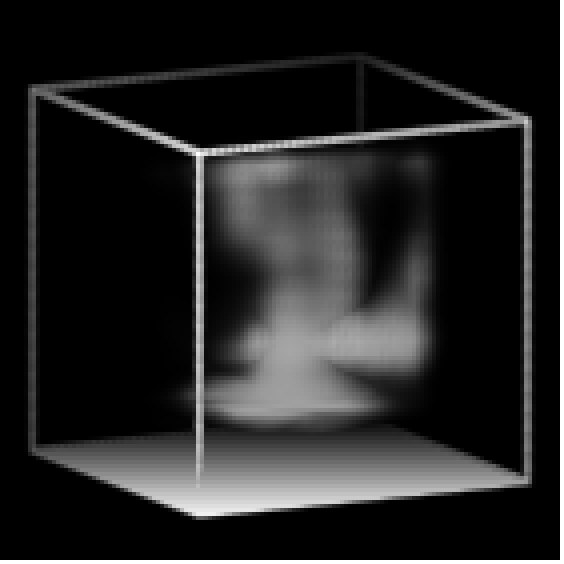} \\
		night stand & \includegraphics[height=\volumeHeight]{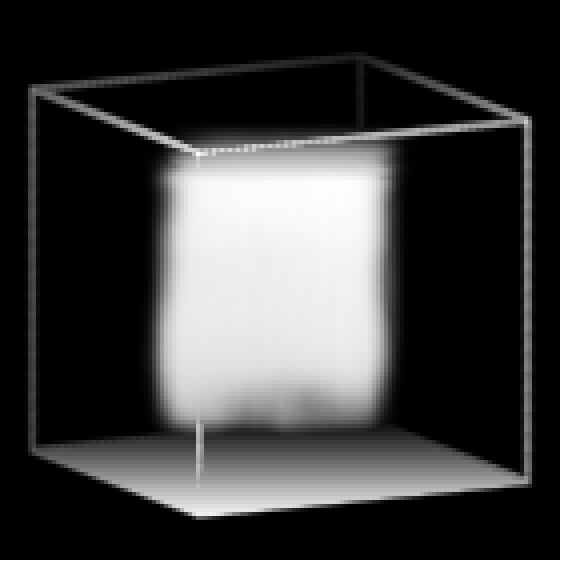} & 
		\includegraphics[height=\volumeHeight]{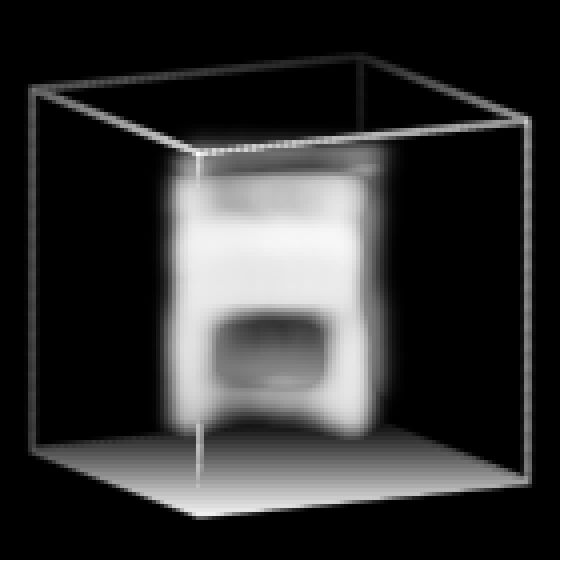} &
		\includegraphics[height=\volumeHeight]{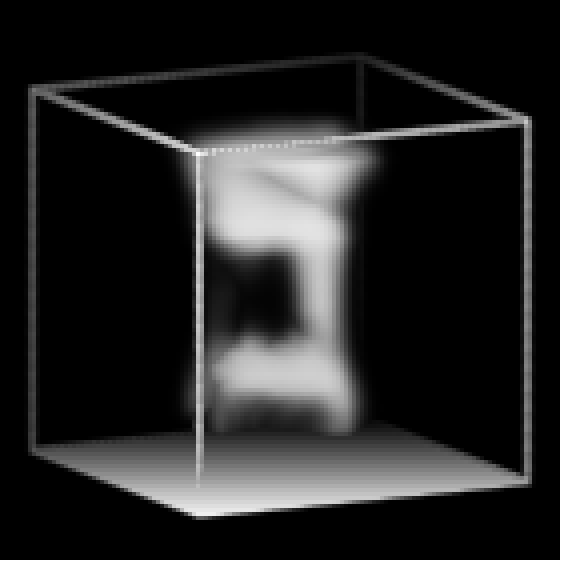} &
		\includegraphics[height=\volumeHeight]{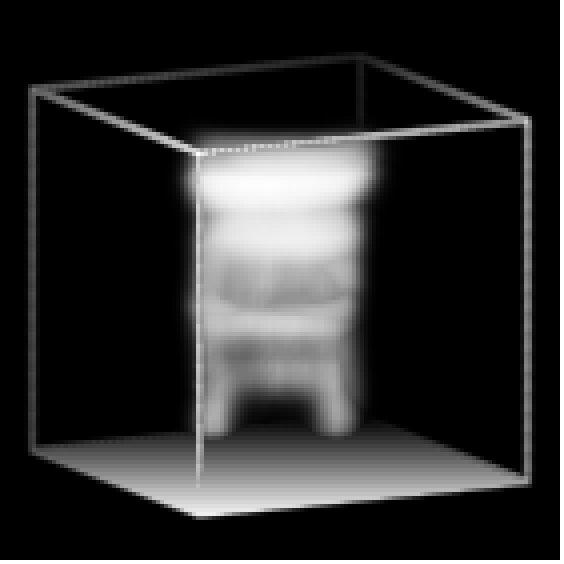} \\
		bench & \includegraphics[height=\volumeHeight]{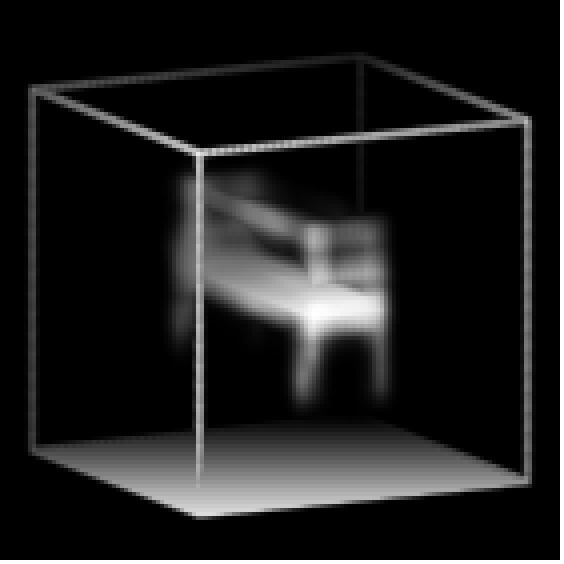} & 
		\includegraphics[height=\volumeHeight]{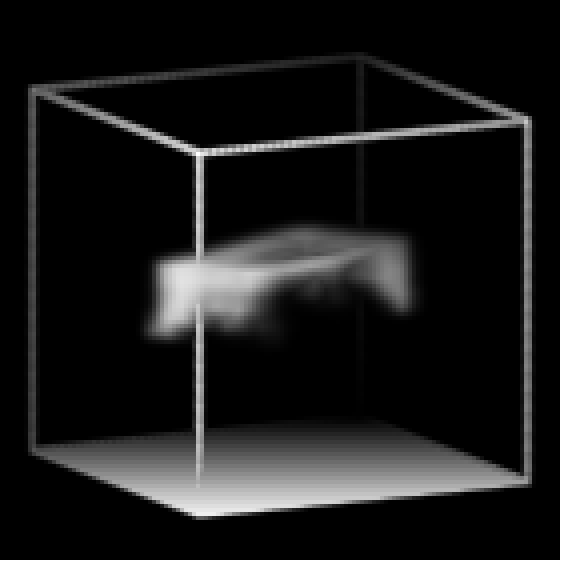} &
		\includegraphics[height=\volumeHeight]{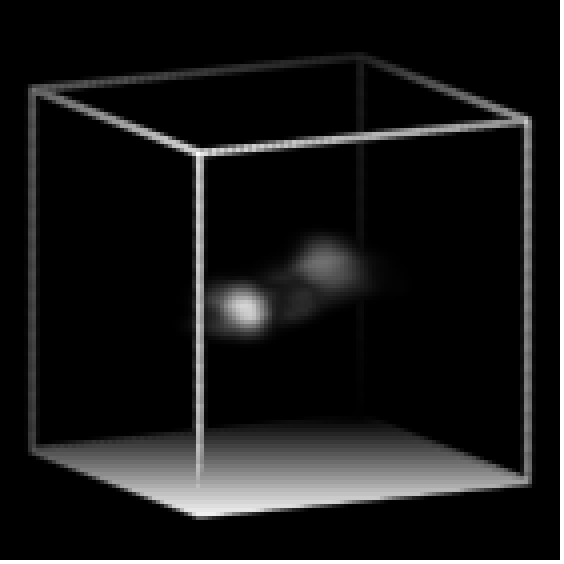} &
		\includegraphics[height=\volumeHeight]{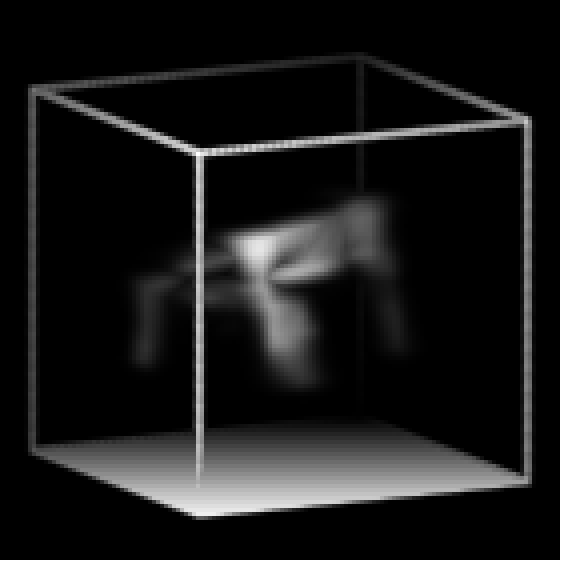} \\
		flower pot & \includegraphics[height=\volumeHeight]{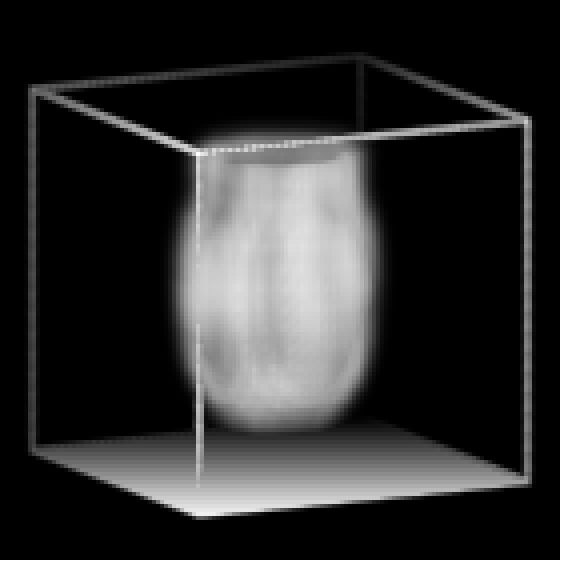} & 
		\includegraphics[height=\volumeHeight]{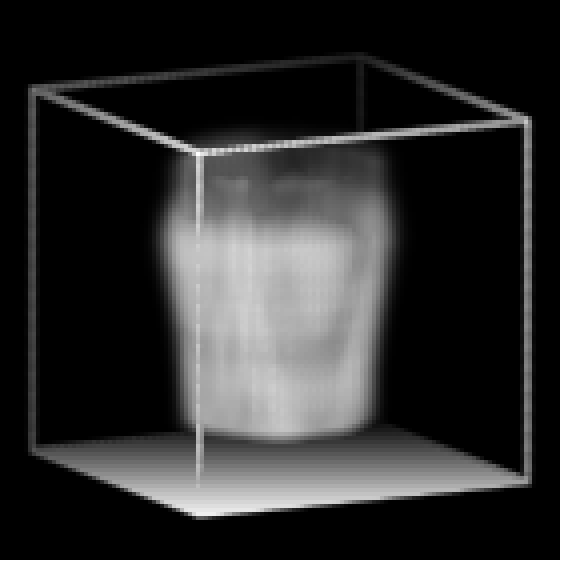} &
		\includegraphics[height=\volumeHeight]{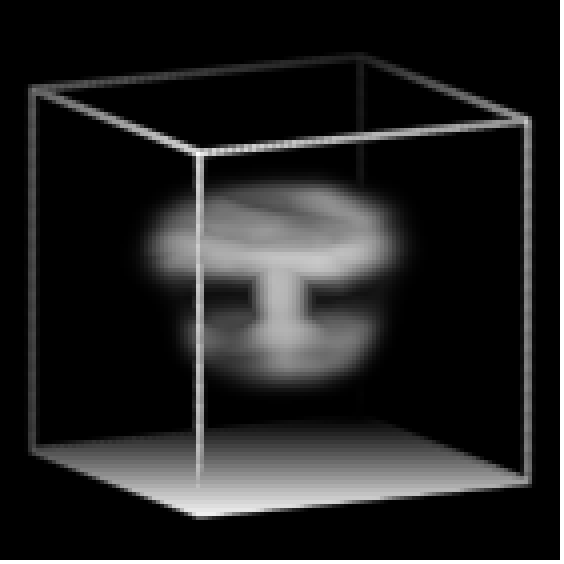} &
		\includegraphics[height=\volumeHeight]{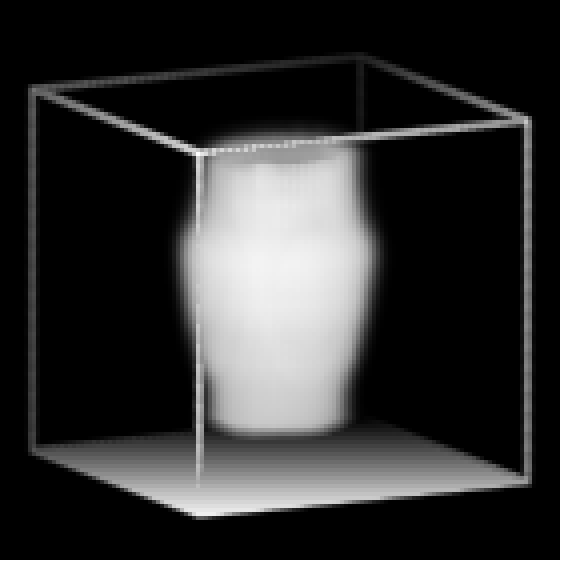} \\
	    tv stand & \includegraphics[height=\volumeHeight]{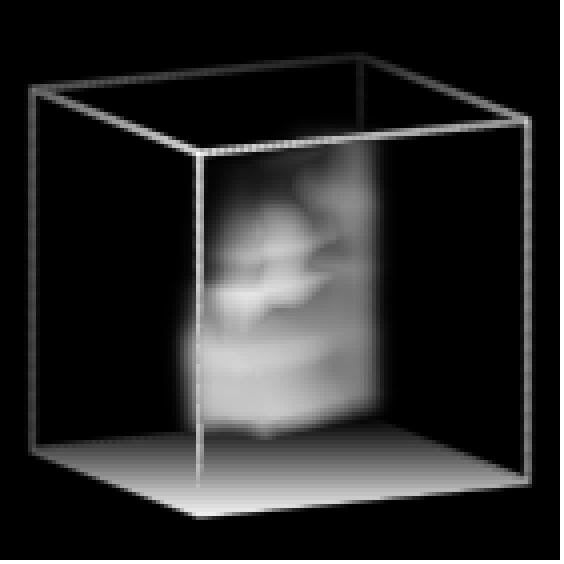} & 
		\includegraphics[height=\volumeHeight]{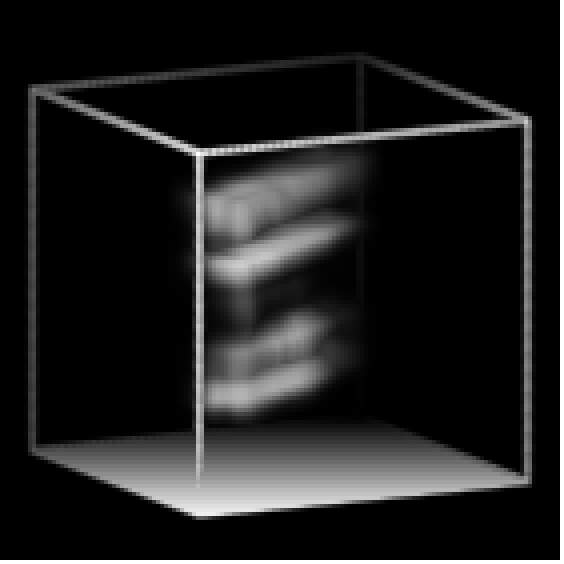} &
		\includegraphics[height=\volumeHeight]{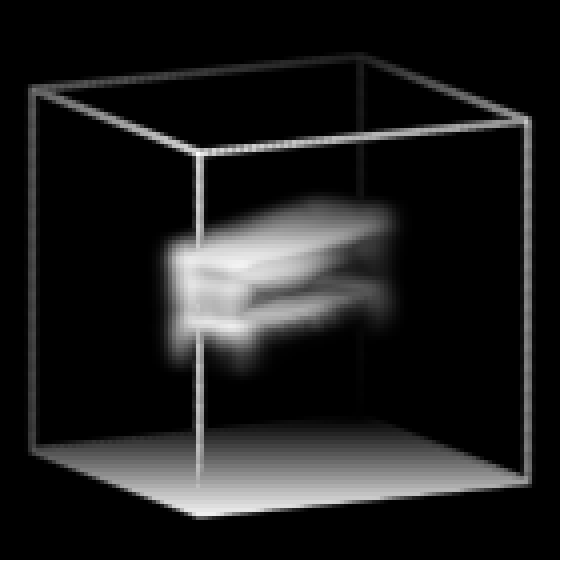} &
		\includegraphics[height=\volumeHeight]{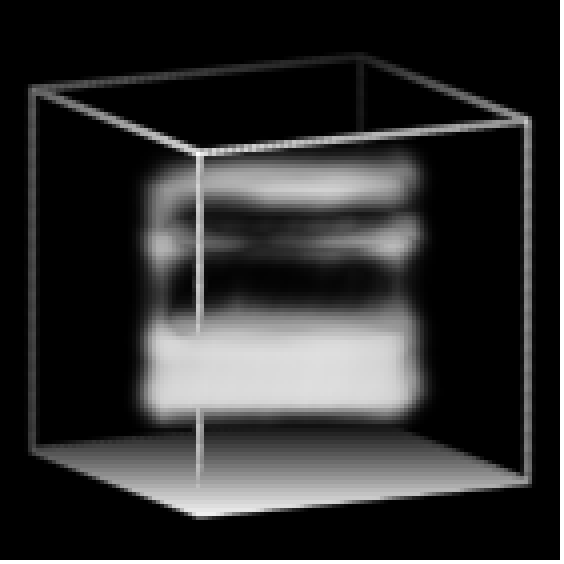} \\
		xbox & \includegraphics[height=\volumeHeight]{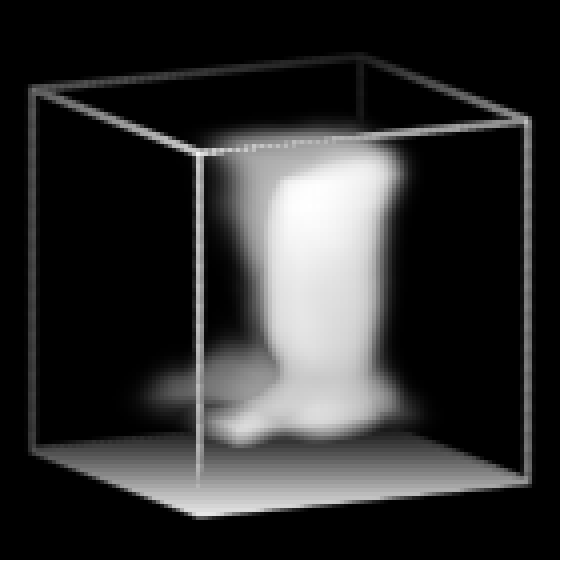} & 
		\includegraphics[height=\volumeHeight]{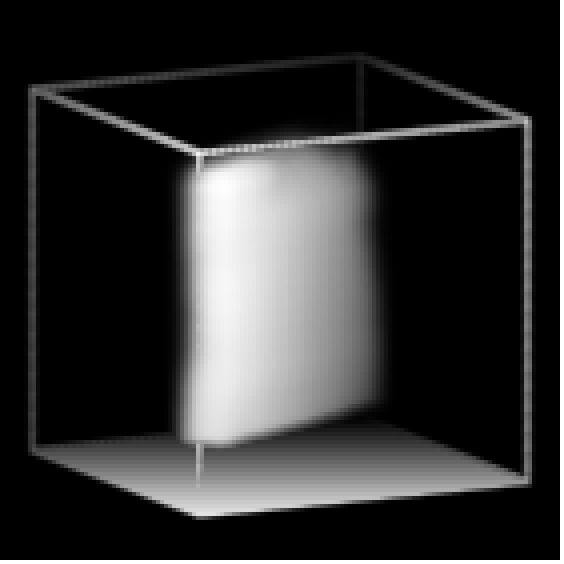} &
		\includegraphics[height=\volumeHeight]{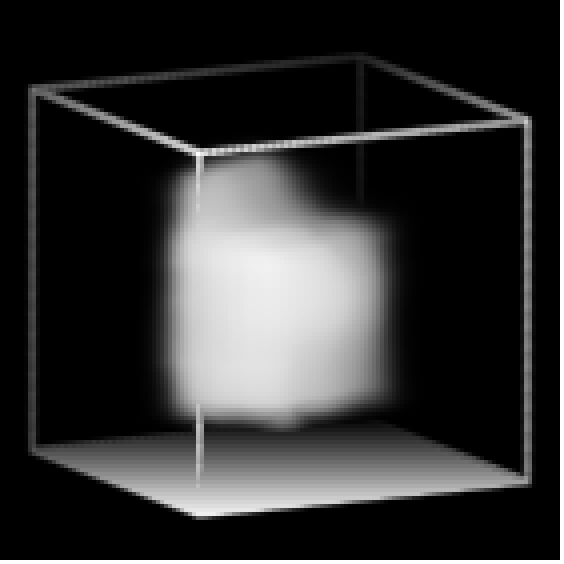} &
		\includegraphics[height=\volumeHeight]{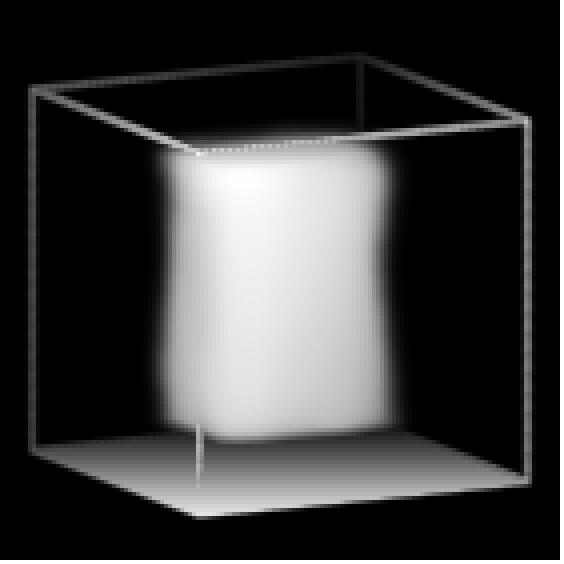} \\
		cup & \includegraphics[height=\volumeHeight]{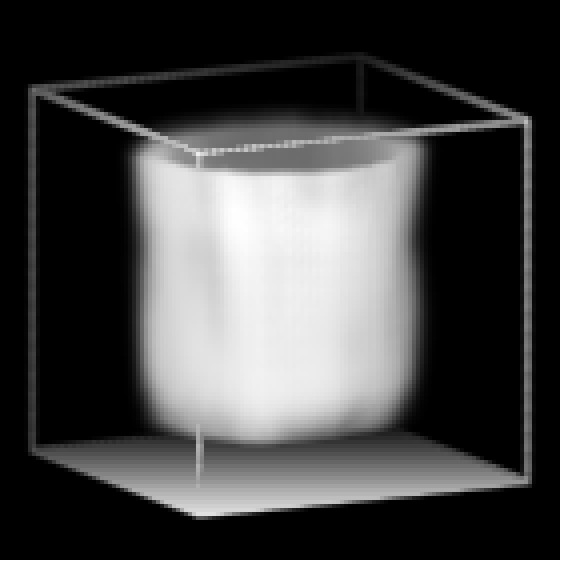} & 
		\includegraphics[height=\volumeHeight]{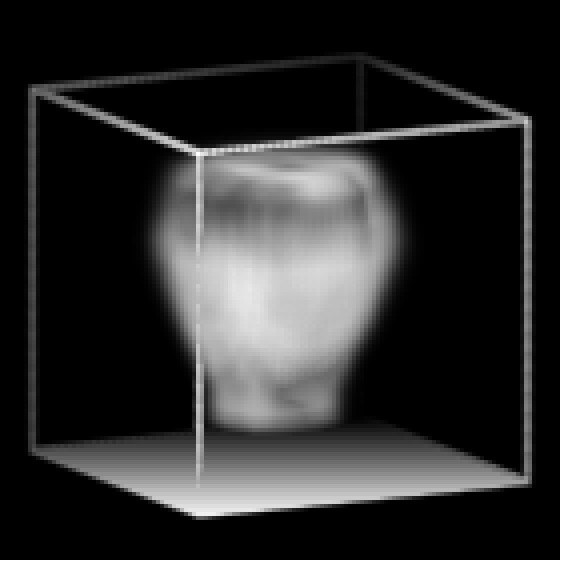} &
		\includegraphics[height=\volumeHeight]{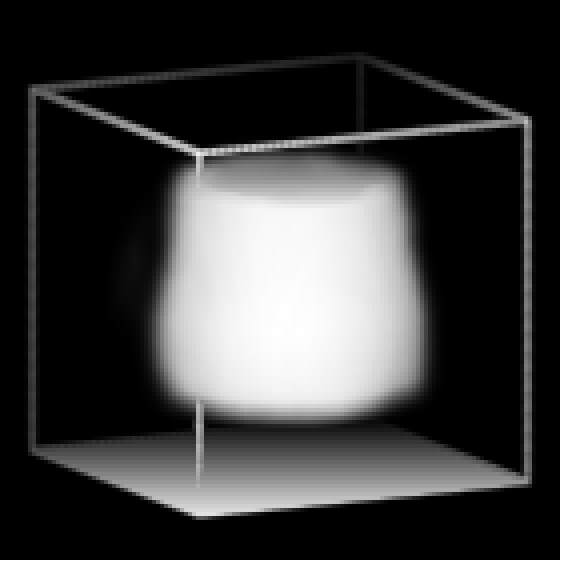} &
		\includegraphics[height=\volumeHeight]{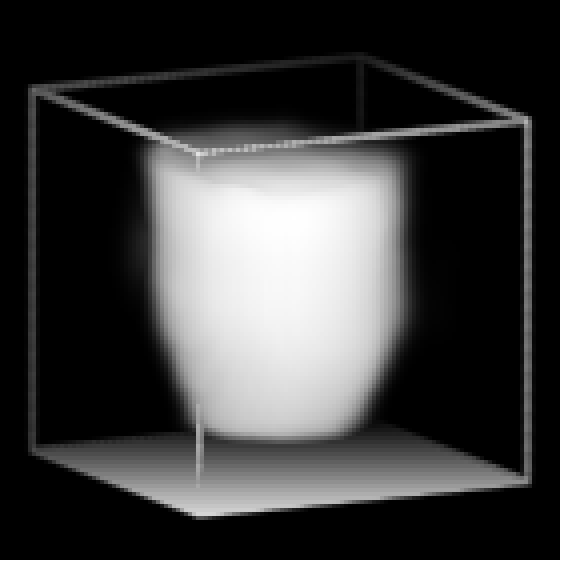} \\
		lamp & \includegraphics[height=\volumeHeight]{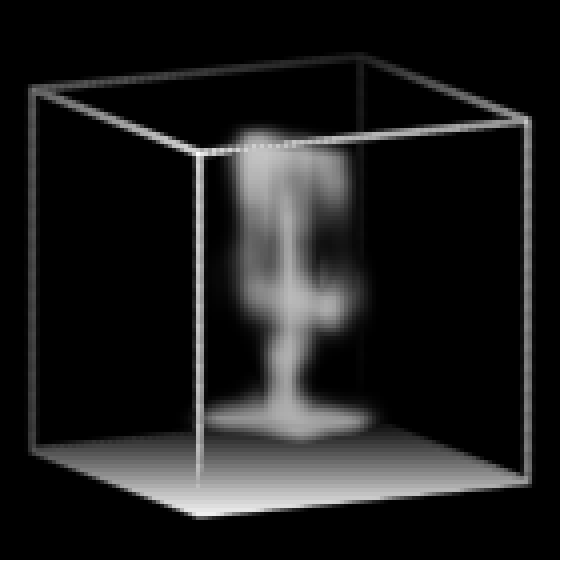} & 
		\includegraphics[height=\volumeHeight]{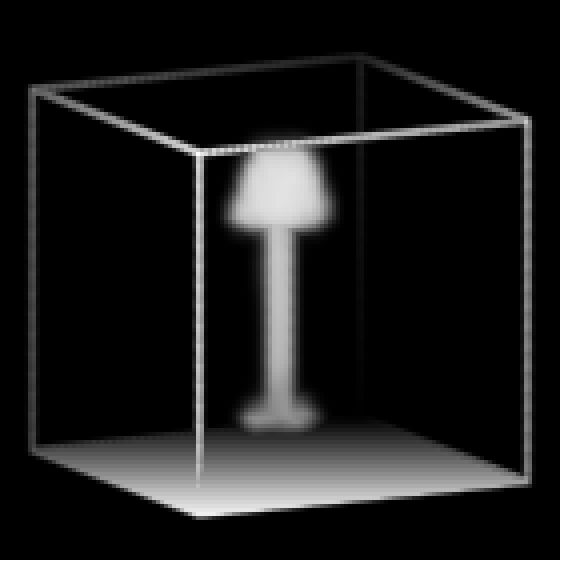} &
		\includegraphics[height=\volumeHeight]{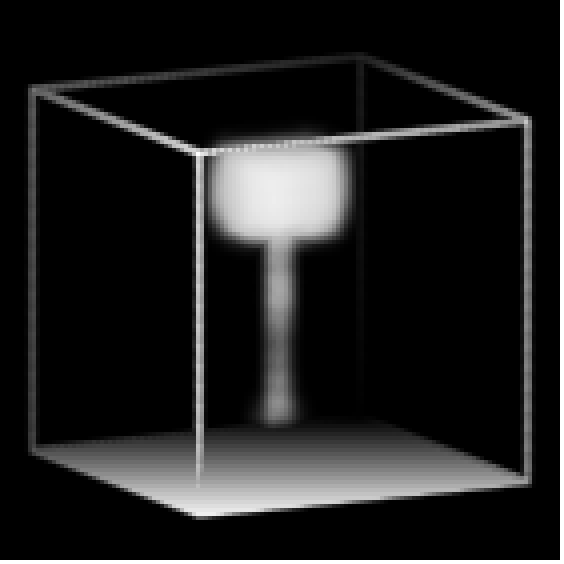} &
		\includegraphics[height=\volumeHeight]{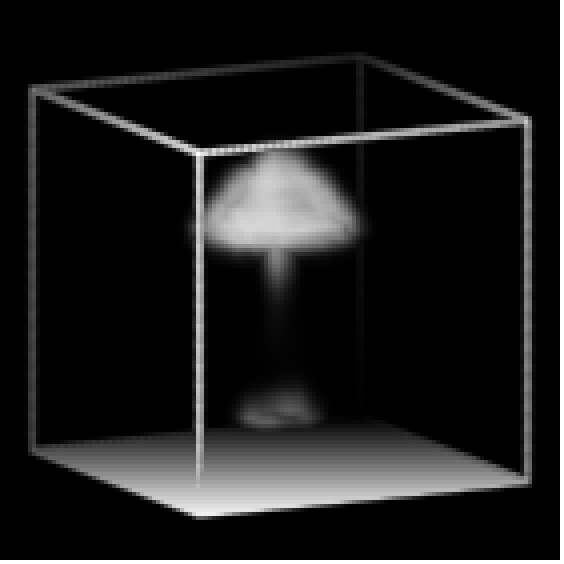} \\
		range hood & \includegraphics[height=\volumeHeight]{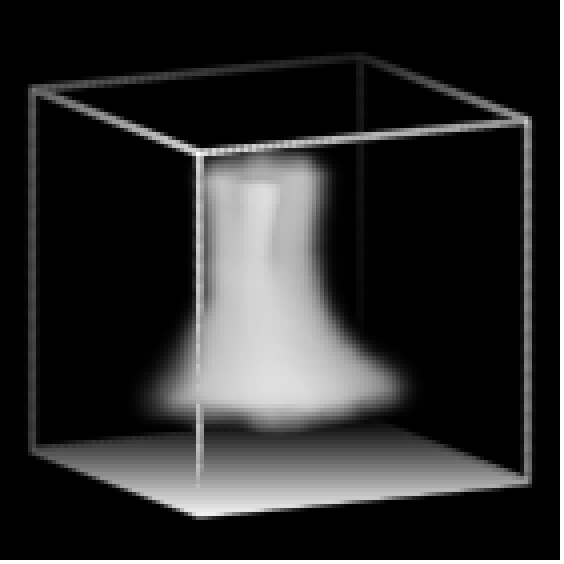} & 
		\includegraphics[height=\volumeHeight]{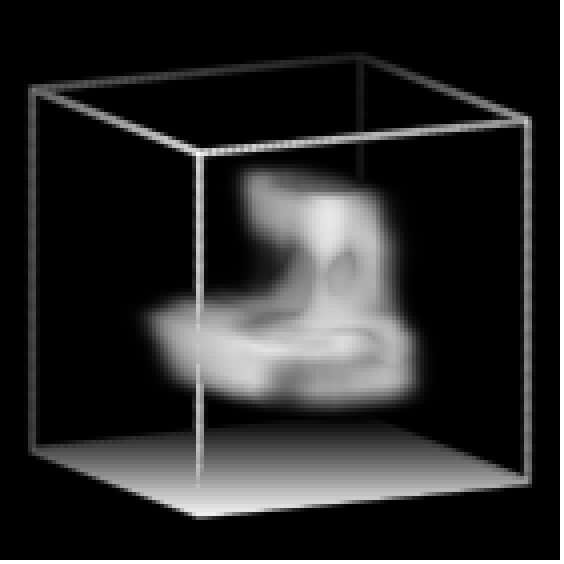} &
		\includegraphics[height=\volumeHeight]{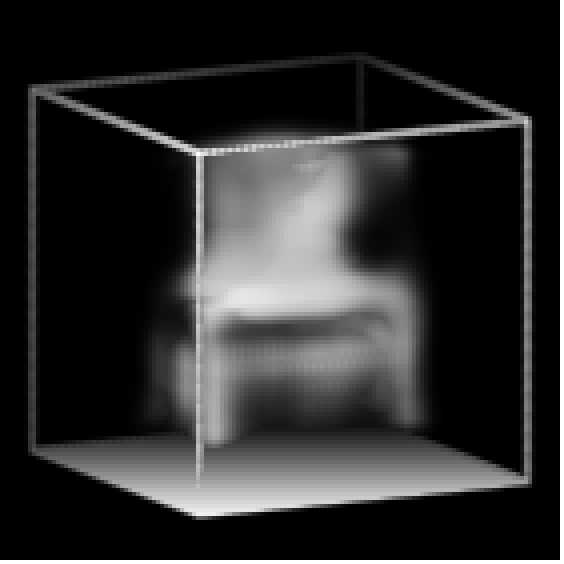} &
		\includegraphics[height=\volumeHeight]{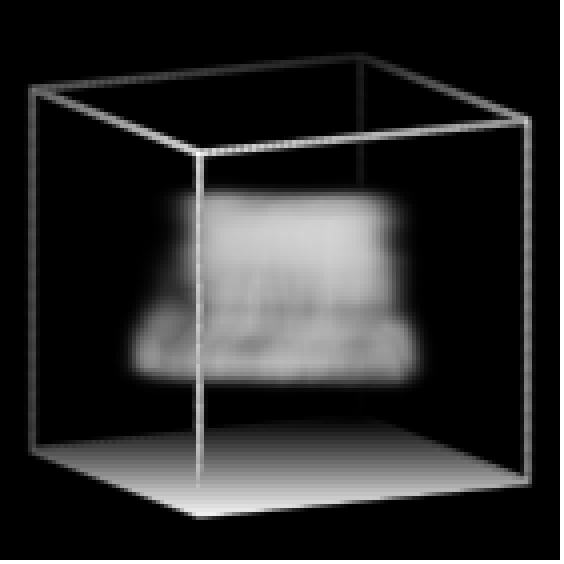} \\
		stairs & \includegraphics[height=\volumeHeight]{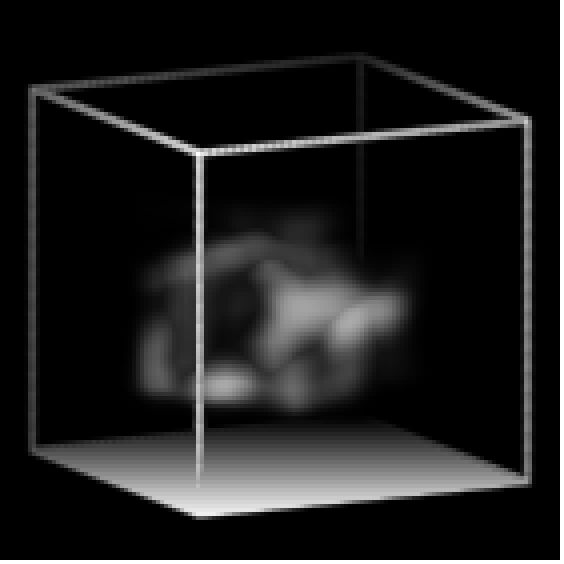} & 
		\includegraphics[height=\volumeHeight]{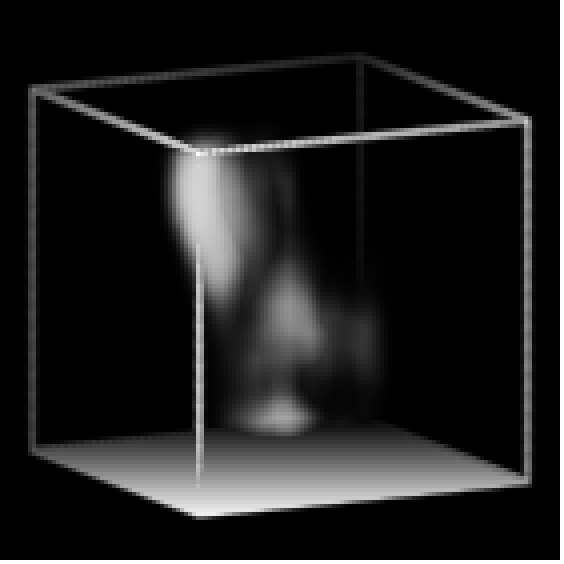} &
		\includegraphics[height=\volumeHeight]{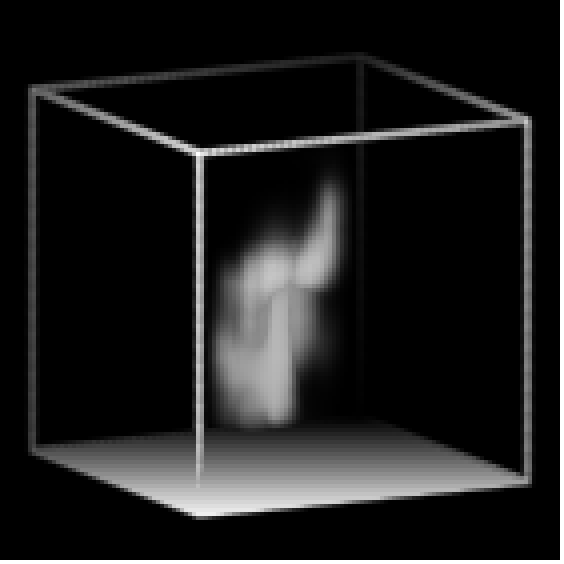} &
		\includegraphics[height=\volumeHeight]{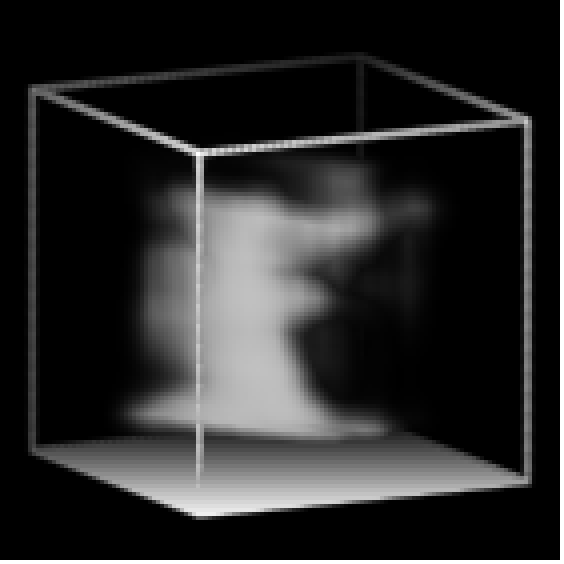} \\
		tent & \includegraphics[height=\volumeHeight]{volumetric/shapenetClassConditional/_sample__1__1__tent_} & 
		\includegraphics[height=\volumeHeight]{volumetric/shapenetClassConditional/_sample__1__2__tent_} &
		\includegraphics[height=\volumeHeight]{volumetric/shapenetClassConditional/_sample__2__2__tent_} &
		\includegraphics[height=\volumeHeight]{volumetric/shapenetClassConditional/_sample__3__2__tent_} \\
		bottle & \includegraphics[height=\volumeHeight]{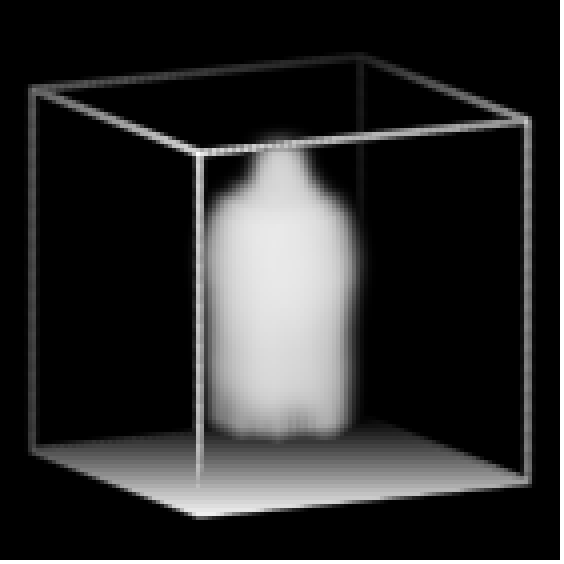} & 
		\includegraphics[height=\volumeHeight]{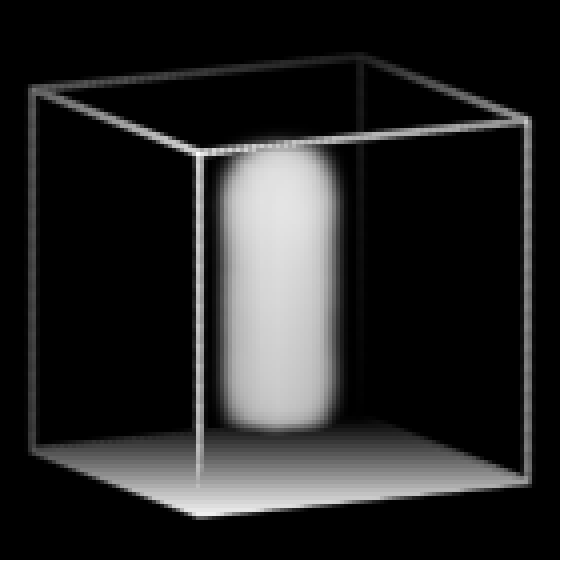} &
		\includegraphics[height=\volumeHeight]{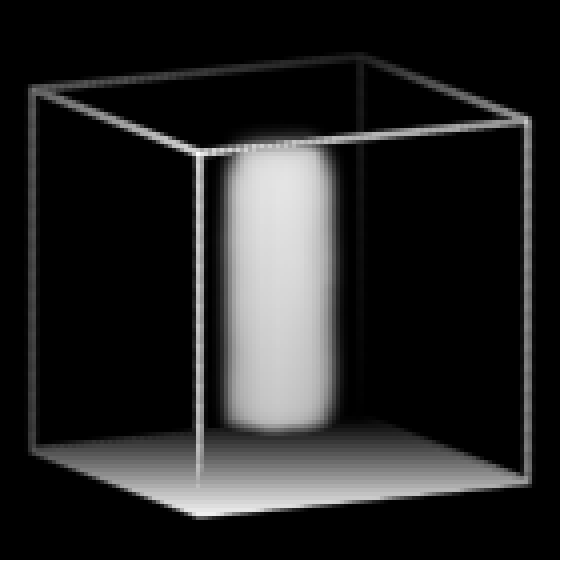} &
		\includegraphics[height=\volumeHeight]{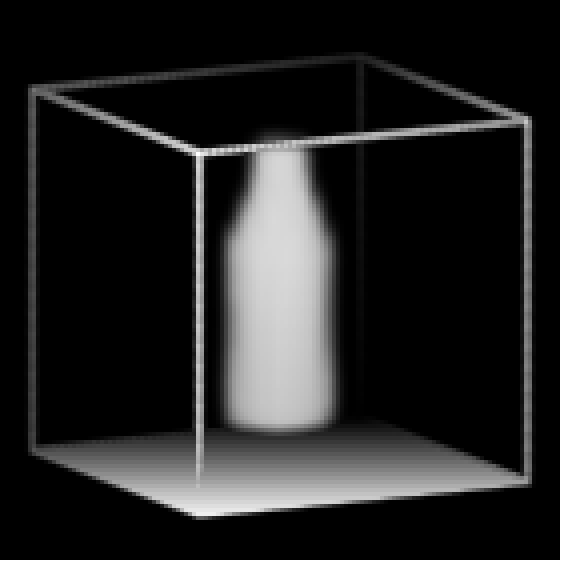} \\
	};	\end{tikzpicture}
	\caption{\textbf{Class-Conditional Volumetric Generation (ShapeNet):} All 40 classes. \label{fig:class-conditional-all}}
\end{figure*}

\setlength{\volumeHeight}{1.1cm}
\clearpage
\subsection{View-conditional volume generation\label{appendix:view-conditional}}

In figures \ref{fig:view-conditional-primitives}, \ref{fig:view-conditional-mnist} and \ref{fig:view-conditional-shapenet} we show samples from a view-conditional volumetric generative model for Primitives, MNIST3D and ShapeNet respectively.

\begin{figure*}[ht!]
	\centering
	\begin{tikzpicture}
	\matrix [matrix of nodes, column sep=1mm, row sep=1mm, every node/.style={inner sep=0, outer sep=0, anchor=south}]
	{
		\includegraphics[height=\volumeHeight]{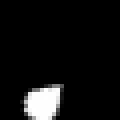} &
		\includegraphics[height=\volumeHeight]{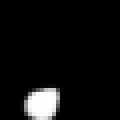} &
		\includegraphics[height=\volumeHeight]{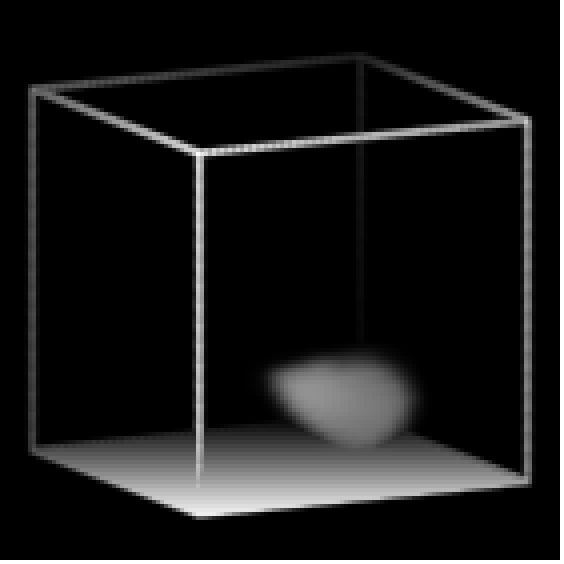} & & &
		\includegraphics[height=\volumeHeight]{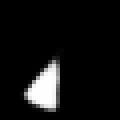} &
		\includegraphics[height=\volumeHeight]{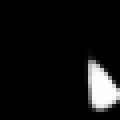} \\
		\includegraphics[height=\volumeHeight]{volumetric/primitivesSingleView/_context_view__2__1__1_} &
		\includegraphics[height=\volumeHeight]{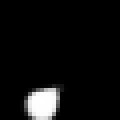} &
		\includegraphics[height=\volumeHeight]{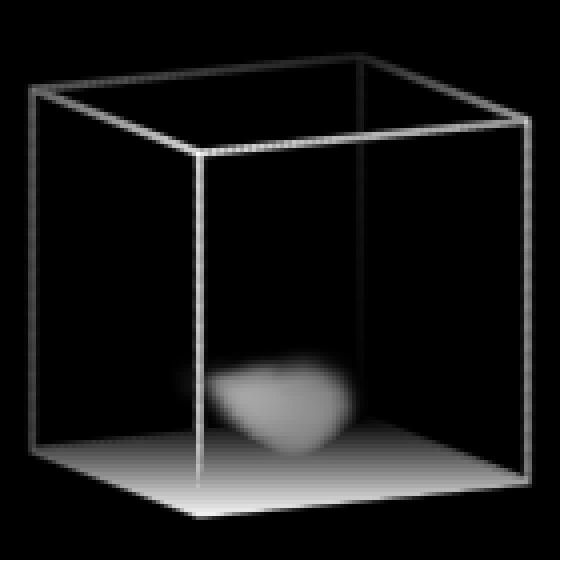} & & &
		\includegraphics[height=\volumeHeight]{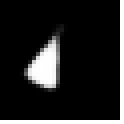} &
		\includegraphics[height=\volumeHeight]{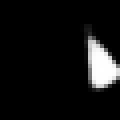} \\
		\includegraphics[height=\volumeHeight]{volumetric/primitivesSingleView/_context_view__2__1__1_} &
		\includegraphics[height=\volumeHeight]{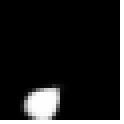} &
		\includegraphics[height=\volumeHeight]{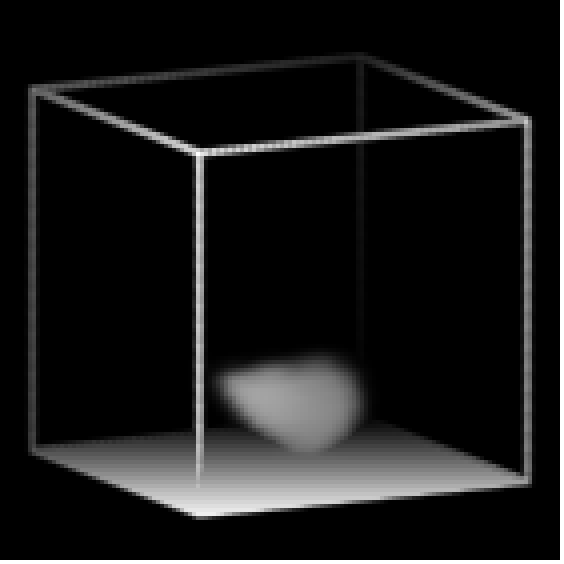} & & &
		\includegraphics[height=\volumeHeight]{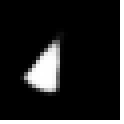} &
		\includegraphics[height=\volumeHeight]{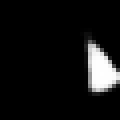}
		\\
		$\mathbf{c}$ &
		$\mathbf{\hat{x}}$ &
		$\mathbf{h}$ & & &
		$\mathbf{r}^1$ & 
		$\mathbf{r}^2$ \\
	};
	\end{tikzpicture}
	\hspace{0.3cm}
	\begin{tikzpicture}
	\matrix [matrix of nodes, column sep=1mm, row sep=1mm, every node/.style={inner sep=0, outer sep=0, anchor=south}]
	{
		\includegraphics[height=\volumeHeight]{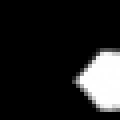} &
		\includegraphics[height=\volumeHeight]{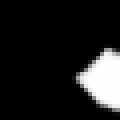} &
		\includegraphics[height=\volumeHeight]{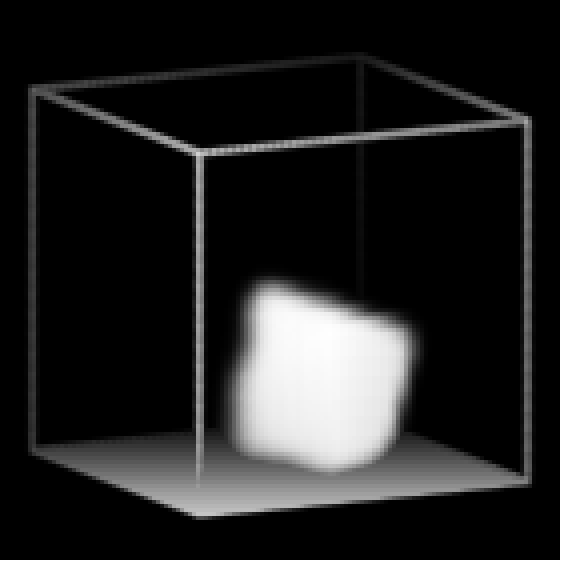} & & &
		\includegraphics[height=\volumeHeight]{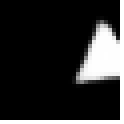} &
		\includegraphics[height=\volumeHeight]{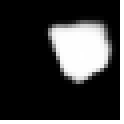} \\
		\includegraphics[height=\volumeHeight]{volumetric/primitivesSingleView/_context_view__3__1__1_} &
		\includegraphics[height=\volumeHeight]{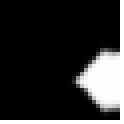} &
		\includegraphics[height=\volumeHeight]{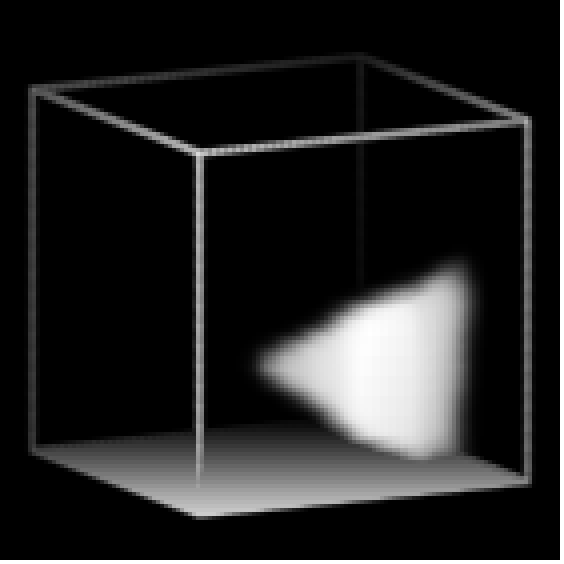} & & &
		\includegraphics[height=\volumeHeight]{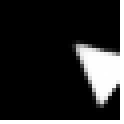} &
		\includegraphics[height=\volumeHeight]{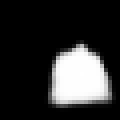} \\
		\includegraphics[height=\volumeHeight]{volumetric/primitivesSingleView/_context_view__3__1__1_} &
		\includegraphics[height=\volumeHeight]{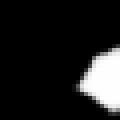} &
		\includegraphics[height=\volumeHeight]{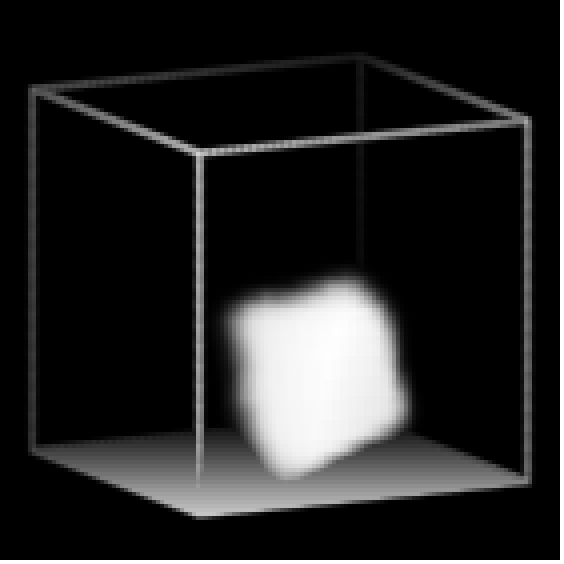} & & &
		\includegraphics[height=\volumeHeight]{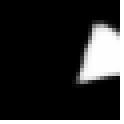} &
		\includegraphics[height=\volumeHeight]{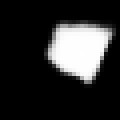}
		\\
		$\mathbf{c}$ &
		$\mathbf{\hat{x}}$ &
		$\mathbf{h}$ & & &
		$\mathbf{r}^1$ & 
		$\mathbf{r}^2$ \\
	};
	\end{tikzpicture}
	\caption{\textbf{Recovering 3D structure from 2D images (Primitives):} The model is trained on volumes, conditioned on $\mathbf{c}$ as context. Each row corresponds to an independent sample $\mathbf{h}$ from the model given $\mathbf{c}$. We display $\mathbf{\hat{x}}$, which is $\mathbf{h}$ viewed from the same angle as $\mathbf{c}$. Columns $\mathbf{r}^1$ and $\mathbf{r}^2$ display the inferred 3D representation $\mathbf{h}$ from different viewpoints. The model generates plausible, but varying, interpretations, capturing the inherent ambiguity of the problem. \label{fig:view-conditional-primitives}}
\end{figure*}

\begin{figure*}[ht!]
	\centering
	
	\begin{tikzpicture}
	\matrix [matrix of nodes, column sep=1mm, row sep=1mm, every node/.style={inner sep=0, outer sep=0, anchor=south}]
	{
		\includegraphics[height=\volumeHeight]{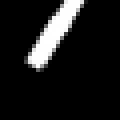} &
		\includegraphics[height=\volumeHeight]{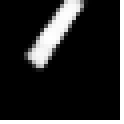} &
		\includegraphics[height=\volumeHeight]{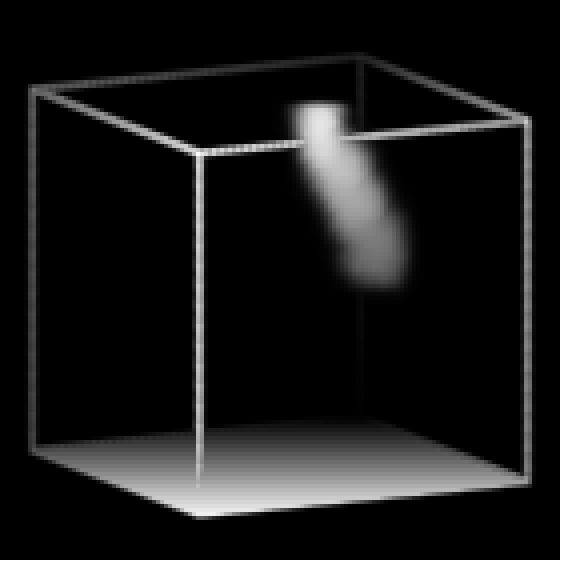} & & &
		\includegraphics[height=\volumeHeight]{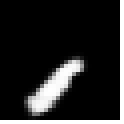} &
		\includegraphics[height=\volumeHeight]{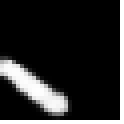} \\
		\includegraphics[height=\volumeHeight]{volumetric/mnistSingleView/_context_view__2__1__1_} &
		\includegraphics[height=\volumeHeight]{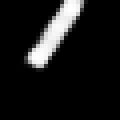} &
		\includegraphics[height=\volumeHeight]{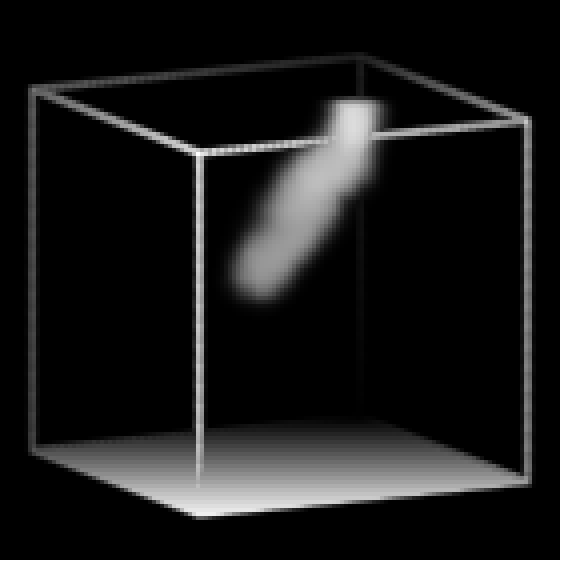} & & &
		\includegraphics[height=\volumeHeight]{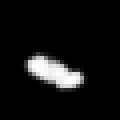} &
		\includegraphics[height=\volumeHeight]{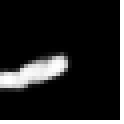} \\
		\includegraphics[height=\volumeHeight]{volumetric/mnistSingleView/_context_view__2__1__1_} &
		\includegraphics[height=\volumeHeight]{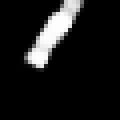} &
		\includegraphics[height=\volumeHeight]{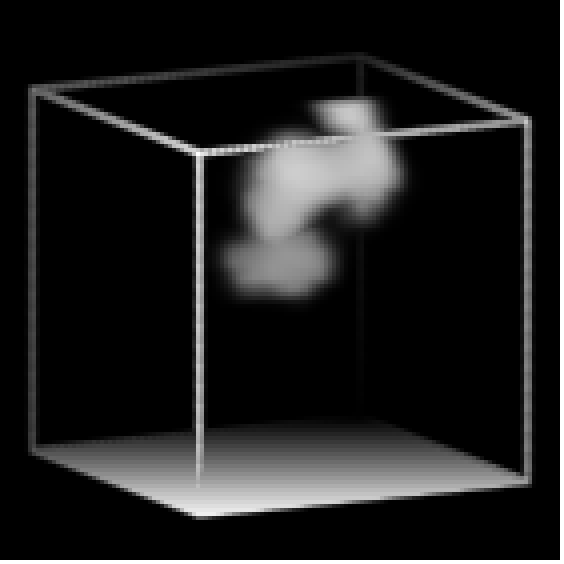} & & &
		\includegraphics[height=\volumeHeight]{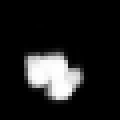} &
		\includegraphics[height=\volumeHeight]{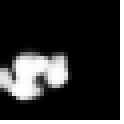}
		\\
		$\mathbf{c}$ &
		$\mathbf{\hat{x}}$ &
		$\mathbf{h}$ & & &
		$\mathbf{r}^1$ & 
		$\mathbf{r}^2$ \\
	};
	\end{tikzpicture}
	\hspace{0.3cm}
	\begin{tikzpicture}
	\matrix [matrix of nodes, column sep=1mm, row sep=1mm, every node/.style={inner sep=0, outer sep=0, anchor=south}]
	{
		\includegraphics[height=\volumeHeight]{volumetric/mnistSingleView/_context_view__3__1__1_} &
		\includegraphics[height=\volumeHeight]{volumetric/mnistSingleView/_sample_projection__3__1__1_} &
		\includegraphics[height=\volumeHeight]{volumetric/mnistSingleView/_sample__3__1_} & & &
		\includegraphics[height=\volumeHeight]{volumetric/mnistSingleView/_sample_projection__3__1__2_} &
		\includegraphics[height=\volumeHeight]{volumetric/mnistSingleView/_sample_projection__3__1__3_} \\
		\includegraphics[height=\volumeHeight]{volumetric/mnistSingleView/_context_view__3__1__1_} &
		\includegraphics[height=\volumeHeight]{volumetric/mnistSingleView/_sample_projection__3__2__1_} &
		\includegraphics[height=\volumeHeight]{volumetric/mnistSingleView/_sample__3__2_} & & &
		\includegraphics[height=\volumeHeight]{volumetric/mnistSingleView/_sample_projection__3__2__2_} &
		\includegraphics[height=\volumeHeight]{volumetric/mnistSingleView/_sample_projection__3__2__3_} \\
		\includegraphics[height=\volumeHeight]{volumetric/mnistSingleView/_context_view__3__1__1_} &
		\includegraphics[height=\volumeHeight]{volumetric/mnistSingleView/_sample_projection__3__3__1_} &
		\includegraphics[height=\volumeHeight]{volumetric/mnistSingleView/_sample__3__3_} & & &
		\includegraphics[height=\volumeHeight]{volumetric/mnistSingleView/_sample_projection__3__3__2_} &
		\includegraphics[height=\volumeHeight]{volumetric/mnistSingleView/_sample_projection__3__3__3_}
		\\
		$\mathbf{c}$ &
		$\mathbf{\hat{x}}$ &
		$\mathbf{h}$ & & &
		$\mathbf{r}^1$ & 
		$\mathbf{r}^2$ \\
	};
	\end{tikzpicture}
	\caption{\textbf{Recovering 3D structure from 2D images (MNIST3D):} The model is trained on volumes, conditioned on $\mathbf{c}$ as context. Each row corresponds to an independent sample $\mathbf{h}$ from the model given $\mathbf{c}$. We display $\mathbf{\hat{x}}$, which is $\mathbf{h}$ viewed from the same angle as $\mathbf{c}$. Columns $\mathbf{r}^1$ and $\mathbf{r}^2$ display the inferred 3D representation $\mathbf{h}$ from different viewpoints. The model generates plausible, but varying, interpretations, capturing the inherent ambiguity of the problem. \label{fig:view-conditional-mnist}}
\end{figure*}

\begin{figure*}[ht!]
	\centering
	
	\begin{tikzpicture}
	\matrix [matrix of nodes, column sep=1mm, row sep=1mm, every node/.style={inner sep=0, outer sep=0, anchor=south}]
	{
		\includegraphics[height=\volumeHeight]{volumetric/shapenetSingleView/_context_view__2__1__1_} &
		\includegraphics[height=\volumeHeight]{volumetric/shapenetSingleView/_sample_projection__2__1__1_} &
		\includegraphics[height=\volumeHeight]{volumetric/shapenetSingleView/_sample__2__1_} & & &
		\includegraphics[height=\volumeHeight]{volumetric/shapenetSingleView/_sample_projection__2__1__2_} &
		\includegraphics[height=\volumeHeight]{volumetric/shapenetSingleView/_sample_projection__2__1__3_} \\
		\includegraphics[height=\volumeHeight]{volumetric/shapenetSingleView/_context_view__2__1__1_} &
		\includegraphics[height=\volumeHeight]{volumetric/shapenetSingleView/_sample_projection__2__2__1_} &
		\includegraphics[height=\volumeHeight]{volumetric/shapenetSingleView/_sample__2__2_} & & &
		\includegraphics[height=\volumeHeight]{volumetric/shapenetSingleView/_sample_projection__2__2__2_} &
		\includegraphics[height=\volumeHeight]{volumetric/shapenetSingleView/_sample_projection__2__2__3_} \\
		\includegraphics[height=\volumeHeight]{volumetric/shapenetSingleView/_context_view__2__1__1_} &
		\includegraphics[height=\volumeHeight]{volumetric/shapenetSingleView/_sample_projection__2__3__1_} &
		\includegraphics[height=\volumeHeight]{volumetric/shapenetSingleView/_sample__2__3_} & & &
		\includegraphics[height=\volumeHeight]{volumetric/shapenetSingleView/_sample_projection__2__3__2_} &
		\includegraphics[height=\volumeHeight]{volumetric/shapenetSingleView/_sample_projection__2__3__3_}
		\\
		$\mathbf{c}$ &
		$\mathbf{\hat{x}}$ &
		$\mathbf{h}$ & & &
		$\mathbf{r}^1$ & 
		$\mathbf{r}^2$ \\
	};
	\end{tikzpicture}
	\hspace{0.3cm}
	\begin{tikzpicture}
	\matrix [matrix of nodes, column sep=1mm, row sep=1mm, every node/.style={inner sep=0, outer sep=0, anchor=south}]
	{
		\includegraphics[height=\volumeHeight]{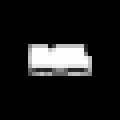} &
		\includegraphics[height=\volumeHeight]{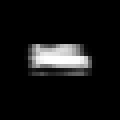} &
		\includegraphics[height=\volumeHeight]{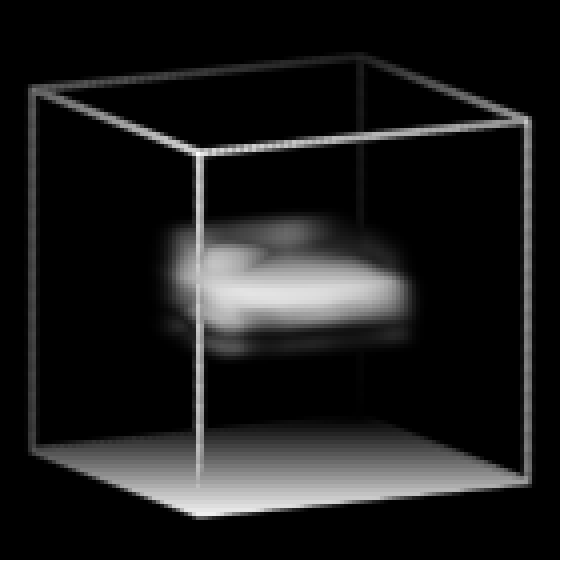} & & &
		\includegraphics[height=\volumeHeight]{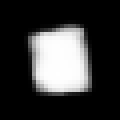} &
		\includegraphics[height=\volumeHeight]{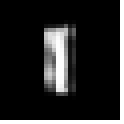} \\
		\includegraphics[height=\volumeHeight]{volumetric/shapenetSingleView/_context_view__3__1__1_} &
		\includegraphics[height=\volumeHeight]{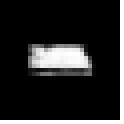} &
		\includegraphics[height=\volumeHeight]{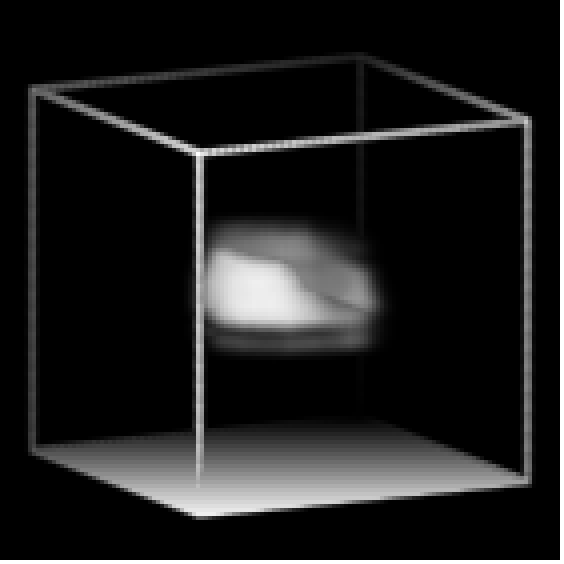} & & &
		\includegraphics[height=\volumeHeight]{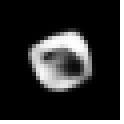} &
		\includegraphics[height=\volumeHeight]{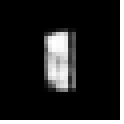} \\
		\includegraphics[height=\volumeHeight]{volumetric/shapenetSingleView/_context_view__3__1__1_} &
		\includegraphics[height=\volumeHeight]{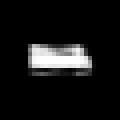} &
		\includegraphics[height=\volumeHeight]{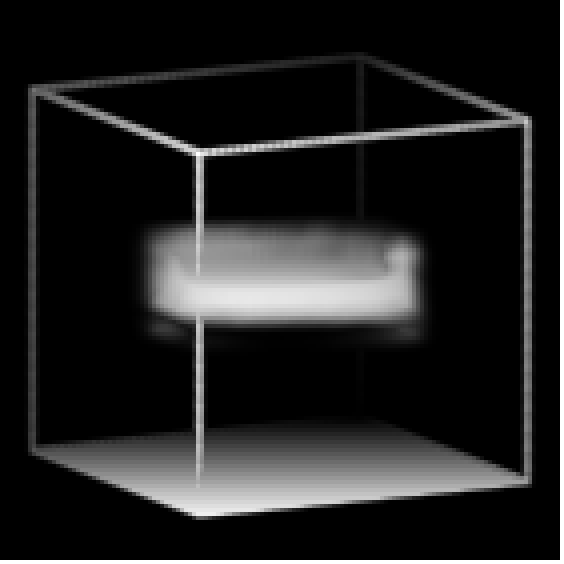} & & &
		\includegraphics[height=\volumeHeight]{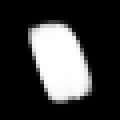} &
		\includegraphics[height=\volumeHeight]{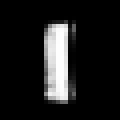}
		\\
		$\mathbf{c}$ &
		$\mathbf{\hat{x}}$ &
		$\mathbf{h}$ & & &
		$\mathbf{r}^1$ & 
		$\mathbf{r}^2$ \\
	};
	\end{tikzpicture}
	\caption{\textbf{Recovering 3D structure from 2D images (ShapeNet):} The model is trained on volumes, conditioned on $\mathbf{c}$ as context. Each row corresponds to an independent sample $\mathbf{h}$ from the model given $\mathbf{c}$. We display $\mathbf{\hat{x}}$, which is $\mathbf{h}$ viewed from the same angle as $\mathbf{c}$. Columns $\mathbf{r}^1$ and $\mathbf{r}^2$ display the inferred 3D representation $\mathbf{h}$ from different viewpoints. The model generates plausible, but varying, interpretations, capturing the inherent ambiguity of the problem. \label{fig:view-conditional-shapenet}}
\end{figure*}

\clearpage
\subsection{Volume completion with MCMC\label{appendix:mcmc}}

When only part of the data-vector $\vx$ is observed, we can approximately sample the missing part of the volume conditioned on the observed part by building a Markov Chain. We review below the derivations from \cite{rezende2014stochastic} for completeness.
Let $\vx_o$ and $\vx_u$ be the observed and unobserved parts of $\vx$ respectively. The observed $\vx_o$ is fixed throughout, therefore 
all the computations in this section will be conditioned on $\vx_o$.
The  imputation procedure can be written formally as 
a Markov chain on the space of missing entries $\vx_u$ with transition 
kernel $\mathcal{K}^q( \vx'_u | \vx_u, \vx_o) $ 
given by
\begin{align}
\mathcal{K}^q( \vx'_u | \vx_u, \vx_o ) &= \iint p( \vx'_u, \vx'_o | \vz ) q( \vz| \vx )  
d \vx'_o d \vz,
\label{eq:KernelQ}
\end{align}
where $\vx = ( \vx_u, \vx_o )$.

Provided that the recognition model $q( \vz| \vx )$ constitutes a good 
approximation of the true posterior $p( \vz| \vx )$, \eqref{eq:KernelQ} can 
be seen as an approximation of the kernel
\begin{align}
\mathcal{K}( \vx'_u | \vx_u, \vx_o ) &= \iint p(\vx'_u, \vx'_o | \vz ) p( \vz| \vx)  d 
\vx'_o d \vz.
\label{eq:KernelP}
\end{align}
The kernel \eqref{eq:KernelP} has two important properties: (i) it has as its
eigen-distribution the marginal $p(\vx_u| \vx_o)$; (ii) $\mathcal{K}(\vx'_u 
|
\vx_u, \vx_o) > 0 ~\forall \vx_o,\vx_u,\vx'_u$. The property (i) can be 
derived by applying the 
kernel \eqref{eq:KernelP} to the marginal $p(\vx_u| \vx_o)$ and noting 
that it is a fixed point. Property (ii) is an immediate consequence of the 
smoothness of the model.

We apply the fundamental theorem for Markov chains
and conclude that given the above
properties, a Markov chain generated by 
\eqref{eq:KernelP} is guaranteed to generate samples from the correct 
marginal 
$p(\vx_u| \vx_o)$.

In practice, the stationary distribution of the completed data will not be 
exactly the marginal $p(\vx_u| \vx_o)$, since we use the 
approximated kernel \eqref{eq:KernelQ}. 
Even in this setting we can provide a bound on the $L_1$ norm of the 
difference between the resulting 
stationary marginal and the target marginal $p(\vx_u| \vx_o)$ 

\begin{proposition}[$L_1$ bound on marginal error ]
	If the recognition model $q ( \vz | \vx)$ is such that for all $\vz$
	\begin{align}
	\exists \varepsilon >0 \textrm{ s.t. }  \int  \left| \frac{q(\vz | \vx) 
		p(\vx)}{p(\vz)} - p ( \vx | \vz ) \right|  d \vx \leq \varepsilon  
	\label{eq:CondBound}
	\end{align}
	then the marginal $p(\vx_u|\vx_o)$ is a weak fixed point of the kernel 
	\eqref{eq:KernelQ} in the following sense:
	\begin{align}
	\int  \Bigg| \int  \big( \mathcal{K}^q( \vx'_u | \vx_u, \vx_o )  - 
	\mathcal{K}( \vx'_u | \vx_u, \vx_o )\big)\ p(\vx_u|\vx_o) d \vx_u \Bigg| d \vx'_u 
	< \varepsilon. 
	\label{eq:FixBound}
	\end{align}
\end{proposition}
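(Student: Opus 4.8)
The plan is to reduce the claim to a statement purely about how well $q(\vz|\vx)$ approximates the exact posterior $p(\vz|\vx)$, and then invoke hypothesis \eqref{eq:CondBound}. First, by property (i) of $\mathcal{K}$ we have $\int \mathcal{K}(\vx'_u|\vx_u,\vx_o)\,p(\vx_u|\vx_o)\,d\vx_u=p(\vx'_u|\vx_o)$, so the quantity inside the outer absolute value in \eqref{eq:FixBound} is exactly the difference between applying $\mathcal{K}^q$ once to the exact marginal $p(\cdot|\vx_o)$ and that marginal itself. Substituting the definitions \eqref{eq:KernelQ} and \eqref{eq:KernelP} and subtracting, the likelihood factors $p(\vx'_u,\vx'_o|\vz)$ are shared, so this difference equals $\iint p(\vx'_u,\vx'_o|\vz)\,\big(q(\vz|\vx)-p(\vz|\vx)\big)\,p(\vx_u|\vx_o)\,d\vx'_o\,d\vz\,d\vx_u$, where $\vx=(\vx_u,\vx_o)$.

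Next I would bound its $L_1$ norm over $\vx'_u$ by pushing the absolute value through every integral with the triangle inequality — all integrands are then nonnegative, so Tonelli legitimises each reordering — and collapsing the ``generation'' integrals using $\iint p(\vx'_u,\vx'_o|\vz)\,d\vx'_u\,d\vx'_o=1$. This reduces the left-hand side of \eqref{eq:FixBound} to a $p(\cdot|\vx_o)$-average of the total-variation gap between the two posteriors,
\[
\int p(\vx_u|\vx_o)\left(\int \big|q(\vz|\vx)-p(\vz|\vx)\big|\,d\vz\right) d\vx_u .
\]
To tie this to \eqref{eq:CondBound} I would rewrite the gap by factoring $p(\vz)/p(\vx)$ out of both $q(\vz|\vx)$ and $p(\vz|\vx)$: by Bayes the cofactor of $p(\vz|\vx)$ is $p(\vx|\vz)$, while that of $q(\vz|\vx)$ is $\tfrac{q(\vz|\vx)p(\vx)}{p(\vz)}$, so $\big|q(\vz|\vx)-p(\vz|\vx)\big|=\tfrac{p(\vz)}{p(\vx)}\,\big|\tfrac{q(\vz|\vx)p(\vx)}{p(\vz)}-p(\vx|\vz)\big|$, whose last factor is precisely the integrand of \eqref{eq:CondBound}. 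Substituting this, using $p(\vx_u|\vx_o)/p(\vx)=1/p(\vx_o)$, swapping the $\vx_u$ and $\vz$ integrals once more, and finally applying \eqref{eq:CondBound} against the prior $p(\vz)$ (total mass $1$) delivers the bound. (The strict inequality in \eqref{eq:FixBound} I would attribute to property (ii): the triangle-inequality steps are tight only when $q\equiv p$ a.e., in which case \eqref{eq:FixBound} holds trivially because $\varepsilon>0$.)

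The step I expect to be the real obstacle is the last one — the bookkeeping between conditional and joint densities while $\vx_o$ is held fixed. One has to relate the partial integral over the missing coordinates $\vx_u$ (at the observed value $\vx_o$) to the unconditional integral over all of $\vx$ that \eqref{eq:CondBound} controls, and to track the $1/p(\vx_o)$ factor that the conditioning produces; in the fully observed limit $\vx_o=\emptyset$ these issues vanish and the chain above returns exactly $\varepsilon$, so the substance of the argument is precisely handling how the conditioning on $\vx_o$ interacts with the marginalisation. Everything else — the reduction to the posterior gap and the Bayes rewrite — is just the triangle inequality together with the normalisation of $p(\vx'_u,\vx'_o|\vz)$ and of the two posteriors.
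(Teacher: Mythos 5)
Your route is essentially the paper's: substitute the two kernel definitions so that the integrand reduces to $p(\vx'_u,\vx'_o|\vz)\,[q(\vz|\vx)-p(\vz|\vx)]$ times the marginal, push the absolute value inside, collapse the generation integrals by normalisation, and use Bayes to turn $|q(\vz|\vx)-p(\vz|\vx)|$ into $\tfrac{p(\vz)}{p(\vx)}\bigl|\tfrac{q(\vz|\vx)p(\vx)}{p(\vz)}-p(\vx|\vz)\bigr|$ so that \eqref{eq:CondBound} can be applied against the prior $p(\vz)$. The only difference is cosmetic ordering: the paper performs the Bayes rewrite before taking absolute values and keeps the generation factor $p(\vx'|\vz)$ until the last line, whereas you first reduce to a total-variation gap between the two posteriors; the two computations are otherwise identical.

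The obstacle you single out --- that after the rewrite you are left with $\tfrac{1}{p(\vx_o)}\int p(\vz)\int|\cdot|\,d\vx_u\,d\vz$, a slice integral over $\vx_u$ at fixed $\vx_o$ carrying a $1/p(\vx_o)$ factor, while \eqref{eq:CondBound} only controls the full integral $\int|\cdot|\,d\vx$ --- is genuine, and you are right that it does not close as stated: a slice of a nonnegative integrand is not in general dominated by the full integral, and the $1/p(\vx_o)$ factor would survive even if it were. You should know, however, that the paper's own proof does not close this step either: in its second display it silently replaces the conditional $p(\vx_u|\vx_o)$ by the joint $p(\vx_u,\vx_o)$, and in the following line it promotes the partial integrals $d\vx_u$ and $d\vx'_u$ to full integrals $d\vx$ and $d\vx'$. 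Those two moves amount to proving the bound for the quantity averaged over $\vx_o$ with weight $p(\vx_o)$ (and integrated over $\vx'_o$), not the fixed-$\vx_o$ statement \eqref{eq:FixBound}. So your attempt reaches exactly the same point as the published argument and is more candid about what is being elided. (Both you and the paper also only obtain $\leq\varepsilon$ rather than the strict inequality claimed.)
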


\begin{proof}
	\begin{align} 
	& \int \!  \left| \! \int \!\left[ \mathcal{K}^q( \vx'_u | \vx_u, \vx_o )  \!-\! 
	\mathcal{K}( \vx'_u | \vx_u, \vx_o )\right] p(\vx_u|\vx_o)  d \vx_u \right | d 
	\vx'_u \nonumber\\
	= & \int  | \iint   p ( \vx'_u, \vx'_o | \vz) p ( \vx_u, \vx_o) [ q ( \vz |
	\vx_u, \vx_o)  \nonumber\\
	- &   p ( \vz | \vx_u, \vx_o)]d \vx_u d \vz | d \vx'_u 
	\nonumber \\
	= & \int  \left | \int   p ( \vx' | \vz) p ( \vx) [ q ( \vz |
	\vx) - p ( \vz | \vx)] \frac{p(\vx)}{p(\vz)} \frac{p(\vz)}{p(\vx)} d \vx d \vz 
	\right| d \vx'
	\nonumber \\
	= & \int  \left | \int  p ( \vx' | \vz) p ( \vz) [ q ( \vz |
	\vx) \frac{p(\vx)}{p(\vz)} - p ( \vx | \vz)] d \vx d \vz \right| d \vx' 
	\nonumber \\
	\leq & \int  \int  p ( \vx' | \vz) p ( \vz) \int   \left | q ( \vz |
	\vx) \frac{p(\vx)}{p(\vz)} - p ( \vx | \vz)  \right| d \vx d \vz d \vx' 
	\nonumber \\
	\leq & \varepsilon, \nonumber
	\end{align}
	where we apply the condition \eqref{eq:CondBound} to obtain the last 
	statement. 
\end{proof}
That is, if the recognition model is sufficiently close to the true posterior 
to guarantee that \eqref{eq:CondBound} holds for some acceptable error 
$\varepsilon$ than \eqref{eq:FixBound} 
guarantees that the fixed-point of the Markov chain induced by the kernel
\eqref{eq:KernelQ} is no further than $\varepsilon$ from the true 
marginal with respect to the 
$L_1$ norm.

\end{document}